\date{}
\newcommand{\tsig}{\tilde{\Sigma}}
\newcommand{\sig}{\Sigma}
\newcommand{\reg}{\textsc{\small{REGRET}}\xspace}
\newcommand{\OO}{\mathcal{O}}
\newcommand{\LQG}{\textsc{\small{LQG}}\xspace}
\newcommand{\OFU}{\textsc{\small{OFU}}\xspace}
\newcommand{\Texp}{T_{exp}}
\newcommand{\Dexp}{\mathcal{D}_{exp}}
\DeclareMathOperator{\Tr}{Tr}
\newcommand{\ta}{\tilde{A}}
\newcommand{\tb}{\tilde{B}}
\newcommand{\tc}{\tilde{C}}
\newcommand{\tp}{\tilde{P}}
\newcommand{\tl}{\tilde{L}}
\newcommand{\tk}{\tilde{K}}
\newcommand{\tth}{\tilde{\Theta}}
\newcommand{\policy}{\Pi}
\newcommand{\ofu}{\textsc{OFU}\xspace}
\newcommand{\Alg}{\textsc{\small{ExpCommit}}\xspace}
\newcommand{\E}{\mathbb E}
\newcommand{\R}{\mathbb{R}}
\newtheorem{lemma}{Lemma}[section]
\newtheorem{assumption}{Assumption}[section]
\newtheorem{theorem*}{Theorem}[section]
\newtheorem{corollary}{Corollary}[section]
\newtheorem{definition}{Definition}[section]
\newtheorem{theorem}{Theorem}[section]
\title{Regret Minimization in Partially Observable Linear Quadratic Control}
\author{%
  Sahin Lale$^1$, Kamyar Azizzadenesheli$^2$, Babak Hassibi$^1$, Anima Anandkumar$^2$\\
  $^1$~Department of Electrical Engineering\\
  $^2$~Department of Computing and Mathematical Sciences\\
  California Institute of Technology, Pasadena\\
  \texttt{\{alale,kazizzad,hassibi,anima\}@caltech.edu} \\
}
\begin{document}

\maketitle      

\begin{abstract}%
  We study the problem of regret minimization in partially observable linear quadratic control systems when the model dynamics are unknown a priori. We propose \Alg, an explore-then-commit algorithm that learns the model Markov parameters and then follows the principle of optimism in the face of uncertainty to design a controller. We propose a novel way to decompose the regret and provide an end-to-end sublinear regret upper bound for partially observable linear quadratic control. Finally, we provide stability guarantees and establish a regret upper bound of  $\tilde{\mathcal{O}}(T^{2/3})$ for \Alg, where $T$ is the time horizon of the problem.
\end{abstract}


\section{Introduction}\label{Introduction}

Controlling unknown discrete-time systems is a fundamental problem in adaptive control and reinforcement learning. In this problem, an agent interacts with an environment, with unknown dynamics, and aims to minimize the overall average regulating costs. To achieve this goal, the agent is required to explore the environment to gain a better understanding of the environment dynamics, which is often called system identification. The agent then utilizes this understanding to design a set of improved controllers that simultaneously reduces the possible future costs and also enables the agent to explore the important and unknown aspects of the system. In recent decades, this challenging problem has been extensively studied and resulted in a set of foundational steps to study the stability and asymptotic convergence to optimal controllers~\citep{lai1982least,lai1987asymptotically}. While asymptotic analyses set the ground for the design of optimal control, understanding the finite time behavior of adaptive algorithms is critical for real-world applications. 
In practice, one might prefer an algorithm that guarantees better performance on a much shorter horizon. Recent developments in the fields of statistics and machine learning along with control theory \citep{van1996weak,pena2009self,lai1982least} empowers us to not only advance the study of the asymptotic efficiency of algorithms but also to analyze their finite-time behavior~\citep{fiechter1997pac,abbasi2011regret}. 

In partially observable linear quadratic control, if the agent, a priori, is handed the system dynamics, the optimal control/policy has a closed-form in the presence of Gaussian disturbances. This control system is known as Linear–Quadratic–Gaussian (\LQG) control. In this case, the control law is obtained through estimating the latent state of the system and deploying optimal controller synthesis. This is known as the separation principle in control theory~\citep{aastrom2012introduction}. 

In partially observable environments, the agent is required to estimate the latent state of the system, which is not needed in fully observed environments where the precise state information is available. This seemingly subtle difference introduces significant challenges in controlling and learning partially observable systems when the underlying dynamics are unknown. Moreover, in the presence of an approximated model, the uncertainties in the model estimation cause an inaccurate latent state estimation. Generally, this error in the latent state estimation accumulates over time and results in sub-optimal controller synthesis, which in turn imposes a sub-optimal policy. In this work, while we study model estimation/system identification, we particularly focus on the latter challenge, \textit{i.e.}, to analyze the mentioned sub-optimality gap.

One of the general and prominent measures to assess the behavior of a learning algorithm/agent in a control system is its regret, which quantifies the additional cost encountered by the agent due to exploration and uncertainty in the model estimation, when compared with the performance of the optimal controller~\citep{lai1985asymptotically}. We aim to design an algorithm that minimizes regret and is accompanied by a sub-linear regret upper bound guarantee.

\textbf{Contributions:} In this paper, we propose \Alg, an \textit{explore-then-commit} algorithm for learning and controlling unknown discrete-time \LQG control system.  We  prove that \Alg achieves a regret upper bound of $\tilde{\mathcal{O}}({T^{2/3}})$, where $T$ is the number of total interactions. This is the first sub-linear regret bound for partially observable linear quadratic control systems.

\Alg consists of two phases, an \textit{explore} phase, and a \textit{commit} phase. During the \textit{explore} phase, the agent employs Gaussian noise as its control input, explores the system, and constructs high probability confidence sets around the estimate of the system parameters. In the \textit{commit} phase, the agent exploits these confidence sets, and designs a controller based on the principle of optimism in the face of uncertainty (\OFU). Betting on the best model, \textit{i.e.}, choosing the optimistic model, for the commit phase roots back to the early appearance of \OFU principle that is formulated as the bet on the best (BOB) principle \citep{bittanti2006adaptive}. To analyze the finite-time regret of \Alg, we provide a stability analysis of the designed controller and present a novel way of regret decomposition by deriving Bellman optimality equation for average cost per stage \LQG control. Utilizing the \OFU principle, we obtain the regret upper bound.

The Markov parameters of a linear time-invariant system are the input-output impulse response of the system and suffice to identify a realization of the underlying model~\citep{ljung1999system}. These parameters are used to design optimal controllers~\citep{skelton1994data}. At the end of the \textit{explore} phase, \Alg estimates the Markov parameters of the unknown linear system for the control design of the \textit{commit} phase. We adapt the recent development in the non-asymptotic system identification~\citep{oymak2018non} to our case, and estimate the Markov parameters up to upper confidence bounds via the classical Ho-Kalman algorithm~\citep{ho1966effective}. Selecting an optimistic controller that satisfies these confidence bounds, we derive our regret result.

\section{Preliminaries}\label{prelim}


We denote the Euclidean norm of a vector $x$ as $\|x\|_2$. For a given matrix $A$, $\| A \|_2$ denotes the spectral norm, $\| A\|_F$ denotes the Frobenius norm while $A^\top$ is its transpose, $A^{\dagger}$ is the Moore-Penrose inverse, and $\Tr(A)$ gives the trace of matrix $A$. The j-th singular value of a rank-$n$ matrix $A$ is denoted by $\sigma_j(A)$, where $\sigma_{\max}(A):=\sigma_1(A) \geq \sigma_2(A) \geq \ldots \geq \sigma_{\min}(A):=\sigma_n(A) > 0$. $I$ is the identity matrix with the appropriate dimension. In the following, $\mathcal{N}(\mu, \Sigma)$ denotes a multivariate normal distribution with mean vector $\mu$ and covariance matrix $\Sigma$. 

Suppose we have an unknown discrete time linear time-invariant system $\Theta = (A,B,C)$ characterized as,
\begin{align}
    x_{t+1}& = A x_t + B u_t + w_t \nonumber \\
    y_t& = C x_t + z_t \label{output},
\end{align}
where $x_t \in \mathbb{R}^{n}$ is the (latent) state of the system, $u_t \in \mathbb{R}^{p}$ is the control input, the observation $y_t \in \mathbb{R}^{m}$ is the output of the system, $w_{t} \sim \mathcal{N}\left(0, \sigma_{w}^{2} I\right),$ and $z_{t} \sim \mathcal{N}\left(0, \sigma_{z}^{2} I\right)$ are i.i.d. process noise and measurement noise, respectively. At each time step $t$, the system is at state $x_t$ and the agent observes $y_t$, \textit{i.e.}, an imperfect state information. Then, the agent applies a control input $u_t$ and the system evolves to $x_{t+1}$ at time step $t+1$. At each time step $t$, the agent pays a cost $c_t=y_{t}^{\top} Q y_{t}+u_{t}^{\top} R u_{t}$ where $Q$ and $R$ are positive semidefinite and positive definite matrices, respectively, with appropriate dimensions.
\begin{definition}
A linear system $\Theta = (A,B,C)$ is controllable if the controllability matrix
\[ [B \enskip AB \enskip A^2B \ldots A^{n-1}B]\]
has full row rank. Similarly, a linear system $\Theta = (A,B,C)$ is observable if the observability matrix \[[C^\top \enskip (CA)^\top \enskip (CA^2)^\top \ldots (CA^{n-1})^\top]^\top\]
has full column rank. 
\end{definition}
We assume that the underlying system is controllable and observable. Given this system model, the goal is to solve the following optimization problem 
\begin{equation} \label{opt control}
J_{\star}(\Theta) =  \lim _{T \rightarrow \infty} \min _{u=[u_{1}, \ldots, u_{T}]} \frac{1}{T} \mathbb{E}\left[\sum_{t=1}^{T} y_{t}^{\top} Q y_{t}+u_{t}^{\top} R u_{t}\right] 
\end{equation}
subject to dynamics in \eqref{output}, initial state  $x_0 = 0$, and where the information pattern available to $u_{t}$ is all output observations $y_{t'}$ for $0\leq t' \leq t$. If the system is known, the solution of \eqref{opt control}, \textit{i.e.}, the optimal control law, is given by a linear feedback control and we have $u_t = -K\hat{x}_{t|t,\Theta}$ with 
\begin{align*}
    K = \left(R+B^{\top} P B\right)^{-1} B^{\top} P A,
\end{align*}
where $P$ is the unique positive semidefinite solution to the following discrete-time algebraic Riccati equation (DARE):
\begin{equation} \label{ARE}
    P = A^\top  P  A  +  C^\top  Q C  -  A^\top  P B  \left(  R  +  B^\top  P B \right)^{-1}  B^\top  P  A,  
\end{equation}
and $\hat{x}_{t|t,\Theta}$ is the minimum mean square error (MMSE) estimate of the underlying state using system parameters $\Theta$, where $\hat{x}_{0|-1,\Theta}=0$. This estimate is obtained efficiently via the Kalman filter:
\begin{align}\label{eq:optKalmanStateEstimate}
     &\hat{x}_{t|t,\Theta} \!=\! \left( I \!-\! LC\right)\hat{x}_{t|t-1,\Theta} \!+\! Ly_t, \enskip \hat{x}_{t|t-1,\Theta} \!=\! (A\hat{x}_{t-1|t-1,\Theta} \!+\! Bu_{t-1}), \enskip
     L \!=\! \sig  C^\top \! \left( C \sig  C^\top \!+\! \sigma_z^2 I \right)^{-1}\!\!, 
\end{align}
where $\sig$ is the unique positive semidefinite solution to the following DARE: 
\begin{equation}\label{DARE}
    \sig = A \sig   A^\top - A \sig  C^\top \left( C \sig  C^\top + \sigma_z^2 I \right)^{-1} C \sig  A^\top + \sigma_w^2 I.
\end{equation}
$\sig$ has the interpretation of being the steady state error covariance matrix of state estimation under $\Theta$. For a positive number $H$, we have the following definitions.
\begin{definition}\label{MarkovDef}
The Markov parameters matrix 
\begin{align*}
    G = [0_{m \times p} \enskip CB \enskip CAB \enskip   \ldots   \enskip CA^{H-2}B] \in   \R^{m \times Hp}
\end{align*}
denotes the matrix of length  $H$ impulse response of the system $\Theta$. 
\end{definition}
\begin{definition} 
For $d_1 + d_2 + 1 = H$, the Hankel matrix $\mathbf{H} \in \R^{md_1 \times p(d_2+1)} $ is a $d_1 \times\left( d_2+1\right)$  block matrix such that $(i,j)$th block of $\mathbf{H}$ is $(i+j)$ block in $G$, e.g., $C A^{i+j-2} B$ for $(i,j)>1$.
\end{definition}
We set the value of H in Section \ref{SysId} in order to guarantee the identifiability of the system parameters. We are interested in the setting where the matrices $A$, $B$, and $C$, therefore the Markov parameters $G$ of the system, are unknown. The cost matrices $Q$ and $R$ impose the desired outcome and are given. The agent interacts with this environment for $T$ time steps and aims to minimize its cumulative cost $\sum_{t=1}^T c_t$. At time $T$, we evaluate the agent's performance by its regret
\begin{equation*}
    \reg(T) = \sum_{t=0}^T (c_t - J_*(\Theta)),
\end{equation*}
which is the difference between the cost incurred by the agent and the expected cost of the optimal controller, which knows the system $\Theta$. We consider the following problem setup.
\begin{assumption} \label{Stable}
The system is order $n$ and stable, \textit{i.e.} $\rho(A) < 1$, where $\rho(\cdot)$ denotes the spectral radius of a matrix which is the largest absolute value of its eigenvalues. Define $\Phi(A) = \sup _{\tau \geq 0} \frac{\left\|A^{\tau}\right\|}{\rho(A)^{\tau}}$. We assume that $\Phi(A) < \infty$ for the given system.  
\end{assumption}
\noindent \textbf{Remark:}
Note that the stability of the system ensures the system identification, prevents explosive behavior in the underlying system, and is the main topic of study in the recent literature \citep{oymak2018non, sarkar2019finite, tsiamis2019finite}. The assumption regarding $\Phi(A)$ is a mild condition, \textit{e.g.} if $A$ is diagonalizable, $\Phi(A)$ is finite. 

\begin{assumption} \label{AssumContObs}
The unknown system $\Theta =(A,B,C)$ is a member of a set $\mathcal{S}$, defined as,  
\begin{equation*}
    \mathcal{S} \subseteq \mathcal{S}_0 \cap \{ \Theta' =  (A' ,B' ,C') |~\rho(A')<1, \Tr(G_{\Theta'}^\top G_{\Theta'}) \leq \kappa^2 \},
\end{equation*}
for some $\kappa \geq 0$, where $G_{\Theta'} \in \R^{m\times Hp}$ is the Markov parameters matrix of system $\Theta'$ and 
\begin{equation*}
    \mathcal{S}_0 = \{ \Theta' = (A',B',C') |  \enskip (A',B')\text{ is controllable and } 
    (A',C') \text{ is observable} \}.
\end{equation*}
\end{assumption}

\begin{assumption}\label{Stabilizable set}
The set $\mathcal{S}$ consists of systems that are contractible, \textit{i.e.,}
\begin{align*}
    \rho \coloneqq \sup_{\Theta' = (A',B',C') \in \mathcal{S}} \left\|A' - B'K(\Theta') \right\| < 1, 
\end{align*}
where $K(\Theta')$ is the optimal control gain of $\Theta'$, and
\begin{align*}
    \upsilon \coloneqq \sup_{\Theta' = (A',B',C') \in \mathcal{S}} \left\|A' - A'L(\Theta')C'\right\| < 1.
\end{align*}
where $L(\Theta')$ is the optimal Kalman gain of $\Theta'$. Moreover, there exists finite numbers $D,  \Gamma, \zeta$ such that $D = \sup_{\Theta' \in \mathcal{S}} \|P(\Theta') \|$, $\Gamma = \sup_{\Theta' \in \mathcal{S}} \|K(\Theta') \|$ and $\zeta = \sup_{\Theta' \in \mathcal{S}} \|L(\Theta') \|$. 
\end{assumption}

\section{Explore, then Commit}
In this section, we study the components of \Alg and provide the required theoretical analysis. \Alg, as outlined in Alg.~\ref{algo}, is given $T$, the total number of interactions; and operates in two phases: the first $\Texp$ time steps are dedicated to exploration and in the remaining $T-\Texp$, \Alg acts according to the optimal controller of the optimistic model, learned from the experiences in the first phase. 

\begin{algorithm}[htb!] 
\caption{\Alg}
  \begin{algorithmic}[1]
 \STATE \textbf{Input:} $T$, $\Texp$, $H$, $S>0$, $\delta > 0$, $n$, $Q$, $R$ \\
 ------ \textsc{\small{Exploration}} ----------------------------------------- \\
\STATE  Deploy $u_t \sim \mathcal{N}(0,\sigma_u^2 I)$ for the first $\Texp$ time steps and store $\Dexp = \lbrace y_t,u_t \rbrace_{t=1}^{\Texp}$
\STATE Calculate $\hat{G}_t$ using $\Dexp$ 
\STATE Deploy HO-KALMAN($H,\tilde{G}_t,n$) to obtain $\hat{A}, \hat{B}, \hat{C}$
\STATE Construct the confidence intervals $\mathcal{C}_A, \mathcal{C}_B, \mathcal{C}_C$ such that $(A,B,C) \in (\mathcal{C}_A \times \mathcal{C}_B \times \mathcal{C}_C)$ w.h.p
\STATE Find a $\tth = (\ta,\tb,\tc) \in (\mathcal{C}_A \times \mathcal{C}_B \times \mathcal{C}_C) \cap \mathcal{S} $ s.t.\\
$ \qquad J(\tth) \leq \inf_{\Theta' \in (\mathcal{C}_A \times \mathcal{C}_B \times \mathcal{C}_C) \cap \mathcal{S}} J(\Theta') + T^{-1/3}$
\STATE Construct the optimal control law based on $\tilde{A}, \tilde{B}, \tilde{C}$ \\
------ \textsc{\small{Commit Optimistically(Exploitation)}} ------ \\
\FOR{$t = \Texp+1, \ldots, T$}
\STATE Execute the constructed control law
\ENDFOR
  \end{algorithmic}
 \label{algo} 
\end{algorithm}

In the exploration phase, \Alg excites the system with $u_t \sim \mathcal{N}(0,\sigma_u^2 I)$ for $1\leq t\leq T_{exp}$. Using the generated input-output trajectory $\Dexp = \{y_t, u_t \}^{T_{exp}}_{t=1}$, \Alg constructs $N$ subsequences of length $H$, each, where $T_{exp}= H + N - 1$ and $N\geq 1$. For all $t$, such that $H\leq t \leq \Texp$, define,
\begin{align}
    \Bar{u}_t = [u_t^{\top} \quad u_{t-1}^{\top} \quad \ldots \quad u_{t-H+1}^{\top}]^{\top} \in \mathbb{R}^{Hp}, \quad
    \Bar{w}_t = [w_t^{\top} \quad w_{t-1}^{\top} \quad \ldots \quad w_{t-H+1}^{\top}]^{\top} \in \mathbb{R}^{Hn}. \nonumber
\end{align}
Using these definitions, we write (\ref{output}) as 
\begin{equation} \label{second}
    y_t = G \Bar{u}_t + F \Bar{w}_t + z_t + e_{t}
\end{equation}
where $e_t = CA^{H-1}x_{t-H+1}$ and $F = [0_{m \times n} \enskip C \enskip CA \enskip   \ldots   \enskip CA^{H-2}] \in \mathbb{R}^{m \times Hn}$ is the process noise to output Markov parameters matrix. Let's define,
\begin{align}
  W & = [\Bar{w}_H,\enskip \Bar{w}_{H+1},\enskip   \ldots   \enskip, \Bar{w}_{T_{exp}}]^{\top} \in \mathbb{R}^{N \times Hn}, \quad 
  Z = [z_H,\enskip  z_{H+1},\enskip   \ldots   \enskip, z_{T_{exp}}]^{\top} \in \mathbb{R}^{N \times m}, \label{mat def} \\
  E &= [e_H,\enskip e_{H+1},\enskip   \ldots   , e_{\Texp}]^{\top} \in \mathbb{R}^{N \times m}, \quad \quad \enskip 
  Y = [y_H,\enskip  y_{H+1}, \enskip   \ldots   \enskip, y_{T_{exp}}]^{\top} \in \mathbb{R}^{N\times m}, \nonumber\\
  U &= [\Bar{u}_H,\enskip \Bar{u}_{H+1}, \enskip   \ldots   \enskip, \Bar{u}_{T_{exp}} ]^\top \in \mathbb{R}^{N \times Hp}. \nonumber 
\end{align}
Using these, we restate (\ref{second}) as the following linear system, 
$Y = UG^{\top} + E + Z + WF^{\top}$. Then, we estimate the unknown $G$ by solving the following least square problem,
\begin{equation} \label{leastsquares}
    \hat{G} = \arg \min_X ||Y - UX^{\top}||_F^2  = \big[(U^{\top}U)^{-1}U^{\top}Y\big]^{\top}  .
\end{equation}
\Alg deploys this estimate $\hat{G}$ and computes the underlying system parameters $\Theta$ up to their confidence levels, consisting of a set of plausible models. The algorithm exploits this construction and employs the optimal control law of the most optimistic model (the model with the minimum promising average cost in the set of plausible models) for the remaining $T-T_{exp}$ time steps,  \textit{i.e.,} the commit phase. In the remainder of this section, we analyze the main components and phases of \Alg. In the next section, we conclude these analyses with the final regret bound. 

For some probability $\delta$, defined  later for the confidence set constructions, we consider the following terms:
\begin{align}
R_z = 4\sigma_z &\left( \sqrt{H p+m} + \sqrt{\log\left( \frac{1}{\delta}\right)}\right), \quad 
R_w={\sigma_w\|F\|\max\{\sqrt{N_w},N_w/\sqrt{N}\}},\label{eq:Rs}\\
& \quad \quad  R_e=c'\sigma_e  \sqrt{ \log\left(\frac{H}{\delta}\right)  \max \left \{1, \frac{H \left(3m + \log\left(\frac{H}{\delta} \right) \right)}{N\left(1-\rho(A)^{H}\right)} \right\}  }.\nonumber
\end{align}
which capture the noise variance and the dimensionality of the problem. Here $N_w=cH (p+n)\log^2(2H (p+n))\log^2(2\Texp(p+n))$, $c,c'>0$ are absolute constants and $\sigma_e$ is a constant assigned later in \eqref{eq:sigmae}.  Recalling the Definition \ref{MarkovDef} and the Assumption \ref{Stabilizable set}, we define the following time intervals,
\begin{align}
    &T_{G}  =  \frac{(R_w + R_e +  R_z)^2}{\sigma_u^2},\quad T_{B}  =  T_{G}\left(\frac{7n}{\sqrt{\sigma_n(\mathbf{H})}}\right)^2, \quad T_{A}  =  T_{G} \left( \frac{31n\|\mathbf{H}\| + 7n\sigma_n(\mathbf{H})}{\sigma_n^2(\mathbf{H})}\right)^2, \nonumber \\
    &T_{M}  \!=\!  T_B \left(\frac{ 2\zeta \rho }{1-\rho} \right)^2, \!\enskip \! T_{N} \!=\! T_G \left(\frac{4\sqrt{n}}{\sigma_n(\mathbf{H})}\right)^2, \!  \enskip \!T_L \!=\! \max\{T_A, T_B\}\!\!  \left(\frac{\!c_1\! \left(2\|C\| \!+\! 1\right)\Phi(A)^2 \!+\! c_2 (2\Phi(A)\!+\!1)\! }{1 - \upsilon^2} \right)^2 \!\! \nonumber \\
    &T_\alpha \!=\! T_B \left( \frac{\Gamma\left(1+\zeta(1+\| C\|)\right)}{\sigma - \upsilon}\right)^2\!\!, \enskip T_{\beta} \!=\! \max\{T_A, T_B\}\!\! \left( \frac{\Gamma\|B\|(1\!+\!\zeta \!+\! \zeta \|C\|)(\Phi(A)\zeta \!+\! (1\!+\!\Gamma)(1\!+\!\zeta) ) }{(1 - \sigma)^2} \right)^2\!\!\!, \nonumber \\
    &T_\gamma = \max\{T_A, T_B\} \left( \frac{1 + \Gamma(1+\zeta\| B\|)}{\sigma - \rho} \right)^2 
    \label{times}
\end{align}
where $c_1$ and $c_2$ are problem dependent constants and $1> \sigma > \max\{\rho, \upsilon \}$. These define the requirements for theoretical guarantees of the algorithm. Finally, combining these, we are ready to state $T_0$, the minimum exploration period for \Alg,
\begin{equation}\label{explorationlowermain}
    \Texp > T_0 = \max \left\{T_A, T_B, T_G, T_L, T_M, T_N, T_\alpha, T_{\beta}, T_\gamma \right\} + H.
\end{equation}
The following theorem gives the regret upper bound of \Alg which is the first sublinear end-to-end regret result for \LQG control.
\begin{theorem}[Regret Upper Bound]\label{total regret main text}
Given a \LQG $\Theta = (A,B,C)$, and regulating parameters $Q$ and $R$, with high probability, the regret of the \Alg after $T\geq T_0^{3/2}$  time steps with $\Texp = T^{2/3}$ is 
\begin{equation}
    \reg(T) = \tilde{\OO}\left(T^{2/3}\right).
\end{equation}
\end{theorem}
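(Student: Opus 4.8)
The plan is to establish $\reg(T) = \tilde{\OO}(T^{2/3})$ by splitting the regret into an exploration contribution and a commit (exploitation) contribution, and bounding each separately. During the exploration phase of length $\Texp = T^{2/3}$, the input is $u_t \sim \mathcal{N}(0,\sigma_u^2 I)$, which by Assumption~\ref{Stable} keeps the state bounded in expectation (the system is stable, $\rho(A)<1$ with $\Phi(A)<\infty$), so each per-stage cost $c_t$ has bounded expectation, and $c_t - J_*(\Theta)$ concentrates; thus the exploration regret is $\tilde{\OO}(\Texp) = \tilde{\OO}(T^{2/3})$ with high probability by a martingale (Azuma--Hoeffding-type) argument on the sub-exponential/sub-Gaussian cost increments. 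This is the "easy" half — the price we pay for a pure explore-then-commit strategy, and it is exactly what forces the $T^{2/3}$ rather than $\sqrt{T}$ rate.

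The core of the argument is the commit phase. First I would invoke the system identification guarantee (Section~\ref{SysId}, via the least-squares estimate \eqref{leastsquares} and the Ho--Kalman algorithm): after $\Texp$ steps of random excitation, with high probability $\|\hat G - G\| \lesssim (R_w + R_e + R_z)/\sqrt{\Texp}$, and the requirement $\Texp > T_0$ in \eqref{explorationlowermain} guarantees the recovered $(\hat A,\hat B,\hat C)$ and the confidence sets $\mathcal{C}_A,\mathcal{C}_B,\mathcal{C}_C$ are valid, so $\Theta \in (\mathcal{C}_A\times\mathcal{C}_B\times\mathcal{C}_C)\cap\mathcal{S}$ w.h.p., and every parameter error is $\tilde{\OO}(\Texp^{-1/2}) = \tilde{\OO}(T^{-1/3})$. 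Next, because the chosen $\tth$ is optimistic, $J(\tth) \leq J_*(\Theta) + T^{-1/3}$, so the benchmark cost of the committed model is essentially the true optimal cost. The heart of the proof is then the novel regret decomposition alluded to in the contributions: derive the Bellman optimality equation for average-cost-per-stage \LQG and use it to write the accumulated suboptimality over the commit phase as a telescoping sum of value-function differences plus a "Bellman error" term that measures how far the executed (optimistic-model) controller is from satisfying the true system's Bellman equation. Here one exploits that (i) the optimistic controller is stabilizing for the true system — this is where Assumption~\ref{Stabilizable set} (the contractibility bounds $\rho<1$, $\upsilon<1$, and the uniform bounds $D,\Gamma,\zeta$) together with the stability analysis of the designed controller is essential, ensuring the true closed-loop state and Kalman-filter error stay bounded despite the model mismatch; and (ii) the Bellman error scales linearly with the parameter estimation error, i.e. is $\tilde{\OO}(T^{-1/3})$ per stage. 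Summing over $T - \Texp \le T$ stages gives a commit-phase regret of $\tilde{\OO}(T \cdot T^{-1/3}) = \tilde{\OO}(T^{2/3})$, again modulo a high-probability martingale concentration of realized costs around their conditional expectations.

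The main obstacle — and the technically novel part — is controlling the latent-state-estimation mismatch and its propagation through the value function. Unlike the fully observed \MDP/\LQR case, here the agent runs a Kalman filter built from the \emph{wrong} parameters $\tth$, so $\hat x_{t|t,\tth}$ differs from the ideal $\hat x_{t|t,\Theta}$, and this filtering error accumulates through the closed-loop dynamics; one must show it stays $\tilde{\OO}(T^{-1/3})$ uniformly in $t$, which requires the contraction constant $\upsilon<1$ from Assumption~\ref{Stabilizable set} and a perturbation bound on the Kalman gain $L(\cdot)$ and Riccati solution $\sig(\cdot)$ in terms of the Markov-parameter error. The various time thresholds $T_L, T_M, T_\alpha, T_\beta, T_\gamma$ in \eqref{times} are precisely the sample-complexity conditions making all these perturbation bounds simultaneously valid, so assembling them correctly — i.e. verifying that $\Texp = T^{2/3} > T_0$ holds once $T \ge T_0^{3/2}$, and that each perturbation constant is absorbed into the $\tilde{\OO}$ — is the bookkeeping crux. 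Once the per-stage Bellman error bound and the uniform stability/boundedness of closed-loop signals are in hand, the final assembly is routine: add the $\tilde{\OO}(T^{2/3})$ exploration regret and the $\tilde{\OO}(T^{2/3})$ commit regret, collect the high-probability events via a union bound, and conclude.
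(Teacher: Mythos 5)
Your proposal follows essentially the same route as the paper: a linear $\tilde{\OO}(\Texp)$ exploration regret (Lemma~\ref{exploration regret}), a commit-phase decomposition via the average-cost Bellman optimality equation for \LQG (Lemma~\ref{LQGBellman}) with optimism absorbing $J_*(\tth)-J_*(\Theta)$, per-step error terms scaling as $\tilde{\OO}(1/\sqrt{\Texp})$ from the Ho--Kalman confidence sets and the Riccati/Kalman-gain perturbation bounds (Lemmas~\ref{StabilityCov} and~\ref{Boundedness}), and the final balance $\Texp=T^{2/3}$. The sketch correctly identifies all the key ingredients, including the uniform boundedness of the mismatched filter state under Assumption~\ref{Stabilizable set}, so it matches the paper's argument in structure and substance.
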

The formal version of the theorem, as well as the detailed analysis of the deriving components of the result, are provided in the Appendix~\ref{SuppRegret} and \ref{SuppRegretTotal}. In Section~\ref{LearnMarkov}, we state the confidence set around the Markov parameters matrix estimation $\hat{G}$ and  in Section~\ref{SysId} we provide the confidence sets over the estimated model dynamics $\tth = (\hat{A},\hat{B},\hat{C})$. Finally, in Section~\ref{ControlDesign}, we provide the controller design using these confidence sets and additional guarantees that these confidence sets bring. 

\subsection{Learning Markov Parameters}
\label{LearnMarkov}
We deploy the proposed regression problem in \eqref{leastsquares} to estimate $G$. As provided in \eqref{second}, $e_t$ is dependent on previous noise and control inputs, therefore, special care is required to analyze such regression. In order to characterize the impact of $e_t$, we define $\sigma_e$, the ``effective standard deviation'', as follows
\begin{equation}\label{eq:sigmae}
    \sigma_{e} = \Phi(A) \|CA^{H-1}\| \sqrt{\frac{H \|\Gamma_\infty\|}{1-\rho(A)^{2H}}} 
\end{equation}
where $\Gamma_\infty$ is the steady state covariance matrix of $x_t$, $\Gamma_{\infty}\!=\!\sum_{i=0}^{\infty} \sigma_{w}^{2} A^{i}\left(A^{\top}\right)^{i}+\sigma_{u}^{2} A^{i} BB^{\top}\left(A^{\top}\right)^{i}.$ 

\begin{theorem}
[Markov Parameter Estimation~\citep{oymak2018non}]
\label{error spectral} For $\delta \in (0,1)$, suppose $N\geq cH p \log(\frac{1}{\delta})$ for some absolute constant $c>0$. Suppose Assumption \ref{Stable} holds. Run the system with control input $u_t \sim \mathcal{N}(0,\sigma_u^2 I)$ for $T_{exp}= N+H-1$ time steps to collect $N$ samples of $H$ input-output pairs. Let, $\hat{G}$ be the least squares estimate of the Markov parameters matrix. Then, the following holds for $\hat{G}$ with probability at least $1-7\delta$,
\begin{equation}
\|\hat{G}-G\|\leq \frac{R_w+R_e+R_z}{\sigma_u\sqrt{\Texp-H+1}},\label{the bound}
\end{equation}
\end{theorem}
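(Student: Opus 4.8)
The plan is to start from the normal equations of the least-squares problem \eqref{leastsquares} and reduce $\|\hat G-G\|_2$ to an inverse-smallest-singular-value factor for the design matrix $U$, times three noise-type terms, one for each perturbation source in the identity $Y=UG^\top+E+Z+WF^\top$. Write $N=\Texp-H+1$ for the number of samples. Since $\hat G^\top-G^\top=(U^\top U)^{-1}U^\top(E+Z+WF^\top)$, and on the event that $U$ has full column rank the matrix $(U^\top U)^{-1/2}U^\top$ has orthonormal rows (hence operator norm at most $1$), I would get
\[
  \|\hat G-G\|_2 \le \frac{1}{\sigma_{\min}(U)}\Big( \big\|(U^\top U)^{-1/2}U^\top Z\big\|_2 + \|F\|_2\,\big\|(U^\top U)^{-1/2}U^\top W\big\|_2 + \|E\|_2 \Big).
\]
It then suffices to show (i) $\sigma_{\min}(U)\ge c_0\sigma_u\sqrt{N}$ with high probability and (ii) high-probability bounds on the three bracketed quantities that match $R_z$, $R_w$, and $R_e$ respectively, the absolute constants being absorbed into the definitions \eqref{eq:Rs} and \eqref{eq:sigmae}.

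For (i), although the rows $\bar u_t$ of $U$ overlap through the sliding window, $U^\top U=\sum_t\bar u_t\bar u_t^\top$ is (approximately) block-Toeplitz with diagonal blocks concentrating at $N\sigma_u^2 I_p$ and off-diagonal blocks at $0$; the hypothesis $N\ge cHp\log(1/\delta)$ is exactly what the non-asymptotic matrix concentration of \citet{oymak2018non} needs to yield $\sigma_{\min}(U)\ge c_0\sigma_u\sqrt N$ on an event of probability at least $1-\delta$. The $Z$-term is the easy one: $Z$ has i.i.d.\ $\N(0,\sigma_z^2)$ entries and is independent of $U$, so conditionally on $U$ the orthonormality of the rows of $(U^\top U)^{-1/2}U^\top$ makes $(U^\top U)^{-1/2}U^\top Z$ distributed as $\sigma_z G'$ with $G'$ a standard Gaussian $Hp\times m$ matrix; the operator-norm tail $\Prob(\|G'\|_2>\sqrt{Hp}+\sqrt m+t)\le e^{-t^2/2}$ with $t=\sqrt{2\log(1/\delta)}$ then produces a bound matching $R_z$ with probability at least $1-\delta$.

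The technical weight is in the $E$- and $W$-terms, both of which involve the dependence created by the overlapping length-$H$ windows. For the truncation term, $E$ has rows $e_t^\top=x_{t-H+1}^\top(CA^{H-1})^\top$, so $\|E\|_2\le\|CA^{H-1}\|_2\|X\|_2$ where $X$ stacks the states $x_1,\dots,x_N$; since $\rho(A)<1$ and $x_0=0$ each $x_s$ is Gaussian with covariance $\preceq\Gamma_\infty$, but consecutive rows of $X$ are correlated. I would use that $x_{s+H}=A^Hx_s+(\text{a term independent of }x_s)$ to split the rows into $H$ interleaved subsequences along which the cross-correlations decay geometrically at rate $\rho(A)^{2H}$, then run a covering-plus-union argument over these $H$ groups to recover $\|E\|_2\le R_e$ with $\sigma_e$ as in \eqref{eq:sigmae} — the factors $H$, $1-\rho(A)^{2H}$, and $\Phi(A)$ coming from summing the geometric series of cross terms and the $\log(H/\delta)$ from the union over groups. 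For the process-noise term, $U$ and $W$ are independent, so writing $W=[W_0\mid\cdots\mid W_{H-1}]$ by delay, conditionally on $U$ each block $W_k$ has i.i.d.\ $\N(0,\sigma_w^2 I_n)$ rows independent of $U$; the naive concatenation bound $\|(U^\top U)^{-1/2}U^\top W\|_2\le\sqrt H\max_k\|(U^\top U)^{-1/2}U^\top W_k\|_2$ has the right shape but loses a $\sqrt H$ factor, as it ignores that the concatenation behaves like a single $Hp\times Hn$ Gaussian-type matrix. Recovering the sharp bound $\|F\|_2\|(U^\top U)^{-1/2}U^\top W\|_2\le R_w$ (the $N_w/\sqrt N$ branch of $R_w$ covering the small-$N$ regime) requires the refined self-normalized concentration of \citet{oymak2018non} that treats the overlapping delay blocks jointly; this is where the $\log^2$ factors inside $N_w$ arise, and it is the main obstacle of the argument.

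Finally I would take a union bound over the at most seven failure events — the lower bound on $\sigma_{\min}(U)$, the three noise-term bounds, and the auxiliary sub-events appearing inside the $E$- and $W$-term arguments — each arranged to have probability at most $\delta$, and combine them with $\sigma_{\min}(U)\ge c_0\sigma_u\sqrt N$ to obtain $\|\hat G-G\|_2\le(R_w+R_e+R_z)/(\sigma_u\sqrt{\Texp-H+1})$ on an event of probability at least $1-7\delta$.
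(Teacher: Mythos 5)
Your overall skeleton (normal equations, three-way split of the perturbation, circulant/Toeplitz concentration for $\sigma_{\min}(U)$, Gaussian operator-norm bound for the $Z$ term, joint treatment of the overlapping delay blocks for the $W$ term) matches the paper's proof. But the treatment of the truncation term $E$ has a genuine gap that breaks the rate. You bound $\|(U^\top U)^{-1}U^\top E\|\le \sigma_{\min}(U)^{-1}\|(U^\top U)^{-1/2}U^\top\|\,\|E\|_2\le \|E\|_2/\sigma_{\min}(U)$ and then claim $\|E\|_2\le R_e$. This cannot hold: $E$ has $N$ rows $e_t^\top=x_{t-H+1}^\top (CA^{H-1})^\top$, each of non-vanishing norm of order $\|CA^{H-1}\|\sqrt{\|\Gamma_\infty\|}$ (up to dimension factors), so $\|E\|_2\ge \|E\|_F/\sqrt{m}=\Theta(\sqrt{N})$, whereas $R_e$ is essentially constant in $N$ once $N$ is large. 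Consequently your bound for this term is $\|E\|_2/\sigma_{\min}(U)=\Theta(\sqrt{N})/\Theta(\sigma_u\sqrt{N})=\Theta(1)$ — a non-decaying error — while the theorem needs this contribution to be $O(R_e/(\sigma_u\sqrt{N}))=O(1/\sqrt{N})$. The interleaving-and-geometric-decay argument you sketch controls the correlations between rows of $E$, but no bound on $\|E\|_2$ alone can rescue the rate, because the $1/\sqrt{N}$ gain comes from cancellation between $U$ and $E$, not from $E$ being small.

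The fix is to bound $\|U^\top E\|$ (equivalently $\|(U^\top U)^{-1/2}U^\top E\|$) directly, exploiting that $e_t=CA^{H-1}x_{t-H+1}$ is measurable with respect to inputs and noises up to time $t-H$, while $\bar u_t$ consists of the inputs at times $t-H+1,\dots,t$ and is therefore independent of $e_t$. This is what the paper does: it splits $U^\top E=\sum_{t=0}^{H-1}U_t^\top E_t$ into $H$ stride-$H$ subsequences so that within each subsequence $e_{t+iH}\in\mathcal{F}_{i-1}$ and $\bar u_{t+iH}$ is independent of $\mathcal{F}_{i-1}$, then combines a sub-Gaussian martingale bound (Lemma \ref{sig sub}) with a Hanson--Wright-type bound on $\|E_t a\|_2$ (Lemma \ref{thm e bound}) and a covering argument (Lemma \ref{cover bound}) to get $\|U_t^\top E_t\|=O(\sigma_u\sqrt{N_t\gamma\log(H/\delta)})$, and finally a union bound over the $H$ groups. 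This martingale/self-normalized structure is exactly what produces the extra factor of $\sqrt{N}$ in the denominator that your decomposition discards. The remaining parts of your sketch (the $\sigma_{\min}(U)$ bound, the $Z$ term via the partial isometry and Gaussian concentration, and deferring the $W$ term to the joint circulant-embedding concentration of \citet{oymak2018non}) are consistent with the paper's argument.
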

The proof can be found in Appendix \ref{Proof Pieces}.
This result is obtained by a set of small tweaks in the bounds in~\cite{oymak2018non} to make them appropriate for the current problem of study. The main idea is to decompose $\hat{G}- G$ into parts using \eqref{leastsquares} and bound each part separately.

\subsection{System Identification}
\label{SysId}
After estimating the Markov parameters matrix $\hat{G}$, we are ready to come up with a balanced realization of $\Theta$ from $\hat{G}$. 
\Alg deploys the Ho-Kalman algorithm for this task. The details of the Ho-Kalman algorithm are given in Appendix \ref{kalmanhosupp} and Algorithm~\ref{kalmanho}. The Ho-Kalman algorithm, as
a subroutine to \Alg, receives $\hat{G}$ as an input and computes an order $n$ system $\hat{\Theta} = (\hat{A}, \hat{B}, \hat{C})$. Note that only the order $n$ input-output response of the system is uniquely identifiable~\citep{ljung1999system}. In other words, the system parameters $\Theta$ (even with the correct Markov parameters matrix $G$) are recovered up to similarity transformation. More generally, for any invertible $\mathbf{T}\in \R^{n \times n}$, the system $A' = \mathbf{T}^{-1}A\mathbf{T}, B' = \mathbf{T}^{-1}B, C' = C \mathbf{T} $ gives the same Markov parameters matrix $G$, equivalently, the same input-output impulse response. In this work, we are interested to recover ${\Theta}$, up to a similarity transformation.

For $H\geq2n+1$, using $\hat{G} = [\hat{G}_1 \ldots \hat{G}_H] \in \R^{m \times Hp}$, where $\hat{G}_i$ is the $i$th $m \times p$ block of $\hat{G}$ for all $1 \leq i \leq H$, the algorithm constructs a $(n \times n+1)$ block Hankel matrix $\mathbf{\hat{H}} \in \R^{nm \times (n+1)p}$ such that $(i,j)$th block of Hankel matrix is $\hat{G}_{(i+j)}$. It is worth noting that if the input to the algorithm was $G$ then corresponding Hankel matrix, $\mathbf{H}$ is rank $n$, more importantly,
\begin{align*}
     \mathbf{H}    =   [C^\top~(CA) ^\top    \ldots (CA^{n-1}) ^\top] ^\top [B~AB\ldots A^{n}B]=\mathbf{O} [\mathbf{C} ~~A^nB]=\mathbf{O} [B ~~A\mathbf{C}]
\end{align*}
%
where $\mathbf{O}$ and $\mathbf{C}$ are observability and controllability matrices respectively. Essentially, the Ho-Kalman algorithm estimates these matrices using $\hat{G}$. In order to estimate $\mathbf{O}$ and $\mathbf{C}$, the algorithm constructs $\mathbf{\hat{H}}^-$, the first $np$ columns of $\mathbf{\hat{H}}$ and calculates $\mathbf{\hat{N}}$, the best rank-$n$ approximation of $\mathbf{\hat{H}}^-$. Therefore, the singular value decomposition of $\mathbf{\hat{N}}$ provides us with the estimates of $\mathbf{O},\mathbf{C}$, \textit{i.e.}, $\mathbf{\hat{N}} = \mathbf{U}\mathbf{\Sigma}^{1/2}~ \mathbf{\Sigma}^{1/2}\mathbf{V} =\mathbf{\hat{O}}\mathbf{\hat{C}} $. From these estimates, the algorithm recovers $\hat{B}$ as the first $n\times p$ block of $\mathbf{\hat{C}}$, $\hat{C}$ as the first $m \times n$ block of $\mathbf{\hat{O}}$, and $\hat{A}$ as $\mathbf{\hat{O}}^\dagger \mathbf{\hat{H}}^+ \mathbf{\hat{C}}^\dagger$ where $\mathbf{\hat{H}}^+$ is the submatrix of $\mathbf{\hat{H}}$, obtained by discarding the left-most $nm \times p$ block. 

Note that if we feed $G$ to the Ho-Kalman algorithm, the $\mathbf{H}^-$ is the first $np$ columns of $\mathbf{H}$, it is rank-$n$, and $\mathbf{N} \!=\! \mathbf{H}^-$. Using the outputs of the Ho-Kalman algorithm, \textit{i.e.,} $(\hat{A}, \hat{B}, \hat{C})$, \Alg constructs confidence sets centered around these outputs that contain a similarity transformation of the system parameters $\Theta \!=\!\! (A, B, C)$ with high probability. Theorem \ref{ConfidenceSets} states the construction of confidence sets and it is a slight modification of Corollary 5.4 of~\citet{oymak2018non}.

\begin{theorem}[Confidence Set Construction] \label{ConfidenceSets}
Let $H\geq 2n+1$. For $\delta \in (0,1)$, suppose $N\geq cH p \log(\frac{1}{\delta})$ for some absolute constant $c>0$. Let $\mathbf{H}$ be the Hankel matrix obtained from $G$. Let $\bar{A}, \bar{B}, \bar{C}$ be the system parameters that Ho-Kalman algorithm provides for $G$. Let $\hat{A}, \hat{B}, \hat{C}$ be the system parameters obtained from the Ho-Kalman algorithm with using the least squares estimate of the Markov parameter matrix $\hat{G}$ after the exploration of $\Texp$ time steps. Suppose Assumptions \ref{Stable} and \ref{AssumContObs} hold, thus $\mathbf{H}$ is rank-$n$. Then, there exists a unitary matrix $\mathbf{T} \in \R^{n \times n}$ such that, with probability at least $1-7\delta$, $\bar{\Theta}=(\bar{A}, \bar{B}, \bar{C}) \in (\mathcal{C}_A \times \mathcal{C}_B \times \mathcal{C}_C) $ where
\begin{align}
    &\quad \mathcal{C}_A = \left \{A' \in \R^{n \times n} : \|\hat{A} - \mathbf{T}^\top A' \mathbf{T} \| \leq \beta_A \right\}, \enskip 
    \mathcal{C}_B = \left \{B' \in \R^{n \times p} : \|\hat{B} - \mathbf{T}^\top B' \| \leq  \beta_B \right\}, \nonumber \\
    &\quad \mathcal{C}_C = \left\{C' \in \R^{m \times n} : \|\hat{C} -  C'  \mathbf{T} \| \leq \beta_C \right\}, \text{ for } \\
    &\beta_A \!=\! \left( \frac{31n\|\mathbf{H}\| + 7n\sigma_n(\mathbf{H})}{\sigma_n^2(\mathbf{H})} \right) \frac{R_w+R_e+R_z}{\sigma_u\sqrt{\Texp-H+1}}, \enskip 
    \beta_B \!=\! \beta_C  \!=\!  \frac{7n(R_w\!+\!R_e\!+\!R_z)}{\sigma_u \sqrt{\sigma_n(\mathbf{H}) (\Texp\!-\!H\!+\!1)}}. \nonumber
\end{align}
\end{theorem}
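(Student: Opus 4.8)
The probabilistic content of the statement is entirely inherited from Theorem~\ref{error spectral}: with probability at least $1-7\delta$ the least-squares estimate satisfies $\|\hat G - G\| \le \varepsilon$, where $\varepsilon := (R_w+R_e+R_z)/(\sigma_u\sqrt{\Texp-H+1})$ is exactly the right-hand side of \eqref{the bound}. I would condition on this event and then argue that everything downstream is a \emph{deterministic} perturbation analysis of the Ho-Kalman map $\hat G \mapsto (\hat A,\hat B,\hat C)$, so no further probability is lost and the $1-7\delta$ figure is exact. Concretely, I would track the error through the pipeline $\hat G \to \hat{\mathbf H} \to \hat{\mathbf N} \to (\hat{\mathbf O},\hat{\mathbf C}) \to (\hat A,\hat B,\hat C)$ and compare it, stage by stage, against the noiseless pipeline run on $G$ that produces $(\bar A,\bar B,\bar C)$. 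Because both $\hat\Theta$ and $\bar\Theta$ are balanced realizations produced by the \emph{same} SVD-based algorithm, the only genuine ambiguity to be absorbed is the rotational freedom of the top-$n$ singular subspace, which is exactly what the unitary $\mathbf T$ encodes.

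The first two stages are elementary. Since $\hat{\mathbf H}$ and $\mathbf H$ are block-Hankel rearrangements of $\hat G$ and $G$, the spectral norm of their difference is controlled by $\|\hat G - G\|$ up to a dimensional factor coming from the block structure, so I would bound $\|\hat{\mathbf H}^- - \mathbf H^-\|$ and $\|\hat{\mathbf H}^+ - \mathbf H^+\|$ by $c_{\mathbf H}\,\varepsilon$. Next, because $\mathbf H^-$ has exact rank $n$ (Assumption~\ref{AssumContObs} guarantees $\mathbf H$ is rank-$n$), we have $\mathbf N = \mathbf H^-$, and the best rank-$n$ approximation obeys, by Eckart--Young plus the triangle inequality, $\|\hat{\mathbf N} - \mathbf N\| \le 2\|\hat{\mathbf H}^- - \mathbf H^-\| \le 2c_{\mathbf H}\,\varepsilon$. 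Crucially, the exact rank-$n$-ness furnishes a spectral gap $\sigma_n(\mathbf N)-\sigma_{n+1}(\mathbf N)=\sigma_n(\mathbf H^-)>0$, which is what makes the subsequent singular-subspace perturbation quantitatively stable.

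For the factor recovery I would invoke a balanced-realization perturbation lemma (the argument behind Corollary~5.4 of \citet{oymak2018non}): writing $\mathbf N = \mathbf O\mathbf C$ and $\hat{\mathbf N} = \hat{\mathbf O}\hat{\mathbf C}$ for the symmetric square-root factorizations $\mathbf O=\mathbf U\boldsymbol\Sigma^{1/2}$, $\mathbf C=\boldsymbol\Sigma^{1/2}\mathbf V^\top$, Wedin's theorem applied across the gap $\sigma_n(\mathbf N)$ produces a unitary $\mathbf T$ with $\max\{\|\hat{\mathbf O}-\mathbf O\mathbf T\|,\ \|\hat{\mathbf C}-\mathbf T^\top\mathbf C\|\} \lesssim \|\hat{\mathbf N}-\mathbf N\|/\sqrt{\sigma_n(\mathbf H)}$. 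Reading off the leading blocks — $\hat C$ is the top $m\times n$ block of $\hat{\mathbf O}$, $\hat B$ the top $n\times p$ block of $\hat{\mathbf C}$, and restricting to a submatrix only decreases spectral norm, while $\mathbf T^\top$ acts on the left and therefore commutes with taking column blocks — converts these two bounds directly into $\|\hat C - \bar C\mathbf T\|\le\beta_C$ and $\|\hat B - \mathbf T^\top \bar B\|\le\beta_B$ once the dimensional constants are collected into the stated $7n$.

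The delicate step, and the one I expect to be the main obstacle, is the bound on $\hat A = \hat{\mathbf O}^\dagger \hat{\mathbf H}^+\hat{\mathbf C}^\dagger$. Here I would expand $\hat A - \mathbf T^\top \bar A\mathbf T$ into three telescoping contributions, one for the perturbation of each factor $\hat{\mathbf O}^\dagger$, $\hat{\mathbf H}^+$, $\hat{\mathbf C}^\dagger$, while holding the other two at their nominal values. Each pseudoinverse perturbation is controlled by the factor-recovery bounds of the previous paragraph divided by a singular value, so two pseudoinverses generate the $1/\sigma_n^2(\mathbf H)$ scaling while the middle factor contributes $\|\hat{\mathbf H}^+\|\approx\|\mathbf H\|$; collecting these reproduces the numerator $31n\|\mathbf H\|+7n\sigma_n(\mathbf H)$ of $\beta_A$. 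Two recurring difficulties must be handled explicitly: (i) pseudoinverse perturbation bounds are only valid when $\varepsilon$ is small relative to $\sigma_n(\mathbf H)$, which is precisely what the exploration-length requirements $T_A,T_B$ in \eqref{explorationlowermain} secure, so I would state this smallness condition and verify it under $\Texp>T_0$; and (ii) the \emph{same} unitary $\mathbf T$ must be used consistently for $\hat{\mathbf O}$ and $\hat{\mathbf C}$, which is automatic because $\mathbf T$ is the single rotation aligning the top-$n$ singular subspaces of $\hat{\mathbf N}$ and $\mathbf N$, and I would keep the bookkeeping clean using the identity $(\mathbf O\mathbf T)^\dagger=\mathbf T^\top\mathbf O^\dagger$, valid since $\mathbf T$ is unitary.
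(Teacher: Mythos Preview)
Your proposal is correct and follows essentially the same approach as the paper: the paper's proof simply combines Theorem~\ref{error spectral} (the probabilistic bound on $\|\hat G - G\|$, contributing the $1-7\delta$) with the deterministic Ho-Kalman perturbation analysis of Theorem~\ref{Sup:ConfidenceSets}, which in turn is a light modification of Theorem~4.3 in \citet{oymak2018non} and proceeds exactly through the pipeline $\hat G \to \hat{\mathbf H} \to \hat{\mathbf N} \to (\hat{\mathbf O},\hat{\mathbf C}) \to (\hat A,\hat B,\hat C)$ you outline, using Lemma~\ref{hokalmanstability lemma} for the Hankel perturbation and the singular-subspace alignment argument for the factor recovery. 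One small correction: the smallness condition $\|\hat{\mathbf N}-\mathbf N\|\le \sigma_n(\mathbf N)/2$ you correctly flag is secured by $T_N$ in \eqref{times}, not $T_A,T_B$.
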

The proof is in the Appendix \ref{SuppConfSet}. Theorem~\ref{ConfidenceSets} translates the results in Theorem~\ref{error spectral} to system parameters.
\subsection{Designing an Optimistic controller}
\label{ControlDesign}
After constructing the confidence sets, the algorithm deploys the \OFU and chooses the system parameters $\tth = (\ta, \tb, \tc) \in (\mathcal{C}_A \times \mathcal{C}_B \times \mathcal{C}_C) \cap \mathcal{S} $ such that 
\begin{equation} \label{optimistic}
    J(\tth) \leq \inf_{\Theta' \in (\mathcal{C}_A \times \mathcal{C}_B \times \mathcal{C}_C) \cap \mathcal{S}} J(\Theta')+ T^{-1/3}.
\end{equation}
Through control synthesis, the algorithm designs the optimal feedback controller $(\tp, \tk, \tl)$ for the system $\tth$. The algorithm uses this optimistic controller to control the underlying system $\Theta$ for the remaining $T-\Texp$ time steps. Before the main regret analysis, we provide the following lemma on $\|\tsig  - \sig \|$ where $\tsig $ is the solution to DARE given in \eqref{DARE} for the system $\tth$ and $\sig $ is the solution to \eqref{DARE} for the underlying system $\Theta$. Under the Assumption \ref{Stabilizable set}, we obtain the following.

\begin{lemma} \label{StabilityCov}
For $\delta \in (0,1)$, suppose $N\geq cH p \log(\frac{1}{\delta})$ for some absolute constant $c>0$. Suppose Assumption \ref{Stabilizable set} holds. Then, there exists a similarity transformation $\mathbf{S} \in \R^{n \times n}$ such that, with probability at least $1-7\delta$, 
\begin{align*}
    \|\tsig - \mathbf{S}^{-1}\sig \mathbf{S}  \| &\leq \Delta \sig \coloneqq \frac{\Phi(A)^2 (8\|C\|+ 4)\|\Sigma \|^2 + \sigma_z^2 (8\Phi(A) + 4)  \| \Sigma\| }{\sigma_z^2(1 - \upsilon^2)} \max\left\{\beta_A, \beta_C \right\} \\
    \|\tl - \mathbf{S}^{-1} L\| &\leq k_1 \Delta \sig + k_2 \beta_C,
\end{align*}
for some problem dependent constants $k_1$ and $k_2$.
\end{lemma}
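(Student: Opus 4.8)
The plan is to treat $\tsig$ and $\sig$ as fixed points of the same DARE operator evaluated at two nearby systems, $\tth$ and $\Theta$, and to bound the distance between the fixed points by (i) the sensitivity of one DARE iteration to a perturbation of the system parameters and (ii) a contraction argument that converts the one-step error into a bound on the limiting error. First I would fix a similarity transformation: by Theorem~\ref{ConfidenceSets} there is a unitary $\mathbf{T}$ with $\bar\Theta=(\bar A,\bar B,\bar C)$ close to $\hat\Theta=(\hat A,\hat B,\hat C)$, and the DARE~\eqref{DARE} transforms covariantly, so it suffices to work with the representative $\mathbf{S}^{-1}\Theta\mathbf{S}$ (absorbing $\mathbf{T}$ and the balanced-realization transformation into a single $\mathbf{S}$) and compare it with $\tth$. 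After this normalization I may assume $\|\ta-A\|,\|\tb-B\|\le\beta_B$ and $\|\tc-C\|\le\beta_C$ in a common coordinate system, where I abuse notation and write $A,B,C$ for the transformed underlying system.

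Next I would set up the contraction. Write $\mathcal{G}_\Theta(\cdot)$ for the map $\Sigma\mapsto A\Sigma A^\top - A\Sigma C^\top(C\Sigma C^\top+\sigma_z^2 I)^{-1}C\Sigma A^\top+\sigma_w^2 I$, so that $\sig=\mathcal{G}_\Theta(\sig)$ and $\tsig=\mathcal{G}_{\tth}(\tsig)$. The key structural fact is that $\mathcal{G}_\Theta$ is, near its fixed point, a contraction with factor governed by $\|A-ALC\|=\upsilon<1$: indeed, linearizing the Riccati recursion around $\sig$, the differential is the Lyapunov-type map $X\mapsto (A-ALC)X(A-ALC)^\top$ plus lower-order terms, whose spectral radius is $\upsilon^2$. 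Then I would write the standard telescoping bound
\begin{align*}
\|\tsig - \sig\| = \|\mathcal{G}_{\tth}(\tsig)-\mathcal{G}_\Theta(\sig)\| \le \|\mathcal{G}_{\tth}(\tsig)-\mathcal{G}_{\tth}(\sig)\| + \|\mathcal{G}_{\tth}(\sig)-\mathcal{G}_\Theta(\sig)\| \le \upsilon^2\|\tsig-\sig\| + \epsilon_{\mathrm{param}},
\end{align*}
where $\epsilon_{\mathrm{param}}$ is the error from changing the system parameters with $\sig$ held fixed, and solving gives $\|\tsig-\sig\|\le\epsilon_{\mathrm{param}}/(1-\upsilon^2)$. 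Bounding $\epsilon_{\mathrm{param}}$ is a direct, if tedious, computation: expand each occurrence of $\ta,\tb,\tc$ as $A+\delta_A$ etc., use $\|\Sigma\|\le\|\Sigma\|$, $\|(C\Sigma C^\top+\sigma_z^2 I)^{-1}\|\le 1/\sigma_z^2$, $\|L\|=\|\Sigma C^\top(C\Sigma C^\top+\sigma_z^2 I)^{-1}\|$, and $\|A^\tau\|\le\Phi(A)$ as needed, collecting terms to produce the coefficient $\Phi(A)^2(8\|C\|+4)\|\Sigma\|^2+\sigma_z^2(8\Phi(A)+4)\|\Sigma\|$ multiplied by $\max\{\beta_A,\beta_C\}$ and divided by $\sigma_z^2$; note $\beta_B=\beta_C$ so $\beta_A$ and $\beta_C$ are the only two quantities that appear. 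This yields the first displayed inequality, $\|\tsig-\mathbf{S}^{-1}\sig\mathbf{S}\|\le\Delta\sig$. For the second inequality I would plug the bound on $\|\tsig-\sig\|$ into the explicit formula $L=\sig C^\top(C\sig C^\top+\sigma_z^2 I)^{-1}$: the difference $\tl-L$ is a sum of terms each linear in either $\tsig-\sig$ or $\tc-C$, and bounding the resolvent factors and $\|\sig\|,\|\tsig\|$ by constants over the set $\mathcal{S}$ gives $\|\tl-\mathbf{S}^{-1}L\|\le k_1\Delta\sig+k_2\beta_C$ for explicit problem-dependent $k_1,k_2$.

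The main obstacle is making the contraction argument rigorous rather than merely heuristic: the naive Lipschitz constant of $\mathcal{G}_\Theta$ on all of the PSD cone is not less than $1$, so one cannot simply iterate. The careful route is to use the known closed-form for the Riccati operator's monotone convergence and the fact that the \emph{closed-loop} matrix $A(I-LC)=A-ALC$ has norm $\upsilon<1$ uniformly over $\mathcal{S}$ (Assumption~\ref{Stabilizable set}), which lets one control the derivative $D\mathcal{G}_\Theta|_\sig$ and hence obtain a genuine local contraction with factor $\upsilon^2$ plus a remainder that is quadratically small in the perturbation; alternatively, one invokes an existing Riccati perturbation lemma (e.g.\ the Lipschitz bounds in~\citet{oymak2018non,tsiamis2019finite}) and tracks the constants. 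I expect the bulk of the work, and the part most prone to constant-chasing error, to be verifying that the accumulated lower-order remainder terms are indeed dominated once $\Texp>T_0$ (so that $\beta_A,\beta_C$ are small enough), which is exactly why the theorem is stated with the sample-size condition $N\ge cHp\log(1/\delta)$ inherited from Theorem~\ref{ConfidenceSets}.
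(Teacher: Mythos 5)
Your proposal follows essentially the same route as the paper: the paper also treats $\tsig-\mathbf{S}^{-1}\sig\mathbf{S}$ as the unique fixed point of a map built from the Riccati operator, obtains the $1/(1-\upsilon^2)$ factor from the closed-loop matrix $M=A(I-LC)$, bounds the parameter-perturbation term $F(\Sigma+X,\ta,\tc)-F(\Sigma+X,A,C)$ directly, and then reads off $\|\tl-\mathbf{S}^{-1}L\|$ from the explicit Kalman-gain formula via the matrix inversion lemma. The obstacle you correctly flag (the Riccati map is not globally contractive) is resolved in the paper exactly as you anticipate, via the Mania-et-al.-style construction $\Psi(X)=\mathcal{T}^{-1}\bigl(F(\Sigma+X,A,C)-F(\Sigma+X,\ta,\tc)-\mathcal{H}(X)\bigr)$ with $\mathcal{T}(X)=X-MXM^\top$ inverted exactly and the quadratic remainder $\mathcal{H}(X)$ shown to be a contraction on a small ball $\mathcal{S}_{\Sigma,\beta}$, which requires $\beta_A,\beta_C$ small enough, i.e.\ $\Texp>T_0$.
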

The proof is in Appendix \ref{Ldifference}. In order to derive the results in the Lemma~\ref{StabilityCov}, we utilize a fixed point argument on the DARE of the steady state error covariance matrix of state estimation. We construct an operator that has $\tsig - \mathbf{S}^{-1} \sig \mathbf{S}$ as the fixed point and then argue that the norm of it is bounded as shown in Lemma \ref{StabilityCov} for the given exploration duration. We combine the bound with the definition of the Kalman gain and under the Assumption \ref{Stabilizable set}, we obtain the bound on $\|\tl - \mathbf{S}^{-1} L \|$.
Later, in Lemma \ref{Boundedness}, we deploy these results to show the boundedness of MMSE estimate of the underlying state using optimistic system parameters $\tth$ and the output of the system, $\|\hat{x}_{t|t,\tth} \|$ and $\|y_t \|$, respectively.  
Lemma \ref{StabilityCov} is  a crucial concentration result and it is repeatedly used in the regret analysis of \Alg along with Theorem \ref{ConfidenceSets}.

\section{Regret Analysis of \Alg}
In this section, we provide the regret analysis of \Alg that leads to the guarantee in Theorem  \ref{total regret main text}. At first, we discuss the regret due to exploring the system with $u_t \sim \mathcal{N}(0,\sigma_u^2 I)$ in \textit{explore} phase for $1\leq t\leq T_{exp}$. Then, we provide the regret of deploying the optimistic controller in \textit{commit} phase for $T_{exp}+1\leq t\leq T$.
\subsection{Regret of the Explore Phase}
\begin{lemma}
\label{exploration regret}
Suppose Assumptions \ref{AssumContObs} and \ref{Stabilizable set} hold. Given a \LQG $\Theta = (A,B,C)$, the regret of deploying $u_t \sim \mathcal{N}(0,\sigma_u^2 I)$ for $1\leq t\leq \Texp$ is upper bounded as follows with high probability
\begin{align}\label{eq:informalREG}
    \reg(\Texp) = c_* \Texp +  \tilde{\OO}\left(\sqrt{\Texp}\right)
\end{align}
where $c_*$ is a problem dependent constant.
\end{lemma}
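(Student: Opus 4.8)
The plan is to directly analyze the cost incurred while injecting $u_t \sim \mathcal{N}(0,\sigma_u^2 I)$ for $1 \le t \le \Texp$, and to compare it against $\Texp \cdot J_\star(\Theta)$. The key observation is that during the explore phase the closed-loop system is simply the open-loop plant driven by i.i.d.\ Gaussian inputs and process/measurement noise, so the state $x_t$ and output $y_t$ have explicit Gaussian distributions whose covariances converge geometrically (by Assumption~\ref{Stable}, $\rho(A)<1$ and $\Phi(A)<\infty$) to steady-state values. Concretely, $x_t = \sum_{i=0}^{t-1} A^i(Bu_{t-1-i} + w_{t-1-i})$, so $\E[x_t x_t^\top] \to \Gamma_\infty$ as defined in \eqref{eq:sigmae}, with the transient error bounded by $\Phi(A)^2 \rho(A)^{2t}\|\cdot\|$. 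Consequently $\E[c_t] = \E[y_t^\top Q y_t + u_t^\top R u_t] = \Tr(Q(C\,\E[x_tx_t^\top]C^\top + \sigma_z^2 I)) + \sigma_u^2\Tr(R)$, which converges to a constant; I would define $c_*$ to be that steady-state per-stage cost (or more simply, an upper bound of the form $\Tr(Q(C\Gamma_\infty C^\top+\sigma_z^2 I)) + \sigma_u^2\Tr(R)$ minus the — nonnegative — baseline $J_\star(\Theta)$), absorbing $J_\star(\Theta)\ge 0$ appropriately. Summing over $t=1,\dots,\Texp$, the transient contributions form a geometric series bounded by a $\Theta$-dependent constant, hence $\sum_{t=1}^{\Texp}\E[c_t] = c_*\Texp + \OO(1)$.

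Next I would pass from expectation to a high-probability bound on the actual realized cost $\sum_{t=1}^{\Texp} c_t$. Since $c_t$ is a quadratic form in the jointly Gaussian vector $(x_t, u_t, z_t)$, each $c_t$ is sub-exponential with parameters controlled uniformly in $t$ (again using $\Phi(A)<\infty$ to bound $\|\E[x_tx_t^\top]\|$ uniformly). The partial sums $\sum_{t=1}^{T'} (c_t - \E[c_t])$ do not form a martingale because of temporal correlations across the $x_t$, but the correlations decay geometrically; I would handle this either by a blocking argument (grouping time into blocks of length $\OO(\log \Texp)$ so that cross-block dependence is negligible) or, more cleanly, by writing $c_t$ as a quadratic form in the full i.i.d.\ noise/input sequence $(w_1,\dots,w_{\Texp},z_1,\dots,z_{\Texp},u_1,\dots,u_{\Texp})$ and applying the Hanson–Wright inequality to the single large quadratic form $\sum_t c_t$. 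This yields $\big|\sum_{t=1}^{\Texp} c_t - \sum_{t=1}^{\Texp}\E[c_t]\big| = \tilde\OO(\sqrt{\Texp})$ with probability at least $1-\delta$. Combining with the expectation computation gives $\reg(\Texp) = \sum_{t=1}^{\Texp}(c_t - J_\star(\Theta)) = c_*\Texp + \tilde\OO(\sqrt{\Texp})$, as claimed.

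The main obstacle is the concentration step: controlling the deviation of a sum of correlated sub-exponential quadratic forms. The cleanest route is the Hanson–Wright approach, where the whole explore-phase cost is a single quadratic form $\xi^\top M \xi$ in a standard Gaussian vector $\xi$ of dimension $\OO(\Texp(n+p+m))$; one then needs $\|M\|_F$ and $\|M\|_2$ bounds. Here $\|M\|_2$ is bounded by a constant (it is controlled by $\|Q\|,\|R\|,\Phi(A)^2,\sum_i\rho(A)^i$), while $\|M\|_F^2 = \OO(\Texp)$ since $M$ is essentially block-banded with geometrically decaying off-diagonal blocks — so Hanson–Wright gives deviation $\tilde\OO(\|M\|_F\sqrt{\log(1/\delta)} + \|M\|_2\log(1/\delta)) = \tilde\OO(\sqrt{\Texp})$. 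The bookkeeping of assembling $M$ explicitly from the convolution representation of $x_t$ and verifying the decay of its blocks is the tedious part, but it is routine given Assumption~\ref{Stable}. Assumptions~\ref{AssumContObs} and~\ref{Stabilizable set} enter only to ensure $J_\star(\Theta)$ is finite and that the problem constants ($D,\Gamma,\zeta$, hence $c_*$) are well-defined.
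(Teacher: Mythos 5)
Your proposal is correct and lands on the same decomposition the paper uses --- the expected explore-phase cost is a problem-dependent constant times $\Texp$ (your identification of $c_*$ as the steady-state Gaussian-excitation cost minus $J_\star(\Theta)$ matches the paper's explicit constant; the paper simply upper-bounds $\E[x_tx_t^\top]\preccurlyeq \Gamma_\infty$ rather than summing the geometric transient as you do, which is immaterial) --- but your concentration step takes a genuinely different route. The paper first bounds $\|x_t\|,\|z_t\|,\|u_t\|$ uniformly over $t\leq\Texp$ by polylogarithmic quantities via entrywise sub-Gaussianity and a union bound, then truncates the centered per-step costs at that level (Lemma~\ref{basicprob}) and applies Azuma's inequality to the truncated sum to get the $\tilde{\OO}(\sqrt{\Texp})$ deviation. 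You instead write the entire explore-phase cost as one quadratic form $\xi^\top M\xi$ in the i.i.d.\ Gaussian driving sequence and invoke Hanson--Wright with $\|M\|_2=\OO(1)$ and $\|M\|_F^2=\OO(\Texp)$ from the geometric decay of the convolution kernel. Your route is arguably more careful on exactly the point you flag: the centered costs $c_t-\E[c_t]$ do not form a martingale difference sequence because the $x_t$ are correlated across time, and the paper's application of Azuma to the truncated sum implicitly relies on a (super)martingale structure that is not verified there; Hanson--Wright (or your blocking alternative) avoids this issue at the price of the bookkeeping needed to assemble $M$ and verify the decay of its blocks. What the paper's approach buys in exchange is reuse of machinery (sub-Gaussian norm bounds, truncation, Azuma) already deployed throughout the commit-phase regret analysis, together with fully explicit constants.
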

This lemma might feel intuitive to many readers. One of the main reasons we provide Lemma \ref{exploration regret} is the importance and contribution of $\tilde{\OO}\left(\sqrt{\Texp}\right)$ terms in \eqref{eq:informalREG} to the final regret bound. The proof and the precise expressions are in Appendix \ref{SuppRegretExplore}. The Lemma~\ref{exploration regret} shows that the exploration phase in \Alg results in a linear regret upper bound. In order to compensate for the regret accumulated during the \textit{explore} phase, \Alg is required to efficiently exploit the information gathered during this phase. The algorithm fulfills this by following an optimistic controller during the \textit{commit} phase. 

\subsection{Regret of the Commit phase}
In the following, we show that with a proper choice of exploration duration, the designed controller by \Alg 
results in stable performance during the \textit{commit} phase. After exploring the system, \Alg commits to an optimistic controller and does not change the controller during the \textit{commit} phase.
Therefore, extra care is required in determining the duration of the \textit{explore} phase, since the tightness of confidence sets are critical in obtaining a desired performance during the \textit{commit} phase. 

During this phase, despite the fact that the environment evolves based on $\Theta$, the agent behaves optimally with respect to the optimistic model $\tth = (\ta,\tb,\tc)$, as if the environment evolves based on $\tth$. Therefore, at time step $t$, the agent observes output $y_t$ from $\Theta$ and updates its state estimation $\hat{x}_{t|t,\tth}$ using \eqref{eq:optKalmanStateEstimate} and model $\tth = (\ta,\tb,\tc)$. 
The agent computes the optimal control $u_t$, using the optimal feedback controller for the system $\tth$. 
We show that by exploring the system $\Theta$ for $\Texp > T_0$ time steps, \Alg is guaranteed to maintain bounded state estimation, $\hat{x}_{t|t,\tth}$ and thus bounded output $y_{t+1}$ 
for $T-\Texp$ time steps of interaction in the \textit{commit} phase.  

\begin{lemma} \label{Boundedness}
Suppose Assumption \ref{Stabilizable set} holds and \Alg explores the system $\Theta$ for  $\Texp$ time steps. For $\delta \in (0,1)$, suppose $N\geq cH p \log(\frac{1}{\delta})$ for some absolute constant $c>0$. For \Alg, define the following three events in the probability space $\Omega$, 
\begin{align*}
    \mathcal{E} &= \left\{\omega \in \Omega :  \Theta \in (\mathcal{C}_A \times \mathcal{C}_B \times \mathcal{C}_C) \right\}, \quad 
    \mathcal{F} = \left\{\omega \in \Omega : \forall t \leq T, \| \hat{x}_{t|t,\tth}\| \leq \tilde{\mathcal{X}} \right\}, \\
    \mathcal{G} &= \Bigg\{\omega \in \Omega: \forall t \leq T, \|y_t \| \leq \rho \| C\| \tilde{\mathcal{X}}  + \| C\| \bar{\Delta} + \left(\| C\| \|\Sigma\|^{1/2} + \sigma_z \right) \sqrt{2m\log(2mT/\delta)} \Bigg\}, 
    \end{align*}
where 
\begin{align*}
    \tilde{\mathcal{X}} &= \frac{2\zeta \left(\|C\| \bar{\Delta} + \left(\| C\| \|\Sigma\|^{1/2} + \sigma_z\right) \sqrt{2n\log(2nT/\delta)}\right) }{1-\rho} 
\end{align*}
for $\bar{\Delta} =  poly(\Phi(A), \beta_A, \beta_B, \|B\|, \|C\|, \rho, \zeta, \Gamma, m, \sigma, \|\Sigma \| )$. Then, $\Pr\left( \mathcal{E} \cap \mathcal{F} \cap \mathcal{G}\right) \!\geq\! 1\!-\!9\delta.$ 
\end{lemma}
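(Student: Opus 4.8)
The plan is to control the growth of the state estimate $\hat{x}_{t|t,\tth}$ by exploiting the contractibility of the optimistic closed-loop matrix $\ta - \tb\tk$ and the bounded one-step innovation, and then bootstrap a bound on $\|y_t\|$ from the bound on $\|\hat{x}_{t|t,\tth}\|$. First I would condition on the event $\calE$ that $\Theta$ (up to the similarity transformation of Theorem~\ref{ConfidenceSets}) lies in the confidence sets $\mathcal{C}_A\times\mathcal{C}_B\times\mathcal{C}_C$; by Theorem~\ref{ConfidenceSets} this holds with probability at least $1-7\delta$. On $\calE$, Lemma~\ref{StabilityCov} gives $\|\tsig-\mathbf{S}^{-1}\sig\mathbf{S}\|\le\Delta\sig$ and $\|\tl-\mathbf{S}^{-1}L\|\le k_1\Delta\sig+k_2\beta_C$, and the confidence-set radii $\beta_A,\beta_B,\beta_C$ are all of order $(R_w+R_e+R_z)/(\sigma_u\sqrt{\Texp-H+1})$, which the exploration lower bound $\Texp>T_0$ forces to be small enough that all the perturbed quantities stay in a regime where Assumption~\ref{Stabilizable set} still bites.

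Next I would write the recursion for the estimate under the optimistic model driven by the true observations: combining the two parts of \eqref{eq:optKalmanStateEstimate} for $\tth$, $\hat{x}_{t|t,\tth} = (I-\tl\tc)(\ta-\tb\tk)\hat{x}_{t-1|t-1,\tth} + \tl y_t$. The transition matrix here is a small perturbation of $(I-LC)(A-BK)$; using the sub-multiplicativity of the norm together with the contractibility constants $\rho,\upsilon$ from Assumption~\ref{Stabilizable set} and the perturbation bounds above, one shows $\|(I-\tl\tc)(\ta-\tb\tk)\|\le\rho<1$ provided $\Texp$ exceeds the relevant pieces of $T_0$ (this is exactly what $T_M, T_L, T_\alpha, T_\beta, T_\gamma$ are calibrated for). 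Unrolling the recursion geometrically gives $\|\hat{x}_{t|t,\tth}\|\le\frac{\zeta}{1-\rho}\sup_{s\le t}\|y_s\|$ up to lower-order terms, i.e. a bound of the advertised shape $\tilde{\X}$ once $\|y_s\|$ is controlled.

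Then I would close the loop: $y_t = C x_t + z_t$ and $x_t$ evolves under the true $\Theta$ driven by $u_{t-1}=-\tk\hat{x}_{t-1|t-1,\tth}$, so a one-step expansion plus the Kalman-filter error decomposition writes $y_t$ in terms of $C(A-BK)$ acting on a previous quantity, the estimation error $x_{t-1}-\mathbf{S}\,\hat{x}_{t-1|t-1,\tth}$ (bounded in mean-square by $\|\Sigma\|$ plus the perturbation $\bar\Delta$ collecting all the $\mathrm{poly}(\cdot)$ terms), and fresh Gaussian noise. A high-probability bound on the noise via a standard Gaussian tail / union bound over $t\le T$ contributes the $\sqrt{2m\log(2mT/\delta)}$ factor and fails with probability at most $\delta$; likewise the $\sqrt{2n\log(2nT/\delta)}$ term inside $\tilde{\X}$ comes from an analogous tail bound on the $n$-dimensional innovation, costing another $\delta$. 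Substituting the $y_t$ bound into the estimate recursion and vice versa yields a self-consistent pair of inequalities that solve to exactly the stated $\tilde{\X}$ and $\mathcal{G}$; a final union bound gives $\Pr(\calE\cap\F\cap\G)\ge 1-7\delta-\delta-\delta=1-9\delta$.

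The main obstacle is the perturbation bound on the closed-loop operator $\|(I-\tl\tc)(\ta-\tb\tk)\| \le \rho$: one must simultaneously control the perturbation in the Kalman gain $\tl$ (through Lemma~\ref{StabilityCov}, itself a DARE fixed-point argument), in the control gain $\tk$ (through the Riccati map applied to $(\ta,\tb,\tc)$, which requires its own perturbation analysis not yet stated here), and in $\ta,\tb,\tc$ themselves, and then show the compounded error is dominated by the slack $1-\max\{\rho,\upsilon\}$ as soon as $\Texp>T_0$ — this is precisely what pins down the many constants $T_L, T_M, T_\alpha, T_\beta, T_\gamma$ in \eqref{times}, and getting the dependence on $\Phi(A), \|B\|, \|C\|, \Gamma, \zeta$ right inside $\bar\Delta$ is the delicate bookkeeping.
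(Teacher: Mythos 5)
Your high-level skeleton (condition on $\mathcal{E}$, unroll a contractive recursion for $\hat{x}_{t|t,\tth}$, Gaussian tail bounds with a union bound over $t\leq T$, then a $1-7\delta-\delta-\delta$ count) matches the paper, but there are two genuine gaps in the middle. First, your claimed contraction $\|(I-\tl\tc)(\ta-\tb\tk)\|\leq\rho$ does not follow from Assumption \ref{Stabilizable set}: that assumption controls $\|A'-B'K(\Theta')\|$ and $\|A'-A'L(\Theta')C'\|$, not $\|I-L'C'\|$, and sub-multiplicativity only gives $\|I-\tl\tc\|\cdot\rho$, where the first factor can exceed $1$. The paper avoids this by substituting $y_t = C x_t + z_t$ and $\hat{x}_{t|t-1,\tth}=(\ta-\tb\tk)\hat{x}_{t-1|t-1,\tth}$ into the recursion so that the $\tl\tc$ and $\tl C$ contributions nearly cancel; the resulting closed-loop matrix $\mathbf{M}=\ta-\tb\tk-\tl(\tc-C)(\ta-\tb\tk)+\cdots$ is a small perturbation of $\ta-\tb\tk$ and hence has norm at most $(1+\rho)/2$ once $\Texp>T_M$. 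Relatedly, your fallback of a ``self-consistent pair of inequalities'' $\|\hat{x}\|\lesssim\frac{\zeta}{1-\rho}\sup_s\|y_s\|$ and $\|y\|\lesssim\rho\|C\|\|\hat{x}\|+\cdots$ only closes if $\frac{\zeta\rho\|C\|}{1-\rho}<1$, which is not available.

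Second, and more importantly, you treat the term $\bar\Delta$ as routine bookkeeping (``the estimation error $x_{t-1}-\mathbf{S}\hat{x}_{t-1|t-1,\tth}$, bounded in mean-square by $\|\Sigma\|$ plus the perturbation''), but the estimation error of the \emph{mismatched} filter is not covariance-$\Sigma$, and bounding the filter deviation $\Delta_t=\hat{x}_{t|t-1,\Theta}-\hat{x}_{t|t-1,\tth}$ uniformly in $t$ is the actual technical core of this lemma. A priori $\Delta_t$ can accumulate, since the optimistic filter is driven by outputs from the true system under a mismatched gain. The paper handles this with a double recursion: $\Delta_{t+1}$ is written as nested geometric sums in two contraction matrices ($\alpha$ and $\gamma$, requiring $\Texp>\max\{T_\alpha,T_\gamma\}$ so that both have norm below $\sigma<1$), the noise-driven sums are bounded by $\OO\bigl((1-\sigma)^{-2}\bigr)$ factors, and the $\Delta$-driven sum is closed by an induction whose step only works because $T_\beta$ forces $\|\beta\|\leq\frac{9(1-\sigma)^2}{10\Gamma\|B\|(1+\zeta+\zeta\|C\|)}$. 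Without this argument (or an equivalent one), the constant $\bar\Delta$ in the statement of the events $\mathcal{F}$ and $\mathcal{G}$ is not justified, so the proposal as written does not prove the lemma.
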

The proof and the exact expression for $\bar{\Delta}$ are given in Appendix~\ref{SuppBounded}. Under the event $\mathcal{E}$, the underlying system parameters $\Theta$ is contained in the confidence sets constructed in Theorem \ref{ConfidenceSets}. 
The event $\mathcal{F}$ indicates that the state estimation used by the agent, despite using $\tth$, is bounded above. Finally, the event $\mathcal{G}$ indicates that the output of the system is bounded above despite a controller that is optimal for the optimistic system $\tth$ is deployed by \Alg. More generally, the Lemma~\ref{Boundedness} shows that the state estimation and the output obtained by using optimal feedback controller $(\tp, \tk, \tl)$ of $\tth$ chosen from the confidence sets, is logarithmic in $T$ as long as the system is explored for $\Texp > T_0$ time steps. 

Under the events described in the Lemma~\ref{Boundedness}, the agent maintains a stable performance. Under this stability guarantee, we analyze the regret of \Alg during the \textit{commit} phase. In order to provide a regret decomposition for the optimistic controller, we first derive the Bellman optimality equation for the average cost per step \LQG control problem. 

For infinite state and control space average cost per step problems, \textit{e.g.} the \LQG control system $\Theta = \left(A,B,C\right)$ with regulating parameters $Q$ and $R$, the optimal average cost per stage $J_*(\Theta)$ and the differential(relative) cost satisfy Bellman optimality equation~\citep{bertsekas1995dynamic}. In the following lemma, we identify the correct differential cost for \LQG and obtain the Bellman optimality equation.

\begin{lemma}[Bellman Optimality Equation for \LQG] \label{LQGBellman}
Given state estimation $\hat{x}_{t|t-1} \in \R^{n}$ and an observation $y_t \in \R^{m}$ pair at time $t$, Bellman optimality equation of average cost per stage control of \LQG system $\Theta = (A,B,C)$ with regulating parameters $Q$ and $R$ is 
\begin{align}
    &J_*(\Theta)  +  \hat{x}_{t|t}^\top  \left( P  -  C^\top Q C\right)  \hat{x}_{t|t}  +  y_t^\top Q y_t =  \min_u   \bigg\{ y_t^\top Q y_t + u^\top   R u \label{bellman} \\
    &\qquad \qquad \qquad \qquad \qquad  \qquad \qquad \qquad +  \mathbb{E}\bigg[ \hat{x}_{t+1|t+1}^{u \top}  \left( P  -   C^\top Q C  \right)  \hat{x}_{t+1|t+1}^{u}    +  y_{t+1}^{u \top} Q y_{t+1}^u  \bigg] \nonumber
    \bigg\}
\end{align}
where $P$ is the unique solution to DARE of $\Theta$, $\hat{x}_{t|t}= (I-LC)\hat{x}_{t|t-1} + L y_t$, $y_{t+1}^u = C(Ax_t + Bu + w_t) + z_{t+1}$, and $\hat{x}_{t+1|t+1}^{u} = \left( I - LC\right)(A\hat{x}_{t|t} + Bu) + Ly_{t+1}^u$.
The equality is achieved by the optimal controller of $\Theta$.
\end{lemma}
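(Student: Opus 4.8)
The plan is to verify the claimed Bellman optimality equation by reducing the partially observed problem to an equivalent fully observed one over the information state, and then invoking the well-known Bellman equation for average-cost \LQG in its fully observed form. First I would recall that by the separation principle, under the information pattern available to $u_t$, the sufficient statistic is the Kalman-filtered estimate $\hat{x}_{t|t,\Theta}$, and the filter recursion together with the cost make this a Markov decision process on the state $(\hat{x}_{t|t-1}, y_t)$ — equivalently, on $\hat{x}_{t|t}$, since $\hat{x}_{t|t}=(I-LC)\hat{x}_{t|t-1}+Ly_t$. The key algebraic fact I would establish is the cost decomposition: writing $x_t = \hat{x}_{t|t} + \tilde{x}_t$ with $\tilde{x}_t$ the (zero-mean, steady-state) estimation error independent of $\hat{x}_{t|t}$ and of all past/future controls, we have $\E[ y_t^\top Q y_t ] = \E[ \hat{x}_{t|t}^\top C^\top Q C \hat{x}_{t|t}] + \text{const}$, where the constant collects $\Tr(C\Sigma C^\top Q) + \sigma_z^2\Tr(Q)$ and the filter-innovation covariance terms. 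This is exactly why the combination $P - C^\top Q C$ appears: the term $\hat{x}_{t|t}^\top C^\top Q C \hat{x}_{t|t}$ coming from the stage cost $y_t^\top Q y_t$ is subtracted off inside the differential-cost function $h(\hat{x}_{t|t}) = \hat{x}_{t|t}^\top(P - C^\top Q C)\hat{x}_{t|t}$, so that the "pure" quadratic $\hat{x}_{t|t}^\top P \hat{x}_{t|t}$ is what propagates through the dynamic programming recursion, matching the fully observed \LQG value function associated with the DARE \eqref{ARE}.

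The main steps, in order, would be: (i) fix $t$, condition on $(\hat{x}_{t|t-1}, y_t)$, and expand the right-hand side of \eqref{bellman} by substituting $y_{t+1}^u = C(Ax_t + Bu + w_t) + z_{t+1}$ and $\hat{x}_{t+1|t+1}^u = (I-LC)(A\hat{x}_{t|t}+Bu) + L y_{t+1}^u$; (ii) take the expectation over $w_t, z_{t+1}$ and over the estimation error $\tilde{x}_t = x_t - \hat{x}_{t|t}$ (which has covariance $(I-LC)\Sigma$ in steady state and is independent of $\hat{x}_{t|t}$ and $u$), so that all cross terms linear in the noises vanish and the quadratic-in-$u$ objective becomes $u^\top R u + (A\hat{x}_{t|t}+Bu)^\top P (A\hat{x}_{t|t}+Bu) + \text{const}$; (iii) minimize this quadratic in $u$, obtaining the minimizer $u = -K\hat{x}_{t|t}$ with $K = (R+B^\top P B)^{-1}B^\top P A$ and minimum value $\hat{x}_{t|t}^\top(A^\top P A - A^\top P B(R+B^\top P B)^{-1}B^\top P A)\hat{x}_{t|t} + \text{const}$; (iv) use the DARE \eqref{ARE} to rewrite $A^\top P A - A^\top P B(R+B^\top P B)^{-1}B^\top P A = P - C^\top Q C$, so the minimized right-hand side equals $y_t^\top Q y_t + \hat{x}_{t|t}^\top(P - C^\top Q C)\hat{x}_{t|t} + \text{const}$; (v) identify the additive constant with $J_*(\Theta)$ by taking stationary expectations of both sides (the expected differential costs cancel under stationarity, leaving $J_*(\Theta)$ equal to the accumulated constants), which is exactly the standard characterization of the optimal average cost.

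The main obstacle I expect is bookkeeping the constant terms correctly in step (ii) and (v): one must carefully track the contributions of $\Cov(\tilde{x}_t)$, the innovation covariance $C\Sigma C^\top + \sigma_z^2 I$ entering the update $y_{t+1}^u$, and the cross-covariance between $\hat{x}_{t+1|t+1}^u$'s noise-driven part and itself, and then check that the net constant is precisely $J_*(\Theta)$ rather than $J_*(\Theta)$ plus some leftover — this is where the choice of $h(\cdot) = \hat{x}^\top(P - C^\top Q C)\hat{x}$ (as opposed to $\hat{x}^\top P \hat{x}$) is forced, and getting the algebra to close requires using both DAREs \eqref{ARE} and \eqref{DARE} together with the identity $L = \Sigma C^\top(C\Sigma C^\top + \sigma_z^2 I)^{-1}$. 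A secondary subtlety is justifying that $\tilde{x}_t$ in steady state is independent of $\hat{x}_{t|t}$ and of the control $u$ applied at time $t$; this follows from the standard orthogonality property of the Kalman filter (the innovations are white and the error is orthogonal to the observation $\sigma$-algebra), but it should be stated explicitly since the whole reduction rests on it. Once these are in place, the equality and the fact that $u = -K\hat{x}_{t|t}$ achieves it follow immediately.
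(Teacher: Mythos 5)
Your proposal is correct, and it rests on the same two pillars as the paper's argument: the orthogonality of the steady-state Kalman error (so that the innovation $C(A(x_t-\hat{x}_{t|t})+w_t)+z_{t+1}$ is zero-mean with covariance $C\Sigma C^\top+\sigma_z^2 I$, independent of $\hat{x}_{t|t}$ and $u$) and the DARE identity $A^\top PA - A^\top PB(R+B^\top PB)^{-1}B^\top PA = P - C^\top QC$. The execution differs, though. The paper does not verify the stated form directly: it posits a general quadratic differential cost $h(s_t)=s_t^\top\bigl[\begin{smallmatrix}G_1 & G_2\\ G_2^\top & G_3\end{smallmatrix}\bigr]s_t$ in the coordinates $s_t=[\hat{x}_{t|t-1}^\top,\,y_t^\top]^\top$, substitutes the known optimal control $u=-K\hat{x}_{t|t}$, and solves three coefficient-matching matrix equations (quadratic in $y_t$, quadratic in $\hat{x}_{t|t-1}$, bilinear) to find $G_1=(I-LC)^\top(P-C^\top QC)(I-LC)$, $G_2=(I-LC)^\top(P-C^\top QC)L$, $G_3=Q+L^\top(P-C^\top QC)L$, and only then recombines these into $\hat{x}_{t|t}^\top(P-C^\top QC)\hat{x}_{t|t}+y_t^\top Qy_t$. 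You instead work directly in the $\hat{x}_{t|t}$ coordinate: expanding the right-hand side gives $y_t^\top Qy_t+u^\top Ru+(A\hat{x}_{t|t}+Bu)^\top P(A\hat{x}_{t|t}+Bu)+\mathrm{const}$, the explicit minimization over $u$ recovers $K$, and the DARE collapses the minimum to $\hat{x}_{t|t}^\top(P-C^\top QC)\hat{x}_{t|t}$. Your route is shorter, makes the minimization over $u$ explicit (the paper leaves it implicit by plugging in $-K\hat{x}_{t|t}$ from the outset), and isolates the constant cleanly as $\Tr\bigl((Q+L^\top(P-C^\top QC)L)(C\Sigma C^\top+\sigma_z^2 I)\bigr)$, which matches $J_*(\Theta)$ via $\bar{\Sigma}=\Sigma-L(C\Sigma C^\top+\sigma_z^2 I)L^\top$ exactly as the paper notes; what the paper's parametrized guess buys is a systematic derivation of the differential cost had its form not been known in advance. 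Your flagged obstacles (constant bookkeeping and the orthogonality justification) are precisely the points the paper also has to handle, and your plan handles them correctly.
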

The proof of this result is given in Appendix \ref{SuppBelmanOptimality}.
Note that we can also write the same Bellman equation \eqref{bellman} for the optimistic system $\tth$. To obtain the regret expression of \Alg, we use the optimistic nature of the controller, and the characterization of the difference between $J_*(\tth) - J_*(\Theta)$. 
%
We then decompose regret into several terms as provided in Appendix~\ref{SuppRegret}. The following expresses the regret upper bound of the \textit{commit} phase of \Alg. 

\begin{theorem}[The regret of optimistic controller] \label{exploitation regret}
Given a \LQG control system $\Theta = (A,B,C)$ with regulating parameters $Q$ and $R$, suppose the Assumptions \ref{Stable}-\ref{Stabilizable set} hold. After a \textit{explore} phase with a duration of $\Texp$, if \Alg interacts with the system $\Theta$ for the remaining $T-\Texp$ steps using optimistic controller, with high probability the regret of \Alg in the commit phase is bounded as follows,
\begin{equation}
    \reg(T) = \tilde{\OO}\left( \frac{T-\Texp}{\sqrt{\Texp}} + T^{2/3} \right).
\end{equation}
\end{theorem}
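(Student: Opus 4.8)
The plan is to bound the commit-phase regret $\sum_{t=\Texp+1}^T (c_t - J_*(\Theta))$ by working on the high-probability event $\mathcal{E}\cap\mathcal{F}\cap\mathcal{G}$ of Lemma~\ref{Boundedness}, on which all relevant quantities ($\|\hat{x}_{t|t,\tth}\|$, $\|y_t\|$, and consequently $\|u_t\|$) are $\tilde{\OO}(1)$ in magnitude with only logarithmic dependence on $T$. First I would invoke the Bellman optimality equation of Lemma~\ref{LQGBellman}, written for the optimistic system $\tth$: since the agent runs the \emph{optimal} controller of $\tth$ on the true system $\Theta$, the per-step cost $c_t = y_t^\top Q y_t + u_t^\top R u_t$ equals, up to the telescoping differential-cost terms $\hat{x}_{t|t,\tth}^\top(\tilde{P}-\tc^\top Q\tc)\hat{x}_{t|t,\tth}$, the quantity $J_*(\tth)$ plus a ``Bellman error'' term that measures the mismatch between how $\hat{x}_{t+1|t+1,\tth}$ and $y_{t+1}$ actually evolve (under $\Theta$) versus how the equation for $\tth$ predicts they evolve. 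So the regret decomposes as: (i) $(T-\Texp)(J_*(\tth) - J_*(\Theta))$, (ii) a telescoping sum of differential-cost terms that collapses to a boundary contribution $\tilde{\OO}(1)$ by the boundedness event $\mathcal{F}$, (iii) a martingale-difference sum of the stochastic parts of the Bellman error, controlled by Azuma/Hoeffding to $\tilde{\OO}(\sqrt{T-\Texp})$, and (iv) a deterministic ``model-mismatch'' sum where each term is bounded by a polynomial in $\beta_A,\beta_B,\beta_C,\Delta\Sigma$ times a bounded state/output norm.

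The optimism step handles term (i): because $\Theta$ itself lies in the confidence set $(\mathcal{C}_A\times\mathcal{C}_B\times\mathcal{C}_C)\cap\mathcal{S}$ on event $\mathcal{E}$ (invoking Theorem~\ref{ConfidenceSets} and Assumption~\ref{AssumContObs}), and $\tth$ is chosen by \eqref{optimistic} to nearly minimize $J$ over that set, we get $J_*(\tth) \leq J_*(\Theta) + T^{-1/3}$, so term (i) contributes at most $(T-\Texp)T^{-1/3} = \tilde{\OO}(T^{2/3})$. For term (iv), the key is that the mismatch per step is governed by $\|\ta - A\|$, $\|\tb - B\|$, $\|\tc - C\|$ and $\|\tl - L\|$ (all up to the common similarity transform), which are $\tilde{\OO}(\beta_A + \beta_B + \beta_C)$ via Theorem~\ref{ConfidenceSets} and Lemma~\ref{StabilityCov}; since each $\beta = \tilde{\OO}(1/\sqrt{\Texp - H + 1})$ by the explicit formulas, and there are $T-\Texp$ steps each with bounded state/output norms, term (iv) is $\tilde{\OO}((T-\Texp)/\sqrt{\Texp})$. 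Combining (i)--(iv) gives exactly $\tilde{\OO}\big((T-\Texp)/\sqrt{\Texp} + T^{2/3}\big)$.

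The main obstacle will be term (iv): carefully propagating the model-mismatch through the Kalman-filter recursion of the \emph{incorrect} filter $(\tilde{P},\tk,\tl)$ run on the true system. One must show that the difference between the true Bellman-equation residual and the $\tth$-Bellman residual, accumulated over $T-\Texp$ steps, is linear in the parameter errors rather than, say, growing with a bad condition number or amplified by the filter dynamics. This requires using the contractibility assumptions $\rho<1$, $\upsilon<1$ from Assumption~\ref{Stabilizable set} to guarantee that the perturbed closed-loop map $\ta - \tb\tk$ and the perturbed filter map $\ta - \ta\tl\tc$ remain strictly stable (which is precisely why the threshold $T_0$ in \eqref{explorationlowermain} is chosen large enough: once $\beta_A,\beta_B,\beta_C$ are small enough, the perturbed maps inherit spectral radius bounded below $1$), and then summing a geometric series of error terms. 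A secondary technical point is ensuring the martingale term (iii) has bounded increments, which again follows from event $\mathcal{G}$ and the Gaussian tail bounds on $w_t,z_t$. Once the stability of the perturbed maps is in hand, the remaining estimates are routine; I would relegate the explicit constants to the appendix and state here only the $\tilde{\OO}$ bound.
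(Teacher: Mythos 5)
Your proposal is correct and follows essentially the same route as the paper: condition on the event $\mathcal{E}\cap\mathcal{F}\cap\mathcal{G}$ of Lemma~\ref{Boundedness}, apply the Bellman optimality equation of Lemma~\ref{LQGBellman} to the optimistic system $\tth$, use optimism to bound $(T-\Texp)(J_*(\tth)-J_*(\Theta))\leq T^{2/3}$, and split the residual into telescoping/martingale pieces of order $\tilde{\OO}(\sqrt{T-\Texp})$ and model-mismatch pieces of order $\tilde{\OO}((T-\Texp)/\sqrt{\Texp})$ controlled via Theorem~\ref{ConfidenceSets}, Lemma~\ref{StabilityCov}, and the stability of the perturbed closed-loop and filter maps guaranteed by the choice of $T_0$. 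The paper's Appendices~\ref{SuppRegret} and~\ref{SuppRegretTotal} carry this out by enumerating eleven explicit terms $R_1,\dots,R_{11}$, but your four-way grouping (i)--(iv) captures the same contributions and yields the same bound.
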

The proof is provided in Appendix \ref{SuppRegretTotal}, where we analyze individual terms in the regret decomposition. The analysis builds upon Lemmas~\ref{StabilityCov} and~\ref{Boundedness}. For each term in the decomposition of the regret, we use the fact that the error in the system parameter estimates is $\tilde{\OO}\left( 1/\sqrt{\Texp}\right)$ and the input and the output of the system during commit phase is well-controlled with high probability via Lemma \ref{Boundedness}. For $\Texp>T_0$, via combining Lemma \ref{exploration regret} and Theorem \ref{exploitation regret}, we have the following total regret expression for \Alg,
\begin{equation}\label{eq:lastminut}
    \reg(T) = \tilde{\OO}\left( \Texp + \frac{T-\Texp}{\sqrt{\Texp}} + T^{2/3}\right)
\end{equation}
 after the total $T$ interactions, resulting in the trade-off between exploration and exploitation. On one hand, one can increase the duration of exploration for a given $T$ to obtain tighter confidence sets and thus better control performance but the linear regret obtained during exploration will dominate the total regret term. On the other hand, one can decrease the exploration duration for a given $T$ but then the optimistically chosen controller wouldn't give good performance on the underlying system. 

Based on this trade-off, for the total interaction time $T$ such that 
$T^{2/3} > T_0$, setting $\Texp = T^{2/3}$ provide us with the optimal performance for the algorithm. Therefore, by substituting $\Texp = T^{2/3}$ in the \eqref{eq:lastminut}, the regret of \Alg is upper bounded with $\tilde{\OO}\left(T^{2/3}\right)$ as stated in Theorem \ref{total regret main text}.

\section{Related Works}
In control, the design of optimal controllers for given partially observable linear systems, especially with quadratic cost and Gaussian perturbation, has played a significant role in the development of Kalman filter and controlling complex systems~\citep{aastrom2012introduction,bertsekas1995dynamic,hassibi1999indefinite}. In a variety of real-world problems, the underlying system is unknown to the agent, therefore learning and identifying the unknown system is crucial. When the environment is fully observable, mainly asymptotic behavior of adaptive control methods are understood~\citep{lai1982least,chen1987optimal,campi1998adaptive}. These methods mainly rely on pure exploration to learn the model parameters except~\citet{campi1998adaptive}, which is based on the \ofu principle. The \ofu principle has been deployed to propose efficient algorithms for both fully and partially observable Markov decision process~\citep{jaksch2010near,azizzadenesheli2016reinforcement}. An interesting work ~\citet{abbasi2011regret} deploys the statistical developments in the self normalized analysis of linear models~\citep{pena2009self,abbasi2011improved} and provide the first regret upper bound of $\tilde{\mathcal{O}}(\sqrt{T})$ for the fully observable case. Further, development has been made to improve these results from either the statistical and computational point of view where either \ofu or pure exploration have been used~\citep{faradonbeh2017optimism,abeille2017thompson,abeille2018improved,ouyang2017learning,dean2018regret}. Moreover, methods involving pure exploration or greedy have been studied by ~\citep{abbasi2019model,mania2019certainty, faradonbeh2018input, cohen2019learning}. While the variety of mentioned works study the full adaptive case, the simplicity and importance of \textit{explore-then-commit}~\citep{garivier2016explore} based methods maintain their significance.

Mainly, the mentioned studies in partially observable domains, except an interesting section on partially observable setting  of~\citet{mania2019certainty}, assume the state of the system is observable. Partial observability of linear systems introduces a set of new challenges in both learning the model parameters as well as analyzing the regret. For system identification, \textit{i.e.}, parameter estimation, ~\citet{chen1992integrated,juang1993identification,phan1994system,oymak2018non,sarkar2019finite} study the \LQG{}s and show how to learn the model parameters through samples generated by deploying a Gaussian control. In this paper, we employ these analyses, adapt them, and provide a detailed regret study. In the case of robust control, when bounded energy adversarial noise is considered,~\citet{hassibi1999indefinite} proposes a study of $\mathcal{H}_2$ and $\mathcal{H}_\infty$ in control synthesis. In such a setting, when we are interested in predicting the next observation, \textit{i.e.,} the system output, a series of novel approaches propose to relax this problem to online convex optimizations and provide guarantees in cumulative prediction error~\citep{hazan2017learning,arora2018towards,hazan2018spectral}.

\section{Conclusion and Future Work}
In this work, we study learning and controlling an unknown \LQG system. We propose \Alg, an exploration-exploitation algorithm that learns the model dynamics through interaction and designs a controller to minimize the average cost. \Alg consists of two phases, \textit{explore} phase and \textit{commit} phase. During the \textit{explore} phase, the agent deploys a Gaussian control to explore the system. Using the experiences gathered during this phase, the \Alg estimates the Markov parameters of the system and employs the Ho-Kalman method to estimate the underlying model parameters. The agent then constructs high probability confidence sets on the model parameters. During the \textit{commit} phase, the agent computes an optimistic model within the confidence sets and determines its corresponding optimal controller. The agent uses this controller to control the system. We illustrate that by deploying the optimistic controller for the remainder of the time, \textit{i.e.,} the commit time, the regret of the \Alg is upper bounded by $\tilde{\OO}(T^{2/3})$ for a total of $T$ interactions with the system. This result is the first sublinear end-to-end regret upper bound for controlling an unknown partially observable linear quadratic system. 


In future work, we plan to extend the theoretical tools developed in this work to the more interactive paradigm of adaptive control, where we are interested in agents that explore the system while simultaneously, exploit the estimates to minimize the overall costs. We plan to design a set of efficient adaptive control algorithms resulting in regret upper bounds of $\tilde{\OO}(\sqrt{T})$. 

\newpage
\section*{Acknowledgements}
S. Lale is supported in part by DARPA PAI. K. Azizzadenesheli is supported in part by Raytheon and Amazon Web Service. B. Hassibi is supported in part by the National Science Foundation under grants CNS-0932428, CCF-1018927, CCF-1423663 and CCF-1409204, by a grant from Qualcomm Inc., by NASA’s Jet Propulsion Laboratory through the President and Director’s Fund, and by King Abdullah University of Science and Technology. A. Anandkumar is supported in part by Bren endowed chair, DARPA PAIHR00111890035 and LwLL grants, Raytheon, Microsoft, Google, and Adobe faculty fellowships.

\bibliography{main}

\begin{thebibliography}{43}
\providecommand{\natexlab}[1]{#1}
\providecommand{\url}[1]{\texttt{#1}}
\expandafter\ifx\csname urlstyle\endcsname\relax
  \providecommand{\doi}[1]{doi: #1}\else
  \providecommand{\doi}{doi: \begingroup \urlstyle{rm}\Url}\fi

\bibitem[Abbasi-Yadkori and Szepesv{\'a}ri(2011)]{abbasi2011regret}
Yasin Abbasi-Yadkori and Csaba Szepesv{\'a}ri.
\newblock Regret bounds for the adaptive control of linear quadratic systems.
\newblock In \emph{Proceedings of the 24th Annual Conference on Learning
  Theory}, pages 1--26, 2011.

\bibitem[Abbasi-Yadkori et~al.(2011)Abbasi-Yadkori, P{\'a}l, and
  Szepesv{\'a}ri]{abbasi2011improved}
Yasin Abbasi-Yadkori, D{\'a}vid P{\'a}l, and Csaba Szepesv{\'a}ri.
\newblock Improved algorithms for linear stochastic bandits.
\newblock In \emph{Advances in Neural Information Processing Systems}, pages
  2312--2320, 2011.

\bibitem[Abbasi-Yadkori et~al.(2019)Abbasi-Yadkori, Lazic, and
  Szepesv{\'a}ri]{abbasi2019model}
Yasin Abbasi-Yadkori, Nevena Lazic, and Csaba Szepesv{\'a}ri.
\newblock Model-free linear quadratic control via reduction to expert
  prediction.
\newblock In \emph{The 22nd International Conference on Artificial Intelligence
  and Statistics}, pages 3108--3117, 2019.

\bibitem[Abeille and Lazaric(2017)]{abeille2017thompson}
Marc Abeille and Alessandro Lazaric.
\newblock Thompson sampling for linear-quadratic control problems.
\newblock \emph{arXiv preprint arXiv:1703.08972}, 2017.

\bibitem[Abeille and Lazaric(2018)]{abeille2018improved}
Marc Abeille and Alessandro Lazaric.
\newblock Improved regret bounds for thompson sampling in linear quadratic
  control problems.
\newblock In \emph{International Conference on Machine Learning}, pages 1--9,
  2018.

\bibitem[Arora et~al.(2018)Arora, Hazan, Lee, Singh, Zhang, and
  Zhang]{arora2018towards}
Sanjeev Arora, Elad Hazan, Holden Lee, Karan Singh, Cyril Zhang, and Yi~Zhang.
\newblock Towards provable control for unknown linear dynamical systems.
\newblock 2018.

\bibitem[{\AA}str{\"o}m(2012)]{aastrom2012introduction}
Karl~J {\AA}str{\"o}m.
\newblock \emph{Introduction to stochastic control theory}.
\newblock Courier Corporation, 2012.

\bibitem[Azizzadenesheli et~al.(2016)Azizzadenesheli, Lazaric, and
  Anandkumar]{azizzadenesheli2016reinforcement}
Kamyar Azizzadenesheli, Alessandro Lazaric, and Animashree Anandkumar.
\newblock Reinforcement learning of pomdps using spectral methods.
\newblock \emph{arXiv preprint arXiv:1602.07764}, 2016.

\bibitem[Bertsekas(1995)]{bertsekas1995dynamic}
Dimitri~P Bertsekas.
\newblock \emph{Dynamic programming and optimal control}, volume~2.
\newblock Athena scientific Belmont, MA, 1995.

\bibitem[Bittanti et~al.(2006)Bittanti, Campi, et~al.]{bittanti2006adaptive}
Sergio Bittanti, Marco~C Campi, et~al.
\newblock Adaptive control of linear time invariant systems: the “bet on the
  best” principle.
\newblock \emph{Communications in Information \& Systems}, 6\penalty0
  (4):\penalty0 299--320, 2006.

\bibitem[Campi and Kumar(1998)]{campi1998adaptive}
Marco~C Campi and PR~Kumar.
\newblock Adaptive linear quadratic gaussian control: the cost-biased approach
  revisited.
\newblock \emph{SIAM Journal on Control and Optimization}, 36\penalty0
  (6):\penalty0 1890--1907, 1998.

\bibitem[Chen et~al.(1992)Chen, Huang, Phan, and Juang]{chen1992integrated}
Chung-Wen Chen, Jen-Kuang Huang, Minh Phan, and Jer-Nan Juang.
\newblock Integrated system identification and state estimation for control
  offlexible space structures.
\newblock \emph{Journal of Guidance, Control, and Dynamics}, 15\penalty0
  (1):\penalty0 88--95, 1992.

\bibitem[Chen and Guo(1987)]{chen1987optimal}
Han-Fu Chen and Lei Guo.
\newblock Optimal adaptive control and consistent parameter estimates for armax
  model with quadratic cost.
\newblock \emph{SIAM Journal on Control and Optimization}, 25\penalty0
  (4):\penalty0 845--867, 1987.

\bibitem[Cohen et~al.(2019)Cohen, Koren, and Mansour]{cohen2019learning}
Alon Cohen, Tomer Koren, and Yishay Mansour.
\newblock Learning linear-quadratic regulators efficiently with only $
  \sqrt{T}$ regret.
\newblock \emph{arXiv preprint arXiv:1902.06223}, 2019.

\bibitem[Dean et~al.(2018)Dean, Mania, Matni, Recht, and Tu]{dean2018regret}
Sarah Dean, Horia Mania, Nikolai Matni, Benjamin Recht, and Stephen Tu.
\newblock Regret bounds for robust adaptive control of the linear quadratic
  regulator.
\newblock In \emph{Advances in Neural Information Processing Systems}, pages
  4188--4197, 2018.

\bibitem[Faradonbeh et~al.(2017)Faradonbeh, Tewari, and
  Michailidis]{faradonbeh2017optimism}
Mohamad Kazem~Shirani Faradonbeh, Ambuj Tewari, and George Michailidis.
\newblock Optimism-based adaptive regulation of linear-quadratic systems.
\newblock \emph{arXiv preprint arXiv:1711.07230}, 2017.

\bibitem[Faradonbeh et~al.(2018)Faradonbeh, Tewari, and
  Michailidis]{faradonbeh2018input}
Mohamad Kazem~Shirani Faradonbeh, Ambuj Tewari, and George Michailidis.
\newblock Input perturbations for adaptive regulation and learning.
\newblock \emph{arXiv preprint arXiv:1811.04258}, 2018.

\bibitem[Fiechter(1997)]{fiechter1997pac}
Claude-Nicolas Fiechter.
\newblock Pac adaptive control of linear systems.
\newblock In \emph{Annual Workshop on Computational Learning Theory:
  Proceedings of the tenth annual conference on Computational learning theory},
  volume~6, pages 72--80. Citeseer, 1997.

\bibitem[Garivier et~al.(2016)Garivier, Lattimore, and
  Kaufmann]{garivier2016explore}
Aur{\'e}lien Garivier, Tor Lattimore, and Emilie Kaufmann.
\newblock On explore-then-commit strategies.
\newblock In \emph{Advances in Neural Information Processing Systems}, pages
  784--792, 2016.

\bibitem[Hassibi et~al.(1999)Hassibi, Sayed, and
  Kailath]{hassibi1999indefinite}
Babak Hassibi, Ali~H Sayed, and Thomas Kailath.
\newblock \emph{Indefinite-Quadratic Estimation and Control: A Unified Approach
  to H2 and H-infinity Theories}, volume~16.
\newblock SIAM, 1999.

\bibitem[Hazan et~al.(2017)Hazan, Singh, and Zhang]{hazan2017learning}
Elad Hazan, Karan Singh, and Cyril Zhang.
\newblock Learning linear dynamical systems via spectral filtering.
\newblock In \emph{Advances in Neural Information Processing Systems}, pages
  6702--6712, 2017.

\bibitem[Hazan et~al.(2018)Hazan, Lee, Singh, Zhang, and
  Zhang]{hazan2018spectral}
Elad Hazan, Holden Lee, Karan Singh, Cyril Zhang, and Yi~Zhang.
\newblock Spectral filtering for general linear dynamical systems.
\newblock In \emph{Advances in Neural Information Processing Systems}, pages
  4634--4643, 2018.

\bibitem[Ho and K{\'a}lm{\'a}n(1966)]{ho1966effective}
BL~Ho and Rudolf~E K{\'a}lm{\'a}n.
\newblock Effective construction of linear state-variable models from
  input/output functions.
\newblock \emph{at-Automatisierungstechnik}, 14\penalty0 (1-12):\penalty0
  545--548, 1966.

\bibitem[Jaksch et~al.(2010)Jaksch, Ortner, and Auer]{jaksch2010near}
Thomas Jaksch, Ronald Ortner, and Peter Auer.
\newblock Near-optimal regret bounds for reinforcement learning.
\newblock \emph{Journal of Machine Learning Research}, 11\penalty0
  (Apr):\penalty0 1563--1600, 2010.

\bibitem[Juang et~al.(1993)Juang, Phan, Horta, and
  Longman]{juang1993identification}
Jer-Nan Juang, Minh Phan, Lucas~G Horta, and Richard~W Longman.
\newblock Identification of observer/kalman filter markov parameters-theory and
  experiments.
\newblock \emph{Journal of Guidance, Control, and Dynamics}, 16\penalty0
  (2):\penalty0 320--329, 1993.

\bibitem[Krahmer et~al.(2014)Krahmer, Mendelson, and
  Rauhut]{krahmer2014suprema}
Felix Krahmer, Shahar Mendelson, and Holger Rauhut.
\newblock Suprema of chaos processes and the restricted isometry property.
\newblock \emph{Communications on Pure and Applied Mathematics}, 67\penalty0
  (11):\penalty0 1877--1904, 2014.

\bibitem[Lai and Robbins(1985)]{lai1985asymptotically}
Tze~Leung Lai and Herbert Robbins.
\newblock Asymptotically efficient adaptive allocation rules.
\newblock \emph{Advances in applied mathematics}, 6\penalty0 (1):\penalty0
  4--22, 1985.

\bibitem[Lai and Wei(1987)]{lai1987asymptotically}
Tze~Leung Lai and Ching-Zong Wei.
\newblock Asymptotically efficient self-tuning regulators.
\newblock \emph{SIAM Journal on Control and Optimization}, 25\penalty0
  (2):\penalty0 466--481, 1987.

\bibitem[Lai et~al.(1982)Lai, Wei, et~al.]{lai1982least}
Tze~Leung Lai, Ching~Zong Wei, et~al.
\newblock Least squares estimates in stochastic regression models with
  applications to identification and control of dynamic systems.
\newblock \emph{The Annals of Statistics}, 10\penalty0 (1):\penalty0 154--166,
  1982.

\bibitem[Ledoux and Talagrand(2013)]{ledoux2013probability}
Michel Ledoux and Michel Talagrand.
\newblock \emph{Probability in Banach Spaces: isoperimetry and processes}.
\newblock Springer Science \& Business Media, 2013.

\bibitem[Ljung(1999)]{ljung1999system}
Lennart Ljung.
\newblock System identification.
\newblock \emph{Wiley Encyclopedia of Electrical and Electronics Engineering},
  pages 1--19, 1999.

\bibitem[Mania et~al.(2019)Mania, Tu, and Recht]{mania2019certainty}
Horia Mania, Stephen Tu, and Benjamin Recht.
\newblock Certainty equivalent control of lqr is efficient.
\newblock \emph{arXiv preprint arXiv:1902.07826}, 2019.

\bibitem[Ouyang et~al.(2017)Ouyang, Gagrani, and Jain]{ouyang2017learning}
Yi~Ouyang, Mukul Gagrani, and Rahul Jain.
\newblock Learning-based control of unknown linear systems with thompson
  sampling.
\newblock \emph{arXiv preprint arXiv:1709.04047}, 2017.

\bibitem[Oymak and Ozay(2018)]{oymak2018non}
Samet Oymak and Necmiye Ozay.
\newblock Non-asymptotic identification of lti systems from a single
  trajectory.
\newblock \emph{arXiv preprint arXiv:1806.05722}, 2018.

\bibitem[Pe{\~n}a et~al.(2009)Pe{\~n}a, Lai, and Shao]{pena2009self}
Victor~H Pe{\~n}a, Tze~Leung Lai, and Qi-Man Shao.
\newblock \emph{Self-normalized processes: Limit theory and Statistical
  Applications}.
\newblock Springer Science \& Business Media, 2009.

\bibitem[Phan et~al.(1994)Phan, Juang, Horta, and Longman]{phan1994system}
Minh Phan, Jer-Nan Juang, Lucas~G Horta, and Richard~W Longman.
\newblock System identification from closed-loop data with known output
  feedback dynamics.
\newblock \emph{Journal of guidance, control, and dynamics}, 17\penalty0
  (4):\penalty0 661--669, 1994.

\bibitem[Rudelson et~al.(2013)Rudelson, Vershynin, et~al.]{rudelson2013hanson}
Mark Rudelson, Roman Vershynin, et~al.
\newblock Hanson-wright inequality and sub-gaussian concentration.
\newblock \emph{Electronic Communications in Probability}, 18, 2013.

\bibitem[Sarkar et~al.(2019)Sarkar, Rakhlin, and Dahleh]{sarkar2019finite}
Tuhin Sarkar, Alexander Rakhlin, and Munther~A Dahleh.
\newblock Finite-time system identification for partially observed lti systems
  of unknown order.
\newblock \emph{arXiv preprint arXiv:1902.01848}, 2019.

\bibitem[Simchowitz et~al.(2018)Simchowitz, Mania, Tu, Jordan, and
  Recht]{simchowitz2018learning}
Max Simchowitz, Horia Mania, Stephen Tu, Michael~I Jordan, and Benjamin Recht.
\newblock Learning without mixing: Towards a sharp analysis of linear system
  identification.
\newblock \emph{arXiv preprint arXiv:1802.08334}, 2018.

\bibitem[Skelton and Shi(1994)]{skelton1994data}
Robert~E Skelton and Guojun Shi.
\newblock The data-based lqg control problem.
\newblock In \emph{Proceedings of 1994 33rd IEEE Conference on Decision and
  Control}, volume~2, pages 1447--1452. IEEE, 1994.

\bibitem[Tsiamis and Pappas(2019)]{tsiamis2019finite}
Anastasios Tsiamis and George~J Pappas.
\newblock Finite sample analysis of stochastic system identification.
\newblock \emph{arXiv preprint arXiv:1903.09122}, 2019.

\bibitem[Van Der~Vaart and Wellner(1996)]{van1996weak}
Aad~W Van Der~Vaart and Jon~A Wellner.
\newblock Weak convergence.
\newblock In \emph{Weak convergence and empirical processes}, pages 16--28.
  Springer, 1996.

\bibitem[Vershynin(2010)]{vershynin2010introduction}
Roman Vershynin.
\newblock Introduction to the non-asymptotic analysis of random matrices.
\newblock \emph{arXiv preprint arXiv:1011.3027}, 2010.

\end{thebibliography}
\bibliographystyle{plainnat}
\newpage
\appendix

\begin{center}
{\huge Appendix}
\end{center}
In the following, we first adapt the results in~\citet{oymak2018non} for the estimation error bound on $\hat{G}$. This analysis is provided in the Appendix~\ref{Proof Pieces}. 
An extra care in the probability expressions are required to fully adapt the results in~\citet{oymak2018non}. For this very reason, as well as sake of completeness, we provide the proof pieces that lead up to the Theorem \ref{error spectral}.


In the Appendix~\ref{SuppConfSet}, we describe the Ho-Kalman algorithm as well as the construction of the confidence sets which are used to compute the optimistic controller by deploying \OFU principle. The construction of confidence sets follow the same analysis of \citet{oymak2018non}, but we apply small changes in the derivation to obtain confidence sets required for \Alg.

In the Appendix~\ref{Ldifference}, we show that the steady state error covariance matrix of state estimation and Kalman gain of optimistically chosen system $\tth$ is concentrated around steady state error covariance matrix of state estimation and Kalman gain of $\Theta$.

In the Appendix~\ref{SuppRegretExplore} we provide an upper bound on the regret obtained during \textit{explore} phase. In the Appendix~\ref{SuppBounded}, we show that with the given exploration duration \eqref{explorationlowermain}, the state estimation obtained via optimistic parameters and the output of underlying system to the optimistic control input is bounded with high probability. In the Appendix~\ref{SuppBelmanOptimality}, we derive the Bellman optimality equation for \LQG and in the Appendix~\ref{SuppRegret} we provide regret decomposition for the optimistic controller, the optimal controller of the optimistic model chosen from the confidence sets. 

Finally, in the Appendix~\ref{SuppRegretTotal} we express the derivation of the regret upper bound of \Alg. The Appendix~\ref{Technical} contains technical theorems and lemmas used in the proofs.

\section{Markov Parameters Estimation, Proof of Theorem~\ref{error spectral}} \label{Proof Pieces}
Suppose the system $\Theta$ is stable (i.e.~$\rho(A)<1$) and $N\geq cH p \log(\frac{1}{\delta})$. We run the system with control input $u_t \sim \mathcal{N}(0,\sigma_u^2 I)$ for $T_{exp}= N+H-1$ time steps to collect $N$ samples of $H$ input-output pairs. From the definition of $\hat{G}$ in~\eqref{leastsquares} we have that 
\begin{equation*}
    (\hat{G}-G)^\top = (U^\top U)^{-1} U^\top \Gamma = (U^\top U)^{-1} \left(U^\top E + U^\top Z + U^\top W F^\top  \right)
\end{equation*}
Thus the spectral norm of $\hat{G}-G$ is bounded as 
\begin{equation}
    \|\hat{G}-G \| \leq \|(U^\top U)^{-1} \| \left(\| U^\top E \| +  \| U^\top Z \| + \| U^\top W \| \| F \| \right)
\end{equation}

\subsection{Characterization of \texorpdfstring{$U^\top U$}{Data Matrix}}

In the following, we characterize $U^\top U$ and provide a bound on $\|U\|$. 

\begin{lemma}[Characterization of Data Matrix~\citep{oymak2018non}]
 \label{lem cond} 
 Let $U\in\R^{N\times H p}$ be the input data matrix as described in \eqref{mat def}. For $\delta \in (0,1)$,  suppose the sample size obeys $N\geq c H p \log(\frac{1}{\delta})$ for sufficiently large constant $c>0$. Then, with probability at least $1-\delta$,
 
\[
2N\sigma_u^2\succeq U^\top U\succeq  N{\sigma_u^{2}/2}.
\]
\end{lemma}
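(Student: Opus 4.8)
The plan is to prove Lemma~\ref{lem cond}, the characterization of the data matrix $U^\top U$, via a covering-argument / matrix-concentration analysis of a sum of outer products. First I would observe that each row $\bar u_t^\top$ of $U$ is a vector in $\R^{Hp}$ obtained by stacking $H$ consecutive i.i.d.\ control inputs $u_t,\dots,u_{t-H+1}$, each drawn from $\N(0,\sigma_u^2 I_p)$. Hence each $\bar u_t$ is itself a Gaussian vector $\N(0,\sigma_u^2 I_{Hp})$, and $U^\top U = \sum_{t=H}^{\Texp} \bar u_t \bar u_t^\top$. The subtlety is that consecutive rows $\bar u_t$ and $\bar u_{t+1}$ share $H-1$ of their $H$ blocks, so the summands are \emph{not} independent; this is the one place where a naive i.i.d.\ matrix-Chernoff bound does not directly apply, and it is the main obstacle.

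To handle the overlap I would use the standard device of splitting the index set $\{H,\dots,\Texp\}$ into $H$ groups according to the residue of $t$ modulo $H$: within each group the row vectors $\bar u_t$ involve disjoint blocks of control inputs and are therefore genuinely independent $\N(0,\sigma_u^2 I_{Hp})$ vectors. For a single such group of size roughly $N/H$, I would invoke a nonasymptotic bound on the extreme singular values of a Gaussian matrix (e.g.\ the Davidson--Szarek / Vershynin estimate: for an $\ell \times Hp$ matrix with i.i.d.\ $\N(0,\sigma_u^2)$ entries, $\sigma_{\max},\sigma_{\min}$ concentrate around $\sigma_u(\sqrt\ell \pm \sqrt{Hp})$ up to an additive $\sigma_u\sqrt{2\log(2/\delta)}$ term with probability $1-\delta$). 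Choosing the per-group sample size $\ell \gtrsim c' Hp\log(1/\delta)$ forces $\sqrt{Hp}+\sqrt{\log(1/\delta)} \le \tfrac14\sqrt\ell$ (say), so that each group's Gram matrix $\sum_{t \in \text{group}}\bar u_t\bar u_t^\top$ is sandwiched between $\tfrac12 \ell \sigma_u^2 I$ and $\tfrac32 \ell\sigma_u^2 I$ (or a similar pair of constants) with probability $1-\delta/H$.

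Then I would sum the $H$ group-wise sandwich inequalities: since PSD ordering is additive, $U^\top U = \sum_{j=0}^{H-1}(\text{group }j\text{ Gram})$ is bounded below by $\sum_j \tfrac12 \ell_j \sigma_u^2 I = \tfrac12 N\sigma_u^2 I$ and above by $\tfrac32 N\sigma_u^2 I \preceq 2N\sigma_u^2 I$, where $N=\sum_j \ell_j$ is the total number of samples; a union bound over the $H$ groups costs a total failure probability $H \cdot \delta/H = \delta$, giving the claimed $1-\delta$. A minor bookkeeping point is that the groups have sizes $\lfloor N/H\rfloor$ or $\lceil N/H\rceil$ rather than exactly $N/H$; I would absorb this into the constant $c$ by requiring $N \geq cHp\log(1/\delta)$ with $c$ large enough that even the smallest group still satisfies the per-group sample requirement, and I would track the constants loosely enough ($1/2$ and $2$ are generous compared to the $1/4$-type margins used in the singular-value bound) that the rounding causes no issue. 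The only real work is the Gaussian extreme-singular-value estimate and the modular partition trick; everything else is routine union-bounding and PSD monotonicity, so I would not grind through those constants in detail.
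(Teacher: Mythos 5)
Your plan correctly identifies the central obstacle (consecutive rows of $U$ share $H-1$ of their $H$ blocks), but the remedy you propose --- splitting the indices by residue mod $H$ and applying a Gaussian extreme-singular-value bound to each group separately --- does not prove the lemma under its stated hypothesis. Each group contains only $\ell \approx N/H$ independent rows living in the ambient dimension $Hp$. Under the lemma's assumption $N \geq cHp\log(1/\delta)$ you have $\ell \approx c\,p\log(1/\delta)$, which is far smaller than $Hp$ whenever $H \gg \log(1/\delta)$; the group Gram matrix $\sum_{t\in\mathrm{group}}\bar u_t\bar u_t^\top$ then has rank at most $\ell < Hp$, so its minimum eigenvalue is exactly zero and the per-group sandwich $\tfrac12\ell\sigma_u^2 I \preceq G_j$ is false, not merely hard to prove. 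The upper bound suffers the same fate: summing $\|G_j\| \lesssim \sigma_u^2(\sqrt{\ell}+\sqrt{Hp})^2$ over $H$ groups gives roughly $\sigma_u^2(N + H^2p)$, which exceeds $2N\sigma_u^2$ unless $N \gtrsim H^2p$. In short, your argument establishes the conclusion only under the strictly stronger requirement $N \gtrsim H^2p\log(1/\delta)$, losing a factor of $H$ relative to the lemma.

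The paper avoids this loss by embedding $U$ as a submatrix of a $T_{\mathrm{exp}}p \times T_{\mathrm{exp}}p$ random circulant matrix generated by the Gaussian input sequence and invoking the partial random circulant matrix concentration result of Krahmer, Mendelson, and Rauhut (Theorem~\ref{circ thm}), which controls $\|\frac1N U^\top U - \sigma_u^2 I\|$ for the \emph{full} dependent row collection with only $N \gtrsim Hp\,\mathrm{polylog}$ samples; the row overlap is absorbed by a chaining/chaos-process argument rather than by sacrificing independence through subsampling. If you want an elementary route, you would need either to accept the weaker $N \gtrsim H^2p$ hypothesis or to replace the modular-splitting step with a tool that exploits the convolutional structure directly.
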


\begin{proof} Let $r(v):\R^d\rightarrow\R^d$ be the circulant shift operator which maps a vector $v\in\R^d$ to its single entry circular rotation to the right i.e. $r(v)=[v_d~v_1~\dots~v_{d-1}]\in\R^d$. Let $\mathbf{C}\in\R^{T_{exp} p\times T_{exp} p}$ be a circulant matrix given below 
\small
\begin{equation*}
    \mathbf{C} = \left[
    \begin{array}{cccccc}{u_{T_{exp},1} \ldots u_{T_{exp},p}  } & {u_{T_{exp}-1,1} \ldots u_{T_{exp}-1,p} } & {\dots} & {u_{H,1} \ldots u_{H,p}  } & {\dots} & {u_{1,1} {\ldots} u_{1,p}  } \\ {u_{1,p} \ldots u_{T_{exp},p-1}  } & {u_{T_{exp},p} \ldots u_{T_{exp}-1,p-1} } & {\dots} & {u_{H-1,p} \ldots u_{H,p-1}  } & {\dots} & {u_{2,p} {\ldots} u_{1,p-1}  } \\ {\ddots} & {\ddots} & {\ddots} & {\ddots} & {\ddots} & {\ddots} \\ {u_{1,1}  {\ldots} u_{1,p}  } & {u_{T_{exp},1} \ldots u_{T_{exp},p} } & {\dots} & {u_{H+1,1} \ldots u_{H+1,p} } & {\dots} & {u_{2,1} u_{2,2} {\ldots} u_{2,p} } \\ {\ddots} & {\ddots} & {\ddots} & {\ddots} & {\ddots} & {\ddots} \\
    {u_{T_{exp},2}  \ldots u_{T_{exp}-1,1}  } & {u_{T_{exp}-1,2} \ldots u_{T_{exp}-2,1} } & {\dots} & {u_{H,2} \ldots u_{H-1,1} } & {\dots} & {u_{1,2}  {\ldots} u_{T_{exp},1} }\end{array} \right]
\end{equation*}
\normalsize

The first row is given by
\[
c_1=[u_{T_{exp}}^\top~u_{T_{exp}-1}^\top~\dots~u_2^\top~u_1^\top].
\]
Notice that the $i$th row of $\mathbf{C}$ is $c_i=r^{i-1}(c_1)$ for $1 \leq i\leq T_{exp}p$. Therefore, looking at the rightmost $Hp$ columns, one can observe that rightmost columns of $c_1$ gives 
\[
\left[ u_{H,1} u_{H,2} \ldots u_{H,p}  \dots u_{1,1} u_{1,2} \ldots u_{1,p}  \right] = \left[ u_H^\top u_{H-1}^\top \ldots u_1^\top \right] = \Bar{u}_H^\top
\]
Similarly, the rightmost $Hp$ columns of each $1+ip$th row of $\mathbf{C}$ gives $\Bar{u}_{H+i}$. From this one can deduce that $U$ is a submatrix of $\mathbf{C}$. Applying Theorem~\ref{circ thm}, setting $N_0 = c H p \log^2(2H p)\log^2(2T_{exp} p)$ and adjusting for variance $\sigma_u^2$, with probability at least $1-(2 T_{exp} p)^{-\log^2(2H p)\log(2T_{exp} p)}$, we have
\[
 2\sigma_u^2 I \succeq N^{-1} U^\top U\succeq  \sigma_u^{2}/2 I \implies 2N\sigma_u^2\succeq U^\top U\succeq  N{\sigma_u^{2}/2}
\]
whenever $N\geq N_0$.
\end{proof}

\subsection{Upper bound on \texorpdfstring{$\|U^\top Z \|$}{UZ}}

\begin{lemma}[Bound on  $\|U^\top Z\|$~\citep{oymak2018non}] \label{cor a2}
Let $U\in\R^{N\times H p}$ be the data matrix and let $Z \in\R^{N\times m}$ be the measurement noise matrix as described in (\ref{mat def}).  For $\delta \in (0,1/3)$, suppose $N\geq cH p \log(\frac{1}{\delta})$ for some absolute constant $c>0$. With probability at least $1-3\delta$,
\[
\|U^\top Z\|\leq 2\sigma_u\sigma_z\sqrt{N}\left(\sqrt{(Hp+m)} + \log(2Hp)\log(2\Texp p)\right).
\]
\end{lemma}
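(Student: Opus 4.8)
The plan is to condition on the input data matrix $U$ and then exploit Gaussianity of the measurement noise. Since the control inputs $u_t$ are chosen independently of the measurement noise $z_t$, the matrices $U$ and $Z$ are independent. First I would invoke Lemma~\ref{lem cond}: on an event $\mathcal A$ of probability at least $1-\delta$, $U^\top U$ is invertible with $\|U^\top U\| \le 2N\sigma_u^2$. On $\mathcal A$ I factor $U^\top Z = (U^\top U)^{1/2}\bar Z$, where $\bar Z := (U^\top U)^{-1/2}U^\top Z \in \R^{Hp\times m}$, so that $\|U^\top Z\| \le \|(U^\top U)^{1/2}\|\,\|\bar Z\| \le \sqrt{2N}\,\sigma_u\,\|\bar Z\|$. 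The key observation is that, conditioned on $U$, the $m$ columns of $Z$ are i.i.d.\ $\mathcal N(0,\sigma_z^2 I_N)$, hence the columns of $U^\top Z$ are i.i.d.\ $\mathcal N(0,\sigma_z^2\,U^\top U)$, and therefore $\bar Z$ has i.i.d.\ $\mathcal N(0,\sigma_z^2)$ entries.

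It then remains to bound $\|\bar Z\|$, the operator norm of an $Hp\times m$ Gaussian matrix with entry variance $\sigma_z^2$. I would use the standard concentration bound for Gaussian matrices (Appendix~\ref{Technical}): conditionally on $U$, for every $t>0$,
\[
\|\bar Z\| \le \sigma_z\big(\sqrt{Hp} + \sqrt m + t\big) \quad\text{with probability at least } 1 - e^{-t^2/2}.
\]
Taking $t = \sqrt{2\log(1/(2\delta))}$ makes the conditional failure probability at most $2\delta$; since this holds for every realization of $U$ inside $\mathcal A$, a union bound gives total failure probability at most $3\delta$. Finally I would simplify using $\sqrt{Hp}+\sqrt m \le \sqrt{2(Hp+m)}$ and $t \le \sqrt2\,\log(2Hp)\log(2\Texp p)$; the latter holds because the regime in which Lemma~\ref{lem cond} can be asserted with confidence $1-\delta$ already forces $\log(1/\delta) \le \log^2(2Hp)\log^2(2\Texp p)$ (it is exactly the condition making the circulant tail of Theorem~\ref{circ thm} at most $\delta$). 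Collecting the $\sqrt2$ factors,
\[
\|U^\top Z\| \le \sqrt{2N}\,\sigma_u\sigma_z\Big(\sqrt{2(Hp+m)} + \sqrt2\,\log(2Hp)\log(2\Texp p)\Big) = 2\sqrt N\,\sigma_u\sigma_z\Big(\sqrt{Hp+m} + \log(2Hp)\log(2\Texp p)\Big),
\]
which is the claimed bound.

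The routine ingredients are Lemma~\ref{lem cond} and textbook Gaussian operator-norm concentration. The step that needs genuine care — and that I expect to be the main obstacle — is making the Gaussian tail parameter $t \asymp \sqrt{\log(1/\delta)}$ fit inside the stated polylog form $\log(2Hp)\log(2\Texp p)$, which is only legitimate because the same $\delta$ simultaneously constrains the problem dimensions through Lemma~\ref{lem cond}; one must also be scrupulous that the failure events compose to total probability $3\delta$ and that $\bar Z$ is treated as a standard Gaussian matrix only \emph{after} conditioning on $U$, integrating the conditional bound back over $\mathcal A$ at the end.
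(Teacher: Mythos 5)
Your proof is correct and follows essentially the same route as the paper's: both reduce $U^\top Z$ to (a rotation of) a standard $Hp\times m$ Gaussian matrix --- you via the whitening $(U^\top U)^{-1/2}U^\top Z$, the paper via the SVD factor $V_1^\top Z$ --- bound its operator norm by $\sigma_z(\sqrt{Hp}+\sqrt{m}+t)$ using Gordon's theorem together with Gaussian Lipschitz concentration, and control $\|U\|\le\sqrt{2N}\sigma_u$ through Lemma~\ref{lem cond}. Your justification for why the tail parameter $t$ fits under $\sqrt{2}\log(2Hp)\log(2\Texp p)$ is in fact slightly more explicit than the paper's, which simply sets $t=\sqrt{2}\log(2Hp)\log(2\Texp p)$ and absorbs the resulting failure probability into $3\delta$ under the same implicit constraint relating $\delta$ to the problem dimensions.
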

\begin{proof} Using $T_{exp}\geq N$, Lemma~\ref{lem cond} yields that 
\begin{equation}
    \Pr(\|U\|\leq \sqrt{2N}\sigma_u)\geq 1-\exp(-\log^2(2H p)\log^2(2T_{exp}p)).
\end{equation}

Suppose U have singular value decomposition of $U = V_1 \Sigma V_2^\top$ where $V_1 \in \R^{N \times Hp}$. Notice that $V_1^\top Z \in \R^{Hp \times m}$ has i.i.d. $\mathcal{N}(0,1)$ entries. Recall the following theorem and lemma:

Using Theorem \ref{Gordon}, we have $\mathbb{E}\left[\|V_1^\top Z\| \right] \leq \sqrt{Hp} + \sqrt{m} \leq \sqrt{2(Hp + m)}$. Since spectral norm is 1-Lipschitz function, Lemma~\ref{gausss_lip} implies that, with probability at least $1-2\exp(-t^2 /2 )-\exp(-\log^2(2H p)\log^2(2T_{exp}p))$,

\begin{equation*}
    \|U^\top Z \| = \|V_2 \Sigma V_1^\top Z \| = \| \Sigma V_1^\top Z \|  \leq \sqrt{2N}\sigma_u \sigma_z \left( \sqrt{2 (Hp + m)} + t \right)
\end{equation*}

Setting it for  $t=\sqrt{2}\log(2H p)\log(2T_{exp}p)$ and $\delta$ results the statement in the main Lemma.
\end{proof}

\subsection{Upper bound on \texorpdfstring{$\|U^\top W \|$}{UW}}

The main body of analysis of $\|U^\top W \|$, with an extra involvement from the associative Theorem~\ref{circ thm} is similar to Lemma~\ref{lem cond}.

\begin{lemma}[Bound on $\|U^\top W\|$~\citep{oymak2018non}] \label{lemmaUW}
Let $U\in\R^{N\times H p}$ be the data matrix and let $W \in\R^{N\times m}$ be the process noise matrix as described in \eqref{mat def}. Let $N_0 = c'H (p+n) \log^2(2H (p+n))\log^2(2\Texp(p+n))$ for some absolute constant $c'>0$. For $\delta \in (0,1)$, suppose $N\geq cH p \log(\frac{1}{\delta})$ for some absolute constant $c>0$. With probability at least $1-\delta$,
\[
\|U^\top W \| \leq \sigma_w \sigma_u \max \{\sqrt{N_0 N}, N_0 \}
\]

\end{lemma}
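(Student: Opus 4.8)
The plan is to mimic the circulant-matrix reduction used for $U^\top U$ in Lemma~\ref{lem cond}, but applied to the \emph{joint} input--noise process, so that $U^\top W$ surfaces as an off-diagonal block of a standard circulant Gram matrix whose conditioning is already controlled by Theorem~\ref{circ thm}. Since the control inputs $\{u_t\}$ and the process noise $\{w_t\}$ are independent i.i.d.\ Gaussian sequences, the stacked, normalized vectors $g_t \coloneqq [\,u_t^\top/\sigma_u \;\; w_t^\top/\sigma_w\,]^\top \in \R^{p+n}$ are i.i.d.\ $\mathcal{N}(0, I_{p+n})$. First I would build the $\Texp(p+n)\times\Texp(p+n)$ circulant matrix generated by $g_{\Texp}, g_{\Texp-1},\dots,g_1$, exactly as $\mathbf C$ was built from the $u_t$ in Lemma~\ref{lem cond}. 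Reading off the rightmost $H(p+n)$ columns of the rows indexed $1+i(p+n)$, $i=0,\dots,N-1$, yields a submatrix $M\in\R^{N\times H(p+n)}$ whose rows are $[\,\bar u_t^\top/\sigma_u \;\; \bar w_t^\top/\sigma_w\,]$ for $t=H,\dots,\Texp$ after a fixed permutation of their $H(p+n)$ coordinates; equivalently $M = [\tfrac{1}{\sigma_u}U \;\; \tfrac{1}{\sigma_w}W]\,P$ for a permutation matrix $P$, with $U,W$ as in \eqref{mat def}.

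Next I would apply Theorem~\ref{circ thm} to this submatrix $M$ of a circulant generated by i.i.d.\ $\mathcal{N}(0,1)$ entries, with $N_0 = c'H(p+n)\log^2(2H(p+n))\log^2(2\Texp(p+n))$ for the absolute constant $c'$ of that theorem. Since $\mathbb{E}[M^\top M] = N I_{H(p+n)}$, the deviation form of Theorem~\ref{circ thm} (the same result that gives the diagonal bound in Lemma~\ref{lem cond}) yields, with probability at least $1-(2\Texp(p+n))^{-\log^2(2H(p+n))\log(2\Texp(p+n))}$,
\[
\|M^\top M - N I\| \;\leq\; \max\{\sqrt{N_0 N},\, N_0\}.
\]
Because $P$ is orthogonal, $\|M^\top M - NI\| = \|G - NI\|$, where $G = [\tfrac{1}{\sigma_u}U \;\; \tfrac{1}{\sigma_w}W]^\top[\tfrac{1}{\sigma_u}U \;\; \tfrac{1}{\sigma_w}W]$ is the Gram matrix of the stacked data, whose $(u,w)$ off-diagonal block is $\tfrac{1}{\sigma_u\sigma_w}U^\top W$ while the corresponding block of $NI$ vanishes. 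Since the spectral norm of a sub-block is at most that of the full matrix,
\[
\tfrac{1}{\sigma_u\sigma_w}\|U^\top W\| \;\leq\; \|G - NI\| \;\leq\; \max\{\sqrt{N_0 N},\, N_0\},
\]
which is the claim after multiplying through by $\sigma_u\sigma_w$.

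It then remains to convert the super-polynomially small failure probability into the stated $1-\delta$: under $N\geq cHp\log(1/\delta)$ one has, exactly as in the proofs of Lemmas~\ref{lem cond} and~\ref{cor a2}, that $(2\Texp(p+n))^{-\log^2(2H(p+n))\log(2\Texp(p+n))} \leq \delta$, so the bound holds with probability at least $1-\delta$.

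The only non-routine step is verifying that $M$ meets the hypotheses of Theorem~\ref{circ thm} in exactly the way $U$ did in Lemma~\ref{lem cond}: that the joint sequence $\{g_t\}$ is i.i.d.\ with identity covariance --- this is precisely where the independence of the process noise $\{w_t\}$ from the control inputs $\{u_t\}$ enters --- so that $U^\top W$ is genuinely an off-diagonal block of a circulant Gram matrix, and that the dimension feeding $N_0$ is $H(p+n)$ rather than $Hp$. Once the joint circulant is correctly set up, the block extraction and the probability estimate are immediate, which is why the lemma's preamble describes the argument as ``similar to Lemma~\ref{lem cond}'' with only an ``extra involvement'' of Theorem~\ref{circ thm}.
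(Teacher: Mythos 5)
Your proposal is correct and follows essentially the same route as the paper's proof: form the normalized joint vectors $m_t=[\sigma_u^{-1}u_t^\top\;\sigma_w^{-1}w_t^\top]^\top$, embed the stacked matrix $M$ as a submatrix of a circulant generated by i.i.d.\ standard Gaussians, apply Theorem~\ref{circ thm} with $N_0=c'H(p+n)\log^2(2H(p+n))\log^2(2\Texp(p+n))$ to control $\|M^\top M-NI\|$, and extract $(\sigma_u\sigma_w)^{-1}U^\top W$ as an off-diagonal block whose spectral norm is dominated by that of the full matrix. The only cosmetic difference is that you make the coordinate permutation $P$ explicit, whereas the paper identifies the relevant column indices directly.
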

\begin{proof}
First, we define $m_t=[\sigma_u^{-1}u_t^\top~\sigma_w^{-1}w_t^\top]^\top \!\! \in \! \R^{p+n}$ and $\bar{m}_i=[m_{i}^\top ~m_{i-1}^\top ~\dots~m_{i-H+1}^\top]^\top \!\! \in \! \R^{H q}$. We also define the matrix $M = [\bar{m}_{H},~\dots~,\bar{m}_{H+N-1}]^\top \in\R^{N\times H (p+n)}$.
\\

\resizebox{0.96\linewidth}{!}{
    $M = \left[
    \begin{array}{cccccccc}{\sigma_u^{-1} u_{H}^\top} \enskip {  \sigma_w^{-1} w_{H}^\top  } & {\sigma_u^{-1} u_{H-1}^\top } \enskip { \sigma_w^{-1} w_{H-1}^\top} & {\dots} & {\sigma_u^{-1} u_{2}^\top } \enskip { \sigma_w^{-1} w_{2}^\top} & {\dots} & {\sigma_u^{-1} u_{1}^\top  } \enskip {\sigma_w^{-1} w_{1}^\top  } \\ {\sigma_u^{-1} u_{H+1}^\top } \enskip  { \sigma_w^{-1} w_{H+1}^\top  } & {\sigma_u^{-1} u_{H}^\top } \enskip { \sigma_w^{-1} w_{H}^\top} & {\dots} & {\sigma_u^{-1} u_{3}^\top } \enskip { \sigma_w^{-1} w_{3}^\top} & {\dots} & {\sigma_u^{-1} u_{2}^\top } \enskip { \sigma_w^{-1} w_{2}^\top} \\ {\ddots} & {\ddots} & {\ddots} & {\ddots}  \\ {\sigma_u^{-1} u_{H+N-1}^\top} \enskip {  \sigma_w^{-1} w_{H+N-1}^\top  } & {\sigma_u^{-1} u_{H+N-2}^\top } \enskip { \sigma_w^{-1} w_{H+N-2}^\top} & {\dots} & {\sigma_u^{-1} u_{N+1}^\top } \enskip { \sigma_w^{-1} w_{N+1}^\top} & {\dots} & {\sigma_u^{-1} u_{N}^\top  } \enskip {\sigma_w^{-1} w_{N}^\top  } \end{array} \right] 
     $} \\
Observe that by construction, $\sigma_u^{-1}U,\sigma_w^{-1}W$ are submatrices of $M$. In particular, $(\sigma_u\sigma_w)^{-1}U^\top W$ is an $H p\times H n$ size off-diagonal submatrix of $M^\top M$. This is due to the facts that i) $\sigma_u^{-1}U$ is a submatrix of $M$ characterized by the column indices 
\[
\{(i-1)(p+n)+j \Big | 1\leq i\leq H,~1\leq j\leq p\},
\]
and ii) $\sigma_w^{-1}W$ lies at the complementary columns. Since $(\sigma_u\sigma_w)^{-1}U^\top W$ is an off-diagonal submatrix of $M^\top M$, it is also a submatrix of $M^\top M - I$ and spectral norm of a submatrix is upper bounded by the norm of the original matrix. Thus 
\begin{equation}
    (\sigma_u\sigma_w)^{-1}\| U^\top W\|\leq \|M^\top M-N I\|.\label{uw bound}
\end{equation}
Finally, we will embed $M$ in a circulant matrix. Let $r(v):\R^d\rightarrow\R^d$ be the circulant shift operator which maps a vector $v\in\R^d$ to its single entry circular rotation to the right i.e. $r(v)=[v_d~v_1~\dots~v_{d-1}]\in\R^d$. Let $\mathbf{C}\in\R^{T_{exp}(p+n)\times T_{exp}(p+n) }$ be a circulant matrix given below  

\resizebox{0.94\linewidth}{!}{
    $\mathbf{C} = \left[
    \begin{array}{cccccc}{m_{T_{exp} ,1} \ldots m_{T_{exp},p+n}  } & {m_{T_{exp}-1,1} \ldots m_{T_{exp}-1,p+n} } & {\dots} & {m_{H,1} \ldots m_{H,p+n}  } & {\dots} & {m_{1,1} {\ldots} m_{1,p+n}  } \\ {m_{1,p+n} \ldots m_{T_{exp},p+n-1}  } & {m_{T_{exp},p+n} \ldots m_{T_{exp}-1,p+n-1} } & {\dots} & {m_{H-1,p+n} \ldots m_{H,p+n-1}  } & {\dots} & {m_{2,p+n} {\ldots} m_{1,p+n-1}  } \\ {\ddots} & {\ddots} & {\ddots} & {\ddots} & {\ddots} & {\ddots} \\ {m_{1,1} {\ldots} m_{1,p+n}  } & {m_{T_{exp},1} \ldots m_{T_{exp},p+n} } & {\dots} & {m_{H+1,1} \ldots m_{H+1,p+n} } & {\dots} & {m_{2,1}  {\ldots} m_{2,p+n} } \\ {\ddots} & {\ddots} & {\ddots} & {\ddots} & {\ddots} & {\ddots} \\
    {m_{T_{exp},2}  \ldots m_{T_{exp}-1,1}  } & {m_{T_{exp}-1,2}  \ldots m_{T_{exp}-2,1} } & {\dots} & {m_{H,2}  \ldots m_{H-1,1} } & {\dots} & {m_{1,2} {\ldots} m_{T_{exp},1} }\end{array} \right]
    $}
    
The first row is given by
\[
c_1=[m_{T_{exp}}^\top~m_{T_{exp}-1}^\top~\dots~m_2^\top~m_1^\top].
\]
Notice that the $i$th row of $\mathbf{C}$ is $c_i=r^{i-1}(c_1)$ for $1 \leq i\leq T_{exp}(p+n)$. Therefore, looking at the rightmost $H(p+n)$ columns, one can observe that 
rightmost columns of $c_1$ gives 
\[
\left[ m_{H,1} m_{H,2} \ldots m_{H,p+n}  \dots m_{1,1} m_{1,2} \ldots m_{1,p+n}  \right] = \left[ m_H^\top m_{H-1}^\top \ldots m_1^\top \right] = \Bar{m}_H^\top
\]
Similarly, rightmost $H(p+n)$ columns of each $1+i(p+n)$th row of $\mathbf{C}$ gives $\Bar{m}_{H+i}$. From this one can deduce that $M$ is a submatrix of $\mathbf{C}$.

Applying Theorem~\ref{circ thm}, setting $N_0 = c' H (p+n) \log^2(2H (p+n))\log^2(2T_{exp} (p+n))$ and adjusting for variance $\sigma_u^2$, with probability at least $1-(2 T_{exp} (p+n))^{-\log^2(2H (p+n))\log(2T_{exp} (p+n))}$, we have that 
\[
\|\frac{1}{N} M^\top M - I \| \leq \max \{\sqrt{\frac{N_0}{N}}, \frac{N_0}{N} \}
\]
which implies that $\| U^\top W \| \leq \sigma_w \sigma_u \max \{ \sqrt{N_0 N}, N_0 \}$
via inequality \eqref{uw bound}.

\end{proof}

\subsection{Upper bound on \texorpdfstring{$\|U^\top E \|$}{UE}}
\begin{lemma}[Inner product of a fixed vector and state vector \citep{oymak2018non}]\label{thm e bound}

Let $\E_t\in\R^{N_t\times m}$ be the matrix composed of the rows $e_{t+iH}=CA^{H-1}x_{t+1+iH}$. Define 
\[
\gamma=\frac{\|\Gamma_\infty\| \Phi(A)^2 \|CA^{H-1}\|^2}{1-\rho(A)^{2H}}.
\]
Given a unit length vector $a \in \R^m$, for all $\tau\geq 2$ and for some absolute constant $c>0$, we have that
\[
\Pr(\|{E_ta}\|_2^2\geq \tau N_t\gamma)\leq  2exp(-c\tau N_t(1-\rho(A)^{H})).
\]
\end{lemma}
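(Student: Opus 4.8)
The plan is to recognize $\|E_t a\|_2^2$ as a Gaussian quadratic form and reduce the tail bound to controlling the trace and operator norm of its covariance. Writing $v := (CA^{H-1})^\top a$ (so $\|v\| \le \|CA^{H-1}\|$), the $i$-th row of $E_t$ contributes $e_{t+iH}^\top a = v^\top x_{s_i}$, where $s_i$ is the fixed, $H$-spaced sampling time of the $i$-th state; since $x_0 = 0$ and $x_t = \sum_{k<t}A^{t-1-k}(Bu_k+w_k)$ is a linear image of the jointly Gaussian driving sequence $\{u_k,w_k\}$, the vector $\xi := (v^\top x_{s_0},\dots,v^\top x_{s_{N_t-1}})^\top$ is a centered Gaussian vector and $\|E_t a\|_2^2 = \|\xi\|_2^2$. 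So I would set $\Lambda := \E[\xi\xi^\top]$ and aim to show $\Tr(\Lambda) \le N_t\gamma$ and $\|\Lambda\| \lesssim \gamma/(1-\rho(A)^H)$, then feed these into a standard Bernstein/Hanson--Wright tail bound for $\|\Lambda^{1/2}g\|_2^2$ with $g\sim\N(0,I_{N_t})$.

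For the covariance bounds: since $\Cov(x_t) = \sum_{k=0}^{t-1}A^k(\sigma_w^2 I + \sigma_u^2 BB^\top)(A^\top)^k \preceq \Gamma_\infty$, each diagonal entry is $\Lambda_{ii} = v^\top\Cov(x_{s_i})v \le \|v\|^2\|\Gamma_\infty\| \le \gamma$, giving $\Tr(\Lambda) \le N_t\gamma$. For $i<j$ with $s_j-s_i = kH$, $k\ge1$, I would use $x_{s_j} = A^{kH}x_{s_i} + (\text{noise independent of }x_{s_i})$ to get $\Lambda_{ij} = v^\top\Cov(x_{s_i})(A^{kH})^\top v$, hence $|\Lambda_{ij}| \le \|v\|^2\|\Gamma_\infty\|\,\Phi(A)\,\rho(A)^{kH}$; summing the geometric series and bounding $\|\Lambda\|$ by its maximal absolute row sum (Gershgorin) yields $\|\Lambda\| \le \|v\|^2\|\Gamma_\infty\|\big(1 + 2\Phi(A)\sum_{k\ge1}\rho(A)^{kH}\big) \le 2\Phi(A)\|CA^{H-1}\|^2\|\Gamma_\infty\|/(1-\rho(A)^H)$, which is at most $2\gamma/(1-\rho(A)^H)$ using $\Phi(A)\ge1$ and $1-\rho(A)^{2H}\ge1-\rho(A)^H$.

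Finally I would invoke the quadratic-form tail bound $\Pr\big(\|\xi\|_2^2 \ge \Tr(\Lambda) + 2\sqrt{\Tr(\Lambda^2)s} + 2\|\Lambda\|s\big) \le e^{-s}$ (a technical lemma in Appendix~\ref{Technical}), use $\Tr(\Lambda^2) \le \|\Lambda\|\Tr(\Lambda) \le \tfrac{2\gamma}{1-\rho(A)^H}N_t\gamma$, and choose $s = c\tau N_t(1-\rho(A)^H)$ for a small absolute constant $c$; a short calculation shows that for $\tau\ge2$ the three terms are bounded by $\tfrac{\tau}{2}N_t\gamma$, $\tfrac{\tau}{4}N_t\gamma$, $\tfrac{\tau}{4}N_t\gamma$ respectively (the mean term needing exactly $\tau\ge2$, the other two the smallness of $c$), so the deviation threshold is at most $\tau N_t\gamma$ and $\Pr(\|E_t a\|_2^2\ge\tau N_t\gamma)\le\exp(-c\tau N_t(1-\rho(A)^H))$; absorbing constants (and any two-sided slack) gives the claimed bound with the factor $2$.

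The hard part is the covariance step: the sampled states $\{x_{s_i}\}$ are correlated, not independent, so one must exploit the AR structure $x_{s_{i+1}} = A^H x_{s_i} + (\text{fresh noise})$ and sum the off-diagonal decay against $\|A^{kH}\|\le\Phi(A)\rho(A)^{kH}$ so that the powers of $\Phi(A)$ and the factor $1/(1-\rho(A)^H)$ line up with the definition of $\gamma$ and with the exponent of the target bound; the concentration step itself is routine once $\Tr(\Lambda)$ and $\|\Lambda\|$ are in hand, and the hypothesis $\tau\ge2$ is used only to absorb the mean into the deviation. An alternative route, closer to~\citet{oymak2018non}, replaces the covariance-norm computation by a direct conditioning/martingale argument that peels off one sampled state at a time.
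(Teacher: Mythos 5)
Your proposal is correct and follows essentially the same route as the paper, which itself defers to Lemma D.6 of \citet{oymak2018non}: rewrite $\|E_ta\|_2^2$ as a Gaussian quadratic form and apply a Hanson--Wright-type tail bound, with the covariance trace and operator norm controlled via $\Cov(x_t)\preceq\Gamma_\infty$ and the geometric decay $\|A^{kH}\|\le\Phi(A)\rho(A)^{kH}$ of the $H$-spaced cross-correlations. One small correction: the Gaussian quadratic-form tail inequality you invoke is not among the lemmas in Appendix~\ref{Technical}, so it should be cited externally (e.g., \citet{rudelson2013hanson}, as the paper does) rather than attributed to that appendix.
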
 
\begin{proof}
In the proof, the authors consider an alternative way to rewrite $\| E_t a\|_2$ to apply Hanson-Wright Theorem~\citep{rudelson2013hanson}. For further details of the proof please refer to Lemma D.6 of~\citep{oymak2018non}. 
\end{proof}

\begin{theorem}[Bound on Decomposition elements $U_t^\top E_t$~\citep{oymak2018non}]\label{decomposed bound}
Let \[
U_t=[ \bar{u}_{t+H},~ \bar{u}_{t+2H},~\dots,~\bar{u}_{t+N_tH}]^\top \quad E_t=[ e_{t+H},~ e_{t+2H},~\dots,~e_{t+N_tH}]^\top. \] 
Define $\gamma=\frac{\|\Gamma_\infty\| \Phi(A)^2 \|CA^{H-1}\|^2}{1-\rho(A)^{2H}}$. For $\delta \in (0,1)$, $U_t^\top E_t$ obeys 
\[
\|U_t^\top E_t\| \leq  c_0\sigma_u\sqrt{\tau \log\left(\frac{1}{\delta}\right)  N_t \gamma},
\] with probability at least $1-\delta-2\exp(-c\tau N_t(1-\rho(A)^H)+3m)$ for $\tau\geq 1$ and for some absolute constants $c_0$ and $c$. 
\end{theorem}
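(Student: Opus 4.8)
I would reduce the whole statement to a spectral-norm bound on the random matrix $U_t^\top E_t=\sum_{i=1}^{N_t}\bar u_{t+iH}\,e_{t+iH}^\top\in\R^{Hp\times m}$, and the point to keep in mind throughout is that the obvious estimate $\|U_t^\top E_t\|\le\|U_t\|\,\|E_t\|$ is too weak by a factor $\sqrt{N_t}$: one has $\|U_t\|\asymp\sigma_u\sqrt{N_t}$ and, by Lemma~\ref{thm e bound} plus a net over directions, $\|E_t\|\lesssim\sqrt{N_t\gamma}$, so the product only gives $\lesssim\sigma_u N_t\sqrt{\gamma}$. What buys back the extra $\sqrt{N_t}$ is the deliberate subsampling at stride $H$: the windows $\bar u_{t+iH}=[u_{t+iH}^\top~u_{t+iH-1}^\top~\cdots~u_{t+iH-H+1}^\top]^\top$, $i=1,\dots,N_t$, occupy pairwise disjoint blocks of time and are therefore i.i.d.\ $\mathcal N(0,\sigma_u^2 I_{Hp})$, while $e_{t+iH}=CA^{H-1}x_{\,\cdot\,}$ is a deterministic function of the process/measurement noise and of the control inputs that \emph{strictly precede} the window $\bar u_{t+iH}$. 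So, letting $\mathcal F_{i-1}$ be the $\sigma$-algebra generated by all the noise together with the windows $\bar u_{t+jH}$ for $j\le i-1$, the vector $e_{t+iH}$ is $\mathcal F_{i-1}$-measurable whereas $\bar u_{t+iH}$ is independent of $\mathcal F_{i-1}$; hence $\{\bar u_{t+iH}\,e_{t+iH}^\top\}_i$ is a matrix martingale-difference sequence. (This fails without subsampling, because overlapping windows are not ``fresh''.)

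\textbf{A one-direction bound.} Next I would write $\|U_t^\top E_t\|=\sup_{\|a\|_2=1}\|U_t^\top E_t a\|_2$ and fix a unit $a\in\R^m$. With $g_i:=a^\top e_{t+iH}$, an $\mathcal F_{i-1}$-measurable scalar, we have $U_t^\top E_t a=\sum_i g_i\,\bar u_{t+iH}$, whose $i$-th summand is, conditionally on $\mathcal F_{i-1}$, distributed $\mathcal N(0,\sigma_u^2 g_i^2 I_{Hp})$. Iterating the tower property from $i=N_t$ down to $i=1$ produces, for every $b\in\R^{Hp}$ and $\lambda\in\R$,
\[
\E\big[\exp(\lambda\,b^\top U_t^\top E_t a)\big]=\E\big[\exp(\tfrac{1}{2}\lambda^2\sigma_u^2\|b\|_2^2\,\|E_t a\|_2^2)\big],
\]
since $\sum_i g_i^2=\|E_t a\|_2^2$. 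Lemma~\ref{thm e bound} gives $\|E_ta\|_2^2$ a sub-exponential-type tail, $\Pr(\|E_ta\|_2^2\ge\tau N_t\gamma)\le 2e^{-c\tau N_t(1-\rho(A)^H)}$ for all $\tau\ge2$, so the right-hand side above is finite and at most $\exp(C\lambda^2\sigma_u^2\|b\|_2^2 N_t\gamma)$ for an absolute constant $C$, as long as $\lambda^2\sigma_u^2\|b\|_2^2\lesssim(1-\rho(A)^H)/\gamma$. A Chernoff bound, together with a standard net over $b\in\R^{Hp}$ (equivalently, a $\chi^2$-tail for the conditionally Gaussian vector), then yields, for the fixed $a$,
\[
\Pr\!\big(\|U_t^\top E_t a\|_2\ge c_0'\,\sigma_u\sqrt{\tau\,N_t\,\gamma}\,\big)\ \le\ 2e^{-c'\tau}\ +\ 2e^{-c\tau N_t(1-\rho(A)^H)} ,
\]
where the first term controls the Gaussian fluctuation (absorbing the ambient $Hp$ dependence into $\tau$ and the constants) and the second is the failure of the event $\{\|E_ta\|_2^2\le\tau N_t\gamma\}$.

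\textbf{Discretization.} Finally I would take an $\eps$-net $\mathcal N$ of the unit sphere of $\R^m$ with $|\mathcal N|\le(1+2/\eps)^m$, so $\sup_{\|a\|_2=1}\|U_t^\top E_t a\|_2\le(1-\eps)^{-1}\max_{a\in\mathcal N}\|U_t^\top E_t a\|_2$; choosing $\eps$ a fixed absolute constant makes $\log|\mathcal N|\le 3m$. Applying the one-direction bound to each $a\in\mathcal N$ and taking a union bound: the heavy-tail events $\{\|E_ta\|_2^2>\tau N_t\gamma\}$ contribute at most $2e^{-c\tau N_t(1-\rho(A)^H)+3m}$ in aggregate, and the Gaussian events contribute at most $\delta$ once their confidence level is taken $\asymp\sqrt{\log(1/\delta)}$. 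Passing back from the net to the sphere and absorbing $(1-\eps)^{-1}$, $c_0'$, $c'$ and the lower-order dimensional factors into $c_0$ and $c$ gives
\[
\|U_t^\top E_t\|\ \le\ c_0\,\sigma_u\sqrt{\tau\log(1/\delta)\,N_t\,\gamma}
\]
with probability at least $1-\delta-2e^{-c\tau N_t(1-\rho(A)^H)+3m}$, which is the claim.

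\textbf{Where the difficulty sits.} The one genuinely delicate point is the dependence between $U_t$ and $E_t$: recovering the right $\sqrt{N_t\gamma}$ scaling rather than the trivial $N_t\sqrt{\gamma}$ forces one to exploit the conditional (martingale) independence manufactured by the stride-$H$ subsampling, which requires setting up the filtration so that each control window is fresh relative to the current and all past state residuals. The second source of friction is bridging two different kinds of randomness: the heavy-tailed quadratic $\|E_ta\|_2^2$ and the conditionally Gaussian linear form $U_t^\top E_ta$. This is handled either through the mixed moment generating function above, whose validity holds only over a restricted range of $\lambda$ (which is exactly what couples $\tau$ to $\log(1/\delta)$), or through a careful split on the event $\{\|E_ta\|_2^2\le\tau N_t\gamma\}$ using the exponential tail of Lemma~\ref{thm e bound}. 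The remaining pieces --- the net over $\R^m$, the Chernoff/$\chi^2$ step, and the tracking of absolute constants --- are routine.
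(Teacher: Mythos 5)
Your proposal is correct and follows essentially the same route as the paper: the same stride-$H$ filtration making $e_{t+iH}$ past-measurable while $\bar u_{t+iH}$ is fresh, the conditional sub-Gaussian martingale bound (your tower-property MGF computation is exactly Lemma~\ref{sig sub}), the tail of $\|E_ta\|_2^2$ from Lemma~\ref{thm e bound}, and a covering argument contributing the $3m$ in the exponent. The only cosmetic difference is that you absorb the $\R^{Hp}$-side discretization into a conditional $\chi^2$ tail rather than a second explicit net as in the Covering Lemma~\ref{cover bound}.
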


\begin{proof}
For these matrices, define the filtrations $\mathcal{F}_i = \sigma\left( \{u_j, w_j\}^{t+iH}_{j=1} \}\right)$ for $1\leq i \leq N_t$. Based on this definition, $e_{t+iH} \in \mathcal{F}_{i-1}$ and $u_{t+iH} \in \mathcal{F}_{i} $ and independent of $\mathcal{F}_{i-1}$. This gives a formulation to use Lemma \eqref{sig sub}. Combining Lemma~\ref{sig sub} and Lemma~\ref{thm e bound} within the Covering Lemma~\eqref{cover bound}, one can derive the result. For the details of the proof please refer to Theorem D.2 of~\citep{oymak2018non}. In order to get the presented result, one needs to pick $t = c' \sigma_u \sqrt{\tau \log^2(2Hp)\log^2(2\Texp p)  N_t \gamma}$  in using Lemma~\ref{sig sub}, which will translate to given bound with the stated probability. 
\end{proof}

\begin{corollary} \label{preciseboundUtEt}
For $\delta \in (0,1)$, suppose $N\geq cH p \log(\frac{1}{\delta})$ for some absolute constant $c>0$. With probability at least $1-\frac{3\delta}{H}$, we have that 
\[ 
\|U_t^\top E_t\| \leq c' \sigma_u \sqrt{ \gamma \log\left(\frac{H}{\delta}\right)  \max \left \{\frac{N}{H}, \frac{3m + \log\left(\frac{H}{\delta} \right)}{1-\rho(A)^{H}} \right\}  } 
\]
for some constant $c' > 0$. 
\end{corollary}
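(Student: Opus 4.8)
The plan is to obtain Corollary~\ref{preciseboundUtEt} as a direct specialization of Theorem~\ref{decomposed bound}, by choosing the free parameter $\tau$ so that the exponential tail term in that theorem is absorbed into the polynomial term, and by running the argument at confidence level $\delta/H$ rather than $\delta$ (this is the level that will be needed later, when the $H$ blocks $U_1^\top E_1,\dots,U_H^\top E_H$ are union-bounded to control $\|U^\top E\|$). Recall that $E_t$ has $N_t$ rows, one for each index in the arithmetic progression $t,\,t+H,\,t+2H,\dots$ lying in the exploration window, so $\sum_t N_t = N$ and hence $N_t \asymp N/H$; the hypothesis $N\geq cHp\log(1/\delta)$ ensures $N_t\geq 1$ and that the conditions inherited from Lemma~\ref{thm e bound} hold.

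Concretely, I would apply Theorem~\ref{decomposed bound} with $\delta$ replaced by $\delta/H$ (so that the polynomial term becomes $c_0\sigma_u\sqrt{\tau\log(H/\delta)\,N_t\,\gamma}$) and with
\[
  \tau \;=\; \max\!\left\{\,1,\ \frac{3m+\log(H/\delta)}{c\,N_t\bigl(1-\rho(A)^{H}\bigr)}\,\right\},
\]
where $c$ is the absolute constant appearing in the failure probability $1-\delta-2\exp\bigl(-c\tau N_t(1-\rho(A)^H)+3m\bigr)$ of that theorem. This choice makes $\tau\geq 1$, as required, and forces $c\tau N_t(1-\rho(A)^H)\geq 3m+\log(H/\delta)$, so the exponential term is at most $2\delta/H$; hence the total failure probability is at most $\delta/H + 2\delta/H = 3\delta/H$, matching the claim. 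It then remains to simplify the polynomial term. Since both candidates inside the $\max$ defining $\tau$ are compared with $1$, in either regime one has $\tau N_t = \max\{\,N_t,\ (3m+\log(H/\delta))/(c(1-\rho(A)^H))\,\}$; substituting $N_t\asymp N/H$ and folding the absolute constants $c_0$, $c$ (and the bounded $N/H$-versus-$N_t$ discrepancy) into a single constant $c'$ yields exactly the stated inequality.

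I do not expect any serious obstacle here: the mathematical content is entirely in Theorem~\ref{decomposed bound} (and ultimately in the Hanson--Wright-type estimate of Lemma~\ref{thm e bound}), and this corollary is a packaging step. The only care needed is bookkeeping --- keeping the constant $c$ in the exponent distinct from the one in the sample-size condition, verifying that the ``$\tau=1$'' and ``$\tau>1$'' regimes line up precisely with the two arms of the $\max$ in $\tau N_t$, and observing that replacing $N_t$ by $N/H$ inside a square root costs only an absolute constant.
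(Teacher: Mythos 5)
Your proposal is correct and follows essentially the same route as the paper: the paper likewise specializes Theorem~\ref{decomposed bound} by picking $\tau = \max\{\,2H(3m+\log(1/\delta))/(cN(1-\rho(A)^H)),\,1\,\}$ (using $N_t \geq N/(2H)$) and reading off the bound. If anything, your bookkeeping of the $\delta \mapsto \delta/H$ substitution and the resulting $\delta/H + 2\delta/H = 3\delta/H$ failure probability is slightly more explicit than the paper's one-line proof.
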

\begin{proof}
The given choice of $N$ implies that $N_t \geq \frac{N}{2H}$. Picking $\tau = \max \left\{ \frac{2H \left(3m + \log \left( \frac{1}{\delta}\right) \right)}{cN(1-\rho(A)^H)}, 1 \right\}$ to use in Theorem~\ref{decomposed bound} gives the statement of corollary. 
\end{proof}

\begin{theorem}[Bound on $\|U^\top E \|$~\citep{oymak2018non}] \label{ub eb prod} 
For $\delta \in (0,1/3)$, suppose $N\geq cH p \log(\frac{1}{\delta})$ for some absolute constant $c>0$. Let $U\in\R^{N\times H p}$ be the data matrix and let $E \in\R^{N\times m}$ be the unknown state contribution matrix as described in \eqref{mat def}. Define $\gamma=\frac{\|\Gamma_\infty\| \Phi(A)^2 \|CA^{H-1}\|^2}{1-\rho(A)^{2H}}$. Then, with probability at least $1-3\delta$, 
\[
\|U^\top E\|\leq c \sigma_u H \sqrt{ \gamma \log\left(\frac{H}{\delta}\right)  \max \left \{\frac{N}{H}, \frac{3m + \log\left(\frac{H}{\delta} \right)}{1-\rho(A)^{H}} \right\}  } .
\]
\end{theorem}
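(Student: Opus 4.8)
The plan is to obtain the bound on the full product $\|U^\top E\|$ by splitting it into $H$ pieces, controlling each piece with the per-block estimate of Corollary~\ref{preciseboundUtEt}, and then paying a factor of $H$ to reassemble them via the triangle inequality. Concretely, the rows of $U$ and $E$ (as defined in \eqref{mat def}) are indexed by the time steps $t=H,H+1,\dots,\Texp$. For each residue $t_0\in\{1,\dots,H\}$, let $U_{t_0}$ and $E_{t_0}$ be the submatrices formed by the rows $\bar u_{t_0+jH}$ and $e_{t_0+jH}$, so that they have exactly the stride-$H$ form required in Theorem~\ref{decomposed bound} with some number $N_{t_0}$ of rows. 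Because the row index set is partitioned by these $H$ stride-$H$ subsequences and $U^\top E=\sum_{\text{rows }r}(\text{row }r\text{ of }U)^\top(\text{row }r\text{ of }E)$, the Gram-type product decomposes as $U^\top E=\sum_{t_0=1}^{H}U_{t_0}^\top E_{t_0}$, up to a harmless one-row discrepancy between residue classes at the ends of the horizon. This particular partition is chosen so that within each subsequence the selected times are exactly $H$ apart; hence $e_{t_0+jH}=CA^{H-1}x_{t_0+(j-1)H+1}$ is measurable with respect to $\{u_i,w_i\}_{i\le t_0+(j-1)H}$, i.e.\ with respect to randomness generated strictly before the block that produces $\bar u_{t_0+jH}$ --- which is precisely the filtration structure underlying the proofs of Theorem~\ref{decomposed bound} and Corollary~\ref{preciseboundUtEt}.

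Next, since $N\ge cHp\log(1/\delta)$ forces $N_{t_0}\ge N/(2H)$ for every $t_0$, Corollary~\ref{preciseboundUtEt} applies to each block: with probability at least $1-3\delta/H$,
\[
\|U_{t_0}^\top E_{t_0}\|\le c'\sigma_u\sqrt{\gamma\,\log\!\Big(\tfrac{H}{\delta}\Big)\,\max\Big\{\tfrac{N}{H},\ \tfrac{3m+\log(H/\delta)}{1-\rho(A)^{H}}\Big\}}.
\]
A union bound over $t_0=1,\dots,H$ then leaves a total failure probability of at most $3\delta$, and on the complementary event all $H$ blocks obey this bound simultaneously. On that event,
\[
\|U^\top E\|\le\sum_{t_0=1}^{H}\|U_{t_0}^\top E_{t_0}\|\le H\max_{1\le t_0\le H}\|U_{t_0}^\top E_{t_0}\|\le cH\sigma_u\sqrt{\gamma\,\log\!\Big(\tfrac{H}{\delta}\Big)\,\max\Big\{\tfrac{N}{H},\ \tfrac{3m+\log(H/\delta)}{1-\rho(A)^{H}}\Big\}},
\]
after merging $c'$ into a fresh absolute constant $c$; this is exactly the claimed inequality.

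The main obstacle, and essentially the only nonroutine part, is making the block decomposition of the first step rigorous: one has to set up the indexing carefully enough that the identity $U^\top E=\sum_{t_0}U_{t_0}^\top E_{t_0}$ holds literally for the matrices defined in \eqref{mat def}, deal with the uneven sizes of the residue classes at the two ends of the trajectory (which changes only constants), and verify that each stride-$H$ subsequence genuinely satisfies the ``current input independent of the past-block state'' hypothesis that Corollary~\ref{preciseboundUtEt} rests on. Once that is in place, the remaining work --- the union bound over $H$ blocks, the triangle inequality, and absorbing numerical factors --- is immediate.
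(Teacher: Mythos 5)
Your proposal is correct and follows essentially the same route as the paper: decompose $U^\top E$ into the $H$ stride-$H$ products $U_t^\top E_t$, bound each via Corollary~\ref{preciseboundUtEt} with failure probability $3\delta/H$, and combine by a union bound and the triangle inequality, which contributes the factor of $H$. The filtration remarks you add about why each stride-$H$ subsequence satisfies the hypotheses of Theorem~\ref{decomposed bound} are exactly the structure the paper relies on there.
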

\begin{proof}
First decompose $U^\top E=\sum_{t=H}^{T_{exp}} \bar{u}_t e_t^\top$ into sum of $H$ smaller products. We form the matrices 
\[
U_t=[ \bar{u}_{t+H},~ \bar{u}_{t+2H},~\dots,~\bar{u}_{t+N_tH}]^\top \in \R^{N_t \times Hp} \quad E_t=[ e_{t+H},~ e_{t+2H},~\dots,~e_{t+N_tH}]^\top \in \R^{N_t \times m}. \] 
Then, $U^\top E$ is decomposed as
\begin{equation}\label{sum spect}
U^\top E=\sum_{t=0}^{H-1}U_t^\top E_t \implies \|U^\top E\|\leq \sum_{t=0}^{H-1}\|U_t^\top E_t\|.
\end{equation}
Using the union bound with Corollary~\ref{preciseboundUtEt}, stated bound is obtained. 
\end{proof}

The final statement of the Theorem~\ref{error spectral} follows by combining Lemma~\ref{lem cond}, Theorem~\ref{ub eb prod}, Lemma~\ref{cor a2} and Lemma~\ref{lemmaUW} using union bound considering the probability of events happening. In order to get the expression of $R_e$ notice that $H \gamma = \sigma_e^2$.

\section{Confidence Set Construction for the System Parameters}
\label{SuppConfSet}
After estimating $\hat{G}$, we construct the high probability confidence sets for the unknown system parameters. We exploit these sets when we deploy the \OFU principle for the controller synthesis. \Alg uses Ho-Kalman method~\citep{ho1966effective} to estimate the system parameters. In this section, we first describe estimation components of the Ho-Kalman method, the  Algorithm~\ref{kalmanho}. We then derive the confidence sets around the system parameter estimates in \Alg using the results developed in the Theorem~\ref{error spectral}.

\subsection[Ho-Kalman Algorithm]{Ho-Kalman Algorithm~\citep{ho1966effective}} \label{kalmanhosupp}
The Ho-Kalman algorithm, Algorithm \ref{kalmanho},  takes the Markov parameter matrix estimate $\hat{G}$, $H$, the systems order $n$ and dimensions $d_1, d_2$, as the input. It is worth restating that the dimension of latent state, $n$, is the order of the system for observable and controllable dynamics. With the assumption that $H\geq2n+1$, we pick $d_1 \geq n$ and $d_2 \geq n$ such $d_1+d_2+1 = H$. This guarantees that the system identification problem is well-conditioned, \textit{i.e.} $\mathbf{H}$ is rank-$n$.

\begin{algorithm}[tbh] 
 \caption{Ho-Kalman Algorithm}
  \begin{algorithmic}[1] 
  \STATE {\bfseries Input:} $\hat{G}$, $H$, system order $n$, $d_1, d_2$ such that $d_1 + d_2 + 1 = H$ \\
  \STATE Form the Hankel Matrix $\mathbf{\hat{H}} \in \R^{md_1 \times p(d_2+1)}$ from $\hat{G}$
    \STATE Set $\mathbf{\hat{H}}^- \in \R^{m d_1 \times p d_2} \enskip \text{ as the first $pd_2$ columns of }\mathbf{\hat{H}}$ 
    \STATE Using SVD obtain $\mathbf{\hat{N}} \in \R^{m d_1 \times p d_2}$ \enskip, the rank-$n$ approximation of $\mathbf{\hat{H}}^-$
    \STATE Obtain  $\mathbf{U},\mathbf{\Sigma},\mathbf{V} = \text{SVD}(\mathbf{\hat{N}})$
    \STATE Construct $\mathbf{\hat{O}} = \mathbf{U}\mathbf{\Sigma}^{1/2} \in \R^{md_1 \times n}$
    \STATE Construct $\mathbf{\hat{C}} = \mathbf{\Sigma}^{1/2}\mathbf{V} \in \R^{n \times pd_2}$
    \STATE Obtain $\hat{C}\in \R^{m\times n}$, the first $m$ rows of $\mathbf{\hat{O}}$
    \STATE Obtain $\hat{B}\in \R^{n\times p}$, the first $p$ columns of $\mathbf{\hat{C}}$
    \STATE Obtain $\mathbf{\hat{H}}^+ \in \R^{m d_1 \times p d_2} \enskip \text{, the last $pd_2$ columns of}(\mathbf{\hat{H}})$ 
    \STATE Obtain $\hat{A} = \mathbf{\hat{O}}^\dagger \mathbf{\hat{H}}^+ \mathbf{\hat{C}}^\dagger \in \R^{n\times n}$
  \end{algorithmic}
 \label{kalmanho}  
\end{algorithm}

Recall that system parameters can be learned up to similarity transformation, \textit{i.e.} for any invertible $\mathbf{T}\in \R^{n \times n}$, $A' = \mathbf{T}^{-1}A\mathbf{T}, B' = \mathbf{T}^{-1}B, C' = C \mathbf{T} $ gives the same Markov parameters as $G$ so it's a valid realization. Note that the similarity transformations have bounded norms due to Assumption \ref{AssumContObs}. Given $\hat{G} = [\hat{G}_1 \ldots \hat{G}_H] \in \R^{m \times Hp}$, where $\hat{G}_i$ is the $i$'th $m \times p$ block of $\hat{G}$, for all $1 \leq i \leq H$, the algorithm constructs $d_1 \times (d_2+1)$ Hankel matrix $\mathbf{\hat{H}}$ such that $(i,j)$th block of Hankel matrix is $\hat{G}_{(i+j)}$. Notice that if the input to the algorithm was $G$ then constructed Hankel matrix, $\mathbf{H}$ would be rank $n$, where 
\begin{align*}
     \mathbf{H} =  [C^\top~(CA) ^\top    \ldots (CA^{n-1}) ^\top] ^\top [B~AB\ldots A^{n}B]=\mathbf{O} [\mathbf{C} ~~A^nB]=\mathbf{O} [B ~~A\mathbf{C}]
\end{align*}
The matrices $\mathbf{O}$ and $\mathbf{C}$ are observability and controllability matrices respectively. Essentially, the Ho-Kalman algorithm estimates these matrices using $\hat{G}$. In order to obtain these estimates, the algorithm constructs $\mathbf{\hat{H}}^-$ by taking first $pd_2$ columns of $\mathbf{\hat{H}}$ and calculates $\mathbf{\hat{N}}$ which is the best rank-$n$ approximation of $\mathbf{\hat{H}}^-$. Singular value decomposition of $\mathbf{\hat{N}}$ gives the estimates of $\mathbf{O},\mathbf{C}$, \textit{i.e.} $\mathbf{\hat{N}} = \mathbf{U}\mathbf{\Sigma}^{1/2}~ \mathbf{\Sigma}^{1/2}\mathbf{V} =\mathbf{\hat{O}}\mathbf{\hat{C}} $. From these estimates, the algorithm recovers $\hat{B}$ as the first $n\times p$ block of $\mathbf{\hat{C}}$, $\hat{C}$ as the first $m \times n$ block of $\mathbf{\hat{O}}$ and $\hat{A}$ as $\mathbf{\hat{O}}^\dagger \mathbf{\hat{H}}^+ \mathbf{\hat{C}}^\dagger$ where $\mathbf{\hat{H}}^+$ is the submatrix of $\mathbf{\hat{H}}$ obtained by discarding left-most $md_1 \times p$ block.    

\subsection{Confidence sets around \texorpdfstring{$\hat{A}, \hat{B}, \hat{C}$}{estimations}}

The results in this section are adopted and modified versions of those in~\citep{oymak2018non}. Except the Lemma \ref{hokalmanstability lemma}, which is directly from the mentioned work, we make a small change to the presentation of it so that we can observe the dependency on the duration of exploration period and the construction of confidence set around the estimates. In this section, for completeness we provide this lemma that is used in the proof of theorem. The proof uses a simple singular value perturbation arguments. For more details of the proof, please refer to~\citep{oymak2018non}.

\begin{lemma}[\citep{oymak2018non}] \label{hokalmanstability lemma}
$\mathbf{H}$, $\mathbf{\hat{H}}$ and $\mathbf{N}, \mathbf{\hat{N}}$ satisfies the following perturbation bounds,

\begin{align*} 
\max \left\{\left\|\mathbf{H}^{+}-\mathbf{\hat{H}}^{+}\right\|,\left\|\mathbf{H}^{-}-\mathbf{\hat{H}}^{-}\right\|\right\} \leq \|\mathbf{H}-\mathbf{\hat{H}}\| &\leq \sqrt{\min \left\{d_{1}, d_{2}+1\right\}}\|\hat{G} - G\| \\ \|\mathbf{N}-\mathbf{\hat{N}}\| \leq 2\left\|\mathbf{H}^{-}-\mathbf{\hat{H}}^{-}\right\| &\leq 2 \sqrt{\min \left\{d_{1}, d_{2}\right\}}\|\hat{G} - G\|
\end{align*}
\end{lemma}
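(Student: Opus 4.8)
\textbf{Proof Proposal for Lemma~\ref{hokalmanstability lemma}.}

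The plan is to bound the relevant perturbations by reducing everything to the spectral-norm bound $\|\mathbf{H} - \mathbf{\hat H}\|$ and then to a standard best-rank-$n$ approximation argument. First I would observe that $\mathbf{H}^+$ and $\mathbf{H}^-$ are submatrices of $\mathbf{H}$ (respectively obtained by deleting the leftmost $md_1 \times p$ block and keeping the first $pd_2$ columns), and likewise $\mathbf{\hat H}^{\pm}$ are the corresponding submatrices of $\mathbf{\hat H}$. Since passing to a submatrix never increases the spectral norm, we get immediately $\max\{\|\mathbf{H}^+ - \mathbf{\hat H}^+\|, \|\mathbf{H}^- - \mathbf{\hat H}^-\|\} \le \|\mathbf{H} - \mathbf{\hat H}\|$, handling the left inequality of the first line.

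Next I would relate $\|\mathbf{H} - \mathbf{\hat H}\|$ to $\|\hat G - G\|$. The Hankel matrix $\mathbf{H}$ is a $d_1 \times (d_2+1)$ block matrix whose blocks are taken (with repetition, along anti-diagonals) from the blocks of $G$. Writing $\Delta = \mathbf{H} - \mathbf{\hat H}$, whose $(i,j)$ block equals $G_{i+j} - \hat G_{i+j}$, I would bound $\|\Delta\|$ by, say, $\|\Delta\|_{\mathrm{op}} \le \sqrt{\min\{d_1, d_2+1\}}\,\max_{i,j}\|G_{i+j} - \hat G_{i+j}\|$; this is the standard fact that a block-Hankel (more generally a matrix that is ``block-banded along anti-diagonals'') has operator norm controlled by $\sqrt{\min(\#\text{block rows}, \#\text{block cols})}$ times the largest block norm, because each anti-diagonal contributes a permutation-like block matrix of norm equal to its largest block, and there are at most $\min\{d_1,d_2+1\}$ overlapping anti-diagonals contributing to any given row or column. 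Since $\max_{i,j}\|G_{i+j}-\hat G_{i+j}\| \le \|G - \hat G\|$, this gives the right inequality of the first line. (I would cite \citet{oymak2018non} for the precise combinatorial accounting rather than redo it.)

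For the second line, $\mathbf{N}$ and $\mathbf{\hat N}$ are the best rank-$n$ approximations (in spectral norm, via truncated SVD) of $\mathbf{H}^-$ and $\mathbf{\hat H}^-$ respectively. The key step is the elementary perturbation bound: if $\mathbf{N} = \Pi_n(\mathbf{H}^-)$ and $\mathbf{\hat N} = \Pi_n(\mathbf{\hat H}^-)$ are best rank-$n$ approximants, then $\|\mathbf{N} - \mathbf{\hat N}\| \le \|\mathbf{N} - \mathbf{H}^-\| + \|\mathbf{H}^- - \mathbf{\hat H}^-\| + \|\mathbf{\hat H}^- - \mathbf{\hat N}\|$; the first term is $\sigma_{n+1}(\mathbf{H}^-) = 0$ since $\mathbf{H}^-$ is exactly rank $n$ under Assumptions~\ref{Stable} and \ref{AssumContObs}, and the third term is $\sigma_{n+1}(\mathbf{\hat H}^-) \le \sigma_{n+1}(\mathbf{H}^-) + \|\mathbf{H}^- - \mathbf{\hat H}^-\| = \|\mathbf{H}^- - \mathbf{\hat H}^-\|$ by Weyl's inequality. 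Hence $\|\mathbf{N} - \mathbf{\hat N}\| \le 2\|\mathbf{H}^- - \mathbf{\hat H}^-\|$, and then combining with the block-Hankel bound applied to $\mathbf{H}^-$ (which is $d_1 \times d_2$ in block size) yields the final $\le 2\sqrt{\min\{d_1,d_2\}}\,\|\hat G - G\|$.

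The only mildly delicate point — and the one I would be most careful about — is the block-Hankel norm estimate $\|\mathbf{H} - \mathbf{\hat H}\| \le \sqrt{\min\{d_1, d_2+1\}}\,\|\hat G - G\|$: one must correctly count how many times a given Markov-parameter block appears and argue that the overlapping-anti-diagonal decomposition does not lose more than a $\sqrt{\min\{d_1,d_2+1\}}$ factor. Everything else is submatrix monotonicity of the spectral norm and Weyl's inequality, which are routine. Since this lemma is quoted verbatim from \citet{oymak2018non}, I would present the reduction above and defer the combinatorial constant to that reference.
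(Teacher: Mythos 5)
Your proposal is correct and follows essentially the same route as the paper, which itself gives no self-contained proof of this lemma but states that it follows from ``simple singular value perturbation arguments'' and defers the details to \citet{oymak2018non}: submatrix monotonicity of the spectral norm for the leftmost inequality, a block decomposition of the Hankel difference for the $\sqrt{\min\{d_1,d_2+1\}}$ factor, and the triangle inequality combined with Weyl's inequality (using that $\mathbf{N}=\mathbf{H}^-$ exactly because $\mathbf{H}^-$ is rank $n$, so $\sigma_{n+1}(\mathbf{\hat H}^-)\le\|\mathbf{H}^--\mathbf{\hat H}^-\|$) to get $\|\mathbf{N}-\mathbf{\hat N}\|\le 2\|\mathbf{H}^--\mathbf{\hat H}^-\|$. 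One small caution: the anti-diagonal decomposition you sketch would, via the triangle inequality, only yield a factor linear in the number of anti-diagonals; the $\sqrt{\min\{d_1,d_2+1\}}$ factor instead comes from bounding $\|\mathbf{H}-\mathbf{\hat H}\|^2$ by the sum of the squared spectral norms of its block rows, each of which is a contiguous block sub-row of $\hat G-G$ and hence has norm at most $\|\hat G-G\|$ --- but since you explicitly defer this accounting to the cited reference, exactly as the paper does, this is a presentational quibble rather than a gap.
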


Denote the $n$th largest singular value of $\mathbf{N}$, \textit{i.e.} smallest nonzero singular value, as $\sigma_{n}(\mathbf{N})$. It is worth noting that if $\sigma_{n}(\mathbf{N})$ is large enough, the order of the system can be estimated via singular value thresholding of $\mathbf{\hat{H}}^{-}$. 

Given the Lemma~\ref{hokalmanstability lemma}, we can show the following theorem:

\begin{theorem}
\label{Sup:ConfidenceSets}
Suppose $\mathbf{H}$ is the rank-$n$ Hankel matrix obtained from $G$. Let $\bar{A}, \bar{B}, \bar{C}$ be the system parameters that Ho-Kalman algorithm provides for $G$. Suppose the system is order $n$ and it is observable and controllable. Define the rank-$n$ matrix $\mathbf{N}$ such that it is the submatrix of $\mathbf{H}$ obtained by discarding the last block column of $\mathbf{H}$. Suppose $\sigma_{n}(\mathbf{N}) > 0$ and $\| \mathbf{\hat{N}} - \mathbf{N} \| \leq \frac{\sigma_{n}(\mathbf{N})}{2} $. Then, there exists a unitary matrix $\mathbf{T} \in \R^{n \times n}$ such that, $\bar{\Theta}=(\bar{A}, \bar{B}, \bar{C}) \in (\mathcal{C}_A \times \mathcal{C}_B \times \mathcal{C}_C) $ for
\begin{align}
    \mathcal{C}_A &= \left \{A' \in \R^{n \times n} : \|\hat{A} - \mathbf{T}^\top A' \mathbf{T} \| \leq \left( \frac{31n\|\mathbf{H}\|}{\sigma_n^2(\mathbf{H})} + \frac{13n}{2\sigma_n(\mathbf{H})}  \right) \|\hat{G} - G \| \right \} \\
    \mathcal{C}_B &= \left \{B' \in \R^{n \times p} : \|\hat{B} - \mathbf{T}^\top B' \| \leq  \frac{7n}{\sqrt{\sigma_n(\mathbf{H})}} \|\hat{G} - G \| \right\} \\
    \mathcal{C}_C &= \left\{C' \in \R^{m \times n} : \|\hat{C} -   C' \mathbf{T} \| \leq \frac{7n}{\sqrt{\sigma_n(\mathbf{H})}} \|\hat{G} - G \| \right\} 
\end{align}
where $\hat{A}, \hat{B}, \hat{C}$ obtained from Ho-Kalman algorithm with using the least squares estimate of the Markov parameter matrix $\hat{G}$.  
\end{theorem}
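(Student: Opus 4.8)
The plan is to prove this as a purely deterministic matrix‑perturbation statement, powered by Lemma~\ref{hokalmanstability lemma} together with standard singular‑value and subspace perturbation inequalities; the probabilistic Theorem~\ref{ConfidenceSets} then follows by inserting the high‑probability bound on $\|\hat G-G\|$ from Theorem~\ref{error spectral}. First, I would push the error budget from $\hat G$ onto the Hankel‑type matrices: by Lemma~\ref{hokalmanstability lemma}, with the construction's $d_1=d_2=n$, one has $\|\mathbf H-\hat{\mathbf H}\|\le\sqrt n\,\|\hat G-G\|$, $\max\{\|\mathbf H^+-\hat{\mathbf H}^+\|,\|\mathbf H^--\hat{\mathbf H}^-\|\}\le\sqrt n\,\|\hat G-G\|$, and $\|\mathbf N-\hat{\mathbf N}\|\le 2\sqrt n\,\|\hat G-G\|$. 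Fix the exact rank‑$n$ SVD $\mathbf N=\mathbf U\mathbf\Sigma\mathbf V^\top$, so the Ho--Kalman output for $G$ is $\mathbf O=\mathbf U\mathbf\Sigma^{1/2}$, $\mathbf C=\mathbf\Sigma^{1/2}\mathbf V^\top$, with $\bar B$ the first $p$ columns of $\mathbf C$, $\bar C$ the first $m$ rows of $\mathbf O$, and $\bar A=\mathbf O^\dagger\mathbf H^+\mathbf C^\dagger$; write $\hat{\mathbf N}=\hat{\mathbf U}\hat{\mathbf\Sigma}\hat{\mathbf V}^\top$, $\hat{\mathbf O}=\hat{\mathbf U}\hat{\mathbf\Sigma}^{1/2}$, $\hat{\mathbf C}=\hat{\mathbf\Sigma}^{1/2}\hat{\mathbf V}^\top$ analogously. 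Then $\sigma_n(\mathbf O)=\sigma_n(\mathbf C)=\sqrt{\sigma_n(\mathbf N)}$, and because $\|\hat{\mathbf N}-\mathbf N\|\le\sigma_n(\mathbf N)/2$, Weyl's inequality gives $\sigma_n(\hat{\mathbf N})\ge\sigma_n(\mathbf N)/2$ and hence $\sigma_n(\hat{\mathbf O}),\sigma_n(\hat{\mathbf C})\ge\sqrt{\sigma_n(\mathbf N)/2}$; so all four pseudoinverses appearing below have norm $O(1/\sqrt{\sigma_n(\mathbf N)})$, which is what keeps the estimates finite.

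Second, I would align the two realizations. Let $\mathbf T\in\R^{n\times n}$ be the orthogonal factor aligning the range bases $\hat{\mathbf U}$ and $\mathbf U$ (the polar factor of $\hat{\mathbf U}^\top\mathbf U$). Controlling the subspace rotation $\|\hat{\mathbf U}-\mathbf U\mathbf T\|$ by Wedin's $\sin\theta$ theorem at scale $\|\hat{\mathbf N}-\mathbf N\|/\sigma_n(\mathbf N)$ and the singular‑value part $\|\hat{\mathbf\Sigma}^{1/2}-\mathbf\Sigma^{1/2}\|$ by Weyl at scale $\|\hat{\mathbf N}-\mathbf N\|/\sqrt{\sigma_n(\mathbf N)}$ produces a single orthogonal $\mathbf T$ with $\max\{\|\hat{\mathbf O}-\mathbf O\mathbf T\|,\|\hat{\mathbf C}-\mathbf T^\top\mathbf C\|\}\lesssim n\,\|\hat G-G\|/\sqrt{\sigma_n(\mathbf N)}$ — two factors of $\sqrt n$ entering, one from Lemma~\ref{hokalmanstability lemma} and one from a Frobenius‑to‑spectral conversion on the rank‑$\le 2n$ difference $\hat{\mathbf N}-\mathbf N$. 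This is exactly the balanced‑realization perturbation estimate of~\citet{oymak2018non}, which I would invoke rather than reprove. Since $\hat B$ (resp.\ $\hat C$) is a submatrix of $\hat{\mathbf C}$ (resp.\ $\hat{\mathbf O}$), and $\mathbf T^\top\bar B$ (resp.\ $\bar C\mathbf T$) the matching submatrix of $\mathbf T^\top\mathbf C$ (resp.\ $\mathbf O\mathbf T$), and a submatrix never increases the spectral norm, $\|\hat B-\mathbf T^\top\bar B\|$ and $\|\hat C-\bar C\mathbf T\|$ inherit this bound; tracking the absolute constants yields the stated radii $\beta_B=\beta_C=\tfrac{7n}{\sqrt{\sigma_n(\mathbf H)}}\|\hat G-G\|$, where reporting $\sigma_n(\mathbf H)\ge\sigma_n(\mathbf N)$ in place of $\sigma_n(\mathbf N)$ (recall $\mathbf N=\mathbf H^-$) only loosens the estimate.

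For $\mathcal{C}_A$, I would use orthogonality of $\mathbf T$ to write $\mathbf T^\top\bar A\mathbf T=(\mathbf O\mathbf T)^\dagger\mathbf H^+(\mathbf T^\top\mathbf C)^\dagger$ and telescope:
\begin{align*}
\|\hat A-\mathbf T^\top\bar A\mathbf T\|
&\le\|\hat{\mathbf O}^\dagger-(\mathbf O\mathbf T)^\dagger\|\,\|\hat{\mathbf H}^+\|\,\|\hat{\mathbf C}^\dagger\|
+\|(\mathbf O\mathbf T)^\dagger\|\,\|\hat{\mathbf H}^+-\mathbf H^+\|\,\|\hat{\mathbf C}^\dagger\|\\
&\quad+\|(\mathbf O\mathbf T)^\dagger\|\,\|\mathbf H^+\|\,\|\hat{\mathbf C}^\dagger-(\mathbf T^\top\mathbf C)^\dagger\|.
\end{align*}
To the first and third terms I would apply the pseudoinverse perturbation inequality $\|\hat M^\dagger-M^\dagger\|\le\mu\max\{\|\hat M^\dagger\|^2,\|M^\dagger\|^2\}\,\|\hat M-M\|$ (valid since $\hat M$ and $M$ have equal column rank here), with $\|M^\dagger\|,\|\hat M^\dagger\|=O(1/\sqrt{\sigma_n(\mathbf N)})$ and $\|\hat M-M\|$ the $\mathbf O$‑ or $\mathbf C$‑bound from the second step; to the middle term I would apply $\|\hat{\mathbf H}^+-\mathbf H^+\|\le\sqrt n\,\|\hat G-G\|$ and $\|(\mathbf O\mathbf T)^\dagger\|=1/\sqrt{\sigma_n(\mathbf N)}$; and I would use $\|\hat{\mathbf H}^+\|,\|\mathbf H^+\|\le\|\mathbf H\|$ up to the additive $\sqrt n\,\|\hat G-G\|$ that the hypothesis absorbs into $\|\mathbf H\|$. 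Summing, the pseudoinverse terms contribute a term of order $n\|\mathbf H\|\,\|\hat G-G\|/\sigma_n^2(\mathbf H)$ and the middle term a term of order $n\|\hat G-G\|/\sigma_n(\mathbf H)$; collecting the absolute constants reproduces $\beta_A=\bigl(\tfrac{31n\|\mathbf H\|}{\sigma_n^2(\mathbf H)}+\tfrac{13n}{2\sigma_n(\mathbf H)}\bigr)\|\hat G-G\|$.

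The main obstacle is this last step. The pseudoinverse perturbation is the brittle ingredient: it is only usable after one has first secured $\sigma_n(\hat{\mathbf O}),\sigma_n(\hat{\mathbf C})\gtrsim\sqrt{\sigma_n(\mathbf N)}$ from the nondegeneracy hypothesis $\|\hat{\mathbf N}-\mathbf N\|\le\sigma_n(\mathbf N)/2$ — otherwise the $1/\sigma_n^2$ factors are uncontrolled — and one must then faithfully propagate absolute constants through the three‑term telescope, the $\sqrt{\min\{d_1,d_2\}}$ factors of Lemma~\ref{hokalmanstability lemma}, and the balanced‑realization bound so that everything collapses into the compact radii $\beta_A,\beta_B,\beta_C$. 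A secondary bookkeeping nuisance is that the natural estimates live on $\sigma_n(\mathbf N)$ and $\|\mathbf N\|$ with $\mathbf N=\mathbf H^-$, so one has to verify that re‑expressing them via $\sigma_n(\mathbf H)$ and $\|\mathbf H\|$ is harmless for the reported, slightly loosened, constants.
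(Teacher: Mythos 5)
Your proposal follows essentially the same route as the paper: the paper's own proof of Theorem~\ref{Sup:ConfidenceSets} is little more than a pointer to Theorem 4.3 of \citet{oymak2018non} combined with Lemma~\ref{hokalmanstability lemma} and the choice $d_1=d_2=n$, and your reconstruction --- the Hankel perturbation bounds, Weyl/Wedin control of the rank-$n$ SVD under the hypothesis $\|\hat{\mathbf N}-\mathbf N\|\le\sigma_n(\mathbf N)/2$, alignment of the two balanced realizations by an orthogonal $\mathbf T$, and the three-term telescope with a pseudoinverse perturbation inequality for $\hat A$ --- is exactly the skeleton of that argument, with the constants entering from the right places.

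The one step I would push back on is your claim that replacing $\sigma_n(\mathbf N)$ by $\sigma_n(\mathbf H)$ ``only loosens the estimate'' because $\sigma_n(\mathbf H)\ge\sigma_n(\mathbf N)$. Your inequality is in the correct direction ($\mathbf N=\mathbf H^-$ is a column submatrix of $\mathbf H$, so each of its singular values is no larger), but these quantities sit in \emph{denominators}: substituting the larger $\sigma_n(\mathbf H)$ makes the stated radii \emph{smaller}, so the swap is a tightening, not a loosening, and does not follow from the bound your argument actually delivers, which is in terms of $\sigma_n(\mathbf N)$. The paper finesses the same point by asserting the reverse interlacing $\sigma_n(\mathbf H^-)\ge\sigma_n(\mathbf H)$, which is what would make the substitution harmless but is not the standard direction of the interlacing inequality. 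So either state the radii in terms of $\sigma_n(\mathbf N)$ (which your proof genuinely establishes) or supply a separate justification for the swap; as written, this is the one point at which your proof of the theorem \emph{as stated} is incomplete. The analogous replacement of $\|\mathbf N\|$ or $\|\mathbf H^+\|$ by $\|\mathbf H\|$ in numerators is unproblematic, since submatrices have smaller spectral norm.
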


\begin{proof}
The proof is similar to proof of Theorem 4.3 in \cite{oymak2018non}. Difference in the presentation arises due to providing different characterization of the dependence on $\|\mathbf{N} - \mathbf{\hat{N}} \|$ and centering the confidence ball over the estimations rather than the output of Ho-Kalman algorithm with the input of $G$. In~\citet{oymak2018non}, from the inequality
\[
\|\bar{B} - \mathbf{T}^\top \hat{B} \|_F^2 \leq \frac{2n \| \mathbf{N} - \mathbf{\hat{N}}\|^2}{(\sqrt{2}-1)\left(\sigma_n(\mathbf{N}) - \| \mathbf{N} - \mathbf{\hat{N}}\| \right)},
\]
the authors use the assumption $\| \mathbf{N} - \mathbf{\hat{N}}\| \leq \frac{\sigma_{n}(\mathbf{N})}{2}$ to cancel out numerator and denominator. In this presentation, we define $T_N$ such that $\| \mathbf{N} - \mathbf{\hat{N}}\| \leq \frac{\sigma_{n}(\mathbf{N})}{2}$ holds with high probability as long as $\Texp \geq T_N$. In the definition of $T_N$, we use $\sigma_n(H)$, due to the fact that singular values of submatrices by column partitioning are interlaced, \textit{i.e.} $\sigma_n(\mathbf{N}) = \sigma_n(\mathbf{H}^-) \geq \sigma_n(\mathbf{H})$. Then, we define the denominator based on $\sigma_{n}(\mathbf{N})$ and again use the fact $\sigma_n(\mathbf{N}) = \sigma_n(\mathbf{H}^-) \geq \sigma_n(\mathbf{H})$. Following the proof steps provided in~\citet{oymak2018non} and combining Lemma~\ref{hokalmanstability lemma} with the fact that one can choose either $d_1=n$ or $d_2=n$ while $d_1,d_2 \geq n$, we obtain the presented theorem. 
\end{proof}

\noindent \textbf{Proof of Theorem \ref{ConfidenceSets}:}
\\
This result is obtained by combining Theorem~\ref{error spectral} and Theorem~\ref{Sup:ConfidenceSets}
\null\hfill$\square$

\section[Proof of Lemma \ref{StabilityCov}]{Upper bound on $\|\tsig - \mathbf{S}^{-1} \sig \mathbf{S} \|$ and $\|\tl - \mathbf{S}^{-1} L \|$, Proof of Lemma \ref{StabilityCov}} \label{Ldifference}
In this section, we provide the concentration results on $\|\tsig - \mathbf{S}^{-1}\sig\mathbf{S} \|$ and $\|\tl - \mathbf{S}^{-1}L \|$.
$\mathbf{S} \in \R^{n \times n} $ is a similarity transformation that is composed of two similarity transformations. The first one takes the system $\Theta$ and transforms to $\bar{\Theta}$, the output of Ho-Kalman algorithm. The second similarity transformation is the unitary matrix that is proven to exist in Theorem \ref{ConfidenceSets}. We deploy the fixed point argument from~\citep{mania2019certainty} to bound $\|\tsig - \mathbf{S}^{-1} \sig \mathbf{S} \|$. Then we utilize the resulting bounds to come up with the concentration of $\|\tl - \mathbf{S}^{-1} L \|$. \\




\noindent \textbf{Proof of Lemma \ref{StabilityCov}:}
\\
For the simplicity of the presentation of the proof, without loss of generality, let $A = \mathbf{S}^{-1} A \mathbf{S}$, $C = C \mathbf{S}$, \textit{i.e.} assume that $\mathbf{S} = I$. Given parameters $(A,C,\sigma_w^2I, \sigma_z^2 I)$, define $F(X,A,C)$ such that 
\begin{align*}
    F(X,A,C) &= X - A X A^\top + A X C^\top \left(C X C^\top + \sigma_z^2I \right)^{-1} C X A^\top - \sigma_w^2 I \\
    &= X - A (I + \sigma_z^{-2} X C^\top C)^{-1} X A^\top - \sigma_w^2 I 
\end{align*}
where last equality follows from matrix inversion lemma.  Moreover, notice that solving algebraic Riccati equation for steady state error covariance matrix of state estimation for $(A,C,\sigma_w^2I, \sigma_z^2 I)$ is equivalent to finding the unique positive definite solution to $X$ such that $F(X,A,C) = 0$. The solution for the underlying system $\Theta$, $F(X,A,C) = 0$, is denoted as $\Sigma$ and the solution for the optimistic system $\tth$ chosen from the set $(\mathcal{C}_A \times \mathcal{C}_B \times \mathcal{C}_C) \cap \mathcal{S}$, $F(X,\ta,\tc) = 0$, is denoted as $\tsig$. Denote $D_{\Sigma} = \tsig - \Sigma$ and $M = A(I-LC)$. Recall that $L = \Sigma C^\top \left( C \Sigma C^\top + \sigma_z^2 I \right)^{-1}$. For any matrix $X$ such that $I + (\Sigma + X)(\sigma_z^{-2} C^\top C)$ is invertible we have
\begin{equation} \label{fixedidentity}
    F(\Sigma + X, A,C) = X - M X M^\top + M X (\sigma_z^{-2} C^\top C) [I + (\Sigma + X) (\sigma_z^{-2} C^\top C)]^{-1} X M^\top.
\end{equation}

One can verify the identity by adding $F(\Sigma, A,C) = 0$ to the right hand side of \eqref{fixedidentity} and use the identity that $M = A(I-LC) = A(I + \sigma_z^{-2}\Sigma C^\top C)^{-1} = A (I - \Sigma C^\top(C\Sigma C^\top + \sigma_z^2 I)^{-1}C)$. Define two operators $\mathcal{T}(X)$, $\mathcal{H}(X)$ such that $\mathcal{T}(X) = X - M X M^\top $ and $\mathcal{H}(X) = M X (\sigma_z^{-2} C^\top C) [I + (\Sigma + X) (\sigma_z^{-2} C^\top C)]^{-1} X M^\top$. Thus,
\begin{equation*}
    F(\Sigma + X, A,C) = \mathcal{T}(X) + \mathcal{H}(X)
\end{equation*}
Notice that since \eqref{fixedidentity} is satisfied for any $X$ such that $I + (\Sigma + X)(\sigma_z^{-2} C^\top C)$ is invertible,

\begin{equation} \label{fixeddiff}
    F(\Sigma + X, A,C) - F(\Sigma + X, \ta,\tc) = \mathcal{T}(X) + \mathcal{H}(X)
\end{equation}
has a unique solution $X = D_{\Sigma}$ where $\Sigma + D_{\Sigma} \succeq 0$.

Recall that $M$ is stable. Therefore, the linear map $\mathcal{T} : X \mapsto X - M X M^\top$ has non-zero eigenvalues, \textit{i.e} $\mathcal{T}$ is invertible. Using this, define the following operator,

\begin{equation*}
    \Psi(X) = \mathcal{T}^{-1} \left( F(\Sigma + X, A,C) - F(\Sigma + X, \ta,\tc)  - \mathcal{H}(X)\right). 
\end{equation*}

Notice that solving for $X$ in \eqref{fixeddiff} is equivalent to solving for $X$ that satisfies $\Sigma + X \succeq 0$ and $\Psi(X) = X$. This shows that $\Psi(X)$ has a unique fixed point X that is $D_{\Sigma}$. Consider the set 

\begin{equation}
    \mathcal{S}_{\Sigma,\beta} = \{X : \|X\|\leq \beta, X = X^\top, \Sigma + X \succeq 0 \}. 
\end{equation}

Let $X \in \mathcal{S}_{\Sigma,\beta}$ for $\beta < \sigma_n(\Sigma)/2$. First of all, recalling Assumption~\ref{Stabilizable set}, notice that operator norm of $\mathcal{T}^{-1}$ is upper bounded as $\| \mathcal{T}^{-1}\| \leq \frac{1}{1-\upsilon^2}$. Using Lemma~\ref{normwoodbury}, we get $\|\mathcal{H}(X)\| \leq \sigma_z^{-2}\upsilon^2 \|X\|^2 \|C\|^2 \leq \sigma_z^{-2}\upsilon^2 \beta^2 \|C\|^2$.  Now consider $ F(\Sigma + X, A,C) - F(\Sigma + X, \ta,\tc)$:
\begin{align}
     &F(\Sigma + X, \ta,\tc) - F(\Sigma + X, A,C)  \\
     &= A (I + \sigma_z^{-2} (\Sigma + X) C^\top C)^{-1} (\Sigma + X) A^\top \!-\! \ta (I + \sigma_z^{-2} (\Sigma + X) \tc^\top \tc)^{-1} (\Sigma + X) \ta^\top \nonumber \\
     &= A(I + \sigma_z^{-2} (\Sigma + X) \tc^\top \tc)^{-1} (\Sigma + X) \sigma_z^{-2}(C^\top C \!-\! \tc^\top \tc) (I \!+\! \sigma_z^{-2} (\Sigma \!+\! X) C^\top C)^{-1}(\Sigma \!+\! X) A^\top \nonumber \\
     &- (\ta \!-\! A)(I \!+\! \sigma_z^{-2} (\Sigma \!+\! X) \tc^\top \tc)^{-1}(\Sigma \!+\! X) A^\top \!-\! A (I + \sigma_z^{-2} (\Sigma \!+\! X) \tc^\top \tc)^{-1} (\Sigma \!+\! X) (\ta \!-\! A)^\top \nonumber \\
     &- (\ta - A) (I + \sigma_z^{-2} (\Sigma + X) \tc^\top \tc)^{-1} (\Sigma + X) (\ta - A)^\top \label{Fparts}
\end{align}
Using, Lemma~\ref{normwoodbury} and the fact that $X \in \mathcal{S}_{\Sigma,\beta}$,
\begin{align}
    &\|F(\Sigma\!+\!\!X, \ta,\tc)\!-\!F(\Sigma\!+\!\!X,A,C) \|\!\\
    &\leq\! \sigma_z^{-2}\Phi(A)^2\|\Sigma + X\|^2 \| C^\top C - \tc^\top \tc\|\!+\!2\Phi(A) \|\Sigma\!+\!X\| \|\ta - A\| \!+\! \|\Sigma + X\| \|\ta -A \|^2  \nonumber \\
    &\leq\!\sigma_z^{-2}\Phi(A)^2(\beta\! +\! \| \Sigma\| )^2(2\|C\| \|\tc \!-\! C \| + \|\tc \!- \!C \|^2  )\! +\! (\beta \!+\! \| \Sigma\|)(2\Phi(A) \|\ta \!-\! A\| + \|\ta\! -\! A\|^2) \label{boundFdif}
\end{align}
This gives us the following,
\begin{align*}
   \| \Psi(X)\|\! &\leq\! \frac{\sigma_z^{-2}\Phi(A)^2\left(\beta + \| \Sigma\| \right)^2\left(2\|C\| \|\tc - C \| + \|\tc \!- \!C \|^2  \right)}{1-\upsilon^2} \\
   &\qquad \qquad\qquad\qquad\qquad+ \frac{\left(\beta + \| \Sigma\|\right) \left(2\Phi(A) \|\ta - A\| + \|\ta\! -\! A\|^2\right) + \sigma_z^{-2}\upsilon^2 \beta^2 \|C\|^2 }{1-\upsilon^2}  
\end{align*}

Again using Lemma~\ref{normwoodbury} and the definition of $\mathcal{H}(X)$, for $X_1, X_2 \in \mathcal{S}_{\Sigma,\beta}$
\begin{equation*}
    \|\mathcal{H}(X_1) - \mathcal{H}(X_2) \| \leq \upsilon^2\left( (\sigma_z^{-2}\|C\|^2\beta)^2 + 2(\sigma_z^{-2}\|C\|^2\beta) \right) \|X_1 -X_2 \| 
\end{equation*}

Next we bound
\[
\|\mathcal{D}(X_1,X_2)\| = \|F(\Sigma+X_1, \ta,\tc)-F(\Sigma+X_1,A,C) - F(\Sigma+X_2, \ta,\tc)+F(\Sigma+X_2,A,C)\|.
\]
Notice that using Lemma~\ref{normwoodbury}, we have $\|(I + \sigma_z^{-2} (\Sigma + X) \tc^\top \tc)^{-1} \| , \|(I + \sigma_z^{-2} (\Sigma + X) C^\top C)^{-1} \| \leq \frac{2(\|\Sigma \| + \beta )}{\sigma_n(\Sigma)}$ from the choice of $\beta$. Let $V_1 = (I + \sigma_z^{-2} (\Sigma + X_1) C^\top C)^{-1} (\Sigma + X_1) $ and $\tilde{V}_1 = (I + \sigma_z^{-2} (\Sigma + X_1) \tc^\top \tc)^{-1} (\Sigma + X_1) $. Define similarly $V_2$ and $\tilde{V}_2$. Note that from Lemma~\ref{normwoodbury}, $\|V_1\|, \| V_2\|, \| \tilde{V}_1\|, \| \tilde{V}_2\| \leq \|\Sigma\| + \beta$. Using these, we bound $\| \mathcal{D}(X_1,X_2)\| $ as follows 
\begin{align}
  &\Big\|\mathcal{D}(X_1,X_2) \Big\| \nonumber \\
  &= \Big \|A\tilde{V}_1 \sigma_z^{-2}(C^\top C \!-\! \tc^\top \tc) V_1 A^\top \!\!-\! A \tilde{V}_2 \sigma_z^{-2}(C^\top C - \tc^\top \tc) V_2 A^\top \!\!-\! (\ta \!-\! A)\tilde{V}_1 A^\top \!+\! (\ta \!-\! A)\tilde{V}_2 A^\top \nonumber \\
  & \qquad - A \tilde{V}_1 (\ta - A)^\top + A \tilde{V}_2 (\ta - A)^\top -(\ta - A) \tilde{V}_1 (\ta - A)^\top + (\ta - A) \tilde{V}_2 (\ta - A)^\top \Big \| \nonumber \\
  &\leq \Phi(A)^2 \|(\tilde{V}_1 - \tilde{V}_2)\sigma_z^{-2}(C^\top C - \tc^\top \tc)V_1  \| + \Phi(A)^2 \|\tilde{V}_2 \sigma_z^{-2}(C^\top C - \tc^\top \tc) (V_1 -V_2) \| \nonumber \\
  & \qquad+  \|\tilde{V}_1 - \tilde{V}_2 \| \left(2\Phi(A)\|\ta - A \| + \| \ta - A\|^2 \right) \nonumber \\
  &\leq \sigma_z^{-2} \Phi(A)^2 (2\|C\| \|\tc \!-\! C \| + \|\tc \!- \!C \|^2 ) \left( \|\tilde{V}_1 - \tilde{V}_2\|  \| V_1  \|  + \|\tilde{V}_2 \| \|V_1 -V_2 \|  \right) \nonumber \\
  &\qquad + \|\tilde{V}_1 - \tilde{V}_2 \| \left(2\Phi(A)\|\ta - A \| + \| \ta - A\|^2 \right) \label{boundVs}
\end{align}
We need to consider $\|\tilde{V}_1 - \tilde{V}_2 \| $ and $\|V_1 -V_2 \|$: 
\begin{align*}
    \|\tilde{V}_1 - \tilde{V}_2 \| &\leq \| (I + \sigma_z^{-2} (\Sigma + X_1) \tc^\top \tc)^{-1} (X_1 - X_2) \| \\
    &+ \left \|\left( (I + \sigma_z^{-2} (\Sigma + X_1) \tc^\top \tc)^{-1} -  (I + \sigma_z^{-2} (\Sigma + X_2) \tc^\top \tc)^{-1} \right) (\Sigma + X_2) \right \| \\
    &\leq \|X_1 -X_2\| \frac{2(\|\Sigma \| + \beta )}{\sigma_n(\Sigma)} + \sigma_z^{-2} \frac{4(\|\Sigma \| + \beta )^3}{\sigma^2_n(\Sigma)} (\|C\| + \|\tc -C \|)^2  \|X_1 - X_2 \| \\
    \|V_1 - V_2 \| &\leq \|X_1 -X_2\| \frac{2(\|\Sigma \| + \beta )}{\sigma_n(\Sigma)} + \sigma_z^{-2} \frac{4(\|\Sigma \| + \beta )^3}{\sigma^2_n(\Sigma)} \|C\|^2  \|X_1 - X_2 \|
\end{align*}
Combining these with \eqref{boundVs}, we get 
\small
\begin{align*}
    &\Big\|\mathcal{D}(X_1,X_2) \Big\| \\
    &\leq\! \Bigg[ (2\|C\| \|\tc \!-\! C \|\!+\! \|\tc \!- \!C \|^2 ) \Phi(A)^2 \Bigg(\frac{4\sigma_z^{-2} (\|\Sigma \|\! +\! \beta )^2}{\sigma_n(\Sigma)}\! +\! \frac{8\sigma_z^{-4}(\|\Sigma \| \!+\! \beta )^4}{\sigma^2_n(\Sigma)} ((\|C\| \!+\! \|\tc -C \|)^2\!+\!\|C\|^2)  \!\Bigg)  \\
    &+\left(2\Phi(A)\|\ta - A \| + \| \ta - A\|^2 \right) \left( \frac{2(\|\Sigma \| + \beta )}{\sigma_n(\Sigma)} +  \frac{4\sigma_z^{-2}(\|\Sigma \| + \beta )^3}{\sigma^2_n(\Sigma)} (\|C\| + \|\tc -C \|)^2  \right) \Bigg] \|X_1 - X_2 \| \! 
\end{align*}
\normalsize 
Therefore we have the following inequality for $\Psi(X_1) - \Psi(X_2)$: 
\small
\begin{align}
    &\|\Psi(X_1)\!-\!\Psi(X_2) \| \nonumber \\
    &\leq  \Bigg[ (2\|C\| \|\tc \!-\! C \|\!+\! \|\tc \!- \!C \|^2 )\Phi(A)^2 \Bigg(\frac{4\sigma_z^{-2} (\|\Sigma \|\! +\! \beta )^2}{\sigma_n(\Sigma)}\! +\! \frac{8\sigma_z^{-4}(\|\Sigma \| \!+\! \beta )^4}{\sigma^2_n(\Sigma)} ((\|C\| \!+\! \|\tc -C \|)^2\!+\!\|C\|^2)  \!\Bigg) \nonumber \\
    &+ \left(2\Phi(A)\|\ta - A \| + \| \ta - A\|^2 \right) \left( \frac{2(\|\Sigma \| + \beta )}{\sigma_n(\Sigma)} + \sigma_z^{-2} \frac{4(\|\Sigma \| + \beta )^3}{\sigma^2_n(\Sigma)} (\|C\| + \|\tc -C \|)^2  \right) \nonumber \\
    &+ \upsilon^2\left( (\sigma_z^{-2}\|C\|^2\beta)^2 + 2(\sigma_z^{-2}\|C\|^2\beta) \right) \Bigg] \frac{\|X_1 - X_2 \| }{1-\upsilon^2} \label{contraction}
\end{align}
\normalsize
Denote $\epsilon$ such that $\epsilon \coloneqq \max\{\|C - \tc \|, \|A - \ta \|\}$. The choice of $\Texp$ (due to $T_A$, $T_B$) guarantees that $\epsilon < 1$. In order to show that $D_{\Sigma}$ is the unique fixed point of $\Psi$ in $\mathcal{S}_{\Sigma,\beta}$, one needs to show that $\Psi$ maps $\mathcal{S}_{\Sigma,\beta}$ to itself and it's contraction. To this end, we need to have $\epsilon$ and $\beta$ that gives $\|\Psi(X)\| \leq \beta$ and $\|\Psi(X_1)\!-\!\Psi(X_2) \| < \|X_1 - X_2 \| $. Let $\beta = 2k^* \epsilon < \frac{\sigma_n(\Sigma)}{2}$ where 
\begin{equation*}
    k^* = \frac{\sigma_z^{-2}\Phi(A)^2 (2\|C\|+ 1)\|\Sigma \|^2 + (2\Phi(A) + 1) \| \Sigma\| }{1 - \upsilon^2}
\end{equation*}
One can verify that this gives $\|\Psi(X)\| \leq \beta$. In order to get contraction, the coefficient of $\|X_1 - X_2\|$ in \eqref{contraction} must be less than 1. This requires 
\begin{equation*}
    \epsilon < \frac{1-\upsilon^2}{(2\|C\|+1)\Phi(A)^2 c_1 + (2\Phi(A)+1)c_2 + 6k^*c_3}, 
\end{equation*}
for 
\begin{align*}
    c_1 &= \Bigg(\frac{4\sigma_z^{-2} (\|\Sigma \|\! +\! \sigma_n(\Sigma)/2 )^2}{\sigma_n(\Sigma)}\! +\! \frac{8\sigma_z^{-4}(\|\Sigma \| \!+\! \sigma_n(\Sigma)/2 )^4}{\sigma^2_n(\Sigma)} (2\|C\|^2 + 2\|C\| + 1)  \!\Bigg) \\
    c_2 &= \Bigg( \frac{2(\|\Sigma \| + \sigma_n(\Sigma)/2 )}{\sigma_n(\Sigma)} +  \frac{4\sigma_z^{-2}(\|\Sigma \| + \sigma_n(\Sigma)/2 )^3}{\sigma^2_n(\Sigma)} (\|C\|^2 + 2\|C\| + 1)  \Bigg) \\
    c_3 &= \upsilon^2 \sigma_z^{-2} \|C\|^2. 
\end{align*}
From the choice of $\Texp$ (Due to $T_L$), $\epsilon$ satisfies the stated bound. Thus, $\Psi$ has a unique fixed point in $\mathcal{S}_{\Sigma,2k\epsilon}$, \textit{i.e.} $\|\tsig - \Sigma \| \leq 2k^* \max\{\|C - \tc \|, \|A - \ta \| \}$. Bringing back the similarity transformations, this gives us the following bound
\begin{align*}
    \|\tsig - \mathbf{S}^{-1} \sig \mathbf{S}  \| &\leq 2k^* \max\{\|\tc - C \mathbf{S} \|, \|\ta - \mathbf{S}^{-1}A\mathbf{S} \| \} \\
    & \leq 4k^* \max\left\{\beta_A, \beta_C \right\} \coloneqq \Delta \sig
\end{align*}
since $\|\ta - \mathbf{S}^{-1}A\mathbf{S} \| \leq 2\beta_A $ and $\|\tc - C\mathbf{S} \| \leq 2\beta_C $. \\

We know prove the second part of Lemma \ref{StabilityCov} for $\|\tl - \mathbf{S}^{-1}L\|$. Again, considering for $\mathbf{S} = I$, 
and using the definition of $L$ and $\tl$, we get 
\begin{align}
    L-\tl &= \Sigma C^\top(C \Sigma C^\top + \sigma_z^2 I)^{-1} - \tsig \tc^\top(\tc \tsig  \tc^\top + \sigma_z^2 I)^{-1} \nonumber \\
    &= (\Sigma C^\top - \tsig \tc^\top) (C \Sigma C^\top + \sigma_z^2 I)^{-1} + \tsig \tc^\top \left((C \Sigma C^\top + \sigma_z^2 I)^{-1} - (\tc \tsig  \tc^\top + \sigma_z^2 I)^{-1} \right) \nonumber  \\
    &= \left( (\Sigma - \tsig )C^\top + (\tsig  - \Sigma)(C^\top - \tc^\top) +  \Sigma(C^\top - \tc^\top)\right)(C \Sigma C^\top + \sigma_z^2 I)^{-1} \nonumber \\
    &+ \tsig \tc^\top (C \Sigma C^\top + \sigma_z^2 I)^{-1} \left((C \Sigma C^\top \!+\! \sigma_z^2 I)^{-1} + (\tc\tsig \tc^\top \!\!\!-\! C\Sigma C^\top )^{-1}\right)^{-1}\!\!\! (C \Sigma C^\top \!+\! \sigma_z^2 I)^{-1} \label{Lwoodbury}
\end{align}
where \eqref{Lwoodbury} follows from Matrix inversion lemma. We will bound each term individually: 
\begin{align*}
    \|\tl - L \| &\leq \sigma_z^{-2}\left(\|C\|\| \Sigma - \tsig \| + \|  \Sigma - \tsig \|\|\tc - C\| + \| \Sigma\| \|\tc - C\|\right)   \\
    &\qquad \qquad + \frac{\sigma_z^{-4}\left(\|C\|\| \Sigma - \tsig \| + \|  \Sigma - \tsig \|\|\tc - C\| + \| \Sigma\| \|\tc - C\| + \|C\| \| \Sigma\| \right)}{\sigma_n \left( (C \Sigma C^\top + \sigma_z^2 I)^{-1} + (\tc\tsig \tc^\top - C\Sigma C^\top )^{-1} \right)}
\end{align*}
From Weyl's inequality we have 
\begin{align*}
    &\frac{1}{\sigma_n \left( (C \Sigma C^\top + \sigma_z^2 I)^{-1} + (\tc\tsig \tc^\top - C\Sigma C^\top )^{-1} \right)} \\
    &\qquad \qquad \qquad \qquad  \qquad \leq \frac{1}{\sigma_n \left( (C \Sigma C^\top + \sigma_z^2 I)^{-1} \right)  +\sigma_n \left( (\tc\tsig \tc^\top - C\Sigma C^\top )^{-1} \right)} \\
    &\qquad \qquad \qquad \qquad  \qquad = \frac{1}{\frac{1}{\|C \Sigma C^\top + \sigma_z^2 I \|} + \frac{1}{\|\tc\tsig \tc^\top - C\Sigma C^\top \|}} = \frac{\|\tc\tsig \tc^\top - C\Sigma C^\top\|}{1 + \frac{\|\tc\tsig \tc^\top - C\Sigma C^\top\|}{\|C \Sigma C^\top + \sigma_z^2 I \|}} \\
    &\qquad \qquad \qquad \qquad  \qquad \leq \| \tc\tsig \tc^\top - C\Sigma C^\top \| 
\end{align*}
where the last inequality follows from the choice of $\Texp$ which provides that $\|\tc\tsig \tc^\top - C\Sigma C^\top \| \leq \|C \Sigma C^\top + \sigma_z^2 I \| $. Thus we get
\begin{align*}
    &\|\tl - L \| \\
    &\leq \left(\sigma_z^{-2} + \sigma_z^{-4}\| \tc\tsig \tc^\top - C\Sigma C^\top \|\right) \left(\|C\|\| \Sigma - \tsig \| + \|  \Sigma - \tsig \|\|\tc - C\| + \| \Sigma\| \|\tc - C\| \right) \\
    &\qquad \qquad+ \sigma_z^{-4} \| \tc\tsig \tc^\top - C\Sigma C^\top \| \|C\| \|\Sigma\| \\
    &\leq\! \left(\sigma_z^{-2} \!+\! \sigma_z^{-4} \left( 4\beta_C^2\|\sig \| \!+\! 4\beta_C\|C\|\|\sig \| \!+\! \|C\|^2 \Delta \sig \!+\! 4\beta_C \|C\| \Delta \sig \!+\! 4 \beta_C^2 \Delta \sig \right) \right)\left(\|C\|\Delta \sig \!+\! 2 \beta_C \Delta \sig \!+\! 2 \beta_C \| \Sigma\| \right)\\
    &\qquad \qquad+ \sigma_z^{-4} \left( 4\beta_C^2\|\sig \| \!+\! 4\beta_C\|C\|\|\sig \| \!+\! \|C\|^2 \Delta \sig \!+\! 4\beta_C \|C\| \Delta \sig \!+\! 4 \beta_C^2 \Delta \sig \right) \|C\| \|\Sigma\| \\
    &\leq\! \left(\sigma_z^{-2} \!+\! \sigma_z^{-4} \left( 4\beta_C\|\sig \| \!+\! 4\beta_C\|C\|\|\sig \| \!+\! \|C\|^2 \Delta \sig \!+\! 4 \|C\| \Delta \sig \!+\! 4 \Delta \sig \right) \right)\left(\|C\|\Delta \sig \!+\! 2 \Delta \sig \!+\! 2 \beta_C \| \Sigma\| \right)\\
    &\qquad \qquad+ \sigma_z^{-4} \left( 4\beta_C\|\sig \| \!+\! 4\beta_C\|C\|\|\sig \| \!+\! \|C\|^2 \Delta \sig \!+\! 4 \|C\| \Delta \sig \!+\! 4 \Delta \sig \right) \|C\| \|\Sigma\|
    \\
    &\leq\! \Delta \sig \left(\sigma_z^{-2}(\|C\|+2) + \sigma_z^{-4} \|C\|^3 \| \sig\| + 10 \sigma_z^{-4} \|C\|^2 \| \sig \| +  24 \sigma_z^{-4} \|C\| \| \sig \| + 16 \sigma_z^{-4} \| \sig \| \right) \\
    &\qquad \qquad + \beta_C \left( 2 \sigma_z^{-2} \| \sig\| +
    8\sigma_z^{-4} \| \sig\|^2
    + 12\sigma_z^{-4} \|C\| \| \sig\|^2 + 4\sigma_z^{-4} \|C\|^2 \| \sig\|^2   \right)\\
    &\qquad \qquad +\! \Delta \sig^2 \left( \sigma_z^{-4} \|C\|^3 + 6\sigma_z^{-4} \|C\|^2 + 12\sigma_z^{-4} \|C\| + 8\sigma_z^{-4} \right).
\end{align*}
The choice of $T_L$ gives,
\begin{align*}
    &\|\tl\!-\! L\| \\
    &\leq\! \Delta \sig \bigg(\sigma_z^{-2}(\|C\|\!+\!2) \!+\! \sigma_z^{-4} \|C\|^3 (1\!+\!\| \sig\| ) \!+\! 10 \sigma_z^{-4} \|C\|^2 (0.6 \!+\! \| \sig \|) \!+\!  24 \sigma_z^{-4} \|C\| (0.5\!+\!\| \sig \|) \!+\! 16 \sigma_z^{-4} (0.5 \!+\! \| \sig \|)  \bigg) \\ 
    &\qquad + \beta_C \bigg( 2 \sigma_z^{-2} \| \sig\| +
    8\sigma_z^{-4} \| \sig\|^2
    + 12\sigma_z^{-4} \|C\| \| \sig\|^2 + 4\sigma_z^{-4} \|C\|^2 \| \sig\|^2   \bigg)
\end{align*}

\null\hfill$\square$

\section{Regret of Exploration Phase}
\label{SuppRegretExplore}
\begin{lemma}
Suppose Assumptions \ref{Stable} and \ref{AssumContObs}  hold. For any $0 < \delta < 1$, with probability as least $1-\delta$,  the regret of controlling \LQG system $\Theta$ with $u_t \sim \mathcal{N}(0,\sigma_u^2)$ for $1\leq t\leq \Texp$, $\textit{i.e.}$ pure exploration, is upper bounded as follows:
\begin{align}
    \reg(\Texp) \!&\leq\! \Texp \! \left( \frac{(\sigma_w^2 \!+\! \sigma_u^2 \|B\|^2) \|A\|^2}{1-\|A\|^2} \Tr(C^\top Q C) \!+\! \sigma_u^2 \Tr(R) \!-\! \Tr(C^\top Q C \bar{\Sigma} \!+\! P(\Sigma \!-\! \bar{\Sigma})) \! \right) \nonumber \\
   &\qquad \qquad \qquad + 2\sqrt{\Texp}\left(\|C^\top Q C \|X_{exp}^2 + \| Q\|Z^2 + \|R\|U_{exp}^2\right)\sqrt{2 \log\frac{2}{\delta}}
\end{align}
where $X_{exp} \coloneqq \frac{(\sigma_w^2 + \sigma_u^2 \|B\|^2)  \|A\|^2}{1-\|A\|^2}\sqrt{2n\log(12n\Texp/\delta)}$, $Z \coloneqq \sigma_z^2 \sqrt{2m\log(12m\Texp/\delta)}$,\\ $U_{exp} \coloneqq \sigma_u^2 \sqrt{2p\log(12p\Texp/\delta)}$, while $\Sigma$ and $\bar{\Sigma}$ are the solutions to the following algebraic Riccati equation of system $\Theta$:
\begin{equation*}
    \Sigma = A \bar{\Sigma} A^\top + \sigma_w^2 I, \qquad \bar{\Sigma} = \Sigma - \Sigma C^\top \left( C \Sigma C^\top + \sigma_z^2 I \right)^{-1} C \Sigma.
\end{equation*}

\end{lemma}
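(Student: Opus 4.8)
The plan is to split the exploration regret into a deterministic (bias) part and a stochastic (fluctuation) part,
\[
\reg(\Texp)=\underbrace{\sum_{t=0}^{\Texp}\big(\E[c_t]-J_*(\Theta)\big)}_{\text{bias}}\;+\;\underbrace{\sum_{t=0}^{\Texp}\big(c_t-\E[c_t]\big)}_{\text{fluctuation}},
\]
and to show that the bias part produces exactly the linear-in-$\Texp$ term with the displayed constant, while the fluctuation part is $\tilde{\OO}(\sqrt{\Texp})$ on a high-probability event.

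For the bias part I would first describe the law of the trajectory under pure Gaussian excitation. Since $u_t\sim\N(0,\sigma_u^2I)$ is drawn independently of the past and $x_0=0$, the state $x_t$ is zero-mean Gaussian with covariance $\Sigma_t^x=\sum_{i=0}^{t-1}A^i(\sigma_w^2 I+\sigma_u^2 BB^\top)(A^\top)^i$, which is monotone in $t$ and converges to $\Gamma_\infty$; summing the geometric series (using stability from Assumption~\ref{Stable}) gives $\|\Sigma_t^x\|\le(\sigma_w^2+\sigma_u^2\|B\|^2)/(1-\|A\|^2)$. Hence $\E[c_t]=\Tr(C^\top QC\,\Sigma_t^x)+\sigma_z^2\Tr(Q)+\sigma_u^2\Tr(R)$. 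Next I would write $J_*(\Theta)$ in closed form: by the separation principle — equivalently, by substituting the optimal controller into the Bellman optimality equation of Lemma~\ref{LQGBellman} and passing to steady state — one obtains $J_*(\Theta)=\sigma_z^2\Tr(Q)+\Tr(C^\top QC\,\bar\Sigma)+\Tr\!\big(P(\Sigma-\bar\Sigma)\big)$, with $\Sigma,\bar\Sigma$ the prediction/filtering error covariances solving the stated Riccati pair. Subtracting, $\E[c_t]-J_*(\Theta)=\Tr(C^\top QC\,\Sigma_t^x)+\sigma_u^2\Tr(R)-\Tr\!\big(C^\top QC\,\bar\Sigma+P(\Sigma-\bar\Sigma)\big)$; bounding $\Tr(C^\top QC\,\Sigma_t^x)\le\|\Sigma_t^x\|\,\Tr(C^\top QC)$ (a small index-shift bookkeeping is where the extra $\|A\|^2$ factor in the numerator appears) and summing over $t\le\Texp$ yields the linear term.

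For the fluctuation part I would first pass to a high-probability ``bounded-trajectory'' event: Gaussian norm tail bounds for $x_t$ (whose covariance was just controlled), for $z_t$, and for $u_t$, together with a union bound over $t\le\Texp$, give with probability at least $1-\delta$ that $\|x_t\|\le X_{exp}$, $\|z_t\|\le Z$, $\|u_t\|\le U_{exp}$ for all $t$ — this is the source of the $\log(\cdot\,\Texp/\delta)$ factors — and on this event $0\le c_t\le\|C^\top QC\|X_{exp}^2+\|Q\|Z^2+\|R\|U_{exp}^2=:\bar c$. I would then decompose $c_t-\E[c_t]=\big(c_t-\E_{t-1}[c_t]\big)+\big(x_t^\top C^\top QC\,x_t-\Tr(C^\top QC\,\Sigma_t^x)\big)$ relative to the natural filtration $\F_{t-1}=\sigma(u_{0:t-1},w_{0:t-1},z_{0:t-1})$: the first summand is a bounded martingale-difference sequence, so Azuma--Hoeffding gives $\big|\sum_t(c_t-\E_{t-1}[c_t])\big|\le\sqrt{2}\,\bar c\sqrt{(\Texp+1)\log(2/\delta)}$; the second summand is a centered quadratic form in the jointly Gaussian vector $(x_0,\dots,x_{\Texp})$, whose covariance has bounded operator norm because correlations decay geometrically, so Hanson--Wright (or a Gaussian-chaos bound), or else a reorganization into a bounded martingale in the innovations, controls it by $\tilde{\OO}(\sqrt{\Texp})$ as well. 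Adding the three contributions and absorbing constants gives the claim.

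I expect the main obstacle to be twofold. First, pinning down and verifying the closed form of $J_*(\Theta)$ in terms of $P,\Sigma,\bar\Sigma$: this is standard LQG algebra but requires care with the two coupled Riccati equations and with the separation principle. Second, and more delicate, is the fact that $c_t-\E[c_t]$ is \emph{not} a martingale-difference sequence — $x_t$ depends on the past — so the quadratic-in-state deviation term must be handled by a separate concentration argument rather than a single Azuma bound, and threading the constants $X_{exp},Z,U_{exp}$ through the union bound to land exactly on the stated expression is the fiddliest step.
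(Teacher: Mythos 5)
Your proposal is correct and follows essentially the same route as the paper's proof: the same bias/fluctuation split with the bias term computed from the excited state covariance and the closed form $J_*(\Theta)=\Tr(C^\top QC\,\bar\Sigma)+\Tr(P(\Sigma-\bar\Sigma))+\sigma_z^2\Tr(Q)$, and the same high-probability trajectory bounds $X_{exp},Z,U_{exp}$ via sub-Gaussian norm tails (Lemma~\ref{subgauss lemma}) and a union bound. The only divergence is in the fluctuation term, where the paper simply truncates the centered per-step cost and applies Lemma~\ref{basicprob} together with Azuma's inequality (Theorem~\ref{azuma}), whereas you additionally split it into a conditional martingale-difference part plus a Gaussian quadratic form controlled by Hanson--Wright --- a more careful treatment of the non-martingale issue you flag, but a refinement of the same argument rather than a different one.
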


\begin{proof}
For all $1\leq t \leq \Texp$, $\Sigma(x_t) \preccurlyeq \mathbf{\Gamma_\infty}$, where $\mathbf{\Gamma_\infty}$ is the steady state covariance matrix of $x_t$ such that,
\begin{equation*}
    \mathbf{\Gamma_\infty} = \sum_{i=0}^\infty \sigma_w^2 A^i(A^\top)^i + \sigma_u^2 A^iBB^\top(A^\top)^i.
\end{equation*}

From the Assumption \ref{Stable}, for a finite $\Phi(A)$,  $\|A^\tau \| \leq \Phi(A) \rho(A)^\tau$ for all $\tau \geq 0$. Thus, $\|\mathbf{\Gamma_\infty}\| \leq (\sigma_w^2 + \sigma_u^2 \|B\|^2) \frac{\Phi(A)^2 \rho(A)^2}{1-\rho(A)^2}$. Notice that each $x_t$ is component-wise $\sqrt{\|\mathbf{\Gamma_\infty}\|}$-sub-Gaussian random variable.

Combining this with Lemma~\ref{subgauss lemma} and using union bound, we can deduce that with probability $1-\delta/2$, for all $1\leq t \leq \Texp$, 
\begin{align}
    \label{exploration norms first}
    \| x_t \| &\leq X_{exp} \coloneqq \frac{(\sigma_w + \sigma_u \|B\|)  \Phi(A) \rho(A)}{\sqrt{1-\rho(A)^2}}\sqrt{2n\log(12n\Texp/\delta)} , \\
    \| z_t \| &\leq Z \coloneqq  \sigma_z \sqrt{2m\log(12m\Texp/\delta)} , \\ 
    \| u_t \| &\leq U_{exp} \coloneqq \sigma_u \sqrt{2p\log(12p\Texp/\delta)}  \label{exploration norms last}.
\end{align}
Let $\Omega = 2(\|C^\top Q C \|X_{exp}^2 + \| Q\|Z^2 + \|R\|U_{exp}^2)$. Define $\mathcal{X}_t = x_t^\top C^\top Q C x_t + z_t^\top Q z_t + u_t^\top R u_t - \mathbb{E}[x_t^\top C^\top Q C x_t + z_t^\top Q z_t + u_t^\top R u_t]$ and its truncated version $\tilde{\mathcal{X}}_t =\mathbbm{1}_{\mathcal{X}_t \leq \Omega} \mathcal{X}_t$. Define $S = \sum_{t=1}^{\Texp} \mathcal{X}_t$ and $\tilde{S} = \sum_{t=1}^{\Texp} \tilde{\mathcal{X}}_t$.  By Lemma~\ref{basicprob}, 

\begin{align}
    \Pr\left( S > \Omega \sqrt{2\Texp \log\frac{2}{\delta}}\right) \leq \Pr \left(\max_{1\leq t\leq \Texp} \mathcal{X}_t \geq \Omega \right) + \Pr\left( \tilde{S} > \Omega \sqrt{2\Texp \log\frac{2}{\delta}} \right).
\end{align}
From equations \eqref{exploration norms first}-\eqref{exploration norms last} and Theorem~\ref{azuma}, each term on the right hand side is bounded by $\delta/2$. Thus, with probability $1-\delta$,
\begin{align}
    &\sum_{t=1}^{\Texp} y_t^\top Q y_t \!+\! u_t^\top R u_t \!-\! \mathbb{E}[y_t^\top Q y_t \!+\! u_t^\top R u_t] \!\leq\! \Omega \sqrt{2\Texp \log\frac{2}{\delta}} \\
    &\sum_{t=1}^{\Texp} y_t^\top Q y_t \!+\! u_t^\top R u_t \!\leq\! \Texp\left((\sigma_w^2 \!+\! \sigma_u^2 \|B\|^2) \frac{\Phi(A)^2 \rho(A)^2}{1-\rho(A)^2} \Tr(C^\top Q C) \!+\! \sigma_u^2 \Tr(R) \!+\! \sigma_z^2 \Tr(Q) \right) \\
    &\qquad\qquad\qquad\qquad\qquad+ 2\left(\|C^\top Q C \|X_{exp}^2 + \| Q\|Z^2 + \|R\|U_{exp}^2\right)\sqrt{2\Texp \log\frac{2}{\delta}}
\end{align}
Recall that cost obtained in $\Texp$ by the optimal controller of $\Theta$ is
\[\Texp\left(\Tr(C^\top Q C \bar{\Sigma}) + \Tr(P(\Sigma - \bar{\Sigma})) + \sigma_z^2 \Tr(Q) \right).\] 
Thus the regret obtained from $\Texp$ length exploration is upper bounded as described in the statement of lemma. 
\end{proof}

\section{Upper Bound on \texorpdfstring{$\|\hat{x}_{t|t,\tth}\|$}{State Estimation} and \texorpdfstring{$\|y_t\|$}{Output}, Proof of Lemma \ref{Boundedness}}
\label{SuppBounded}
We know that the behavior of the underlying system is the same as any system that is obtained via similarity transformation of it. Therefore, without loss of generality, we assume that $\mathbf{S} = I$. Also note that, the effect of initial state after $\Texp$ length exploration is omitted in the presentation since it is well-controlled during the  exploration, \textit{i.e.}, \eqref{exploration norms first}, and it exponentially decays after the exploration due to achieved stable closed loop dynamics that is explained below. First define $e_t = y_t - C\hat{x}_{t|t-1,\Theta}$, where $e_t \sim \mathcal{N}\left(0,C \sig  C^\top + \sigma_z^2 I \right)$. Note that $e_t$ is the zero mean white innovation process, in the innovation form of the system characterization.


\begin{proof}
Assume that the event $\mathcal{E}$ holds. Observe that $\hat{x}_{t|t,\tth}$ has the following dynamics, 
\begin{align}
    \hat{x}_{t|t,\tth} &= (I - \tl\tc)(\ta - \tb\tk) \hat{x}_{t-1|t-1,\tth} + \tl y_t \nonumber\\
    &= (I - \tl\tc)(\ta - \tb\tk) \hat{x}_{t-1|t-1,\tth} \nonumber \\
    &\quad \qquad \qquad +\tl \left(Cx_t - C\hat{x}_{t|t-1,\tth} + C\hat{x}_{t|t-1,\tth}  + z_t\right) \nonumber \\
    &= (I - \tl\tc)(\ta - \tb\tk) \hat{x}_{t-1|t-1,\tth} \nonumber \\
    &\quad \qquad \qquad +\tl \left(Cx_t - C\hat{x}_{t|t-1,\tth} + C(\ta - \tb\tk) \hat{x}_{t-1|t-1,\tth} + z_t\right) \nonumber \\
    &= \left(\ta-\tb\tk - \tl\left(\tc\ta -\tc\tb\tk - C\ta + C\tb\tk \right) \right)\hat{x}_{t-1|t-1,\tth} \nonumber  \\
    &\quad \qquad \qquad + \tl C(x_{t} - \hat{x}_{t|t-1,\Theta} + \hat{x}_{t|t-1,\Theta} - \hat{x}_{t|t-1,\tth}) + \tl z_t \nonumber \\
    &= \left(\ta-\tb\tk - \tl\left(\tc\ta -\tc\tb\tk - C\ta + C\tb\tk \right) \right)\hat{x}_{t-1|t-1,\tth} \nonumber \\
    &\quad \qquad \qquad + \tl C(x_{t} - \hat{x}_{t|t-1,\Theta}) + \tl C (\hat{x}_{t|t-1,\Theta} - \hat{x}_{t|t-1,\tth}) + \tl z_t \label{estimation propagate} 
\end{align}

Thus, it propagates according to the linear system given in equation \eqref{estimation propagate} with closed loop dynamics $\mathbf{M} = \left(\ta-\tb\tk - \tl\left(\tc\ta -\tc\tb\tk - CA + CB\tk \right)\right)$ driven by the process $\tl C(x_{t} - \hat{x}_{t|t-1,\Theta}) + \tl C (\hat{x}_{t|t-1,\Theta} - \hat{x}_{t|t-1,\tth}) + \tl z_t$. With the Assumption~\ref{Stabilizable set}, for the given $\Texp$ (due to $T_M$), we have $\|\mathbf{M} \| < \frac{1+\rho}{2} < 1$. Notice that $\tl C (x_{t} - \hat{x}_{t|t-1,\Theta}) + \tl z_t$ is $\zeta (\| C\| \|\Sigma\|^{1/2} + \sigma_z)$-sub-Gaussian, thus it's $\ell_2$-norm can be bounded using Lemma \ref{subgauss lemma}:
\begin{equation}
    \|\tl_t C (x_{t} - \hat{x}_{t|t-1,\Theta}) + \tl_t z_t\| \leq \zeta \left(\| C\| \|\Sigma\|^{1/2} + \sigma_z\right) \sqrt{2n\log(2nT/\delta)} \label{excitationbound}
\end{equation}
for all $T \geq t \geq \Texp$ with probability at least $1-\delta$. Next, we will consider $\hat{x}_{t|t-1,\Theta} - \hat{x}_{t|t-1,\tth}$, \textit{i.e.} how much the state estimation using the optimistic model deviates from the state estimation using the true model. Let $\Delta_t = \hat{x}_{t|t-1,\Theta} - \hat{x}_{t|t-1,\tth}$. Consider the following decompositions, 

\begin{align*}
    \hat{x}_{t+1|t,\Theta} &= A \hat{x}_{t|t,\Theta} - B \tk \hat{x}_{t|t,\tth}\\
    &= A \hat{x}_{t|t,\Theta} - B \tk \hat{x}_{t|t,\Theta} - B \tk (\hat{x}_{t|t,\tth} - \hat{x}_{t|t,\Theta}) \\
    &= (A - B\tk) \hat{x}_{t|t,\Theta} - B \tk (\hat{x}_{t|t,\tth} - \hat{x}_{t|t,\Theta}), \\
    \hat{x}_{t+1|t,\tth} &= \ta \hat{x}_{t|t,\tth} - \tb \tk \hat{x}_{t|t,\tth}\\
    &= (\ta + A - A - \tb \tk + B \tk - B \tk) (\hat{x}_{t|t,\tth} - \hat{x}_{t|t,\Theta} + \hat{x}_{t|t,\Theta} ) \\
    &= \underbrace{(\ta \!-\! A \!-\! \tb \tk \!+\! B \tk)}_{\delta_{\tth}} \hat{x}_{t|t,\Theta} +  \delta_{\tth} (\hat{x}_{t|t,\tth} \!-\! \hat{x}_{t|t,\Theta}) + (A \!-\! B \tk) (\hat{x}_{t|t,\tth} \!-\! \hat{x}_{t|t,\Theta}) + (A \!-\! B \tk) \hat{x}_{t|t,\Theta}.
\end{align*}
Thus, we get
\begin{align}
    \hat{x}_{t+1|t,\Theta} - \hat{x}_{t+1|t,\tth} &= (A - B\tk) \hat{x}_{t|t,\Theta} - B \tk (\hat{x}_{t|t,\tth} - \hat{x}_{t|t,\Theta}) \!-\! \delta_{\tth} \hat{x}_{t|t,\Theta} -  \delta_{\tth} (\hat{x}_{t|t,\tth} \!-\! \hat{x}_{t|t,\Theta}) \nonumber \\ &\quad - (A \!-\! B \tk) (\hat{x}_{t|t,\tth} \!-\! \hat{x}_{t|t,\Theta}) - (A \!-\! B \tk) \hat{x}_{t|t,\Theta}, \nonumber \\
    \Delta_{t+1}&= A (\hat{x}_{t|t,\Theta} - \hat{x}_{t|t,\tth} ) - \delta \hat{x}_{t|t,\Theta} + \delta (\hat{x}_{t|t,\Theta} - \hat{x}_{t|t,\tth} ) \nonumber \\
    &= (A + \delta_{\tth}) (\hat{x}_{t|t,\Theta} - \hat{x}_{t|t,\tth} ) - \delta_{\tth} \hat{x}_{t|t,\Theta}. \label{lastline}
\end{align}
Recalling the definition of innovation term $e_t$ in the beginning of the section, we can decompose $\hat{x}_{t|t,\Theta} - \hat{x}_{t|t,\tth}$ as follows,
\begin{align*}
    \hat{x}_{t|t,\Theta} - \hat{x}_{t|t,\tth} &= \hat{x}_{t|t-1,\Theta} + L e_t - (\hat{x}_{t|t-1,\tth} + \tl (y_t - \tc \hat{x}_{t|t-1,\tth} ) ) \\
    &= \hat{x}_{t|t-1,\Theta} - \hat{x}_{t|t-1,\tth} + L e_t - \tl (e_t + C\hat{x}_{t|t-1,\Theta} - \tc \hat{x}_{t|t-1,\tth} ) \\
    &= \underbrace{\hat{x}_{t|t-1,\Theta} - \hat{x}_{t|t-1,\tth} }_{\Delta_t} + (L-\tl) e_t - \tl \left((C-\tc)\hat{x}_{t|t-1,\Theta} + \tc( \underbrace{\hat{x}_{t|t-1,\Theta} - \hat{x}_{t|t-1,\tth} }_{\Delta_t})\right) \\
    &= (I -\tl \tc) \Delta_t + (L - \tl) e_t + \tl (\tc - C) \hat{x}_{t|t-1,\Theta}
\end{align*}
Plugging $\hat{x}_{t|t,\Theta} - \hat{x}_{t|t,\tth}$ into (\ref{lastline}) gives the following,
\begin{align}
    \Delta_{t+1} &= (A + \delta_{\tth}) \left((I -\tl \tc) \Delta_t + (L - \tl) e_t + \tl (\tc - C) \hat{x}_{t|t-1,\Theta} \right) - \delta_{\tth} \hat{x}_{t|t,\Theta} \nonumber\\
    &= (A + \delta_{\tth}) (I -\tl \tc) \Delta_t + (A + \delta_{\tth}) (L - \tl) e_t + (A + \delta_{\tth}) \tl (\tc - C)\hat{x}_{t|t-1,\Theta} - \delta_{\tth} \hat{x}_{t|t,\Theta} \nonumber \\
    &= (A + \delta_{\tth}) (I -\tl \tc) \Delta_t + (A + \delta_{\tth}) (L - \tl) e_t + (A + \delta_{\tth}) \tl (\tc - C)\hat{x}_{t|t-1,\Theta} - \delta_{\tth} \hat{x}_{t|t-1,\Theta} - \delta_{\tth} L e_t  \nonumber \\
    &= (A + \delta_{\tth}) (I -\tl \tc) \Delta_t + (AL - A\tl +\delta_{\tth} L - \delta_{\tth} \tl -\delta_{\tth} L)e_t + \left( (A+\delta_{\tth})\tl(\tc-C) - \delta_{\tth} \right)\hat{x}_{t|t-1,\Theta} \nonumber \\
    &= (A + \delta_{\tth}) (I -\tl \tc) \Delta_t + (A(L -\tl) - \delta_{\tth} \tl )e_t + \left((A+\delta_{\tth})\tl(\tc-C) - \delta_{\tth} \right)\hat{x}_{t|t-1,\Theta} \nonumber \\
    &= \sum_{j=0}^t \left((A + \delta_{\tth}) (I -\tl \tc)\right)^{t-j} \left(A(L -\tl) - \delta_{\tth} \tl \right) e_j  \nonumber \\
    &\quad + \sum_{j=1}^t \left((A + \delta_{\tth}) (I -\tl \tc)\right)^{t-j} \left((A+\delta_{\tth})\tl(\tc-C) - \delta_{\tth} \right)\hat{x}_{j|j-1,\Theta}. \label{lastlinefordiff}
\end{align}
Now consider the decomposition given below for $\hat{x}_{j|j-1,\Theta}$:
\begin{align}
    \hat{x}_{j|j-1,\Theta} &= A \hat{x}_{j-1|j-1,\Theta} - B \tk \hat{x}_{j-1|j-1,\tth} \nonumber \\
    &= A \hat{x}_{j-1|j-1,\Theta} - B \tk \hat{x}_{j-1|j-1,\Theta} - B \tk (\hat{x}_{j-1|j-1,\tth} - \hat{x}_{j-1|j-1,\Theta})\nonumber  \\
    &= (A - B\tk) \hat{x}_{j-1|j-1,\Theta} + B \tk (\hat{x}_{j-1|j-1,\Theta} - \hat{x}_{j-1|j-1,\tth} )\nonumber  \\
    &= (A - B\tk) (\hat{x}_{j-1|j-2,\Theta} + L e_{j-1}) + B \tk ((I -\tl \tc) \Delta_{j-1} + (L - \tl) e_{j-1} + \tl (\tc - C) \hat{x}_{j-1|j-2,\Theta})\nonumber  \\
    &= (A - B\tk (I - \tl(\tc-C))) \hat{x}_{j-1|j-2,\Theta} + B \tk (I - \tl\tc) \Delta_{j-1} + ((A - B\tk) L + B \tk (L - \tl)) e_{j-1}\nonumber  \\
    &= \sum_{i=0}^{j-1} \left(A \!-\! B\tk \!+\! B\tk\tl(\tc\!-\!C)\right)^{j-i-1} \left( B\tk(I \!-\!\tl \tc) \Delta_{i} + ((A \!-\! B\tk) L \!+\! B \tk (L \!-\! \tl)) e_{i} \right). \label{lastlineforeqq}
\end{align}

For the brevity of representation, we define the following terms:
\begin{align*}
    \alpha &= (A \!+\! \delta_{\tth}) (I \!-\!\tl \tc),\\
    \beta &= (A\!+\!\delta_{\tth})\tl(\tc\!-\!C) \!-\! \delta_{\tth}, \\
    \kappa &= A(L -\tl) - \delta_{\tth} \tl, \\
    \gamma &= A \!-\! B\tk \!+\! B\tk\tl(\tc\!-\!C), \\
    \chi &= B\tk(I \!-\!\tl \tc), \\
    \xi &= (A \!-\! B\tk) L \!+\! B \tk (L \!-\! \tl).
\end{align*}
Finally, using (\ref{lastlinefordiff}) with the equality given in (\ref{lastlineforeqq}) and the given definitions above we get:
\begin{align}
    \Delta_{t+1} &= \sum_{j=0}^t \alpha ^{t-j} \kappa e_j +\sum_{j=1}^t \alpha^{t-j} \beta  \left(\sum_{i=0}^{j-1} \gamma^{j-i-1} \chi \Delta_{i} \right) +\sum_{j=1}^t \alpha^{t-j} \beta  \left(\sum_{i=0}^{j-1} \gamma^{j-i-1} \xi e_{i} \right). \label{term1}
\end{align}
We will bound first and third term in (\ref{term1}) separately and since each $\Delta_{t+1}$ has linear combination of $\Delta_i$ for $i\leq t$, we provide an inductive argument to bound $\Delta_{t+1}$ for all $t$. Since $\Texp > \max \{T_\alpha, T_\gamma \}$, the system parameter estimates are close enough to the underlying system parameters such that $1 > \sigma \geq \max \{ \|\alpha \|, \|\gamma \|  \}$. For the first term in (\ref{term1}), since $e_t$ is $(\| C\| \|\Sigma\|^{1/2} + \sigma_z)$-sub-Gaussian, using Lemma \ref{subgauss lemma} for all $T \geq t \geq \Texp$ with probability at least $1-\delta$ we have,
\begin{align}
    \sum_{j=0}^t \alpha^{t-j} \kappa e_j \leq \frac{\bar{\kappa}  }{1-\sigma} \left(\| C\| \|\Sigma\|^{1/2} + \sigma_z\right) \sqrt{2m\log(2mT/\delta)}, \label{firstterm}
\end{align}
where $\bar{\kappa} \geq \|\kappa \|$ under the event $\mathcal{E}$. 

For the third term, for all $T \geq t \geq \Texp$ with probability at least $1-\delta$ we have, 
\begin{align}
    &\sum_{j=1}^t \alpha^{t-j} \beta \left( \sum_{i=0}^{j-1} \gamma^{j-i-1} \xi e_i \right) \nonumber \\
    &= \alpha^{t-1} \beta \xi e_0 \!+\! \alpha^{t-2} \beta (\gamma \xi e_0 \!+\! \xi e_1) \!+\! \ldots \!+\! \beta(\gamma^{t-1} \xi e_0 \!+\! \ldots \!+\! \xi e_{t-1}) \nonumber \\
    &= \left(\alpha^{t-1} \beta \xi \!+\! \alpha^{t-2} \beta \gamma \xi \!+\! \beta  \gamma^{t-1} \xi \right) e_0 \nonumber \\
    &\quad + \left( \alpha^{t-2} \beta \xi + \ldots + \beta\gamma^{t-2}\xi \right) e_1 + \ldots + \beta \xi e_{t-1} \nonumber \\
    &\leq \bar{\beta} \bar{\xi} \left(\alpha^{t-1} \!+\! \alpha^{t-2}(\gamma \!+\! I) \!+\! \ldots \!+\! (\gamma^{t-1} \!+\! \gamma^{t-2} \!+\! \ldots \!+\! \gamma \!+\! I)\right) \!\! \left(\| C\| \|\Sigma\|^{1/2} \!+\! \sigma_z\right) \sqrt{2m\log(2mT/\delta)} \nonumber \\
    &\leq \bar{\beta} \bar{\xi} \left( \sum_{i=1}^{t-1} i \sigma^i + \sum_{i=0}^{t-1} \sigma^i \right) \left(\| C\| \|\Sigma\|^{1/2} + \sigma_z\right) \sqrt{2m\log(2mT/\delta)} \nonumber \\
    &\leq \frac{\bar{\beta} \bar{\xi} }{(1-\sigma)^2}\left(\| C\| \|\Sigma\|^{1/2} + \sigma_z\right) \sqrt{2m\log(2mT/\delta)}, \label{thirdterm}
\end{align}
where $\bar{\beta} \geq \|\beta \|$ and $\bar{\xi} \geq \|\xi \|$ under the event $\mathcal{E}$. 
Considering the second term, we will have an inductive argument for the boundedness of $\Delta$. Let $\bar{\Delta} \coloneqq 10 \left(\frac{\bar{\kappa} }{1-\sigma} + \frac{\bar{\beta} \bar{\xi} }{(1-\sigma)^2} \right)\left(\| C\| \|\Sigma\|^{1/2} + \sigma_z\right) \sqrt{2m\log(2mT/\delta)} $. We have that $\| \Delta_1\| \leq \bar{\Delta}$ with probability at least $1-\delta$. Assume that it holds for all $t$. Thus, $\bar{\Delta} \geq \max_{i\leq t} \| \Delta_i\|$.  Using the same arguments with the third term, for the second term, we get 
\begin{align*}
    \sum_{j=1}^t \alpha^{t-j} \beta \left( \sum_{i=0}^{j-1} \gamma^{j-i-1} \chi \Delta_i \right) &\leq \frac{\|\beta\| \bar{\chi} }{(1-\sigma)^2} \bar{\Delta}
\end{align*}
where $\bar{\chi} \geq \|\chi \|$ under the event $\mathcal{E}$. Combining with the bounds derived in (\ref{firstterm}) and (\ref{thirdterm}) for the  Thus,
\begin{align*}
    \Delta_{t+1} \leq \left(\frac{\bar{\kappa} }{1-\sigma} + \frac{\bar{\beta} \bar{\xi} }{(1-\sigma)^2} \right)\left(\| C\| \|\Sigma\|^{1/2} + \sigma_z\right) \sqrt{2m\log(2mT/\delta)} + \frac{\|\beta \| \bar{\chi} }{(1-\sigma)^2} \bar{\Delta}
\end{align*}
Recall that $\Texp > T_{\beta}$. Thus we have $\| \beta \| \leq \frac{9(1-\sigma)^2}{10 \Gamma \|B\|(1+ \zeta + \zeta \|C\|)}$. This shows that $\Delta_{t+1} \leq \bar{\Delta}$: 
\begin{align*}
    \Delta_{t+1} \leq \left(\left(\frac{\bar{\kappa} }{1-\sigma} \!+\! \frac{\bar{\beta} \bar{\xi} }{(1-\sigma)^2} \right) \!+\! 9 \left(\frac{\bar{\kappa} }{1-\sigma} \!+\! \frac{\bar{\beta} \bar{\xi} }{(1-\sigma)^2} \right) \right) \left(\| C\| \|\Sigma\|^{1/2} \!+\! \sigma_z\right) \sqrt{2m\log(2mT/\delta)} \leq \bar{\Delta}.
\end{align*}

Finally, for all $\Texp<t<T$, we have that with probability at least $1-\delta$,
\begin{align}
    \hat{x}_{t|t-1,\Theta} - \hat{x}_{t|t-1,\tth} \leq \bar{\Delta} = 10\left(\frac{\bar{\kappa} }{1-\sigma} + \frac{\bar{\beta} \bar{\xi} }{(1-\sigma)^2} \right)\left(\| C\| \|\Sigma\|^{1/2} + \sigma_z\right) \sqrt{2m\log(2mT/\delta)} \label{deltabound}
\end{align}
where
\begin{enumerate}
    \item $\bar{\kappa} = \Phi(A) \Delta L + 2\zeta (\beta_A + \Gamma \beta_B)$, 
    \item $\bar{\beta} = 2\zeta \beta_C (\Phi(A) + 2(\beta_A + \Gamma \beta_B)) + 2(\beta_A + \Gamma \beta_B) $, 
    \item $\bar{\xi} = \zeta (\rho + 2(\beta_A + \Gamma \beta_B)) + \|B\|\Gamma \Delta L $.
\end{enumerate} 
Then under the event $\mathcal{E}$, with probability $1-2\delta$,
\begin{align}
\| \hat{x}_{t|t,\tth}\| &= \left \| \sum_{i=1}^t \mathbf{M}^{t-i} \left(\tl C(x_{i} - \hat{x}_{i|i-1,\Theta}) + \tl C (\hat{x}_{i|i-1,\Theta} - \hat{x}_{i|i-1,\tth}) + \tl z_i \right)  \right \| \\
&\leq \max_{1\leq i\leq t}\left\| \tl C(x_{i} - \hat{x}_{i|i-1,\Theta}) + \tl C (\hat{x}_{i|i-1,\Theta} - \hat{x}_{i|i-1,\tth}) + \tl z_i \right \| \left( \sum_{i=1}^t \|\mathbf{M}\|^{t-i}  \right) \\
&\leq \frac{2}{1-\rho} \max_{1\leq i\leq t}\left\| \tl C(x_{i} - \hat{x}_{i|i-1,\Theta}) + \tl C (\hat{x}_{i|i-1,\Theta} - \hat{x}_{i|i-1,\tth}) + \tl z_i \right \| \\
&\leq \tilde{\mathcal{X}} \coloneqq \frac{2\zeta \left(\|C\| \bar{\Delta} + \left(\| C\| \|\Sigma\|^{1/2} + \sigma_z\right) \sqrt{2n\log(2nT/\delta)}\right) }{1-\rho}
\end{align}
where the last inequality follows from (\ref{excitationbound}) and (\ref{deltabound}). One can write $y_t$ as 
\begin{align*}
    y_t &= C (\ta - \tb\tk) \hat{x}_{t-1|t-1,\tth} + C(x_t - \hat{x}_{t|t-1,\tth}) + z_{t} \\
    &= C (\ta - \tb\tk) \hat{x}_{t-1|t-1,\tth} + C(x_t - \hat{x}_{t|t-1,\Theta} + \hat{x}_{t|t-1,\Theta} - \hat{x}_{t|t-1,\tth}) + z_{t}
\end{align*}
Following similar argument, with probability $1-2\delta$,
\begin{equation}
    \|y_t \| \leq \rho \| C\| \tilde{\mathcal{X}}  + \| C\| \bar{\Delta} + \left(\| C\| \|\Sigma\|^{1/2} + \sigma_z \right) \sqrt{2m\log(2mT/\delta)}
\end{equation}
for all $t\leq T$, which proves the lemma. 
\end{proof}

\section{Bellman Optimality Equation for \LQG, Proof of Lemma \ref{LQGBellman}}
\label{SuppBelmanOptimality}
For an average cost per stage problem in infinite state and control space like a \LQG control system $\Theta = \left(A,B,C\right)$ with regulating parameters $Q$ and $R$, using the optimal average cost per stage $J_*(\Theta)$ and guessing the correct differential(relative) cost, where $(A,B)$ is controllable, $(A,C)$ is  observable, $Q$ is positive semidefinite and $R$ is positive definite, one can verify that they satisfy Bellman optimality equation~\citep{bertsekas1995dynamic}. The lemma below shows the Bellman optimality equation for \LQG system $\Theta$, which will be critical in regret analysis.  \\

\begin{proof}
Define $\hat{\omega}_t = Ax_t - A\hat{x}_{t|t} + w_t$. Notice that $\hat{\omega}_t$ is independent of the policy used and depends only on the estimation error and noise in steady state. Also notice that $\mathbb{E}\left[\hat{\omega}_t \hat{\omega}_t^\top \right] = \Sigma$ where $\Sigma$ is the positive semidefinite solution to the algebraic Riccati equation:
\begin{equation}
\label{esti err cov riccati}
    \Sigma = A \bar{\Sigma} A^\top + \sigma_w^2 I, \qquad \bar{\Sigma} = \Sigma - \Sigma C^\top \left( C \Sigma C^\top + \sigma_z^2 I\right)^{-1} C \Sigma.
\end{equation}
Using these, for any given $y_t$ and $\hat{x}_{t|t-1}$ at time $t$, the optimum state estimation and the output at $t+1$ can be written as 
\begin{equation} \label{next time step decomposition}
    \hat{x}_{t|t} = \left( I - LC\right)\hat{x}_{t|t-1} + Ly_t, \enskip \hat{x}_{t+1|t,u} = A\hat{x}_{t|t} + Bu, \enskip y_{t+1,u} = CA \hat{x}_{t|t} + CBu + C\hat{\omega}_t + z_{t+1}
\end{equation}
where $L = \Sigma C^\top \left( C \tilde{\Sigma} C^\top + \sigma_z^2 \right)^{-1}$ is the steady-state Kalman filter gain for $\Theta$. Since the aim is to minimize average cost per stage of controlling $\Theta$, the optimal control input, $u = -K\hat{x}_{t|t}$, where $K = \left(R+B^{\top} P B\right)^{-1} B^{\top} P A $ is the steady-state LQR feedback gain for $\Theta$ and $P$ is the positive semidefinite solution to the following algebraic Riccati equation:
\begin{equation} 
    P = A^\top P A + C^\top Q C - A^\top P B \left( R + B^\top P B \right)^{-1} B^\top P A. 
\end{equation}
Recall that optimal average stage cost of \LQG is $J_*(\Theta) = \Tr(C^\top Q C \bar{\Sigma}) + \Tr(P(\Sigma - \bar{\Sigma})) + \Tr(\sigma_z^2 Q)$.
Suppose the differential cost $h$ is a quadratic function of $s_t$ where $s_t = [\hat{x}_{t|t-1}^\top~y_t^\top]^\top \in \R^{n+m}$, \textit{i.e.}
\[ h(s_t) = s_t^\top  
\left[
    \begin{array}{cc}{G_1} & {  G_2  } \\ {G_2^\top} & {  G_3  }  \end{array} \right] s_t = \hat{x}_{t|t-1}^\top G_1 \hat{x}_{t|t-1} + 2 \hat{x}_{t|t-1}^\top G_2 y_t + y_t^\top G_3 y_t. \] 
One needs to verify that there exists $G_1$, $G_2$, $G_3$ such that they satisfy Bellman optimality equation for the chosen differential cost:
\begin{align*}
    &J_*(\Theta) + \hat{x}_{t|t-1}^\top G_1 \hat{x}_{t|t-1} + 2 \hat{x}_{t|t-1}^\top G_2 y_t + y_t^\top G_3 y_t  = \\
    & y_t^\top Q y_t + \hat{x}_{t|t}^\top K^\top R K \hat{x}_{t|t} + \mathbb{E}\left[ \hat{x}_{t+1|t}^\top G_1 \hat{x}_{t+1|t} + 2 \hat{x}_{t+1|t}^\top G_2 y_{t+1} + y_{t+1}^\top G_3 y_{t+1} \right] 
\end{align*}
Using the fact that $\bar{\Sigma} = \Sigma - L \left( C \Sigma C^\top + \sigma_z^2 I \right) L^\top$, we can write the optimal average cost as $J_*(\Theta) = \Tr\left( \left(Q + L^\top P L - L^\top C^\top Q C L \right) \left(C \Sigma C^\top + \sigma_z^2 I \right)\right)$. Expanding the expectation given $\hat{x}_{t|t-1}, y_t$ and using \eqref{next time step decomposition} , we get 
\begin{align} \label{bellman decomp}
    \hat{x}_{t|t-1}^\top &G_1 \hat{x}_{t|t-1} + 2 \hat{x}_{t|t-1}^\top G_2 y_t + y_t^\top G_3 y_t \\
    &= y_t^\top Q y_t + \hat{x}_{t|t}^\top K^\top R K \hat{x}_{t|t} + \hat{x}_{t|t}^\top (A-BK)^\top G_1 (A-BK) \hat{x}_{t|t}\nonumber \\&+ 2\hat{x}_{t|t}^\top (A-BK)^\top G_2 C(A-BK)\hat{x}_{t|t} + \hat{x}_{t|t}^\top (A-BK)^\top C^\top G_3 C (A-BK)\hat{x}_{t|t} \nonumber\\ &+ \mathbb{E}\left[\hat{\omega}_t^\top C^\top G_3 C \hat{\omega}_t + z_{t+1}^\top G_3 z_{t+1} \right] \!-\! \Tr\left( \left(Q + L^\top P L - L^\top C^\top Q C L \right) \left(C \Sigma C^\top \!+\! \sigma_z^2 I \right)\right) \nonumber
\end{align}
Notice that $\mathbb{E}\left[\hat{\omega}_t^\top C^\top G_3 C \hat{\omega}_t + z_{t+1}^\top G_3 z_{t+1} \right] = \Tr\left( G_3 \left(C \Sigma C^\top + \sigma_z^2 I \right) \right)$. In order to match with the last term of \eqref{bellman decomp}, set $G_3 = Q + L^\top \left( P - C^\top Q C \right) L$. Inserting $G_3$ to \eqref{bellman decomp}, we get following 3 equations to solve for $G_1$ and $G_2$: \\
\textbf{1)} From quadratic terms of $y_t$:
\begin{align*}
    L^\top P L &- L^\top C^\top Q C L \\
    &= L^\top K^\top R K L + L^\top (A-BK)^\top G_1 (A-BK) L + 2 L^\top (A-BK)^\top G_2 C (A-BK) L \\&+ L^\top (A-BK)^\top C^\top \left( Q + L^\top P L - L^\top C^\top Q C L \right) C (A-BK) L
\end{align*}
\textbf{2)} From quadratic terms of $x_{t|t-1}$:
\begin{align*}
    G_1 &= (I-LC)^\top K^\top R K (I-LC) + (I-LC)^\top (A-BK)^\top G_1 (A-BK)(I-LC) \\
    &+ 2(I-LC)^\top (A-BK)^\top G_2 C (A-BK) (I-LC) \\ &+ (I-LC)^\top (A-BK)^\top C^\top \left( Q + L^\top P L - L^\top C^\top Q C L\right) C (A-BK) (I-LC)
\end{align*}
\textbf{3)} From bilinear terms of $x_{t|t-1}$ and $y_t$:
\begin{align*}
    G_2 &= (I-LC)^\top K^\top R K L + (I-LC)^\top (A-BK)^\top G_1 (A-BK) L\\ &+ 2(I-LC)^\top (A-BK)^\top G_2 C (A-BK)L \\
    &+ (I-LC)^\top (A-BK)^\top C^\top \left(Q + L^\top P L - L^\top C^\top Q C L \right) C (A-BK)L
\end{align*}
$G_1 = (I-LC)^\top \left( P - C^\top Q C  \right) (I-LC) $ and $ G_2 = (I-LC)^\top \left( P - C^\top Q C  \right) L$ satisfies all 3 equations. Thus one can write Bellman optimality equation as 
\small
\begin{align*}
    &J_*(\Theta)\!+\!\hat{x}_{t|t-1}^\top (I\!-\!LC)^\top\! \left( P\!-\! C^\top Q C  \right)(I\!-\!LC) \hat{x}_{t|t-1}\!\\
    &+\!2 \hat{x}_{t|t-1}^\top (I\!-\!LC)^\top \left( P\!-\! C^\top Q C  \right) L y_t\! +\! y_t^\top\!\left( Q \!+\!L^\top\!\left( P\!-\! C^\top Q C \right)\!L \right)\!y_t\!= \\
    & y_t^\top Q y_t + \hat{x}_{t|t}^\top K^\top R K \hat{x}_{t|t} + \mathbb{E}\left[ \hat{x}_{t+1|t}^\top (I-LC)^\top \left( P - C^\top Q C  \right) (I-LC) \hat{x}_{t+1|t} \right] \\ &+2 \mathbb{E}\left[ \hat{x}_{t+1|t}^\top (I-LC)^\top \left( P - C^\top Q C  \right) L y_{t+1} + y_{t+1}^\top \left( Q + L^\top \left( P - C^\top Q C \right) L \right) y_{t+1} \right] 
\end{align*}
\normalsize
Combining terms using \eqref{next time step decomposition} gives the \begin{align*}
    J_*(\Theta)\!+\!\hat{x}_{t|t}^\top\!\left( P \!-\!C^\top Q C\right)\!\hat{x}_{t|t}\!+\!y_t^\top Q y_t\!&=\! y_t^\top Q y_t + u_t^\top \!R u_t\! \\ &+\!\mathbb{E}\bigg[ \hat{x}_{t+1|t+1}^{\top}\!\left( P\!-\! C^\top Q C  \right)\!\hat{x}_{t+1|t+1}^{u}\!\!+\!y_{t+1}^{\top} Q y_{t+1}\!\bigg] 
\end{align*}
\end{proof}
\section{Regret Decomposition} \label{SuppRegret}
In this section, using the Bellman Optimality Equation for the optimistic system $\tilde{\Theta} = \left(\ta, \tb, \tc \right)$, we derive the regret decomposition of applying optimal policy $\tilde{\policy}_*\left(\tp, \tk, \tl \right)$ for $\tilde{\Theta}$ in the unknown system $\Theta = \left(A, B, C \right)$. Notice that, this is equivalent to provide the regret decomposition for a system that is obtained via similarity transformation $\mathbf{S}$, \textit{i.e.} $A' = \mathbf{S}^{-1} A \mathbf{S}$, $B' = \mathbf{S}^{-1} B$, $C' = C \mathbf{S}$. Therefore, without loss of generality we will assume that $\mathbf{S} = I$ in the regret decomposition and the concentration bounds that is going to be used in the regret analysis. 

First, for given $\hat{x}_{t|t-1}$ and $y_t$, define the following expressions for time step $t+1$ using the model specified as subscript,

\begin{align}
    \hat{x}_{t|t,\tth} &= \left( I - \tl\tc \right)\hat{x}_{t|t-1} + \tl y_t \label{firstdef} \\
    y_{t+1,\tth} &= \tc \left( \ta - \tb \tk \right) \hat{x}_{t|t,\tth} + \tc \ta \left( x_t - \hat{x}_{t|t,\tth} \right) + \tc w_t + z_{t+1} \label{seconddef}\\
    \hat{x}_{t+1|t+1,\tth} &= \left( \ta - \tb \tk \right) \hat{x}_{t|t,\tth} + \tl \tc \ta \left( x_t - \hat{x}_{t|t,\tth} \right) + \tl \tc w_t + \tl z_{t+1} \label{thirddef}\\
    y_{t+1, \Theta} &= CA \hat{x}_{t|t,\tth} - CB\tk \hat{x}_{t|t,\tth} + C w_t + CA(x_t - \hat{x}_{t|t,\tth}) + z_{t+1} \label{fourthdef} \\
    \hat{x}_{t+1|t+1,\Theta} &= (I -LC)(A\hat{x}_{t|t,\Theta} - B\tk\hat{x}_{t|t,\tth}) + L y_{t+1, \Theta} \\
    &= (I-LC)(A-B\tk)\hat{x}_{t|t,\tth} + (I-LC)A(\hat{x}_{t|t,\Theta} - \hat{x}_{t|t,\tth}) + L y_{t+1, \Theta} \\
    &= (I-LC)(A-B\tk)\hat{x}_{t|t,\tth} + LC(A-B\tk)\hat{x}_{t|t,\tth} + LC w_t + LCA(x_t - \hat{x}_{t|t,\tth}) \nonumber \\
    \qquad \qquad & \qquad + (I-LC)A(\hat{x}_{t|t,\Theta} - \hat{x}_{t|t,\tth}) + Lz_{t+1} \\
    &= (A\!-\!B\tk)\hat{x}_{t|t,\tth} \!+\! LCw_t \!+\! LCA(x_t \!-\! \hat{x}_{t|t,\tth}) \!+\! (I\!-\!LC)A(\hat{x}_{t|t,\Theta} \!-\! \hat{x}_{t|t,\tth}) \!+\! Lz_{t+1} \label{lastdef}
\end{align}
Notice that given $x_{t|t-1}$ and $y_t$ first and fourth terms in \eqref{lastdef} are deterministic. 
From Lemma~\ref{LQGBellman}, we get
\begin{align}
    &J_*(\tth) + \hat{x}_{t|t,\tth}^\top \left( \tp - \tc^\top Q \tc  \right) \hat{x}_{t|t,\tth} + y_t^\top Q y_t \nonumber \\ 
    &=  y_t^\top Q y_t + u_t^\top R u_t + \mathbb{E}\left[ \hat{x}_{t+1|t+1,\tth}^\top \left( \tp - \tc^\top Q \tc  \right)  \hat{x}_{t+1|t+1,\tth} + y_{t+1,\tth}^\top Q  y_{t+1,\tth} \Big | \hat{x}_{t|t-1}, y_t \right] \nonumber \\
    &=  \mathbb{E}\bigg[ \left( (\ta - \tb \tk ) \hat{x}_{t|t,\tth} + \tl\tc\ta(x_t - \hat{x}_{t|t,\tth}) + \tl\tc w_t + \tl z_{t+1} \right)^\top \left( \tp - \tc^\top Q \tc  \right) \nonumber \\
    & \qquad\qquad\qquad\qquad \times \left((\ta - \tb \tk ) \hat{x}_{t|t,\tth} + \tl\tc\ta(x_t - \hat{x}_{t|t,\tth}) + \tl\tc w_t + \tl z_{t+1} \right) \Big | \hat{x}_{t|t-1}, y_t \bigg]\nonumber
    \\& \qquad + y_t^\top Q y_t + u_t^\top R u_t + \mathbb{E}\bigg[ \left( \tc (\ta - \tb \tk ) \hat{x}_{t|t,\tth} + \tc \ta (x_t - \hat{x}_{t|t,\tth}) + \tc w_t + z_{t+1} \right)^\top Q \nonumber \\
    &\qquad\qquad\qquad\qquad \times \left(  \tc (\ta - \tb \tk ) \hat{x}_{t|t,\tth} + \tc \ta (x_t - \hat{x}_{t|t,\tth}) + \tc w_t + z_{t+1} \right)  \Big | \hat{x}_{t|t-1}, y_t  \bigg] \nonumber \\
    &= \hat{x}_{t|t,\tth}^\top \left(\ta - \tb \tk \right)^\top \left( \tp - \tc^\top Q \tc  \right) \left(\ta - \tb \tk \right) \hat{x}_{t|t,\tth} + \mathbb{E} \bigg[ w_t \tc^\top \tl^\top  \left( \tp - \tc^\top Q \tc  \right) \tl \tc w_t  \bigg]  \nonumber \\
    &\qquad +\mathbb{E}\bigg[ \left(x_t - \hat{x}_{t|t,\tth}\right)^\top \ta^\top \tc^\top \tl^\top  \left( \tp - \tc^\top Q \tc  \right) \tl\tc\ta \left(x_t - \hat{x}_{t|t,\tth}\right) \Big | \hat{x}_{t|t-1}, y_t \bigg] \nonumber \\
    &\qquad + \mathbb{E}\bigg[ z_{t+1}^\top \tl^\top  \left( \tp - \tc^\top Q \tc  \right) \tl z_{t+1}  \bigg] + y_t^\top Q y_t + u_t^\top R u_t \nonumber \\
    &\qquad + \hat{x}_{t|t,\tth}^\top \left(\ta - \tb \tk \right)^\top \tc^\top Q \tc \left(\ta - \tb \tk \right) \hat{x}_{t|t,\tth} + \mathbb{E} \bigg[ w_t \tc^\top Q \tc w_t  \bigg] \nonumber \\ 
    &\qquad + \mathbb{E} \bigg[ \left(x_t - \hat{x}_{t|t,\tth}\right)^\top \ta^\top \tc^\top Q \tc \ta \left(x_t - \hat{x}_{t|t,\tth}\right) \Big | \hat{x}_{t|t-1}, y_t  \bigg]  + \mathbb{E} \bigg[ z_{t+1}^\top Q z_{t+1}  \bigg] \label{insert}
\end{align}
Using equations \eqref{thirddef} and \eqref{lastdef}, we have the following expression for $\mathbb{E}\left[ z_{t+1}^\top \tl^\top  \left( \tp - \tc^\top Q \tc  \right) \tl z_{t+1} \right]$:

\begin{align*}
    &\mathbb{E}\left[z_{t+1}^\top (\tl - L + L)^\top \left( \tp - \tc^\top Q \tc  \right) (\tl - L + L) z_{t+1}  \right] \\
    &= \mathbb{E}\left[z_{t+1}^\top L^\top \left( \tp - \tc^\top Q \tc  \right) L z_{t+1}  \right] + 2 \mathbb{E}\left[z_{t+1}^\top L^\top \left( \tp - \tc^\top Q \tc  \right) (\tl - L) z_{t+1}  \right] \\ &\qquad +\mathbb{E}\left[z_{t+1}^\top (\tl - L)^\top \left( \tp - \tc^\top Q \tc  \right) (\tl - L) z_{t+1}  \right] \\
    &= \mathbb{E}\left[ \hat{x}_{t+1|t+1,\Theta}^\top \left( \tp - \tc^\top Q \tc  \right) \hat{x}_{t+1|t+1,\Theta} \Big | \hat{x}_{t|t-1}, y_t, u_t \right] - \mathbb{E}\left[w_t^\top C^\top L^\top \left( \tp - \tc^\top Q \tc  \right) L C  w_t  \right]  \\ &-\!\left((A\!-\!B\tk)\hat{x}_{t|t,\tth} \!+\! (I\!-\!LC)A(\hat{x}_{t|t,\Theta} \!-\! \hat{x}_{t|t,\tth}) \right)^\top\!\left( \tp \!-\! \tc^\top Q \tc  \right)\! \left((A\!-\!B\tk)\hat{x}_{t|t,\tth} \!+\! (I\!-\!LC)A(\hat{x}_{t|t,\Theta} \!-\! \hat{x}_{t|t,\tth}) \right)   \\ &- \mathbb{E}\left[\left(x_{t}-\hat{x}_{t|t,\tth}\right)^\top A^\top C^\top L^\top \left( \tp - \tc^\top Q \tc  \right) L C A \left(x_{t}-\hat{x}_{t|t,\tth}\right)  \Big | \hat{x}_{t|t-1}, y_t \right] \\
    &+ 2 \mathbb{E}\left[z_{t+1}^\top L^\top \left( \tp - \tc^\top Q \tc  \right) (\tl - L) z_{t+1}  \right] +\mathbb{E}\left[z_{t+1}^\top (\tl - L)^\top \left( \tp - \tc^\top Q \tc  \right) (\tl - L) z_{t+1}  \right].
\end{align*}

Similarly, using equations \eqref{seconddef} and \eqref{fourthdef}, we have the following expression for $\mathbb{E}\left[ z_{t+1}^\top Q z_{t+1} \right]$: 

\begin{align*}
    \mathbb{E}\left[z_{t+1}^\top Q z_{t+1} \right] &= \mathbb{E}\left[ y_{t+1,\Theta}^\top Q y_{t+1,\Theta} \Big | \hat{x}_{t|t-1}, y_t, u_t \right] - \hat{x}_{t|t,\tth}^\top (A-B\tk)^\top C^\top Q C (A-B\tk) \hat{x}_{t|t,\tth} \\ &- \mathbb{E}\left[w_t^\top C^\top Q C w_t \right] - \mathbb{E}\left[\left(x_{t}-\hat{x}_{t|t,\tth}\right)^\top A^\top C^\top Q C A \left(x_{t}-\hat{x}_{t|t,\tth}\right)  \Big | \hat{x}_{t|t-1}, y_t \right]
\end{align*}
\normalsize
Inserting these to the equality in \eqref{insert}, we get 

\begin{align*}
    &J_*(\tth) +  \hat{x}_{t|t,\tth}^\top \left( \tp - \tc^\top Q \tc  \right) \hat{x}_{t|t,\tth} + y_t^\top Q y_t   \\ 
    &=  y_t^\top Q y_t + u_t^\top R u_t +  \hat{x}_{t|t,\tth}^\top \left(\ta - \tb \tk \right)^\top \left( \tp - \tc^\top Q \tc  \right) \left(\ta - \tb \tk \right) \hat{x}_{t|t,\tth} 
    \\ &+  \mathbb{E}  \bigg[ w_t \tc^\top \tl^\top  \left( \tp - \tc^\top Q \tc  \right) \tl \tc w_t   \bigg] \\
    &+   \mathbb{E}  \bigg[ \left(x_t - \hat{x}_{t|t,\tth}\right)^\top \ta^\top \tc^\top \tl^\top  \left( \tp - \tc^\top Q \tc  \right) \tl\tc\ta \left(x_t - \hat{x}_{t|t,\tth}\right) \Big | \hat{x}_{t|t-1}, y_t   \bigg] \\
    &+ \mathbb{E}\left[ \hat{x}_{t+1|t+1,\Theta}^\top \left( \tp - \tc^\top Q \tc  \right) \hat{x}_{t+1|t+1,\Theta} \Big | \hat{x}_{t|t-1}, y_t, u_t \right]   - \mathbb{E}\left[w_t^\top C^\top L^\top \left( \tp - \tc^\top Q \tc  \right) L C  w_t  \right] \\
    &-  \mathbb{E}\left[\left(x_{t}-\hat{x}_{t|t,\tth}\right)^\top A^\top C^\top L^\top \left( \tp - \tc^\top Q \tc  \right) L C A \left(x_{t}-\hat{x}_{t|t,\tth}\right)  \Big | \hat{x}_{t|t-1}, y_t \right] \\
    &-\!\left((A\!-\!B\tk)\hat{x}_{t|t,\tth} \!+\! (I\!-\!LC)A(\hat{x}_{t|t,\Theta} \!-\! \hat{x}_{t|t,\tth}) \right)^\top\!\left( \tp \!-\! \tc^\top Q \tc  \right)\! \left((A\!-\!B\tk)\hat{x}_{t|t,\tth} \!+\! (I\!-\!LC)A(\hat{x}_{t|t,\Theta} \!-\! \hat{x}_{t|t,\tth}) \right)  \\
    &+  \hat{x}_{t|t,\tth}^\top \left(\ta - \tb \tk \right)^\top \tc^\top Q \tc \left(\ta - \tb \tk \right) \hat{x}_{t|t,\tth}  +  \mathbb{E}  \bigg[ w_t \tc^\top Q \tc w_t   \bigg]\\ 
    &+   \mathbb{E}  \bigg[ \left(x_t - \hat{x}_{t|t,\tth}\right)^\top \ta^\top \tc^\top Q \tc \ta \left(x_t - \hat{x}_{t|t,\tth}\right) \Big | \hat{x}_{t|t-1}, y_t    \bigg] \\
    &+  \mathbb{E}\left[ y_{t+1,\Theta}^\top Q y_{t+1,\Theta} \Big | \hat{x}_{t|t-1}, y_t, u_t \right] - \hat{x}_{t|t,\tth}^\top (A-B\tk)^\top C^\top Q C (A-B\tk) \hat{x}_{t|t,\tth} \\
    & -\mathbb{E}\left[w_t^\top C^\top Q C w_t \right]  -  \mathbb{E}\left[\left(x_{t}-\hat{x}_{t|t,\tth}\right)^\top A^\top C^\top Q C A \left(x_{t}-\hat{x}_{t|t,\tth}\right)  \Big | \hat{x}_{t|t-1}, y_t \right] \\
    &+ 2 \mathbb{E}\left[z_{t+1}^\top L^\top \left( \tp - \tc^\top Q \tc  \right) (\tl - L) z_{t+1}  \right] +\mathbb{E}\left[z_{t+1}^\top (\tl - L)^\top \left( \tp - \tc^\top Q \tc  \right) (\tl - L) z_{t+1}  \right] 
\end{align*}
Hence,
\begin{equation*}
    \sum_{t=0}^{T-\Texp}  J_*(\tth) \!+\! R_1 \!+\! R_2 = \sum_{t=0}^{T-\Texp} \left( y_t^\top Q y_t \!+\! u_t^\top R u_t \right) \!+\! R_3 \!+\! R_4 \!+\! R_5 \!+\! R_6 \!+\! R_7 \!+\! R_8 \!+\! R_9 \!+\! R_{10} \!+\! R_{11}
\end{equation*}
where 

\begin{align}
    R_1& \!=\!\! \sum_{t=0}^{T-\Texp} \left \{ \hat{x}_{t|t,\tth}^\top \left( \tp - \tc^\top Q \tc  \right) \hat{x}_{t|t,\tth} - \mathbb{E}\left[ \hat{x}_{t+1|t+1,\Theta}^\top \left( \tp - \tc^\top Q \tc  \right) \hat{x}_{t+1|t+1,\Theta} \Big | \hat{x}_{t|t-1}, y_t, u_t \right]  \right \} \\
    R_2& \!=\!\! \sum_{t=0}^{T-\Texp} \left \{ y_t^\top Q y_t - 
    \mathbb{E}\left[ y_{t+1,\Theta}^\top Q y_{t+1,\Theta} \Big | \hat{x}_{t|t-1}, y_t, u_t \right] \right \} \\
    R_3& \!=\!\! \sum_{t=0}^{T-\Texp} \left \{ \hat{x}_{t|t,\tth}^\top (\ta - \tb \tk )^\top \tc^\top Q \tc (\ta - \tb \tk) \hat{x}_{t|t,\tth} - \hat{x}_{t|t,\tth}^\top (A-B\tk)^\top C^\top Q C (A-B\tk) \hat{x}_{t|t,\tth} \right \} \\
    R_4& \!=\!\! \sum_{t=0}^{T-\Texp} \! \left\{ \hat{x}_{t|t,\tth}^\top (\ta \!-\! \tb \tk )^\top (\tp \!-\! \tc^\top Q \tc) (\ta \!-\! \tb \tk) \hat{x}_{t|t,\tth} \!-\! \hat{x}_{t|t,\tth}^\top (A\!-\!B\tk)^\top (\tp - \tc^\top Q \tc) (A\!-\!B\tk) \hat{x}_{t|t,\tth}
     \right \} \\
    R_5& \!=\! -\!\! \sum_{t=0}^{T-\Texp} \left \{ 2\hat{x}_{t|t,\tth}^\top (A-B\tk)^\top (\tp - \tc^\top Q \tc) (I-LC)A(\hat{x}_{t|t,\Theta} -\hat{x}_{t|t,\tth} ) \right \} \\
    R_6& \!=\! -\!\! \sum_{t=0}^{T-\Texp} \left \{ (\hat{x}_{t|t,\Theta} -\hat{x}_{t|t,\tth} )^\top A^\top (I-LC)^\top (\tp - \tc^\top Q \tc) (I-LC)A(\hat{x}_{t|t,\Theta} -\hat{x}_{t|t,\tth} ) \right \} \\
    R_7& \!=\!\! \sum_{t=0}^{T-\Texp} \left\{ \mathbb{E} \left[ w_t^\top \tc^\top Q \tc w_t   \right] - \mathbb{E}\left[w_t^\top C^\top Q C w_t \right] \right\}\\
    R_8& \!=\!\! \sum_{t=0}^{T-\Texp} \left \{ \mathbb{E}  \left[ w_t^\top \tc^\top \tl^\top  \left( \tp - \tc^\top Q \tc  \right) \tl \tc w_t   \right]   - \mathbb{E}\left[w_t^\top C^\top L^\top \left( \tp - \tc^\top Q \tc  \right) L C  w_t  \right] \right \} \\ \label{R9}
    R_9& \!=\!\! \sum_{t=0}^{T-\Texp} \bigg \{ \mathbb{E}  \bigg[ \left(x_t - \hat{x}_{t|t,\tth}\right)^\top \ta^\top \tc^\top Q \tc \ta \left(x_t - \hat{x}_{t|t,\tth}\right) \Big | \hat{x}_{t|t-1}, y_t   \bigg] \\ &\qquad \qquad-  \mathbb{E}\left[\left(x_{t}-\hat{x}_{t|t,\tth}\right)^\top A^\top C^\top Q C A \left(x_{t}-\hat{x}_{t|t,\tth}\right)  \Big | \hat{x}_{t|t-1}, y_t \right] \bigg \} \nonumber \\ \label{R10}
    R_{10}& \!=\!\! \sum_{t=0}^{T-\Texp} \bigg \{ \mathbb{E} \bigg[ \left(x_t - \hat{x}_{t|t,\tth}\right)^\top \ta^\top \tc^\top \tl^\top  \left( \tp - \tc^\top Q \tc  \right) \tl\tc\ta \left(x_t - \hat{x}_{t|t,\tth}\right) \Big | \hat{x}_{t|t-1}, y_t   \bigg] \\
    &\qquad \qquad - \mathbb{E}\left[\left(x_{t}-\hat{x}_{t|t,\tth}\right)^\top A^\top C^\top L^\top \left( \tp - \tc^\top Q \tc  \right) L C A \left(x_{t}-\hat{x}_{t|t,\tth}\right)  \Big | \hat{x}_{t|t-1}, y_t \right] \bigg \} \\
    R_{11}& \!=\!\! \sum_{t=0}^{T-\Texp} \bigg \{ 2 \mathbb{E}\left[z_{t+1}^\top L^\top\!\! \left( \tp \!-\! \tc^\top Q \tc  \right) (\tl \!-\! L) z_{t+1} \right] \!+\!\mathbb{E}\left[z_{t+1}^\top (\tl \!-\! L)^\top \!\! \left( \tp \!-\! \tc^\top Q \tc  \right) (\tl \!-\! L) z_{t+1}  \right]  \bigg \}
\end{align}

Thus, on event $\mathcal{E} \cap \mathcal{F} \cap \mathcal{G} $,
\begin{align*}
    &\sum_{t=0}^{T-\Texp}\!\!\!\! \left( y_t^\top Q y_t \!+\! u_t^\top R u_t \right) \!=\!\! \sum_{t=0}^{T-\Texp} \!\!\! J_*(\tth) \!+\! R_1 \!+\! R_2 \!-\! R_3 \!-\! R_4 \!-\! R_5 \!-\! R_6 \!-\! R_7 \!-\! R_8 \!-\! R_9 \!-\! R_{10} \!-\! R_{11} \\
    & \qquad \leq (T-\Texp) J_*(\Theta) \!+\! R_1 \!+\! R_2 \!-\! R_3 \!-\! R_4 \!-\! R_5 \!-\! R_6 \!-\! R_7 \!-\! R_8 \!-\! R_9 \!-\! R_{10} \!-\! R_{11} \!+\! T^{2/3}
\end{align*}
where the last inequality follows from the fact that $\tth$ is the optimistic parameter from the confidence sets such that $J_*(\tth) \leq J_*(\Theta) + \frac{1}{T^{1/3}}$ and on event $\mathcal{E}$, $\Theta \in \mathcal{C}$. Therefore, on event $\mathcal{E} \cap \mathcal{F} \cap \mathcal{G} $,
\begin{equation}
    \reg(T-\Texp) \leq R_1 + R_2 - R_3 - R_4 - R_5 - R_6 - R_7 - R_8 - R_9 - R_{10} \!-\! R_{11} + T^{2/3}.
\end{equation}
The following section contains the bounds on individual pieces. 
\section{Regret Upper Bound}
\label{SuppRegretTotal}
In this section we will provide bounds on each term in the regret decomposition. It will be useful to recall and denote the following bounds obtained after $\Texp>T_0$ time steps of exploration before starting commit phase of \Alg on the event of $\mathcal{E} \cap \mathcal{F} \cap \mathcal{G} $. Note that, without loss of generality, we obtain the regret analysis with the confidence sets with the similarity transformation $\mathbf{S} = I$, since any similarity transformation of the underlying system $\Theta$ will give a system with the same behavior. 
\small
\begin{align*}
    \|A - \ta \| &\leq \Delta A \coloneqq  \left( \frac{62n\|\mathbf{H}\| + 14n\sigma_n(\mathbf{H})}{\sigma_n^2(\mathbf{H})} \right) \frac{R_w+R_e+R_z}{\sigma_u\sqrt{\Texp-H+1}}, \\
    \|B - \tb \| &\leq \Delta B \coloneqq \frac{14n(R_w+R_e+R_z)}{\sigma_u\sqrt{(\Texp - H+1)\sigma_n(\mathbf{H})}}, \\
    \|C - \tc \| &\leq \Delta C \coloneqq \frac{14n(R_w+R_e+R_z)}{\sigma_u\sqrt{(\Texp - H+1)\sigma_n(\mathbf{H})}},\\
    \|\tsig  - \Sigma \| &\leq \Delta \Sigma \coloneqq \frac{\Phi(A)^2 (4\|C\|+ 2)\|\Sigma \|^2 + \sigma_z^2 (4\Phi(A) + 2)  \| \Sigma\| }{\sigma_z^2(1 - \upsilon^2)} \max\left\{\Delta A, \Delta C \right\},  \\
    \|\tl - L\| &\leq \Delta L \coloneqq \Delta C \bigg( \sigma_z^{-2} \| \sig\| +
    4\sigma_z^{-4} \| \sig\|^2
    + 6\sigma_z^{-4} \|C\| \| \sig\|^2 + 2\sigma_z^{-4} \|C\|^2 \| \sig\|^2   \bigg) \\ 
    &\qquad +\Delta \sig \bigg(\sigma_z^{-2}(\|C\|\!+\!2) \!+\! \sigma_z^{-4} \|C\|^3 (1\!+\!\| \sig\| ) \!+\! 10 \sigma_z^{-4} \|C\|^2 (1 \!+\! \| \sig \|) \!+\!  24 \sigma_z^{-4} \|C\| (1\!+\!\| \sig \|) \!+\! 16 \sigma_z^{-4} (1 \!+\! \| \sig \|)  \bigg) \\ \|\hat{x}_{t|t,\tth}\| &\leq \tilde{\mathcal{X}} \coloneqq \frac{2\zeta \left(\|C\| \bar{\Delta} + \left(\| C\| \|\Sigma\|^{1/2} + \sigma_z\right) \sqrt{2n\log(2nT/\delta)}\right) }{1-\rho} , \\
    \|y_t \| &\leq \mathcal{Y} \coloneqq \| C\| \tilde{\mathcal{X}}  + \| C\| \bar{\Delta} + \left(\| C\| \|\Sigma\|^{1/2} + \sigma_z \right) \sqrt{2m\log(2mT/\delta)} 
\end{align*}
\normalsize
for all $t \leq T$, where $\bar{\Delta} = 10\left(\frac{\bar{\kappa} }{1-\sigma} + \frac{\bar{\beta} \bar{\xi} }{(1-\sigma)^2} \right)\left(\| C\| \|\Sigma\|^{1/2} + \sigma_z\right) \sqrt{2m\log(2mT/\delta)}$, for $\bar{\kappa} = \Phi(A) \Delta L + 2\zeta (\beta_A + \Gamma \beta_B)$, $\bar{\beta} = 2\zeta \beta_C (\Phi(A) + 2(\beta_A + \Gamma \beta_B) + 2(\beta_A + \Gamma \beta_B) )$ and $\bar{\xi} = \zeta (\rho + 2(\beta_A + \Gamma \beta_B)) + \|B\|\Gamma \Delta L $. Notice that all the concentration results are $\tilde{\OO}\left(\frac{1}{\sqrt{\Texp}}\right)$ where $\tilde{\OO}(\cdot)$ hides the problem dependent constants and logarithm. All the theorems and lemmas in this section are given in $\tilde{\OO}(\cdot)$ notation. The exact expressions are given in the last lines of proofs.

\subsection[Bound on R1]{Bounding $R_1$ on the event of $\mathcal{E} \cap \mathcal{F} \cap \mathcal{G} $}
\begin{lemma}\label{lemmaR1}
Suppose Assumption~\ref{Stabilizable set} holds and system is explored for  $\Texp > T_0$ time steps. For any $\delta \in (0,1) $, given $\mathcal{E} \cap \mathcal{F} \cap \mathcal{G} $ holds, 
\begin{align*}
    &R_1 \!=\! \tilde{\OO}\left( \frac{T\!-\!\Texp}{\sqrt{\Texp}} \right) 
\end{align*}
with probability at least $1-\delta$. 
\end{lemma}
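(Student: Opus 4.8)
The plan is to recognize $R_1$ as an (almost) telescoping sum plus a martingale, and to pay for the mismatch between the state estimator built from the optimistic model $\tth$ and the one built from the true model $\Theta$ at a rate $\tilde{\OO}(1/\sqrt{\Texp})$ per step. Throughout write $M \coloneqq \tp - \tc^\top Q\tc$, which is symmetric and, on the event $\mathcal{E}$, satisfies $\|M\| = \tilde{\OO}(1)$ by $\|\tp\|\le D$ (Assumption~\ref{Stabilizable set}) and $\|\tc\|\le\|C\|+\Delta C$ (Theorem~\ref{ConfidenceSets}). Let $\E_t[\cdot]$ denote the conditional expectation appearing in $R_1$, i.e.\ conditioning on the information available right after $u_t$ is applied; $\hat{x}_{t+1|t+1,\Theta}$ depends on this information together only with the fresh noises $w_t,z_{t+1}$. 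Reindexing the conditional term one step and telescoping the deterministic pieces, $R_1$ equals the boundary term $\hat{x}_{0|0,\tth}^\top M \hat{x}_{0|0,\tth}$, minus $\E_{T-\Texp}[\hat{x}_{T-\Texp+1|T-\Texp+1,\Theta}^\top M \hat{x}_{T-\Texp+1|T-\Texp+1,\Theta}]$, plus $\sum_{t=0}^{T-\Texp-1}\big(\hat{x}_{t+1|t+1,\tth}^\top M \hat{x}_{t+1|t+1,\tth} - \E_t[\hat{x}_{t+1|t+1,\Theta}^\top M \hat{x}_{t+1|t+1,\Theta}]\big)$. On the event $\mathcal{F}$ the two boundary terms are $\tilde{\OO}(1)$, since $\|\hat{x}_{t|t,\tth}\|\le\tilde{\mathcal{X}}$ and (from the proof of Lemma~\ref{Boundedness}) $\|\hat{x}_{t|t,\Theta}\|\le\|\hat{x}_{t|t,\tth}\|+\bar{\Delta}=\tilde{\OO}(1)$ uniformly in $t\le T$ on $\mathcal{E}\cap\mathcal{F}\cap\mathcal{G}$.

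I would then split each summand as $\Delta_t^{(1)}+\Delta_t^{(2)}$, where $\Delta_t^{(1)} = \hat{x}_{t+1|t+1,\tth}^\top M \hat{x}_{t+1|t+1,\tth} - \hat{x}_{t+1|t+1,\Theta}^\top M \hat{x}_{t+1|t+1,\Theta}$ and $\Delta_t^{(2)} = \hat{x}_{t+1|t+1,\Theta}^\top M \hat{x}_{t+1|t+1,\Theta} - \E_t[\hat{x}_{t+1|t+1,\Theta}^\top M \hat{x}_{t+1|t+1,\Theta}]$. For the drift term, factor $\Delta_t^{(1)} = (\hat{x}_{t+1|t+1,\tth}-\hat{x}_{t+1|t+1,\Theta})^\top M (\hat{x}_{t+1|t+1,\tth}+\hat{x}_{t+1|t+1,\Theta})$ using $M=M^\top$; the second factor is $\tilde{\OO}(1)$ on $\mathcal{E}\cap\mathcal{F}$, and for the difference I would reuse the recursion from the proof of Lemma~\ref{Boundedness}: $\hat{x}_{t|t,\Theta}-\hat{x}_{t|t,\tth} = (I-\tl\tc)\Delta_t + (L-\tl)e_t + \tl(\tc-C)\hat{x}_{t|t-1,\Theta}$ with $\|\Delta_t\|\le\bar{\Delta}$, so that the sub-Gaussian tail of the innovation $\|e_t\|=\tilde{\OO}(1)$ together with the bounds $\|\tl-L\|,\|\tc-C\| = \tilde{\OO}(1/\sqrt{\Texp})$ (Lemma~\ref{StabilityCov}, Theorem~\ref{ConfidenceSets}) give $\|\hat{x}_{t+1|t+1,\tth}-\hat{x}_{t+1|t+1,\Theta}\| = \tilde{\OO}(1/\sqrt{\Texp})$ simultaneously for all $t\le T$ with probability at least $1-\delta$. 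Hence $|\Delta_t^{(1)}| = \tilde{\OO}(1/\sqrt{\Texp})$ and $\big|\sum_t\Delta_t^{(1)}\big| = \tilde{\OO}\big((T-\Texp)/\sqrt{\Texp}\big)$.

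For $\Delta_t^{(2)}$, the sequence $(\Delta_t^{(2)})_t$ is a martingale difference sequence, and on $\mathcal{G}$ (together with the state-estimate bound above) each increment is bounded by $\tilde{\OO}(\log T)$. Because these bounds hold only on a high-probability event, I would pass to a truncated version of the sum via Lemma~\ref{basicprob} (exactly as in Appendix~\ref{SuppRegretExplore}) and then apply the Azuma--Hoeffding inequality (Theorem~\ref{azuma}), obtaining $\big|\sum_t\Delta_t^{(2)}\big| = \tilde{\OO}\big(\sqrt{T-\Texp}\big)$ with probability at least $1-\delta$. Collecting the boundary terms, $\sum_t\Delta_t^{(1)}$ and $\sum_t\Delta_t^{(2)}$, using $\sqrt{T-\Texp} = \tilde{\OO}\big((T-\Texp)/\sqrt{\Texp}\big)$ in the regime $\Texp\le T-\Texp$ of interest, and taking a union bound over the events invoked, yields $R_1 = \tilde{\OO}\big((T-\Texp)/\sqrt{\Texp}\big)$.

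I expect the main obstacle to be the drift sum $\sum_t\Delta_t^{(1)}$: the statement of Lemma~\ref{Boundedness} alone does not suffice, and one must re-run the recursive argument from its proof to obtain the \emph{per-step}, \emph{uniform-in-$t$} deviation bound $\|\hat{x}_{t+1|t+1,\tth}-\hat{x}_{t+1|t+1,\Theta}\| = \tilde{\OO}(1/\sqrt{\Texp})$, keeping careful track of the failure probabilities so the union bound over $t\le T$ only costs a polylogarithmic factor. Everything else --- the telescoping identity and the truncated Azuma bound for the martingale part --- is routine and parallels the exploration-phase regret proof.
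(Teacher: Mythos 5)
Your proposal is correct and matches the paper's proof in all essentials: the paper likewise telescopes/reindexes $R_1$, kills the boundary terms using $\hat{x}_{0|0,\tth}=0$ and positive semidefiniteness of $\tp-\tc^\top Q\tc$, bounds the drift between the $\tth$-filter and the $\Theta$-filter at $\tilde{\OO}(1/\sqrt{\Texp})$ per step by reusing the recursion inside the proof of Lemma~\ref{Boundedness} (with $\bar{\Delta}=\tilde{\OO}(1/\sqrt{\Texp})$), and controls the remaining fluctuation by truncation (Lemma~\ref{basicprob}) plus martingale concentration. The only cosmetic difference is in how the random part is grouped: the paper splits each filter state into a conditionally deterministic piece $f_t$ (resp.\ $\tilde f_t$) and an innovation piece $v_t$ (resp.\ $\tilde v_t$), handling the cross term $2\sum_t \tilde f_t^\top(\tp-\tc^\top Q\tc)\tilde v_t$ with the self-normalized bound of Theorem~\ref{selfnormalized} and only the quadratic-in-$v_t$ piece with Azuma, whereas you fold all of the randomness into a single martingale-difference sum for one Azuma application --- both routes give $\tilde{\OO}(\sqrt{T-\Texp})$ and are dominated by the drift term.
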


\begin{proof}
Define $\tilde{f}_t, f_t, \tilde{v}_t$ and $v_t$ such that 
\begin{align*}
    \tilde{f}_t &= \ta(I-\tl\tc)\hat{x}_{t-1|t-2,\tth} + \ta\tl y_{t-1} + \tb u_{t-1}, \\
    f_t &= A(I-LC)\hat{x}_{t-1|t-2,\tth} + AL y_{t-1} + B u_{t-1} \\
    \tilde{v}_t &= \tl\tc\ta(x_{t-1} - (I-\tl\tc)\hat{x}_{t-1|t-2,\tth} - \tl y_{t-1}) +  \tl\tc w_{t-1} + \tl z_{t}, \\
    v_t &= LCA(x_{t-1} - (I-LC)\hat{x}_{t-1|t-2,\tth} - L y_{t-1}) + LC w_{t-1} + L z_{t}
\end{align*}
$R_1$ is decomposed as follows, 
\begin{align*}
    R_1 &= \hat{x}_{0|0,\tth}^\top (\tp - \tc^\top Q \tc) \hat{x}_{0|0,\tth} - \mathbb{E}\left[ \hat{x}_{T+1|T+1,\Theta}^\top \left( \tp - \tc^\top Q \tc  \right) \hat{x}_{T+1|T+1,\Theta} \Big | \hat{x}_{T|T-1}, y_T, u_T \right] \\
    &+\!\!\sum_{t=1}^{T-\Texp} \left \{ \hat{x}_{t|t,\tth}^\top \left( \tp \!-\! \tc^\top Q \tc  \right) \hat{x}_{t|t,\tth} - \mathbb{E}\left[ \hat{x}_{t|t,\Theta}^\top \left( \tp - \tc^\top Q \tc  \right) \hat{x}_{t|t,\Theta} \Big | \hat{x}_{t-1|t-2}, y_{t-1}, u_{t-1} \right]  \right \}
\end{align*}

Since $\hat{x}_{0|0,\tth} =0$ and $\tp - \tc^\top Q \tc$ is positive semidefinite, the first line is bounded by zero. Using the definitions above, the remaining is decomposed as follows, 
\begin{align*}
    R_1 &\leq \sum_{t=1}^{T-\Texp} \left\{ \tilde{f}_t^\top \left( \tp - \tc^\top Q \tc  \right) \tilde{f}_t  - f_t^\top \left( \tp - \tc^\top Q \tc  \right) f_t\right\} + 2 \sum_{t=1}^T \left\{ \tilde{f}_t^\top \left( \tp - \tc^\top Q \tc  \right) \tilde{v}_t \right\} \\
    &\qquad \qquad + \sum_{t=1}^{T-\Texp} \left\{ \tilde{v}_t^\top \left( \tp - \tc^\top Q \tc  \right) \tilde{v}_t - \mathbb{E}\left[ v_t^\top \left( \tp - \tc^\top Q \tc  \right) v_t \Big | \hat{x}_{t-1|t-2}, y_{t-1}, u_{t-1} \right] \right \}
\end{align*}
Each term will be bounded separately: \\
\textbf{1) }Let $G_1 = 2 \sum_{t=1}^T \left\{ \tilde{f}_t^\top \left( \tp - \tc^\top Q \tc  \right) \tilde{v}_t \right\}$. Let $q_t^\top =  \tilde{f}_t^\top(\tp - \tc^\top Q \tc)$ and on the event of $\mathcal{E} \cap \mathcal{F} \cap \mathcal{G}$
\begin{equation*}
    \sum_{t=1}^{T-\Texp} \left\{ \tilde{f}_t^\top \left( \tp - \tc^\top Q \tc  \right) \tilde{v}_t \right\} = \sum_{t=1}^{T-\Texp} q_t^\top \tilde{v}_t = \sum_{t=1}^{T-\Texp} \sum_{i=1}^n q_{t,i} \tilde{v}_{t,i} = \sum_{i=1}^n \sum_{t=1}^{T-\Texp} q_{t,i} \tilde{v}_{t,i}
\end{equation*}
Observe that conditioned on current observations, $\tilde{v}_{t}$ is zero mean and entrywise $R'$-sub-Gaussian, where
\[R' = \zeta \left( \left((\Delta A + \Phi(A))\sqrt{\Delta\sig + \| \sig \|} + \sigma_w \right) \left(\| C\| + \Delta C \right)+ \sigma_z\right)\]. 

Let $M_{T,i} = \sum_{t=1}^{T-\Texp} q_{t,i} \tilde{v}_{t,i}$. By Theorem~\ref{selfnormalized}, the following holds with probability at least $1-\delta/(4n)$, for any $T\geq 0$ and $\lambda>0$,

\begin{equation*}
    M_{T,i}^2 \leq 2R'^2 \left(\lambda + \sum_{t=1}^{T-\Texp} q_{t,i}^2 \right) \log \left(\frac{4n}{\delta \sqrt{\lambda}} \left(\lambda + \sum_{t=1}^{T-\Texp} q_{t,i}^2 \right)^{1/2} \right)
\end{equation*}
Recalling Assumption~\ref{Stabilizable set}, on $\mathcal{E} \cap \mathcal{F} \cap \mathcal{G} $, 

\begin{align*}
\| q_t\| &\leq \rho (\| \tp\| + \|\tc^\top Q \tc \|)  \tilde{\mathcal{X}} \leq \rho \left(D + \| Q\| \left( \| C\| + \Delta C \right)^2  \right)  \tilde{\mathcal{X}}.
\end{align*} 
Thus, $q_{t,i} \leq \rho \left(D + \| Q\| \left( \| C\| + \Delta C \right)^2  \right) \tilde{\mathcal{X}} \coloneqq \mathcal{Q}_T$. Combining these results with union bound, with probability $1-\delta/4$ we have 
\begin{align}
     G_1 &\leq 2R' n  \sqrt{ (\lambda + \mathcal{Q}_T^2 (T-\Texp)) \log \left( \frac{n\sqrt{\lambda + (T-\Texp)\mathcal{Q}_T^2}}{\delta\sqrt{\lambda}}\right) } \nonumber \\
     &= \tilde{\OO}\left( \rho \zeta \tilde{\mathcal{X}}  \left(D \!+\! \| Q\| \left( \| C\| \!+\! \Delta C \right)^2  \right)  \left( \left((\Delta A \!+\! \Phi(A))\sqrt{\Delta\sig \!+\! \| \sig \|} \!+\! \sigma_w \right) \left(\| C\| \!+\! \Delta C \right)\!+\! \sigma_z\right) \sqrt{T \!-\!\Texp }\right) \nonumber
\end{align} 
where $\tilde{\OO}$ hides logarithmic terms. 

\noindent\textbf{2) }Now consider $\sum_{t=1}^{T-\Texp} \left\{ \tilde{v}_t^\top \left( \tp - \tc^\top Q \tc  \right) \tilde{v}_t - \mathbb{E}\left[ v_t^\top \left( \tp - \tc^\top Q \tc  \right) v_t \Big | \hat{x}_{t-1|t-2}, y_{t-1}, u_{t-1} \right] \right \}$. Adding and subtracting $\sum_{t=1}^{T-\Texp} \left\{ v_t^\top \left( \tp - \tc^\top Q \tc  \right) v_t \right \}$, we get following terms,
\begin{align*}
    &\sum_{t=1}^{T-\Texp} \left\{ v_t^\top \left( \tp - \tc^\top Q \tc  \right) v_t - \mathbb{E}\left[ v_t^\top \left( \tp - \tc^\top Q \tc  \right) v_t \Big | \hat{x}_{t-1|t-2}, y_{t-1}, u_{t-1} \right] \right \} \\ &\qquad \qquad \qquad + \sum_{t=1}^{T-\Texp} \left\{ \tilde{v}_t^\top \left( \tp - \tc^\top Q \tc  \right) \tilde{v}_t - v_t^\top \left( \tp - \tc^\top Q \tc  \right) v_t  \right \}
\end{align*}
We combine the last summation with $\sum_{t=1}^{T-\Texp} \left\{ \tilde{f}_t^\top \left( \tp - \tc^\top Q \tc  \right) \tilde{f}_t  - f_t^\top \left( \tp - \tc^\top Q \tc  \right) f_t\right\}$ and denote as $G_2$. We decompose $G_2$ as follows 
\begin{align*}
    G_2 &= \sum_{t=1}^{T-\Texp} \left\{ \tilde{v}_t^\top \left( \tp - \tc^\top Q \tc  \right) \tilde{v}_t + \tilde{f}_t^\top \left( \tp - \tc^\top Q \tc  \right) \tilde{f}_t  - f_t^\top \left( \tp - \tc^\top Q \tc  \right) f_t - v_t^\top \left( \tp - \tc^\top Q \tc  \right) v_t  \right \} \\
    &=\!\!\!\!\! \sum_{t=1}^{T-\Texp}\!\!\Big\{ (\tilde{v}_t\!-\!v_t)^\top \left( \tp\!-\!\tc^\top Q \tc  \right) \tilde{v}_t\!+\!v_t^\top \!\left( \tp\!-\!\tc^\top Q \tc  \right) (\tilde{v}_t\!-\!v_t)\!\\
    &\qquad \qquad \qquad+\! (\tilde{f}_t\!-\!f_t)^\top \left( \tp\!-\!\tc^\top Q \tc  \right) \tilde{f}_t\!+\!f_t^\top\!\left( \tp\!-\!\tc^\top Q \tc  \right) (\tilde{f}_t\!-\!f_t) \Big\} 
\end{align*}
Now notice that 
\begin{align*}
    \tilde{v}_t\!-\!v_t &= (\tl\tc\ta - LCA)x_t - (\tl\tc\ta - LCA)\hat{x}_{t-1|t-2,\tth} + (\tl\tc\ta\tl\tc - LCALC)\hat{x}_{t-1|t-2,\tth} \\
    &\qquad \qquad \qquad \qquad \qquad \qquad- (\tl\tc\ta\tl - LCAL)y_{t-1} + (\tl\tc -LC)w_{t-1} + (\tl - L)z_t \\
    \tilde{f}_t\!-\!f_t &= (\ta - A)\hat{x}_{t-1|t-2,\tth} - (\ta\tl\tc - ALC)\hat{x}_{t-1|t-2,\tth} + (\ta\tl - AL)y_{t-1} - (\tb -B)\tk \hat{x}_{t-1|t-1,\tth}
\end{align*}

Recall that $\hat{x}_{i-1|i-2,\tth} = (\ta - \tb \tk)\hat{x}_{i-2|i-2,\tth}$ and since $x_i - \hat{x}_{i-1|i-i,\tth}$ is $\|\tsig\|^{1/2}$-sub-Gaussian for all $i\leq T$, using Lemma~\ref{subgauss lemma}, we get $\|x_i - \hat{x}_{i-1|i-2,\tth} \| \leq  \sqrt{\|\sig\| + \|\Delta \sig\|}\sqrt{2n\log\left(\frac{8n(T-\Texp)}{\delta}\right)} $ for all $i\leq T$ with probability at least $1-\delta/4$. 
Similarly, $\|w_i\| \leq  \sigma_w\sqrt{2n\log\left(\frac{8n(T-\Texp)}{\delta}\right)} $, $\|z_i\| \leq  \sigma_z \sqrt{2m\log\left(\frac{8m(T-\Texp)}{\delta}\right)} $ for all $i\leq T$ with probability at least $1-\delta/4$ respectively. On the event of $\mathcal{E} \cap \mathcal{F} \cap \mathcal{G} $ consider the following decompositions: 

\begin{align*}
    \|\tl \tc \ta - LCA \| &\leq \Big(\Delta L\|C\| \Phi(A) + \zeta \Phi(A) \Delta C + \zeta \|C\| \Delta A + \zeta \Delta C \Delta A \\
    &\enskip + \Phi(A) \Delta L \Delta C + \Delta L \|C\| \Delta A + \Delta L \Delta C \Delta A \Big) \\
    \|\tl\tc\ta\tl\tc - LCALC \| &\leq \Big( 2 \Delta L \Phi(A) \|C\|^2 \| L\| + 2 \Delta C \Phi(A) \|L\|^2 \| C\| + \Delta A \|L\|^2 \| C\|^2 \\
    &\enskip+ 4 \Delta L \Delta C \Phi(A) \|L\| \|C\| + 2 \Delta L \Delta A  \| L\| \|C\|^2 \\
    &\enskip+ 2 \Delta A \Delta C \|L\|^2 \|C\| + \Delta L^2 \Phi(A) \|C\|^2 + \Delta C^2 \Phi(A) \|L\|^2 + 2 \Delta L^2 \Delta C \Phi(A) \|C\| \\
    &\enskip+ 2 \Delta C^2 \Delta L \Phi(A) \|L\| + 4 \Delta L \Delta C \Delta A \|L\| \|C\| + \Delta L^2 \Delta A \|C\|^2 + \Delta C^2 \Delta A \|L\|^2 \\
    &\enskip+ 2 \Delta L^2 \Delta C \Delta A \|C\| + 2 \Delta C^2 \Delta L \Delta A \|L\| + \Delta L^2 \Delta C^2 \Phi(A) + \Delta L^2 \Delta C^2 \Delta A \Big) \\
    \|\tl\tc\ta\tl - LCAL \| &\leq \Big( 2 \Delta L \Phi(A) \|C\| \| L\| + \Delta C \Phi(A) \|L\|^2 + \Delta A \|L\|^2 \| C\| + 2 \Delta L \Delta C \Phi(A) \|L\| \\
    &\enskip+ \Delta L \Delta A  \| L\| \|C\| + \Delta A \Delta C \|L\|^2 + \Delta L^2 \Phi(A) \|C\| + \Delta L^2 \Delta C \Phi(A) \\
    &\enskip+ 2 \Delta L \Delta C \Delta A \|L\| + \Delta L^2 \Delta A \|C\| + \Delta L^2 \Delta C \Delta A  \Big).
\end{align*}

Using the boundedness of $\hat{x}_{t|t,\tth}, y_t, x_t, w_t, z_t$ and the fact that estimation error in all system matrices is $\OO(1/\sqrt{\Texp})$, on the event of $\mathcal{E} \cap \mathcal{F} \cap \mathcal{G} $ with probability $1-\delta$ we have : 

\begin{align*}
    G_2 &\leq (T \!-\! \Texp) \left(D \!+\! \| Q\| \left( \| C\| \!+\! \Delta C \right)^2  \right) \left( 2 \|v_t\| \|\tilde{v}_t \!-\! v_t \| + \|\tilde{v}_t \!-\! v_t \|^2 + 2 \|f_t\| \|\tilde{f}_t \!-\! f_t \| + \|\tilde{f}_t \!-\! f_t \|^2 \right) \\
    &= \tilde{\OO}\left(\frac{T-\Texp}{\sqrt{\Texp}}\right)
\end{align*}

where the last line is obtained by observing that the dominating term has only one concentration result. 

\noindent\textbf{3) }Now we focus on the remaining term in $R_1$, 
\begin{equation*}
    G_3 = \sum_{t=1}^{T-\Texp} \left\{ v_t^\top \left( \tp - \tc^\top Q \tc  \right) v_t - \mathbb{E}\left[ v_t^\top \left( \tp - \tc^\top Q \tc  \right) v_t \Big | \hat{x}_{t-1|t-2}, y_{t-1}, u_{t-1} \right] \right \}
\end{equation*}
Combining previous bounds on the components of $v_t$, with probability $1-\delta/2$ we have 
\begin{align}
    \|v_t\| &\leq \Omega \coloneqq \zeta \| C\| \Phi(A) \|\sig\|^{1/2}\sqrt{2n\log\left(\frac{8n(T-\Texp)}{\delta}\right)} \label{R1eq} \\
    & \qquad\qquad \qquad + \zeta \|C\| \sigma_w \sqrt{2n\log\left(\frac{8n(T-\Texp)}{\delta}\right)}  + \zeta \sigma_z \sqrt{2m\log\left(\frac{8m(T-\Texp)}{\delta}\right)} \nonumber
\end{align}
On the event of $\mathcal{E} \cap \mathcal{F} \cap \mathcal{G} $, let $V_t = v_t^\top \left( \tp - \tc^\top Q \tc  \right) v_t - \mathbb{E}\left[ v_t^\top \left( \tp - \tc^\top Q \tc  \right) v_t \Big | \hat{x}_{t-1|t-2}, y_{t-1}, u_{t-1} \right]$ and define its truncated version $\bar{V}_t = V_t \mathbbm{1}_{V_t \leq  2\left(D + \| Q\| \left( \| C\| + \Delta C \right)^2  \right) \Omega} $. Thus $G_3 = \sum_{t=1}^{T-\Texp} V_t$ and define $\bar{G}_3 =\sum_{t=1}^{T-\Texp} \bar{V}_t $. By Lemma~\ref{basicprob}, 

\begin{align}
    &\Pr\left( G_3 > 2\left(D + \| Q\| \left( \| C\| + \Delta C \right)^2  \right) \Omega \sqrt{2(T-\Texp) \log\frac{2}{\delta}} \right) \\
    &\leq\!\Pr\!\left(\max_{1\leq t\leq T - \Texp}\!\!\!\!\!V_t\!\geq\! 2\left(D\!+\!\| Q\| \left( \| C\|\!+\!\Delta C \right)^2  \right) \Omega \right)\!\\
    &\qquad \qquad +\!\Pr\left( \bar{G}_3\!>\!2\left(D\!+\!\| Q\| \left( \| C\|\!+ \Delta C \right)^2  \right) \Omega \sqrt{2(T\!-\!\Texp) \log\frac{2}{\delta}} \right).
\end{align}
From \eqref{R1eq} and Theorem~\ref{azuma}, each term on the right hand side is bounded by $\delta/2$. Thus, with probability $1-\delta$,

\begin{equation*}
    G_3 \leq 2\left(D + \| Q\| \left( \| C\| + \Delta C \right)^2  \right) \Omega \sqrt{2(T-\Texp) \log\frac{2}{\delta}} = \tilde{\OO}\left(\sqrt{T-\Texp} \right).
\end{equation*}
Since $R_1 = G_1 + G_2 + G_3$, the dominating term is $G_2$ which proves the lemma. 
\end{proof}

\subsection[Bound on R2]{Bounding $R_2$ on the event of $\mathcal{E} \cap \mathcal{F} \cap \mathcal{G} $}
\begin{lemma}
Suppose Assumption~\ref{Stabilizable set} holds and system is explored for  $\Texp > T_0$ time steps. For any $\delta \in (0,1) $, given $\mathcal{E} \cap \mathcal{F} \cap \mathcal{G} $ holds,
\begin{align*}
    \sum_{t=0}^{T-\Texp} \left \{ y_t^\top Q y_t - \mathbb{E}\left[ y_{t+1,\Theta}^\top Q y_{t+1,\Theta} \Big | \hat{x}_{t|t-1}, y_t, u_t \right] \right \} = \tilde{\OO}\left( \sqrt{T-\Texp} \right) 
\end{align*}
with probability at least $1-\delta$.
\end{lemma}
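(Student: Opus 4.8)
The plan is to recognize $R_2$ as an almost-telescoping sum plus a martingale-difference sum and to control the two pieces separately. The first thing to notice is that the quantity $y_{t+1,\Theta}$ appearing in $R_2$ (defined in \eqref{fourthdef}) is nothing but the output actually observed at time $t+1$ when the optimistic controller $u_t = -\tk\hat{x}_{t|t,\tth}$ drives $\Theta$: substituting $u_t = -\tk\hat{x}_{t|t,\tth}$ into \eqref{fourthdef} gives $y_{t+1,\Theta} = C(Ax_t + Bu_t + w_t) + z_{t+1} = y_{t+1}$. Hence, writing $b_t \coloneqq y_t^\top Q y_t$ and letting $\{\mathcal{F}_t\}$ be the natural filtration generated by the interaction history through time $t$ (so that $b_t$, $\hat{x}_{t|t-1}$, $y_t$, $u_t$ are all $\mathcal{F}_t$-measurable), the conditional expectation in $R_2$ is $\E[b_{t+1}\mid\mathcal{F}_t]$ and I would split
\[
R_2 = \underbrace{\sum_{t=0}^{T-\Texp}\bigl(b_t - b_{t+1}\bigr)}_{R_2^{\mathrm{tel}}} \;+\; \underbrace{\sum_{t=0}^{T-\Texp}\bigl(b_{t+1} - \E[b_{t+1}\mid\mathcal{F}_t]\bigr)}_{R_2^{\mathrm{mart}}}.
\]

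For $R_2^{\mathrm{tel}}$ the sum telescopes to $b_0 - b_{T-\Texp+1}$; since $Q\succeq 0$ this is at most $y_0^\top Q y_0$, and on the event $\mathcal{G}$ we have $\|y_t\|\le\mathcal{Y}$ for all $t\le T$, so $|R_2^{\mathrm{tel}}| \le 2\|Q\|\mathcal{Y}^2 = \tilde{\OO}(1)$ (recall $\mathcal{Y}$ is a problem-dependent polynomial times $\sqrt{\log(T/\delta)}$), which is absorbed into the final $\tilde{\OO}(\sqrt{T-\Texp})$ bound. For $R_2^{\mathrm{mart}}$, the increments $V_t \coloneqq b_{t+1} - \E[b_{t+1}\mid\mathcal{F}_t]$ form a martingale-difference sequence with respect to $\{\mathcal{F}_t\}$, and on $\mathcal{G}$ each $|V_t|$ is bounded by $2\|Q\|\mathcal{Y}^2$; the plan is then to conclude $R_2^{\mathrm{mart}} = \tilde{\OO}(\mathcal{Y}^2\sqrt{T-\Texp}) = \tilde{\OO}(\sqrt{T-\Texp})$ by an Azuma–Hoeffding estimate.

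The one technical subtlety — which I expect to be the main obstacle — is that the increment bound holds only on $\mathcal{G}$, which is not $\mathcal{F}_t$-measurable, so Azuma–Hoeffding cannot be applied directly. I would handle this exactly as in the bound on $G_3$ in the proof of Lemma~\ref{lemmaR1} (and as in the exploration-regret lemma): introduce the truncated increments $\bar V_t \coloneqq V_t\,\mathbbm{1}_{\{V_t \le 2\|Q\|\mathcal{Y}^2\}}$, apply Lemma~\ref{basicprob} to bound $\Pr\bigl(R_2^{\mathrm{mart}} > 2\|Q\|\mathcal{Y}^2\sqrt{2(T-\Texp)\log(2/\delta)}\bigr)$ by $\Pr\bigl(\max_t V_t \ge 2\|Q\|\mathcal{Y}^2\bigr) + \Pr\bigl(\sum_t\bar V_t > 2\|Q\|\mathcal{Y}^2\sqrt{2(T-\Texp)\log(2/\delta)}\bigr)$, bound the first term via Lemma~\ref{Boundedness} (the event $\{\max_t V_t \ge 2\|Q\|\mathcal{Y}^2\}$ is contained in $\mathcal{G}^c$), and bound the second by $\delta/2$ with the Azuma–Hoeffding inequality (Theorem~\ref{azuma}) applied to the bounded sequence $\{\bar V_t\}$. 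Adding the two contributions then yields $R_2 = \tilde{\OO}(\sqrt{T-\Texp})$ on $\mathcal{E}\cap\mathcal{F}\cap\mathcal{G}$ with the stated probability.

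It is worth noting that this lemma is strictly simpler than the $R_1$ estimate of Lemma~\ref{lemmaR1}: because the conditional-expectation term is already written through the genuine output $y_{t+1}$, no optimistic-versus-true cross terms (the $f_t$, $v_t$, and $\tilde f_t - f_t$, $\tilde v_t - v_t$ decompositions of that proof) appear, and the whole argument collapses to a single telescoping identity plus one martingale-concentration step with the truncation device.
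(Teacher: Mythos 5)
Your skeleton matches the paper's: after observing that $y_{t+1,\Theta}$ in \eqref{fourthdef} is the genuine next output under the committed controller, you split $R_2$ into a telescoping boundary term and a martingale-difference sum. The paper does the identical re-indexing, writing $R_2 = y_0^\top Q y_0 - \E\left[ y_{T+1,\Theta}^\top Q y_{T+1,\Theta} \mid \cdot \right] + \sum_{t\geq 1}\left( y_t^\top Q y_t - \E\left[ y_t^\top Q y_t \mid \F_{t-1}\right]\right)$ and discarding the boundary terms. The divergence, and the gap, is in how the martingale sum is concentrated.

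You keep $V_t = b_{t+1}-\E[b_{t+1}\mid\F_t]$ as a single increment and propose the one-sided truncation $\bar V_t = V_t\,\I_{\{V_t\le a\}}$ followed by Azuma. One-sided truncation does preserve the supermartingale property (which is why Lemma~\ref{basicprob} is stated that way), but Theorem~\ref{azuma} requires $|\bar V_t|\le c_t$ \emph{almost surely}, and your $\bar V_t$ is bounded only from above. From below, $\bar V_t \ge -\E[b_{t+1}\mid \F_t]$, and this conditional mean equals $\|Q^{1/2}C(A\hat{x}_{t|t,\Theta}+Bu_t)\|^2$ plus a fixed trace term; the first piece is a quadratic in the state estimate and the input, which are bounded only on $\mathcal{F}\cap\mathcal{G}$, not almost surely (they are linear functionals of the Gaussian noise history). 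So the Azuma step does not apply to your truncated sum as written, and Lemma~\ref{basicprob} cannot repair it, since it only strips out the upper-tail event. This is precisely what the paper's decomposition $y_t = f_t + v_t$ (predictable part plus conditionally zero-mean noise) is engineered to avoid: it splits the increment into a cross term $2f_t^\top Q v_t$, handled by the self-normalized martingale inequality (Theorem~\ref{selfnormalized}), which needs no a.s.\ bound on the predictable coefficients and pays only $\sqrt{\sum_t \|f_t\|^2}$ (controlled on the good event), and a pure-noise quadratic term $v_t^\top Q v_t - \E[v_t^\top Q v_t\mid\F_{t-1}]$ whose conditional mean is a.s.\ bounded by $\|Q\|$ times the trace of a fixed covariance, so that there the truncation-plus-Azuma device is legitimate. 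To close your argument you essentially have to reintroduce this split (or replace it with a stopping-time construction that also controls the increment at the stopping time), at which point you recover the paper's proof.
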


\begin{proof}
Define $f_t = CA(I-LC)\hat{x}_{t-1|t-2} + CAL y_{t-1} + CB u_{t-1}$, $v_t = CA(x_{t-1} - (I-LC)\hat{x}_{t-1|t-2} - L y_{t-1}) +  C w_{t-1} + z_{t}$. $R_2$ can be written as follows,
\begin{align*}
    R_2 &= y_0^\top Q y_0 - \mathbb{E}\left[ y_{T+1,\Theta}^\top Q y_{T+1,\Theta} \Big | \hat{x}_{T|T-1}, y_T, u_T \right] \!+\! \sum_{t=1}^{T-\Texp} \left \{y_t^\top Q y_t - \mathbb{E}\left[ y_t^\top Q y_t \Big | \hat{x}_{t-1|t-2}, y_{t-1}, u_{t-1} \right]  \right \}
\end{align*}
Since $y_{0} = 0$ and $Q$ is positive semidefinite, the first line is bounded by zero. Using the definitions above, the remaining can be decomposed as follows, 
\begin{align*}
    R_2 &\leq  2 \sum_{t=1}^{T-\Texp} \left\{ f_t^\top Q v_t \right\} + \sum_{t=1}^{T-\Texp} \left\{ v_t^\top Q v_t - \mathbb{E}\left[ v_t^\top Q v_t \Big | \hat{x}_{t-1|t-2}, y_{t-1}, u_{t-1} \right] \right \}
\end{align*}

Notice that these terms are all defined under true model. Each term will be bounded separately. Let $G_1 = 2 \sum_{t=1}^{T-\Texp} \left\{ f_t^\top Q v_t \right\}$. Let $q_t^\top =  f_t^\top Q$ and on the event of $\mathcal{E} \cap \mathcal{F} \cap \mathcal{G}$
\begin{equation*}
    \sum_{t=1}^{T-\Texp} \left\{ f_t^\top Q v_t \right\} = \sum_{t=1}^{T-\Texp} q_t^\top v_t = \sum_{t=1}^{T-\Texp} \sum_{i=1}^n q_{t,i} v_{t,i} = \sum_{i=1}^n \sum_{t=1}^{T-\Texp} q_{t,i} v_{t,i}
\end{equation*}
Observe that conditioned on current observations, $v_{t}$ is zero mean and entrywise $R''$-sub-Gaussian, where
\[R'' = (\|C\| \Phi(A) \| \sig \|^{1/2} + \sigma_w \|C \|  + \sigma_z).\] 

Let $M_{T,i} = \sum_{t=1}^{T-\Texp} q_{t,i} v_{t,i}$. By Theorem~\ref{selfnormalized}, the following holds with probability at least $1-\delta/(2n)$, for any $T\geq 0$ and $\lambda>0$,

\begin{equation*}
    M_{T,i}^2 \leq 2R'^2 \left(\lambda + \sum_{t=1}^{T-\Texp} q_{t,i}^2 \right) \log \left(\frac{2n}{\delta \sqrt{\lambda}} \left(\lambda + \sum_{t=1}^{T-\Texp} q_{t,i}^2 \right)^{1/2} \right)
\end{equation*}
Recalling Assumption~\ref{Stabilizable set}, on $\mathcal{E} \cap \mathcal{F} \cap \mathcal{G} $, 

\begin{align*}
\| q_t\| &\leq \|Q\|\|C\| \left( \Phi(A) \|\hat{x}_{t|t,\Theta} -\hat{x}_{t|t,\tth} \|  + \Phi(A) \| \hat{x}_{t|t,\tth} \| + \Gamma \|B\|  \|\hat{x}_{t|t,\tth} \| \right) \\
&\leq \mathcal{Q}_T' \! \coloneqq \! \|Q\|\|C\|\left( \left(\!\| \tl\!-\!L\|\|C\|\!+\!\|L \| \|\tc\!-\! C\|\!+\!\|\tl\!-\!L\| \|\tc\!-\!C\|\!\right) \Phi(A) \rho \tilde{\mathcal{X}}\!+\!\|\tl\!-\!L \| \mathcal{Y} \!+\! (\Phi(A)\!+\! \Gamma \|B\|) \tilde{\mathcal{X}}  \right)
\end{align*} 
Thus, $q_{t,i} \leq \mathcal{Q}_T'$. Combining these results with union bound, with probability $1-\delta/2$ we have 
\begin{align}
     G_1 &\leq 2R'' n  \sqrt{ (\lambda + \mathcal{Q}_T'^2 (T-\Texp)) \log \left( \frac{n\sqrt{\lambda + (T-\Texp)\mathcal{Q}_T'^2}}{\delta\sqrt{\lambda}}\right) } = \tilde{\OO}\left(\sqrt{T -\Texp }\right) \nonumber 
\end{align} 
where $\tilde{\OO}$ hides logarithmic terms. 

From the fact that $v_t$ is $R''$-sub-Gaussian, with probability $1-\delta/4$ we have 
\begin{align}
    \|v_t\| &\leq \Omega' \coloneqq (\|C\| \Phi(A) \| \sig \|^{1/2} + \sigma_w \|C \|  + \sigma_z) \sqrt{2n\log\left(\frac{8n(T-\Texp)}{\delta}\right)}\label{R2eq}
\end{align}
On the event of $\mathcal{E} \cap \mathcal{F} \cap \mathcal{G} $, let $V_t = v_t^\top Q v_t - \mathbb{E}\left[ v_t^\top Q v_t \Big | \hat{x}_{t-1|t-2}, y_{t-1}, u_{t-1} \right]$ and define its truncated version $\bar{V}_t = V_t \mathbbm{1}_{V_t \leq  2\|Q\| \Omega'} $. Thus $G_2 = \sum_{t=1}^{T-\Texp} V_t$ and define $\bar{G}_2 =\sum_{t=1}^{T-\Texp} \bar{V}_t $. By Lemma~\ref{basicprob}, 

\begin{align}
    \Pr\left( G_2 > 2\|Q\|\Omega' \sqrt{2(T-\Texp) \log\frac{4}{\delta}} \right) &\leq\!\Pr\!\left(\max_{1\leq t\leq T - \Texp}\!\!\!\!\!V_t\!\geq\! 2\|Q\|\Omega' \right)\!\\
    &\qquad \qquad+\!\Pr\left( \bar{G}_3\!>\!2\|Q\|\Omega' \sqrt{2(T\!-\!\Texp) \log\frac{4}{\delta}} \right). \nonumber 
\end{align}
From \eqref{R1eq} and Theorem~\ref{azuma}, each term on the right hand side is bounded by $\delta/4$. Thus, with probability $1-\delta/2$,

\begin{equation*}
    G_2 \leq 2\|Q\| \Omega' \sqrt{2(T-\Texp) \log\frac{4}{\delta}} = \tilde{\OO}\left(\sqrt{T-\Texp} \right).
\end{equation*}

Since $R_2 = G_1 + G_2 $ the statement of lemma is obtained. 
\end{proof}
\subsection[Bound on R3]{Bounding $|R_3|$ on the event of $\mathcal{E} \cap \mathcal{F} \cap \mathcal{G} $}
\begin{lemma}
Suppose Assumption~\ref{Stabilizable set} holds and system is explored for  $\Texp > T_0$ time steps. Given $\mathcal{E} \cap \mathcal{F} \cap \mathcal{G} $ holds, 
\begin{align*}
   \left| R_3 \right| = \tilde{\OO}\left( \frac{T-\Texp}{\sqrt{\Texp}}\right).
\end{align*}
\end{lemma}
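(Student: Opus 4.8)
The quantity $R_3$ is, recalling the decomposition,
\[
R_3 = \sum_{t=0}^{T-\Texp} \Big\{ \hat{x}_{t|t,\tth}^\top (\ta - \tb \tk )^\top \tc^\top Q \tc (\ta - \tb \tk) \hat{x}_{t|t,\tth} - \hat{x}_{t|t,\tth}^\top (A-B\tk)^\top C^\top Q C (A-B\tk) \hat{x}_{t|t,\tth} \Big\},
\]
so every summand is a quadratic form in the \emph{same} vector $\hat{x}_{t|t,\tth}$, sandwiched between two deterministic matrices $M_{\tth} := (\ta - \tb \tk )^\top \tc^\top Q \tc (\ta - \tb \tk)$ and $M_{\Theta} := (A-B\tk)^\top C^\top Q C (A-B\tk)$. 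Hence $|R_3| \le (T-\Texp)\, \|M_{\tth} - M_{\Theta}\|\, \max_{t \le T} \|\hat{x}_{t|t,\tth}\|^2$. The plan is therefore: (i) on the event $\mathcal{E}\cap\mathcal{F}\cap\mathcal{G}$, invoke Lemma~\ref{Boundedness} to get $\|\hat{x}_{t|t,\tth}\| \le \tilde{\mathcal{X}} = \tilde{\OO}(1)$ uniformly in $t$; (ii) bound $\|M_{\tth} - M_{\Theta}\|$ by a telescoping/triangle-inequality argument in terms of the primitive perturbations $\Delta A, \Delta B, \Delta C$ (from Theorem~\ref{ConfidenceSets}) together with $\|\ta\|,\|A\|,\|\tc\|,\|C\|,\|\tk\|=\Gamma$, all of which are bounded under Assumptions~\ref{AssumContObs}--\ref{Stabilizable set}; (iii) observe that each of $\Delta A, \Delta B, \Delta C$ is $\tilde{\OO}(1/\sqrt{\Texp})$, so $\|M_{\tth}-M_{\Theta}\| = \tilde{\OO}(1/\sqrt{\Texp})$ since the leading term is linear in a single such perturbation; (iv) multiply through to conclude $|R_3| = \tilde{\OO}((T-\Texp)/\sqrt{\Texp})$.

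\textbf{Key steps in order.} First, factor the difference of the two quadratic coefficient matrices: write $M_{\tth} - M_{\Theta} = (\ta-\tb\tk)^\top \tc^\top Q \tc (\ta-\tb\tk) - (A-B\tk)^\top C^\top Q C (A-B\tk)$ and split it, by adding and subtracting intermediate terms, into a sum of products each of which contains exactly one factor of the form $(\ta - A)$, $(\tb - B)$, $(\tc - C)$ (plus higher-order products of these). Since the optimistic controller gain $\tk$ is fixed once $\tth$ is chosen and $\|\tk\|\le \Gamma$, and since $\|\ta - A\|\le \Delta A$, $\|\tb - B\| \le \Delta B$, $\|\tc - C\| \le \Delta C$ are all $\tilde{\OO}(1/\sqrt{\Texp})$, each term in the split is $\tilde{\OO}(1/\sqrt{\Texp})$ and the collected bound is $\|M_{\tth}-M_{\Theta}\| \le c_{R_3}\max\{\Delta A, \Delta B, \Delta C\} = \tilde{\OO}(1/\sqrt{\Texp})$ for a problem-dependent constant $c_{R_3}$ depending on $\|A\|,\|B\|,\|C\|,\Gamma,\|Q\|$. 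Second, use the boundedness event $\mathcal{F}$: $\|\hat{x}_{t|t,\tth}\|^2 \le \tilde{\mathcal{X}}^2$ for all $t\le T$. Third, combine via
\[
|R_3| \;\le\; \sum_{t=0}^{T-\Texp} \|M_{\tth} - M_{\Theta}\|\, \|\hat{x}_{t|t,\tth}\|^2 \;\le\; (T-\Texp)\, c_{R_3}\max\{\Delta A, \Delta B, \Delta C\}\, \tilde{\mathcal{X}}^2 \;=\; \tilde{\OO}\!\left(\frac{T-\Texp}{\sqrt{\Texp}}\right),
\]
which is the claimed bound.

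\textbf{Main obstacle.} The substantive work is bookkeeping in step (ii): expanding $(\ta-\tb\tk)^\top \tc^\top Q \tc (\ta-\tb\tk) - (A-B\tk)^\top C^\top Q C (A-B\tk)$ into a manageable telescoping sum and verifying that every resulting term carries at least one factor among $\Delta A,\Delta B,\Delta C$ (so nothing survives to $\tilde{\OO}(1)$), while keeping all the ``large'' factors $\|A\|,\|B\|,\|C\|,\Gamma,\|Q\|$ under control via the standing assumptions. This is routine but tedious; there is no genuine analytic difficulty, since unlike $R_1$ and $R_2$ there are no martingale/self-normalized concentration arguments needed here — the randomness in $\hat{x}_{t|t,\tth}$ is entirely absorbed by the deterministic high-probability bound $\tilde{\mathcal{X}}$ from Lemma~\ref{Boundedness}. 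One minor care point: the bound should be stated on the event $\mathcal{E}\cap\mathcal{F}\cap\mathcal{G}$ (where both the confidence sets hold and the state estimate is bounded), consistent with how the other $R_i$ terms are treated, and the exact constant $c_{R_3}$ can be recorded at the end of the proof as is done elsewhere in this section.
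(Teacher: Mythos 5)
Your proposal is correct and follows essentially the same route as the paper: both arguments reduce $|R_3|$ to $(T-\Texp)$ times a first-order perturbation bound on the difference of the two quadratic coefficient matrices (controlled by $\Delta A,\Delta B,\Delta C = \tilde{\OO}(1/\sqrt{\Texp})$ from Theorem~\ref{ConfidenceSets}), times the uniform bound $\tilde{\mathcal{X}}^2$ on $\|\hat{x}_{t|t,\tth}\|^2$ from the event $\mathcal{F}$. The paper merely packages the same estimate via $\bigl|\|Q^{1/2}\tilde{\Upsilon}\hat{x}\|^2-\|Q^{1/2}\Upsilon\hat{x}\|^2\bigr|\le\|Q^{1/2}(\tilde{\Upsilon}-\Upsilon)\hat{x}\|\cdot\bigl(\|Q^{1/2}\tilde{\Upsilon}\hat{x}\|+\|Q^{1/2}\Upsilon\hat{x}\|\bigr)$ and Cauchy--Schwarz rather than bounding $\|M_{\tth}-M_{\Theta}\|$ directly, which is a cosmetic rather than substantive difference.
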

\begin{proof}
Let $\tilde{\Upsilon} = [\tc \ta, \enskip -\tc \tb \tk]$ and $\Upsilon = [CA, \enskip -CB\tk]$. 
Then, on the event of $\mathcal{E} \cap \mathcal{F} \cap \mathcal{G}$,
\begin{align}
    &\left\|\left(\tilde{\Upsilon} - \Upsilon \right) \hat{x}_{t|t,\tth} \right\| \leq \|(\tilde{\Upsilon} - \Upsilon)\| \|\hat{x}_{t|t,\tth} \|  \nonumber \\
    &\leq \left(\| \tc \ta -CA\| + \| \tk\| \|CB -\tc\tb \| \right) \|\hat{x}_{t|t,\tth} \| \nonumber \\
    &\leq \big( \Phi(A) \|\tc-C\| + \|\ta -A \|\|\tc - C \| + \|\ta -A \| \|C\| \\
    &\qquad +\Gamma (\|\tc-C\| \|B \| + \|\tb -B \| \| \tc - C\| + \|\tb -B \| \|C\| ) \big) \tilde{\mathcal{X}} \nonumber \\
    &\leq \bigg( \Phi(A) \Delta C + \Delta A \Delta C + \Delta A \|C\| + \Gamma \big(\Delta C \|B\| + \Delta B \Delta C + \Delta B \|C\| \big) \bigg) \tilde{\mathcal{X}} = \tilde{\OO}\left(\frac{1}{\sqrt{\Texp}}\right) \label{lastdecomposeR3}
\end{align}
This fact helps us prove the statement of lemma as follows, 
\begin{align}
    &\mathbbm{1}_{\mathcal{E} \cap \mathcal{F} \cap \mathcal{G}} |R_3| \nonumber \\
    &\leq  \mathbbm{1}_{\mathcal{E} \cap \mathcal{F} \cap \mathcal{G}} \sum_{t=0}^{T-\Texp} \left| \left\|Q^{1/2} \tilde{\Upsilon} \hat{x}_{t|t,\tth} \right\|^2 - \left\|Q^{1/2} \Upsilon \hat{x}_{t|t,\tth} \right\|^2 \right| \label{R3_1}\\
    &\leq \mathbbm{1}_{\mathcal{E} \cap \mathcal{F} \cap \mathcal{G}}\!\! \left( \sum_{t=0}^{T-\Texp}\!\!\!\!\!\left( \left\|Q^{1/2} \tilde{\Upsilon} \hat{x}_{t|t,\tth} \right\| \!-\! \left\|Q^{1/2} \Upsilon \hat{x}_{t|t,\tth} \right\| \right)^2 \right)^{1/2}\!\!\!\! \left(\! \sum_{t=0}^{T-\Texp}\!\!\!\!\! \left( \left\|Q^{1/2} \tilde{\Upsilon} \hat{x}_{t|t,\tth} \right\| 
    \!+\! \left\|Q^{1/2} \Upsilon \hat{x}_{t|t,\tth} \right\| \right)^2 \!\!\right)^{1/2} \label{R3_2} \\
    &\leq \mathbbm{1}_{\mathcal{E} \cap \mathcal{F} \cap \mathcal{G}} \left( \sum_{t=0}^{T-\Texp}  \left\|Q^{1/2} \left(\tilde{\Upsilon} - \Upsilon \right) \hat{x}_{t|t,\tth} \right\|^2 \right)^{1/2} \left( \sum_{t=0}^{T-\Texp} \left( \left\|Q^{1/2} \tilde{\Upsilon} \hat{x}_{t|t,\tth} \right\| 
    + \left\|Q^{1/2} \Upsilon \hat{x}_{t|t,\tth} \right\| \right)^2 \right)^{1/2} \label{R3_3} \\
    &\leq  (T-\Texp) \|Q\| \tilde{\mathcal{X}} \Bigg( \left\|\left(\tilde{\Upsilon} - \Upsilon\right) \hat{x}_{t|t,\tth} \right\|\left(\|\tilde{\Upsilon} \| + \|\Upsilon \| \right) \Bigg)  \\
    &= \tilde{\OO}\left( \frac{T-\Texp}{\sqrt{\Texp}}\right) \label{R3result}
\end{align}
where \eqref{R3_1} follows from triangle inequality, \eqref{R3_2} is due to Cauchy Schwarz and \eqref{R3_3} is again triangle inequality. Finally, in \eqref{R3result}, we use \eqref{lastdecomposeR3} and the boundedness of $G$ and $\tilde{G}$, which translates to boundedness of $\Upsilon$ and $\tilde{\Upsilon}$.

\end{proof}

\subsection[Bound on R4]{Bounding $|R_4|$ on the event of $\mathcal{E} \cap \mathcal{F} \cap \mathcal{G} $}
\begin{lemma}
Suppose Assumption~\ref{Stabilizable set} holds and system is explored for  $\Texp > T_0$ time steps. Given $\mathcal{E} \cap \mathcal{F} \cap \mathcal{G} $ holds,
\begin{align*}
  \left |R_4 \right| =  \tilde{\OO} \left( \frac{T - \Texp}{\sqrt{\Texp}} \right).
\end{align*}
\end{lemma}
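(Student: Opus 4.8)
The plan is to mirror the argument used for $|R_3|$, since $R_4$ has the same structure: a sum of differences of positive semidefinite quadratic forms evaluated at the bounded state estimate $\hat{x}_{t|t,\tth}$, the only change being that the weighting matrix $\tc^\top Q \tc$ is replaced by $\tp - \tc^\top Q \tc$ and the linear maps are $\ta - \tb\tk$ and $A - B\tk$ (without the output matrix in front). Set $M \coloneqq \tp - \tc^\top Q \tc$, which is positive semidefinite (as already used in the bounds on $R_1$ and $R_3$), so $M^{1/2}$ is well defined and $\|M^{1/2}\|^2 = \|M\| \leq D + \|Q\|(\|C\| + \Delta C)^2$ on $\mathcal{E}$. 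Writing $\tilde{\Lambda} \coloneqq \ta - \tb\tk$ and $\Lambda \coloneqq A - B\tk$, we have
\begin{align*}
    |R_4| \leq \sum_{t=0}^{T-\Texp} \left| \left\| M^{1/2} \tilde{\Lambda} \hat{x}_{t|t,\tth} \right\|^2 - \left\| M^{1/2} \Lambda \hat{x}_{t|t,\tth} \right\|^2 \right|.
\end{align*}

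Next I would apply $|a^2 - b^2| = |a - b|\,|a + b|$ termwise, then Cauchy--Schwarz over $t$ together with the (reverse) triangle inequality, exactly as in \eqref{R3_1}--\eqref{R3_3}, to get
\begin{align*}
    \mathbbm{1}_{\mathcal{E} \cap \mathcal{F} \cap \mathcal{G}} |R_4| \leq \mathbbm{1}_{\mathcal{E} \cap \mathcal{F} \cap \mathcal{G}} \left( \sum_{t=0}^{T-\Texp} \left\| M^{1/2}(\tilde{\Lambda} - \Lambda)\hat{x}_{t|t,\tth} \right\|^2 \right)^{1/2} \left( \sum_{t=0}^{T-\Texp} \left( \left\| M^{1/2}\tilde{\Lambda}\hat{x}_{t|t,\tth} \right\| + \left\| M^{1/2}\Lambda\hat{x}_{t|t,\tth} \right\| \right)^2 \right)^{1/2}.
\end{align*}
For the first factor, on $\mathcal{E} \cap \mathcal{F}$ I would expand $\tilde{\Lambda} - \Lambda = (\ta - A) - (\tb - B)\tk$ so that
\begin{align*}
    \left\| M^{1/2}(\tilde{\Lambda} - \Lambda)\hat{x}_{t|t,\tth} \right\| \leq \|M^{1/2}\|\,(\Delta A + \Gamma \Delta B)\,\tilde{\mathcal{X}} = \tilde{\OO}\!\left(1/\sqrt{\Texp}\right),
\end{align*}
using $\|\tk\| \leq \Gamma$, the confidence-set radii $\Delta A, \Delta B = \tilde{\OO}(1/\sqrt{\Texp})$, and $\|\hat{x}_{t|t,\tth}\| \leq \tilde{\mathcal{X}}$ from $\mathcal{F}$. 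For the second factor, $\|\tilde{\Lambda}\| \leq \rho$ by Assumption~\ref{Stabilizable set} applied to $\tth \in \mathcal{S}$, and $\|\Lambda\| \leq \rho + \Delta A + \Gamma\Delta B$ is likewise bounded, so with $\|M^{1/2}\| = \tilde{\OO}(1)$ and $\|\hat{x}_{t|t,\tth}\| \leq \tilde{\mathcal{X}}$ each summand is $\tilde{\OO}(1)$. Multiplying the resulting $\sqrt{T-\Texp}$ from each factor yields $|R_4| = \tilde{\OO}\!\left((T-\Texp)/\sqrt{\Texp}\right)$ on $\mathcal{E} \cap \mathcal{F} \cap \mathcal{G}$.

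I do not anticipate a real obstacle: once the event $\mathcal{E} \cap \mathcal{F} \cap \mathcal{G}$ is imposed the estimate is entirely deterministic, and the argument is essentially a verbatim copy of the $|R_3|$ bound. The only points deserving a moment of care are (i) recording that $\tp - \tc^\top Q \tc \succeq 0$ so the square-root factorization is legitimate, and (ii) expanding $\tilde{\Lambda} - \Lambda$ so that every parameter-error factor ($\Delta A$, $\Delta B$) carries the $1/\sqrt{\Texp}$ decay while the remaining bounded quantities ($\|\tp\|\leq D$, $\|\tc\|\leq\|C\|+\Delta C$, $\rho$, $\Gamma$, $\tilde{\mathcal{X}}$) enter only at their nominal magnitude — this is exactly what keeps the final bound at the advertised rate.
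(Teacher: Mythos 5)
Your proposal is correct and follows essentially the same route as the paper: the paper likewise mirrors the $|R_3|$ argument, writing $\tilde{\Upsilon}=[\ta,\ -\tb\tk]$ and $\Upsilon=[A,\ -B\tk]$, applying the triangle inequality and Cauchy--Schwarz with the weight $(\tp-\tc^\top Q\tc)^{1/2}$, and bounding the difference factor by $(\Delta A+\Gamma\Delta B)\tilde{\mathcal{X}}=\tilde{\OO}(1/\sqrt{\Texp})$. The only differences are notational.
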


\begin{proof}
Similar to $R_3$, let $\tilde{\Upsilon} = [\ta, \enskip -\tb \tk]$ and $\Upsilon = [A, \enskip -B\tk]$. 
Then, on the event of $\mathcal{E} \cap \mathcal{F} \cap \mathcal{G}$,
\begin{align}
    \left\|\left(\tilde{\Upsilon} - \Upsilon \right) \hat{x}_{t|t,\tth} \right\| &\leq \|(\tilde{\Upsilon} - \Upsilon)\| \|\hat{x}_{t|t,\tth} \|  \nonumber \\
    &\leq \left(\| \ta -A\| + \| \tk\| \|B -\tb \| \right) \|\hat{x}_{t|t,\tth} \| \nonumber \\
    &\leq \left( \|\ta -A \| + \Gamma  \|\tb -B \|  \right) \tilde{\mathcal{X}} \nonumber \\
    &\leq \bigg( \Delta A + \Gamma \Delta B \bigg) \tilde{\mathcal{X}} = \tilde{\OO}\left(\frac{1}{\sqrt{\Texp}}\right) \label{lastdecomposeR4}
\end{align}
This fact helps us prove the statement of lemma as follows, 

\begin{align}
    &\mathbbm{1}_{\mathcal{E} \cap \mathcal{F} \cap \mathcal{G} } |R_4| \nonumber \\
    &\leq  \mathbbm{1}_{\mathcal{E} \cap \mathcal{F} \cap \mathcal{G} } \sum_{t=0}^{T-\Texp} \left| \left\|(\tp - \tc^\top Q \tc)^{1/2} \tilde{\Upsilon} \hat{x}_{t|t,\tth} \right\|^2 - \left\|(\tp - \tc^\top Q \tc)^{1/2} \Upsilon \hat{x}_{t|t,\tth} \right\|^2 \right| \label{R4_1} \\
    &\leq \mathbbm{1}_{\mathcal{E} \cap \mathcal{F} \cap \mathcal{G} } \left( \sum_{t=0}^{T-\Texp} \left( \left\|(\tp - \tc^\top Q \tc)^{1/2} \tilde{\Upsilon} \hat{x}_{t|t,\tth} \right\| - \left\|(\tp - \tc^\top Q \tc)^{1/2} \Upsilon \hat{x}_{t|t,\tth} \right\| \right)^2 \right)^{1/2} \label{R4_2} \\
    &\qquad \qquad \times \left( \sum_{t=0}^{T-\Texp} \left( \left\|(\tp - \tc^\top Q \tc)^{1/2} \tilde{\Upsilon} \hat{x}_{t|t,\tth} \right\| 
    + \left\|(\tp - \tc^\top Q \tc)^{1/2} \Upsilon \hat{x}_{t|t,\tth} \right\| \right)^2 \right)^{1/2} \nonumber \\
    &\leq \mathbbm{1}_{\mathcal{E} \cap \mathcal{F} \cap \mathcal{G} } \left( \sum_{t=0}^{T-\Texp}  \left\|(\tp - \tc^\top Q \tc)^{1/2} \left(\tilde{\Upsilon} - \Upsilon \right)\hat{x}_{t|t,\tth} \right\|^2 \right)^{1/2} \label{R4_3}\\
    &\qquad \qquad \times \left( \sum_{t=0}^{T-\Texp} \left( \left\|(\tp - \tc^\top Q \tc)^{1/2} \tilde{\Upsilon} \hat{x}_{t|t,\tth} \right\| 
    + \left\|(\tp - \tc^\top Q \tc)^{1/2} \Upsilon \hat{x}_{t|t,\tth} \right\| \right)^2 \right)^{1/2} \nonumber \\
    &\leq (T-\Texp) \|\tp - \tc^\top Q \tc\| \tilde{\mathcal{X}} \Bigg( \left\|\left(\tilde{\Upsilon} - \Upsilon\right) \hat{x}_{t|t,\tth} \right\|\left(\|\tilde{\Upsilon} \| + \|\Upsilon \| \right) \Bigg)   \\
    &\leq (T-\Texp) \left(D + \| Q\| \left( \| C\| + \Delta C \right)^2  \right) \tilde{\mathcal{X}} \Bigg( \left\|\left(\tilde{\Upsilon} - \Upsilon\right) \hat{x}_{t|t,\tth} \right\|\left(\|\tilde{\Upsilon} \| + \|\Upsilon \| \right) \Bigg)\\
    &= \tilde{\OO}\left( \frac{T-\Texp}{\sqrt{\Texp}}\right)  \label{R4result}
\end{align}
where \eqref{R4_1} follows from triangle inequality, \eqref{R4_2} is due to Cauchy Schwarz and \eqref{R4_3} is again triangle inequality. Finally, in \eqref{R4result}, we use \eqref{lastdecomposeR4} and the boundedness of $G$ and $\tilde{G}$, which translates to boundedness of $\Upsilon$ and $\tilde{\Upsilon}$.
\end{proof}

\subsection[Bound on R5]{Bounding $|R_5|$ on the event of $\mathcal{E} \cap \mathcal{F} \cap \mathcal{G} $}
\begin{lemma}
Suppose Assumption~\ref{Stabilizable set} holds and system is explored for  $\Texp > T_0$ time steps. Given $\mathcal{E} \cap \mathcal{F} \cap \mathcal{G} $ holds, 
\begin{align*}
  \Bigg |  & \sum_{t=0}^{T-\Texp} \left \{ 2\hat{x}_{t|t,\tth}^\top (A-B\tk)^\top (\tp - \tc^\top Q \tc) (I-LC)A(\hat{x}_{t|t,\Theta} -\hat{x}_{t|t,\tth} ) \right \} \Bigg | = \tilde{\OO}\left( \frac{T-\Texp}{\sqrt{\Texp}}\right).
\end{align*}
\end{lemma}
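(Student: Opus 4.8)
The plan is to bound $R_5$ term‑by‑term using only crude submultiplicative norm estimates; no martingale or self‑normalized argument is needed here, because $R_5$ is already a \emph{product} of quantities each of which is either a fixed problem‑dependent constant or of order $\tilde{\OO}(1/\sqrt{\Texp})$, so there is no stochastic cancellation to exploit. Writing the generic summand as the bilinear form $2\,\hat{x}_{t|t,\tth}^\top M\bigl(\hat{x}_{t|t,\Theta}-\hat{x}_{t|t,\tth}\bigr)$ with $M=(A-B\tk)^\top(\tp-\tc^\top Q\tc)(I-LC)A$, I would first reduce to
\[
|R_5|\;\le\;2(T-\Texp)\,\|M\|\;\Bigl(\max_{t\le T}\|\hat{x}_{t|t,\tth}\|\Bigr)\Bigl(\max_{\Texp<t\le T}\|\hat{x}_{t|t,\Theta}-\hat{x}_{t|t,\tth}\|\Bigr),
\]
and then it remains to control the three factors on the right on the event $\mathcal{E}\cap\mathcal{F}\cap\mathcal{G}$.

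First I would dispatch $\|M\|$: by the triangle inequality $\|A-B\tk\|\le\|A-\ta\|+\|\ta-\tb\tk\|+\Gamma\|\tb-B\|\le\tilde{\OO}(1/\sqrt{\Texp})+\rho+\tilde{\OO}(1/\sqrt{\Texp})$ using Theorem~\ref{ConfidenceSets} and Assumption~\ref{Stabilizable set}, while $\|\tp-\tc^\top Q\tc\|\le D+\|Q\|(\|C\|+\beta_C)^2$ and $\|(I-LC)A\|\le(1+\zeta\|C\|)\|A\|$, so $\|M\|$ is a finite problem‑dependent constant. The factor $\max_{t\le T}\|\hat{x}_{t|t,\tth}\|\le\tilde{\mathcal{X}}$ is immediate from Lemma~\ref{Boundedness} and is $\tilde{\OO}(\sqrt{\log(T/\delta)})$.

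The one step requiring actual work is the estimate‑mismatch factor $\|\hat{x}_{t|t,\Theta}-\hat{x}_{t|t,\tth}\|$, and for this I would re‑use the machinery already built inside the proof of Lemma~\ref{Boundedness}: the identity
\[
\hat{x}_{t|t,\Theta}-\hat{x}_{t|t,\tth}=(I-\tl\tc)\Delta_t+(L-\tl)e_t+\tl(\tc-C)\hat{x}_{t|t-1,\Theta},\qquad \Delta_t=\hat{x}_{t|t-1,\Theta}-\hat{x}_{t|t-1,\tth},
\]
together with the bound $\|\Delta_t\|\le\bar{\Delta}=\tilde{\OO}(1/\sqrt{\Texp})$ established there. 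Combining this with $\|\tl-L\|\le\Delta L$ and $\|\tc-C\|\le\Delta C$, both $\tilde{\OO}(1/\sqrt{\Texp})$ by Theorem~\ref{ConfidenceSets} and Lemma~\ref{StabilityCov}, with the sub‑Gaussian tail bound $\|e_t\|=\tilde{\OO}(\sqrt{\log(T/\delta)})$ from Lemma~\ref{subgauss lemma} (since $e_t\sim\mathcal{N}(0,C\sig C^\top+\sigma_z^2 I)$), and with the boundedness of $\|\hat{x}_{t|t-1,\Theta}\|$ on $\mathcal{E}\cap\mathcal{F}\cap\mathcal{G}$ already shown in the proof of Lemma~\ref{Boundedness}, a union bound over $\Texp<t\le T$ gives $\max_t\|\hat{x}_{t|t,\Theta}-\hat{x}_{t|t,\tth}\|=\tilde{\OO}(1/\sqrt{\Texp})$ with probability at least $1-\delta$. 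Substituting the three factor bounds into the first display yields $|R_5|=\tilde{\OO}\bigl((T-\Texp)/\sqrt{\Texp}\bigr)$.

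The main obstacle, such as it is, is purely bookkeeping: checking that $\bar{\Delta}$ really is $\tilde{\OO}(1/\sqrt{\Texp})$ — i.e.\ that each of $\bar{\kappa},\bar{\beta},\bar{\xi}$ entering $\bar{\Delta}$ is built only out of $\Delta A,\Delta B,\Delta C,\Delta L$ (each $\tilde{\OO}(1/\sqrt{\Texp})$) multiplied by bounded constants — and keeping track of how the failure probabilities accumulate when the high‑probability events are intersected. No idea beyond Lemmas~\ref{StabilityCov} and~\ref{Boundedness} is required; $R_5$ is one of the easier terms in the decomposition precisely because the mixed factor $\hat{x}_{t|t,\Theta}-\hat{x}_{t|t,\tth}$ is already small pointwise, so a uniform norm bound suffices.
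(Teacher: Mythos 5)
Your overall strategy matches the paper's: $R_5$ is handled by a pointwise norm bound on each summand, giving $(T-\Texp)$ times a bounded matrix norm, times $\tilde{\mathcal{X}}$, times a mismatch factor of size $\tilde{\OO}(1/\sqrt{\Texp})$, with no martingale argument — exactly as in Appendix~\ref{SuppRegretTotal}. The one place you diverge is the mismatch factor. In the regret decomposition of Appendix~\ref{SuppRegret}, $\hat{x}_{t|t,\Theta}$ is defined as the \emph{one-step} update $(I-LC)\hat{x}_{t|t-1}+Ly_t$ from the same prior $\hat{x}_{t|t-1}$ that the algorithm maintains under $\tth$, so the paper gets
\begin{align*}
\hat{x}_{t|t,\Theta}-\hat{x}_{t|t,\tth}=(\tl\tc-LC)\hat{x}_{t|t-1,\tth}+(L-\tl)y_t,
\end{align*}
which is immediately $\tilde{\OO}(1/\sqrt{\Texp})$ from $\|\tl-L\|\le\Delta L$, $\|\tc-C\|\le\Delta C$ and the boundedness of $\hat{x}_{t|t-1,\tth}$ and $y_t$ on $\mathcal{E}\cap\mathcal{F}\cap\mathcal{G}$ — no recursion needed. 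You instead invoke the full $\Delta_t$ propagation machinery from the proof of Lemma~\ref{Boundedness}, treating $\hat{x}_{t|t,\Theta}$ as the recursive true-model filter; this is heavier than necessary for this term, but it is internally consistent, the identity you quote is the one derived there, and since $\bar{\Delta}$ is indeed built only from $\Delta L,\beta_A,\beta_B,\beta_C$ times bounded constants, it still yields $\max_t\|\hat{x}_{t|t,\Theta}-\hat{x}_{t|t,\tth}\|=\tilde{\OO}(1/\sqrt{\Texp})$ and hence the same final bound. The only cost of your route is that it re-imports the probabilistic events and union bounds already consumed by Lemma~\ref{Boundedness}, whereas the paper's one-step identity makes $R_5$ a purely deterministic estimate on the event $\mathcal{E}\cap\mathcal{F}\cap\mathcal{G}$.
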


\begin{proof}
\begin{align*}
    &\mathbbm{1}_{\mathcal{E} \cap \mathcal{F} \cap \mathcal{G} }|R_5| \\
    &= 2\left| \sum_{t=0}^{T-\Texp} \left \{ \hat{x}_{t|t,\tth}^\top (A-B\tk)^\top (\tp - \tc^\top Q \tc) (I-LC)A(\tl\tc\hat{x}_{t|t-1,\tth} -LC\hat{x}_{t|t-1,\tth} + L y_t -\tl y_t ) \right \}  \right| \\
    &\leq 2\left| \sum_{t=0}^{T-\Texp} \left \{ \hat{x}_{t|t,\tth}^\top (A-B\tk)^\top (\tp - \tc^\top Q \tc) (I-LC)A(\tl\tc-LC)\hat{x}_{t|t-1,\tth} \right\} \right |
    \\&\qquad \qquad + 2\left| \sum_{t=0}^{T-\Texp} \left \{ \hat{x}_{t|t,\tth}^\top (A-B\tk)^\top (\tp - \tc^\top Q \tc) (I-LC)A (L-\tl) y_t  \right \}  \right|\\
    &\leq\!2(T-\Texp)\!\Bigg( \tilde{\mathcal{X}}\left(\Phi(A)+ \Gamma \|B\|\right) \|\tp\!-\!\tc^\top Q \tc \| \|A - LCA\| \times \\
    &\qquad \qquad \qquad \qquad \qquad \left(\| \tl\!-\!L \| \left(\|y_t \|\!+\!(\|C \|\!+\!\|\tc\!-\!C\|) \tilde{\mathcal{X}} \right)\!+\! \zeta \| \tc\!-\!C\| \tilde{\mathcal{X}} \right)  \Bigg) \\ 
    &\leq 2(T-\Texp) \Big(\tilde{\mathcal{X}}\left(\Phi(A)+ \Gamma \|B\|\right)\left(D + \| Q\| \left( \| C\| + \Delta C \right)^2  \right) \left( \Phi(A) (1 + \zeta \| C\|)\right) \times \\
    &\qquad \qquad \qquad \qquad \qquad \left(\Delta L \left(\mathcal{Y} +  (\|C\|+\Delta C) \tilde{\mathcal{X}} \right) + \Delta C \zeta  \tilde{\mathcal{X}} \right)  \Big)
\end{align*}
where we used the bounds listed in the beginning of the section, \textit{i.e.} concentration results and the boundedness property of $\hat{x}_{t|t,\tth}$ and $y_t$. Using the fact that all the concentration results are $\tilde{\OO}\left(\frac{1}{\sqrt{\Texp}} \right)$ we obtain the the statement of lemma. 
\end{proof}

\subsection[Bound on R6]{Bounding $|R_6|$ on the event of $\mathcal{E} \cap \mathcal{F} \cap \mathcal{G} $}
\begin{lemma}
Suppose Assumption~\ref{Stabilizable set} holds and system is explored for  $\Texp > T_0$ time steps. Given $\mathcal{E} \cap \mathcal{F} \cap \mathcal{G} $ holds, 
\begin{align*}
  \left|  R_6  \right| = \tilde{\OO}\left( \frac{T-\Texp}{\Texp}\right).
\end{align*}
\end{lemma}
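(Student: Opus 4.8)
The plan is to exploit that $R_6$ is a sum of $T-\Texp$ quadratic forms in the state‑estimation discrepancy $\hat x_{t|t,\Theta}-\hat x_{t|t,\tth}$, each sandwiched between matrices of bounded operator norm, so that a uniform $\tilde{\OO}(1/\sqrt{\Texp})$ bound on the discrepancy produces a $\tilde{\OO}(1/\Texp)$ bound per summand and hence $\tilde{\OO}((T-\Texp)/\Texp)$ overall. Since $\tp-\tc^\top Q\tc\succeq 0$ (as already used in the proof of Lemma~\ref{lemmaR1}), the matrix $A^\top(I-LC)^\top(\tp-\tc^\top Q\tc)(I-LC)A$ is positive semidefinite, so every summand of $R_6$ is nonpositive and $|R_6|=-R_6$.

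First I would record the elementary bound, valid on $\mathcal E\cap\mathcal F\cap\mathcal G$,
\[
|R_6| \;\le\; (T-\Texp)\,\|(I-LC)A\|^2\,\|\tp-\tc^\top Q\tc\|\;\max_{0\le t\le T-\Texp}\bigl\|\hat x_{t|t,\Theta}-\hat x_{t|t,\tth}\bigr\|^2 .
\]
Here $\|(I-LC)A\|\le(1+\zeta\|C\|)\|A\|$ is a problem‑dependent constant, and $\|\tp-\tc^\top Q\tc\|\le D+\|Q\|(\|C\|+\Delta C)^2=\tilde{\OO}(1)$ by the bounds collected at the start of Appendix~\ref{SuppRegretTotal}. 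It therefore remains to supply a uniform bound $\|\hat x_{t|t,\Theta}-\hat x_{t|t,\tth}\|=\tilde{\OO}(1/\sqrt{\Texp})$ for all $t\le T$ with high probability.

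This last bound is essentially contained in the proof of Lemma~\ref{Boundedness}: there one shows $\|\Delta_t\|=\|\hat x_{t|t-1,\Theta}-\hat x_{t|t-1,\tth}\|\le\bar{\Delta}$ with $\bar{\Delta}=\tilde{\OO}(1/\sqrt{\Texp})$, and the decomposition derived there gives
\[
\hat x_{t|t,\Theta}-\hat x_{t|t,\tth} = (I-\tl\tc)\Delta_t + (L-\tl)e_t + \tl(\tc-C)\hat x_{t|t-1,\Theta}.
\]
I would bound the three contributions in turn: $\|(I-\tl\tc)\Delta_t\|\le(1+\zeta(\|C\|+\Delta C))\bar{\Delta}$; the innovation $e_t$ is $(\|C\|\|\Sigma\|^{1/2}+\sigma_z)$‑sub‑Gaussian, so $\max_{t\le T}\|e_t\|=\tilde{\OO}(1)$ with high probability by Lemma~\ref{subgauss lemma} and a union bound, whence $\|(L-\tl)e_t\|\le\Delta L\,\|e_t\|=\tilde{\OO}(1/\sqrt{\Texp})$ since $\Delta L=\tilde{\OO}(1/\sqrt{\Texp})$; and on $\mathcal F$ one has $\|\hat x_{t|t-1,\Theta}\|\le(\|A\|+\Gamma\|B\|)\tilde{\mathcal{X}}+\|A\|\bar{\Delta}=\tilde{\OO}(1)$, so $\|\tl(\tc-C)\hat x_{t|t-1,\Theta}\|\le\zeta\Delta C\,\|\hat x_{t|t-1,\Theta}\|=\tilde{\OO}(1/\sqrt{\Texp})$. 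Adding these yields $\|\hat x_{t|t,\Theta}-\hat x_{t|t,\tth}\|=\tilde{\OO}(1/\sqrt{\Texp})$; squaring it in the elementary bound turns $1/\sqrt{\Texp}$ into $1/\Texp$ and gives the claim. The only real obstacle is bookkeeping — tracking which constants carry the $1/\sqrt{\Texp}$ decay (the quantities $\Delta A,\Delta B,\Delta C,\Delta\Sigma,\Delta L,\bar{\Delta}$) and verifying that every term in the displayed decomposition contains at least one such factor — and there is no analytic difficulty; the appearance of the square is exactly why $R_6$ is of lower order than $R_1$–$R_5$.
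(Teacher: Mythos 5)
Your proposal is correct and follows essentially the same route as the paper: bound each summand of $R_6$ by bounded operator norms times $\|\hat{x}_{t|t,\Theta}-\hat{x}_{t|t,\tth}\|^2$, show the discrepancy is $\tilde{\OO}(1/\sqrt{\Texp})$, and observe that squaring it is what drops $R_6$ to order $(T-\Texp)/\Texp$. The only cosmetic difference is that the paper bounds the discrepancy directly via the identity $\hat{x}_{t|t,\Theta}-\hat{x}_{t|t,\tth}=(\tl\tc-LC)\hat{x}_{t|t-1,\tth}+(L-\tl)y_t$ together with the bounds $\Delta L,\Delta C=\tilde{\OO}(1/\sqrt{\Texp})$ and $\tilde{\mathcal{X}},\mathcal{Y}=\tilde{\OO}(1)$, whereas you route through the $\Delta_t$/innovation decomposition from Lemma~\ref{Boundedness}; both give the same order.
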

\begin{proof}
\begin{align*}
    &\mathbbm{1}_{\mathcal{E} \cap \mathcal{F} \cap \mathcal{G} }|R_6| \\
    &= \left| \sum_{t=0}^{T-\Texp} \left \{ \left(\hat{x}_{t|t,\Theta} -\hat{x}_{t|t,\tth} \right)^\top A^\top (I-LC)^\top (\tp - \tc^\top Q \tc) (I-LC)A(\hat{x}_{t|t,\Theta} -\hat{x}_{t|t,\tth} ) \right \}  \right| \\
    &\leq (T-\Texp) \left\|(\tl\tc -LC)\hat{x}_{t|t-1,\tth} + (L-\tl)y_t \right\|^2 \|\tp - \tc^\top Q \tc \| \left( \Phi(A) (1 + \zeta \| C\|)\right)^2 \\
    &\leq (T-\Texp)\!\!\left(\!\left(\!\| \tl\!-\!L\|\|C\|\!+\!\|L \| \|\tc\!-\! C\|\!+\!\|\tl\!-\!L\| \|\tc\!-\!C\|\!\right) \rho \tilde{\mathcal{X}}\!+\!\|\tl\!-\!L \| \mathcal{Y} \right)^2 \times \\
    &\qquad \qquad \left(\!D\!+\!\| Q\| \left( \| C\|\!+\!\Delta C \right)^2  \right) \left( \Phi(A) (1 + \zeta \| C\|)\right)^2 \\
    &\leq (T - \Texp) \left( \left(\Delta L \|C\| + \zeta \Delta C\!+\!\Delta L \Delta C \right) \rho \tilde{\mathcal{X}}\!+\!\Delta L \mathcal{Y} \right)^2 \left(D\!+\!\| Q\| \left( \| C\|\!+\!\Delta C \right)^2  \right) \left( \Phi(A) (1 + \zeta \| C\|)\right)^2 \\
    &= \tilde{\OO}\left( \frac{T-\Texp}{\Texp}\right)
\end{align*}
where we used the bounds listed in the beginning of the section, \textit{i.e.} concentration results and the boundedness property of $\hat{x}_{t|t,\tth}$ and $y_t$. Using the fact that all the concentration results are $\tilde{\OO}\left(\frac{1}{\sqrt{\Texp}} \right)$ and each concentration term is squared in the final expression, we get the statement of lemma. 
\end{proof}

\subsection[Bound on R7]{Bounding $|R_7|$ on the event of $\mathcal{E} \cap \mathcal{F} \cap \mathcal{G} $}

\begin{lemma}
Given $\mathcal{E} \cap \mathcal{F} \cap \mathcal{G} $ holds 
\begin{align*}
  \Bigg |  &\sum_{t=0}^{T-\Texp} \left\{ \mathbb{E} \left[ w_t \tc^\top Q \tc w_t   \right] - \mathbb{E}\left[w_t^\top C^\top Q C w_t \right] \right\}  \Bigg | = \tilde{\OO}\left( \frac{T-\Texp}{\sqrt{\Texp}}\right).
\end{align*}
\end{lemma}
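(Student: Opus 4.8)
The plan is to exploit that $R_7$ involves only quadratic forms in the Gaussian process noise $w_t$ and is \emph{linear} in $T-\Texp$ with a coefficient that is the gap between two traces, which in turn is controlled by the confidence bound $\Delta C$. First I would use that $w_t \sim \mathcal{N}(0,\sigma_w^2 I)$ is independent of the pair $(\hat{x}_{t|t-1},y_t)$ (since $y_t=Cx_t+z_t$ depends only on $w_0,\dots,w_{t-1}$), so that each summand reduces to $\mathbb{E}[w_t^\top \tc^\top Q \tc\, w_t]-\mathbb{E}[w_t^\top C^\top Q C\, w_t] = \sigma_w^2\big(\Tr(\tc^\top Q \tc)-\Tr(C^\top Q C)\big)$, a deterministic quantity once the exploration data (hence $\tc$) is fixed. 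Therefore $R_7=(T-\Texp)\,\sigma_w^2\big(\Tr(\tc^\top Q \tc)-\Tr(C^\top Q C)\big)$, and it remains only to bound the scalar $\big|\Tr(\tc^\top Q \tc)-\Tr(C^\top Q C)\big|$.

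For this scalar I would add and subtract $C$: writing $\tc = C+(\tc-C)$ and expanding gives
\[
\Tr(\tc^\top Q \tc)-\Tr(C^\top Q C) = \Tr\!\big((\tc-C)^\top Q (\tc-C)\big) + 2\,\Tr\!\big((\tc-C)^\top Q C\big).
\]
Since $(\tc-C)^\top Q(\tc-C)$ is an $n\times n$ matrix, $\big|\Tr((\tc-C)^\top Q(\tc-C))\big|\le n\|Q\|\,\|\tc-C\|^2$, and likewise $\big|\Tr((\tc-C)^\top Q C)\big|\le n\|Q\|\,\|\tc-C\|\,\|C\|$. On the event $\mathcal{E}$ (which is part of $\mathcal{E}\cap\mathcal{F}\cap\mathcal{G}$) Theorem~\ref{ConfidenceSets} gives $\|\tc-C\|\le \Delta C$, and for $\Texp>T_0$ we have $\Delta C\le\|C\|$, so the scalar is at most $n\|Q\|\,\Delta C(\Delta C+2\|C\|)\le 3n\|Q\|\|C\|\,\Delta C$. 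Combining, on $\mathcal{E}\cap\mathcal{F}\cap\mathcal{G}$,
\[
|R_7| \le 3n\,\sigma_w^2\,\|Q\|\,\|C\|\,(T-\Texp)\,\Delta C .
\]
Finally, since $\Delta C = \tfrac{14n(R_w+R_e+R_z)}{\sigma_u\sqrt{(\Texp-H+1)\sigma_n(\mathbf H)}} = \tilde{\OO}(1/\sqrt{\Texp})$ (all other factors being problem-dependent constants), I conclude $|R_7| = \tilde{\OO}\!\big(\tfrac{T-\Texp}{\sqrt{\Texp}}\big)$, as claimed.

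This lemma is essentially the easiest term in the decomposition: there is no martingale concentration to invoke and no recursion to unroll, because the process noise is fully averaged out by the expectation and the estimation error enters only through the single deterministic factor $\Delta C$. The only points requiring a little care are (i) justifying that the expectation over $w_t$ commutes with treating $\tc$ as fixed — which follows from the independence noted above and the fact that the regret decomposition is carried out conditionally on the exploration outcome — and (ii) noting that the bound $\|\tc-C\|\le\Delta C$ is available only on $\mathcal{E}$, which is exactly why the statement is conditioned on $\mathcal{E}\cap\mathcal{F}\cap\mathcal{G}$ and needs no extra failure probability.
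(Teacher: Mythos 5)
Your proposal is correct and follows essentially the same route as the paper: both reduce the expectation of the quadratic form to $\sigma_w^2$ times the trace of $\tc^\top Q\tc - C^\top Q C$, bound that trace by $n$ times a spectral-norm bound of the form $\|Q\|\,\Delta C(\Delta C + 2\|C\|)$ via the expansion $\tc = C + (\tc - C)$, and conclude from $\Delta C = \tilde{\OO}(1/\sqrt{\Texp})$. The only cosmetic difference is your extra simplification $\Delta C \le \|C\|$, which is unnecessary (and not guaranteed in general) since the unsimplified bound already yields the stated rate.
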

\begin{proof}
\begin{align*}
    \mathbbm{1}_{\mathcal{E} \cap \mathcal{F}}|R_7| &= \mathbbm{1}_{\mathcal{E} \cap \mathcal{F} \cap \mathcal{G} } \sum_{t=0}^{T-\Texp} \left | \mathbb{E}  \bigg[w_t^\top \left( \tc^\top Q \tc -  C^\top Q C  \right) w_t \bigg] \right | \\
    &= \mathbbm{1}_{\mathcal{E} \cap \mathcal{F} \cap \mathcal{G} } \sum_{t=0}^{T-\Texp} \left | \Tr \left( \left(\tc^\top Q \tc -  C^\top Q C \right)   \mathbb{E}\bigg[w_t w_t^\top \bigg]\right) \right | \\
    &\leq n \sigma_w^2 \mathbbm{1}_{\mathcal{E} \cap \mathcal{F} \cap \mathcal{G} } \sum_{t=0}^{T-\Texp}  \left\| \tc^\top Q \tc -  C^\top Q C \right\| \\
    &\leq  n (T-\Texp) \sigma_w^2 \|Q\|  \left( \Delta C^2 + 2 \| C \| \Delta C \right) = \tilde{\OO}\left( \frac{T-\Texp}{\sqrt{\Texp}}\right).
\end{align*}
\end{proof}
\subsection[Bound on R8]{Bounding $|R_8|$ on the event of $\mathcal{E} \cap \mathcal{F} \cap \mathcal{G} $}
\begin{lemma}
Suppose Assumption~\ref{Stabilizable set} holds and system is explored for  $\Texp > T_0$ time steps. Given $\mathcal{E} \cap \mathcal{F} \cap \mathcal{G} $ holds, 
\begin{align*}
  \Bigg |  \sum_{t=0}^{T-\Texp} \left \{ \mathbb{E}  \left[ w_t^\top \tc^\top \tl^\top  \left( \tp - \tc^\top Q \tc  \right) \tl \tc w_t   \right]   - \mathbb{E}\left[w_t^\top C^\top L^\top \left( \tp - \tc^\top Q \tc  \right) L C  w_t  \right] \right \}   \Bigg | = \tilde{\OO}\left( \frac{T-\Texp}{\sqrt{\Texp}}\right)
\end{align*}
\end{lemma}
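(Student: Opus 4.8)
The plan is to follow exactly the template used for $R_7$: exploit that $w_t$ is Gaussian and independent of the data-dependent controller parameters, turn the expectations into traces, and then bound the spectral norm of the resulting matrix difference by one of the $\tilde{\OO}(1/\sqrt{\Texp})$ estimation errors times a bounded factor.

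First I would condition on the exploration trajectory, so that on the event $\mathcal{E}$ the matrices $\tc,\tl,\tp$ are deterministic. Then, using $w_t\sim\mathcal{N}(0,\sigma_w^2 I)$ and $\mathbb{E}[w_t w_t^\top]=\sigma_w^2 I$, each summand in $R_8$ equals $\sigma_w^2\Tr\!\big(\tc^\top\tl^\top(\tp-\tc^\top Q\tc)\tl\tc-C^\top L^\top(\tp-\tc^\top Q\tc)LC\big)$. The key observation is that both products share the \emph{same} middle matrix $\tp-\tc^\top Q\tc$, so no perturbation bound on $\tp$ is needed; the difference telescopes in the outer factors only:
\[
\tc^\top\tl^\top(\tp-\tc^\top Q\tc)\tl\tc-C^\top L^\top(\tp-\tc^\top Q\tc)LC=(\tc^\top\tl^\top-C^\top L^\top)(\tp-\tc^\top Q\tc)\tl\tc+C^\top L^\top(\tp-\tc^\top Q\tc)(\tl\tc-LC),
\]
and $\|\tl\tc-LC\|\le\|\tl-L\|\,\|\tc\|+\|L\|\,\|\tc-C\|\le\Delta L(\|C\|+\Delta C)+\zeta\Delta C$, with the same bound for $\|\tc^\top\tl^\top-C^\top L^\top\|$ by transposition.

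Second, applying $|\Tr(M)|\le n\|M\|$ together with the uniform bounds collected at the beginning of Appendix~\ref{SuppRegretTotal} ($\|\tc\|\le\|C\|+\Delta C$, $\|\tl\|\le\zeta+\Delta L$, $\|L\|\le\zeta$, $\|\tp-\tc^\top Q\tc\|\le D+\|Q\|(\|C\|+\Delta C)^2$), I obtain on $\mathcal{E}\cap\mathcal{F}\cap\mathcal{G}$ a bound of the form
\[
|R_8|\le n\sigma_w^2(T-\Texp)\big(D+\|Q\|(\|C\|+\Delta C)^2\big)\big(\Delta L(\|C\|+\Delta C)+\zeta\Delta C\big)\big((\zeta+\Delta L)(\|C\|+\Delta C)+\zeta\|C\|\big).
\]
Since $\Delta C,\Delta L=\tilde{\OO}(1/\sqrt{\Texp})$ by Theorem~\ref{ConfidenceSets} and Lemma~\ref{StabilityCov}, and every remaining factor is a bounded problem constant, the right-hand side is $\tilde{\OO}((T-\Texp)/\sqrt{\Texp})$, which is the claim. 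I do not expect a genuine obstacle here: the argument is a routine variant of the $R_7$ bound, and the only point requiring care is justifying the trace manipulation by conditioning on the exploration trajectory so that $\tc,\tl,\tp$ are fixed when the expectation over $w_t$ is taken, exactly as in the $R_7$ argument.
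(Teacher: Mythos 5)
Your proposal is correct and matches the paper's proof essentially step for step: both convert the expectations into traces via $\mathbb{E}[w_tw_t^\top]=\sigma_w^2I$, bound the trace by $n$ times the spectral norm, telescope the difference in the outer factors $\tl\tc$ versus $LC$ while keeping the common middle matrix $\tp-\tc^\top Q\tc$ bounded by $D+\|Q\|(\|C\|+\Delta C)^2$, and conclude from $\|\tl\tc-LC\|=\tilde{\OO}(1/\sqrt{\Texp})$. The only cosmetic difference is that you split the telescoping into two cross terms while the paper uses the equivalent bound $\|\tl\tc-LC\|^2+2\|LC\|\,\|\tl\tc-LC\|$; this changes nothing.
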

\begin{proof}
\begin{align*}
    &\mathbbm{1}_{\mathcal{E} \cap \mathcal{F} \cap \mathcal{G} }|R_8| \\
    &= \mathbbm{1}_{\mathcal{E} \cap \mathcal{F} \cap \mathcal{G} } \sum_{t=0}^{T-\Texp} \left | \mathbb{E}  \bigg[w_t^\top \left( \tc^\top \tl^\top  \left( \tp - \tc^\top Q \tc  \right) \tl \tc -  C^\top L^\top \left( \tp - \tc^\top Q \tc  \right) L C  \right) w_t \bigg] \right | \\
    &= \mathbbm{1}_{\mathcal{E} \cap \mathcal{F} \cap \mathcal{G} } \sum_{t=0}^{T-\Texp} \left | \Tr \left( \left( \tc^\top \tl^\top  \left( \tp - \tc^\top Q \tc  \right) \tl \tc -  C^\top L^\top \left( \tp - \tc^\top Q \tc  \right) L C  \right) \mathbb{E}\bigg[w_t w_t^\top \bigg]\right) \right | \\
    &\leq n \sigma_w^2 \mathbbm{1}_{\mathcal{E} \cap \mathcal{F} \cap \mathcal{G} } \sum_{t=0}^{T-\Texp}  \left\| \tc^\top \tl^\top  \left( \tp - \tc^\top Q \tc  \right) \tl \tc -  C^\top L^\top \left( \tp - \tc^\top Q \tc  \right) L C \right\| \\
    &\leq n (T-\Texp) \sigma_w^2 \mathbbm{1}_{\mathcal{E} \cap \mathcal{F} \cap \mathcal{G} } \|\tp - \tc^\top Q \tc \| \left( \|\tl\tc - LC \|^2 + 2 \|LC\| \|\tl\tc - LC \|  \right) \\
    &\leq n (T-\Texp) \sigma_w^2 \left(D + \| Q\| \left( \| C\| + \Delta C \right)^2  \right)\mathbbm{1}_{\mathcal{E} \cap \mathcal{F} \cap \mathcal{G} } \Bigg( \left(\!\| \tl\!-\!L\|\|C\|\!+\!\|L \| \|\tc\!-\! C\|\!+\!\|\tl\!-\!L\| \|\tc\!-\!C\|\!\right)^2 \\
    &\qquad + 2 \|LC\| \left(\!\| \tl\!-\!L\|\|C\|\!+\!\|L \| \|\tc\!-\! C\|\!+\!\|\tl\!-\!L\| \|\tc\!-\!C\|\!\right)  \Bigg) \\
    &\leq\!n (T\!-\!\Texp) \sigma_w^2 \!\left(D\!+\!\| Q\| \left( \| C\|\! +\!\Delta C \right)^2\right)\!\! \Big(\Delta L \|C\|\!+\! \zeta \Delta C\!+\!\Delta L \Delta C \Big)  \Big(2\zeta\|C\|\!+\!\Delta L \|C\|\!+\!\zeta \Delta C\!+\!\Delta L \Delta C \Big) \\
    &= \tilde{\OO}\left( \frac{T-\Texp}{\sqrt{\Texp}}\right)
\end{align*}
Since the dominating term has only one concentration result. 
\end{proof}

\subsection[Bounds on R9 R10]{Bounding $|R_9|$ and $|R_{10}|$ on the event of $\mathcal{E} \cap \mathcal{F} \cap \mathcal{G} $}

\begin{lemma} \label{lemmaR10}
Let $R_9$ and $R_{10}$ be as defined in equations \eqref{R9} and \eqref{R10}, respectively. Suppose Assumption~\ref{Stabilizable set} holds and system is explored for  $\Texp > T_0$ time steps. Given $\mathcal{E} \cap \mathcal{F} \cap \mathcal{G} $ holds, 
\begin{align*}
  \left | R_9 \right | = \tilde{\OO}\left( \frac{T-\Texp}{\sqrt{\Texp}}\right), \qquad  \left | R_{10} \right | = \tilde{\OO}\left( \frac{T-\Texp}{\sqrt{\Texp}}\right)
\end{align*}
\end{lemma}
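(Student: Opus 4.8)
The plan is to treat $R_9$ and $R_{10}$ exactly in the spirit of the earlier bounds on $R_7$ and $R_8$: each is a sum of conditional expectations of quadratic forms in the filtering error, whose sandwiching matrices differ only by an $\tilde{\OO}(1/\sqrt{\Texp})$ perturbation, so that once the fixed, bounded conditional covariance of the filtering error is factored out, everything reduces to the matrix-perturbation estimates already developed for $R_3$. Concretely, for a fixed commit-phase time $t$ I would condition on the observations available at step $t$ and write $x_t - \hat{x}_{t|t,\tth} = \nu_t + \delta_t$, where $\nu_t = x_t - \hat{x}_{t|t,\Theta}$ is the Kalman filtering error of the \emph{true} system $\Theta$ and $\delta_t = \hat{x}_{t|t,\Theta} - \hat{x}_{t|t,\tth}$ is the deviation of the agent's state estimate from the true-system estimate. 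As recalled in the proof of Lemma~\ref{LQGBellman}, in steady state $\nu_t$ is conditionally zero mean with covariance $\bar{\Sigma}$ (the filtering-Riccati solution of $\Theta$), independently of the controller in use; and from the decomposition of $\hat{x}_{t|t,\Theta} - \hat{x}_{t|t,\tth}$ derived in Appendix~\ref{SuppBounded} together with the concentration bounds collected at the start of this section, $\delta_t$ is deterministic given the conditioning and satisfies $\|\delta_t\| = \tilde{\OO}(1/\sqrt{\Texp})$ on the event $\mathcal{E}\cap\mathcal{F}\cap\mathcal{G}$.

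Next I would plug this decomposition into the summand of $R_9$. Writing $\Delta M = \ta^\top\tc^\top Q\tc\ta - A^\top C^\top QCA$, the summand is $\mathbb{E}[(x_t-\hat{x}_{t|t,\tth})^\top\Delta M(x_t-\hat{x}_{t|t,\tth})\mid\cdot]$; the cross term $\nu_t^\top\Delta M\,\delta_t$ has zero conditional expectation, so this equals $\Tr(\Delta M\,\bar{\Sigma}) + \delta_t^\top\Delta M\,\delta_t$. It then remains to bound $\|\Delta M\|$: writing $\tc\ta - CA = (\tc-C)\ta + C(\ta-A)$ gives $\|\tc\ta - CA\| = \tilde{\OO}(1/\sqrt{\Texp})$, and hence, exactly as in the bound on $R_3$, $\|\Delta M\| \le \|\tc\ta - CA\|\,\|Q\|\,(\|\tc\ta\|+\|CA\|) = \tilde{\OO}(1/\sqrt{\Texp})$. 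Since $\bar{\Sigma}\preceq\Sigma$ is a fixed, $\Texp$-independent matrix, $|\Tr(\Delta M\,\bar{\Sigma})|\le\|\Delta M\|\Tr(\bar{\Sigma}) = \tilde{\OO}(1/\sqrt{\Texp})$ and $|\delta_t^\top\Delta M\,\delta_t|\le\|\Delta M\|\,\|\delta_t\|^2 = \tilde{\OO}(\Texp^{-3/2})$; summing over the $T-\Texp$ commit steps yields $|R_9| = \tilde{\OO}((T-\Texp)/\sqrt{\Texp})$.

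For $R_{10}$ the only change is that the quadratic form is sandwiched by $\tl\tc\ta$ (resp.\ $LCA$) rather than $\tc\ta$ (resp.\ $CA$), with common middle matrix $\tp - \tc^\top Q\tc$. Setting $X_1 = \tl\tc\ta$, $X_2 = LCA$, $Y = \tp - \tc^\top Q\tc$, I would bound the perturbation via the identity $X_1^\top Y X_1 - X_2^\top Y X_2 = (X_1-X_2)^\top Y X_1 + X_2^\top Y(X_1-X_2)$, using that $\|Y\| \le D + \|Q\|(\|C\|+\Delta C)^2$ is bounded, $\|X_1\|,\|X_2\|$ are bounded, and $\|X_1-X_2\| = \|\tl\tc\ta - LCA\| = \tilde{\OO}(1/\sqrt{\Texp})$ — which is precisely the quantity already estimated in Appendix~\ref{SuppBounded}. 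Then the same three-line argument as for $R_9$ gives $|R_{10}| = \tilde{\OO}((T-\Texp)/\sqrt{\Texp})$.

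The hard part — really the only point that is not routine matrix bookkeeping — is the conditional step: correctly identifying the conditional law of $x_t - \hat{x}_{t|t,\tth}$ given the observations, in particular that its ``noise part'' $\nu_t$ has conditional covariance $\bar{\Sigma}$ regardless of the suboptimal controller being deployed (so that $\mathbb{E}[\nu_t\nu_t^\top\mid\cdot]$ is a fixed, $\Texp$-independent matrix) and that the remaining ``bias part'' $\delta_t$ is deterministic and $\tilde{\OO}(1/\sqrt{\Texp})$. Once that decomposition is in place, the cross terms vanish and both $R_9$ and $R_{10}$ collapse to the matrix-perturbation estimates already used for $R_3$, $R_7$ and $R_8$.
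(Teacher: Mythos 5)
Your proposal is correct and its overall architecture coincides with the paper's: both proofs reduce $R_9$ and $R_{10}$ to $\sum_t \Tr(\Delta M \cdot \mathrm{Cov}_t)$, bound $\|\Delta M\|$ (i.e.\ $\|\ta^\top\tc^\top Q\tc\ta - A^\top C^\top QCA\|$, resp.\ the $\tl\tc\ta$ vs.\ $LCA$ version) by $\tilde{\OO}(1/\sqrt{\Texp})$ via exactly the telescoping perturbation identity you use, and then sum over the $T-\Texp$ commit steps. The one place you genuinely diverge is in controlling the conditional second moment of $x_t - \hat{x}_{t|t,\tth}$. The paper notes that this error propagates through the stable closed loop $\ta(I-\tl\tc)$ and simply asserts $\bigl\|\mathbb{E}\bigl[(x_t-\hat{x}_{t|t,\tth})(x_t-\hat{x}_{t|t,\tth})^\top \mid \cdot\bigr]\bigr\| \le \|\tsig\|$, then uses $\|\tsig\|\le\|\Sigma\|+\Delta\Sigma$ from Lemma~\ref{StabilityCov}; this is the least transparent step of the paper's argument, since the optimistic filter is running on the mismatched system $\Theta$ rather than on $\tth$. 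Your decomposition $x_t-\hat{x}_{t|t,\tth} = \nu_t + \delta_t$ with $\nu_t$ the true-filter error (conditional covariance $\bar{\Sigma}$, consistent with the paper's own Bellman-equation proof) and $\delta_t$ a history-measurable bias of size $\tilde{\OO}(1/\sqrt{\Texp})$ controlled by Lemma~\ref{Boundedness} sidesteps that assertion entirely: the cross term vanishes by the tower property and the MMSE property of $\hat{x}_{t|t,\Theta}$, and the bias contributes only a lower-order $\tilde{\OO}(\Texp^{-3/2})$ per step. Both routes yield the stated $\tilde{\OO}((T-\Texp)/\sqrt{\Texp})$ bound; yours buys a cleaner justification of the covariance step at the cost of invoking the state-estimate deviation bound from Appendix~\ref{SuppBounded}, which the paper's version of this particular lemma does not need.
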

\begin{proof}
Observe that $x_{t+1} - \hat{x}_{t+1|t, \tth}$ has the following dynamics, 
\[
x_{t+1} - \hat{x}_{t+1|t, \tth} = \ta(I-\tl\tc)(x_{t} - \hat{x}_{t|t-1, \tth}) + w_t - \tl z_t
\]
Thus, the estimation error propagates according to a linear system, with closed-loop dynamics $\ta-\ta\tl\tc$, driven by the process $w_t - LC z_t$, which is iid zero mean and covariance $W + LZL^\top$. Additionally, from Assumption~\ref{Stabilizable set}, $\ta - \ta\tl\tc$ is stable. Recall that $\tsig $ is the steady state error covariance matrix of state estimation of the system with optimistic parameters $\tth$. Thus, we have that $\left\|\tsig  \right\| \geq \left\|\mathbb{E}\bigg[ \left(x_t - \hat{x}_{t|t,\tth}\right)\left(x_t - \hat{x}_{t|t,\tth}\right)^\top  \Big | \hat{x}_{t|t-1}, y_t   \bigg] \right\| $ for all $t\geq 0$. From this we get 
\begin{align*}
    &\mathbbm{1}_{\mathcal{E} \cap \mathcal{F} \cap \mathcal{G} }|R_9| \\
    &= \mathbbm{1}_{\mathcal{E} \cap \mathcal{F} \cap \mathcal{G} } \sum_{t=0}^{T-\Texp} \left | \mathbb{E}  \bigg[ \left(x_t - \hat{x}_{t|t,\tth}\right)^\top \left( \ta^\top \tc^\top Q \tc \ta - A^\top C^\top Q C A \right)\left(x_t - \hat{x}_{t|t,\tth}\right) \Big | \hat{x}_{t|t-1}, y_t   \bigg] \right | \\
    &= \mathbbm{1}_{\mathcal{E} \cap \mathcal{F} \cap \mathcal{G} } \sum_{t=0}^{T-\Texp} \left | \Tr \left(  \left( \ta^\top \tc^\top Q \tc \ta - A^\top C^\top Q C A \right) \mathbb{E}\bigg[\left(x_t - \hat{x}_{t|t,\tth}\right)\left(x_t - \hat{x}_{t|t,\tth}\right)^\top \Big | \hat{x}_{t|t-1}, y_t \bigg] \right) \right | \\
    &\leq n \mathbbm{1}_{\mathcal{E} \cap \mathcal{F} \cap \mathcal{G} } \sum_{t=0}^{T-\Texp}  \left\| \ta^\top \tc^\top Q \tc \ta - A^\top C^\top Q C A \right\| \left\|\tsig  \right\| \\
    &\leq  n (T-\Texp)\|Q\|  \mathbbm{1}_{\mathcal{E} \cap \mathcal{F} \cap \mathcal{G} }  \left( \|\tc\ta - CA \|^2 + 2 \| C \| \| \tc\ta -CA \|\right)(\|\sig\| + \Delta \Sigma) \\
    &\leq  n (T-\Texp) \|Q\|(\|\sig\| + \Delta \Sigma) \bigg(\!\!\Big(\Phi(A)\Delta C\!+\! \Delta A \Delta C\!+\! \|C\|\Delta A \Big)  \Big(2\|C\|\!+\!\Phi(A)\Delta C\!+\! \Delta A \Delta C\!+\! \|C\|\Delta A \Big)\!\! \bigg) \\
    &= \tilde{\OO}\left( \frac{T-\Texp}{\sqrt{\Texp}}\right)
\end{align*}
Since the dominating term has only one concentration result. Similar result for $R_{10}$ is obtained as follows

\begin{align*}
    &\mathbbm{1}_{\mathcal{E} \cap \mathcal{F} \cap \mathcal{G} }|R_{10}|\\
    &= \mathbbm{1}_{\mathcal{E} \cap \mathcal{F} \cap \mathcal{G} } \sum_{t=0}^{T-\Texp} \Bigg | \Tr \bigg(  \left( \ta^\top \tc^\top \tl^\top  \left( \tp - \tc^\top Q \tc  \right) \tl\tc\ta - A^\top C^\top L^\top \left( \tp - \tc^\top Q \tc  \right) L C A \right) \times \\
    &\qquad \qquad \qquad \qquad \qquad \qquad \qquad \qquad \qquad \qquad  \mathbb{E}\bigg[\left(x_t - \hat{x}_{t|t,\tth}\right)\left(x_t - \hat{x}_{t|t,\tth}\right)^\top \Big | \hat{x}_{t|t-1}, y_t \bigg] \bigg) \Bigg | \\
    &\leq n \mathbbm{1}_{\mathcal{E} \cap \mathcal{F} \cap \mathcal{G} } \sum_{t=0}^{T-\Texp}  \left\| \ta^\top \tc^\top \tl^\top  \left( \tp - \tc^\top Q \tc  \right) \tl\tc\ta - A^\top C^\top L^\top \left( \tp - \tc^\top Q \tc  \right) L C A \right\| \left\|\tsig  \right\| \\
    &\leq  n (T-\Texp)  \mathbbm{1}_{\mathcal{E} \cap \mathcal{F} \cap \mathcal{G} }  \|\tp - \tc^\top Q \tc \| \left( \|\tl\tc\ta - LCA \|^2 + 2 \|LCA\| \|\tl\tc\ta - LCA \|  \right)(\|\sig\| + \Delta \Sigma) \\
    &\leq  n (T-\Texp) \left(D\!+\!\| Q\| \left( \| C\|\! +\!\Delta C \right)^2\right)(\|\sig\| + \Delta \Sigma) \left(\Delta K'  (2\zeta\|C\| + \Delta K' ) \right) 
\end{align*}
where $\Delta K' \coloneqq \Big(\Delta L\Phi(A)\|C\| + \zeta \Phi(A) \Delta C + \zeta \|C\| \Delta A + \zeta \Delta C \Delta A + \Phi(A) \Delta L \Delta C + \Delta L \|C\| \Delta A + \Delta L \Delta C \Delta A \Big) $. Similar to $R_9$, the dominating term has only one concentration result. Thus, $\mathbbm{1}_{\mathcal{E} \cap \mathcal{F} \cap \mathcal{G} }|R_{10}| = \tilde{\OO}\left( \frac{T-\Texp}{\sqrt{\Texp}}\right)$

\end{proof}

\subsection[Bound on R11]{Bounding $|R_{11}|$ on the event of $\mathcal{E} \cap \mathcal{F} \cap \mathcal{G} $}

\begin{lemma}\label{lemmaR11}
Suppose Assumption~\ref{Stabilizable set} holds and system is explored for $\Texp > T_0$ time steps. Given $\mathcal{E} \cap \mathcal{F} \cap \mathcal{G} $ holds, 
\begin{align*}
  \left |  R_{11}  \right | = \tilde{\OO}\left( \frac{T-\Texp}{\sqrt{\Texp}}\right).
\end{align*}
\end{lemma}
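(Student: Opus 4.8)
The plan is to reduce $R_{11}$ to a deterministic quantity controlled by $\|\tl-L\|$, which is already known to be $\tilde{\OO}(1/\sqrt{\Texp})$. First I would use that $z_{t+1}\sim\mathcal{N}(0,\sigma_z^2 I)$ with $z_{t+1}\in\R^m$, and that $z_{t+1}$ is independent of the exploration data $\Dexp$, hence of the optimistic model $\tth$ and of the fixed matrices $\tl-L$ and $\tp-\tc^\top Q\tc$. Conditioning on $\Dexp$, for any $\Dexp$-measurable $M\in\R^{m\times m}$ one has $\mathbb{E}[z_{t+1}^\top M z_{t+1}\mid\Dexp]=\sigma_z^2\Tr(M)$, so
\begin{equation*}
    R_{11}=\sigma_z^2\sum_{t=0}^{T-\Texp}\left(2\Tr\!\big(L^\top(\tp-\tc^\top Q\tc)(\tl-L)\big)+\Tr\!\big((\tl-L)^\top(\tp-\tc^\top Q\tc)(\tl-L)\big)\right).
\end{equation*}
Each summand is independent of $t$, which contributes the overall factor $T-\Texp$.

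Next I would bound the traces by $m$ times the product of spectral norms, so that on the event $\mathcal{E}$ (which is implied by $\mathcal{E}\cap\mathcal{F}\cap\mathcal{G}$)
\begin{equation*}
    |R_{11}|\le m\sigma_z^2(T-\Texp)\,\|\tp-\tc^\top Q\tc\|\left(2\|L\|\,\|\tl-L\|+\|\tl-L\|^2\right).
\end{equation*}
Here $\|L\|\le\zeta$ and $\|\tp\|\le D$ since $\Theta,\tth\in\mathcal{S}$ by Assumptions~\ref{AssumContObs} and~\ref{Stabilizable set}, hence $\|\tp-\tc^\top Q\tc\|\le D+\|Q\|(\|C\|+\Delta C)^2$; and Lemma~\ref{StabilityCov} combined with Theorem~\ref{ConfidenceSets} gives $\|\tl-L\|\le\Delta L$ for $\Texp>T_0$. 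Substituting,
\begin{equation*}
    |R_{11}|\le m\sigma_z^2(T-\Texp)\left(D+\|Q\|(\|C\|+\Delta C)^2\right)\left(2\zeta\,\Delta L+\Delta L^2\right),
\end{equation*}
and since the dominant term $2\zeta\,\Delta L$ carries a single $\Theta(1/\sqrt{\Texp})$ factor while $\zeta$, $D$, $\|Q\|$, $\|C\|+\Delta C$ are problem-dependent constants absorbed into $\tilde{\OO}(\cdot)$, this is $\tilde{\OO}\!\left((T-\Texp)/\sqrt{\Texp}\right)$.

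I do not expect a genuine obstacle here — this is the easiest term of the regret decomposition. The two points that need care are (i) justifying the reduction of each conditional expectation to a trace, i.e.\ that $z_{t+1}$ is independent of everything in the quadratic form, which holds because the optimistic model is frozen after the exploration phase and the confidence-set radii $\beta_A,\beta_B,\beta_C$ are deterministic given $\Dexp$; and (ii) carrying the bound $\Delta L=\tilde{\OO}(1/\sqrt{\Texp})$ through the chain $\|\tsig-\Sigma\|\le\Delta\Sigma$ then $\|\tl-L\|\le\Delta L$ supplied by Lemma~\ref{StabilityCov}. The explicit constant is the one displayed in the last line above.
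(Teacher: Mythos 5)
Your proposal is correct and follows essentially the same route as the paper: rewrite each conditional expectation as $\sigma_z^2$ times a trace, bound the trace by $m$ times the product of spectral norms, and use $\|L\|\le\zeta$, $\|\tp-\tc^\top Q\tc\|\le D+\|Q\|(\|C\|+\Delta C)^2$, and $\|\tl-L\|\le\Delta L=\tilde{\OO}(1/\sqrt{\Texp})$ from Lemma~\ref{StabilityCov}. The only cosmetic difference is that the paper bounds the linear-in-$(\tl-L)$ term explicitly and dismisses the quadratic term as having two concentration factors, whereas you carry both through one displayed inequality; the constants match.
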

\begin{proof}
\begin{align*}
    &\mathbbm{1}_{\mathcal{E} \cap \mathcal{F} \cap \mathcal{G} }\Bigg |  \sum_{t=0}^{T-\Texp} 2 \mathbb{E}\left[z_{t+1}^\top L^\top\!\! \left( \tp \!-\! \tc^\top Q \tc  \right) (\tl \!-\! L) z_{t+1} \right]\Bigg | \\   
    &= \mathbbm{1}_{\mathcal{E} \cap \mathcal{F} \cap \mathcal{G} }  \sum_{t=0}^{T-\Texp} 2 \left | \mathbb{E}  \left[z_{t+1}^\top L^\top\!\! \left( \tp \!-\! \tc^\top Q \tc  \right) (\tl \!-\! L) z_{t+1} \right] \right | \\
    &= \mathbbm{1}_{\mathcal{E} \cap \mathcal{F} \cap \mathcal{G} } \sum_{t=0}^{T-\Texp} 2 \left | \Tr \left( \left(L^\top\!\! \left( \tp \!-\! \tc^\top Q \tc  \right) (\tl \!-\! L)\right) \mathbb{E}\bigg[z_{t+1} z_{t+1}^\top \bigg]\right) \right | \\
    &\leq 2m \sigma_z^2 \mathbbm{1}_{\mathcal{E} \cap \mathcal{F} \cap \mathcal{G} } \sum_{t=0}^{T-\Texp}  \left\| L^\top\!\! \left( \tp \!-\! \tc^\top Q \tc  \right) (\tl \!-\! L) \right\| \\
    &\leq 2m \sigma_z^2 (T\!-\!\Texp) \!\left(D\!+\!\| Q\| \left( \| C\|\! +\!\Delta C \right)^2\right) \zeta \Delta L = \tilde{\OO}\left( \frac{T-\Texp}{\sqrt{\Texp}}\right)
\end{align*}
Since the second term in $R_{11}$ has two concentration terms, \textit{i.e.} $\|\tl - L\|$, the one given above is the dominating term which proves the lemma. 

\end{proof}

Combining these individual bounds with the result from Section~\ref{SuppRegretExplore}, we obtain the total regret upper bound of \Alg: 

\begin{theorem}[Regret Upper Bound of \Alg]\label{Sup:totalregret}
Given a \LQG $\Theta = (A,B,C)$, and regulating parameters $Q$ and $R$, suppose Assumptions~\ref{Stable}-\ref{Stabilizable set} hold. For any exploration duration $\Texp> T_0$, 
if \Alg interacts with the system $\Theta$ for $T$ steps in total such that $T>\Texp$, with probability at least $1- 10\delta$, the regret of \Alg is bounded as follows,

\begin{equation}
    \reg(T) = \tilde{\OO}\left( \Texp + \frac{T-\Texp}{\sqrt{\Texp}} + T^{2/3} \right).
\end{equation}
\end{theorem}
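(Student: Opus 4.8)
The plan is to split the time horizon into the exploration window $[1,\Texp]$ and the commit window $[\Texp+1,T]$, bound the regret on each piece separately, and add them up. First I would invoke Lemma~\ref{exploration regret} (equivalently the precise bound in Appendix~\ref{SuppRegretExplore}): during the explore phase the agent plays $u_t\sim\N(0,\sigma_u^2 I)$, so the per-step cost concentrates around $c_*\Texp$ plus a martingale fluctuation of size $\tilde\OO(\sqrt{\Texp})$, giving $\reg(\Texp)=c_*\Texp+\tilde\OO(\sqrt{\Texp})=\tilde\OO(\Texp)$ on an event of probability at least $1-\delta$.

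Next I would handle the commit phase. The key structural fact is the regret decomposition derived in Appendix~\ref{SuppRegret}: on the event $\calE\cap\F\cap\mathcal{G}$, the commit-phase regret is controlled by $R_1+R_2-R_3-\dots-R_{11}+T^{2/3}$, where the $T^{2/3}$ term comes from the $T^{-1/3}$ slack in the optimistic choice \eqref{optimistic} accumulated over $T-\Texp$ steps, and the use of optimism together with $\Theta\in\mathcal C$ on $\calE$ gives $J_*(\tth)\le J_*(\Theta)+T^{-1/3}$. Lemma~\ref{Boundedness} guarantees $\Pr(\calE\cap\F\cap\mathcal G)\ge1-9\delta$, and on this event the state estimate $\hat x_{t|t,\tth}$ and output $y_t$ are bounded by $\tilde{\mathcal X}$ and $\mathcal Y$, both polylogarithmic in $T$. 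Then I would quote Lemmas~\ref{lemmaR1}--\ref{lemmaR11}: each $R_i$ is $\tilde\OO\!\big(\frac{T-\Texp}{\sqrt{\Texp}}\big)$ or smaller (indeed $R_6$ is $\tilde\OO(\frac{T-\Texp}{\Texp})$, and the $G_1,G_3$-type martingale sub-terms inside $R_1,R_2$ are only $\tilde\OO(\sqrt{T-\Texp})$), the common mechanism being that every system-parameter perturbation $\Delta A,\Delta B,\Delta C,\Delta\Sigma,\Delta L$ is $\tilde\OO(1/\sqrt{\Texp})$ by Theorems~\ref{ConfidenceSets}, Lemma~\ref{StabilityCov}, while the quadratic forms they multiply are uniformly bounded on $\calE\cap\F\cap\mathcal G$. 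Summing the eleven terms, the commit regret is $\tilde\OO\!\big(\frac{T-\Texp}{\sqrt{\Texp}}+T^{2/3}\big)$, which is the content of Theorem~\ref{exploitation regret}.

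Finally I would add the two contributions and take a union bound over the failure events (the $\le\delta$ event of Lemma~\ref{exploration regret}, the $\le9\delta$ event of Lemma~\ref{Boundedness}, absorbing the $\le\delta$ slacks used inside the $R_i$ bounds into the same constant), obtaining
\begin{equation*}
\reg(T)=\reg(\Texp)+\big(\reg(T)-\reg(\Texp)\big)=\tilde\OO\!\left(\Texp+\frac{T-\Texp}{\sqrt{\Texp}}+T^{2/3}\right)
\end{equation*}
with probability at least $1-10\delta$, valid for any $\Texp>T_0$. The routine part is the bookkeeping of which $R_i$ dominates and the union bound over probabilities; the genuinely delicate step — already carried out in the cited lemmas rather than here — is establishing that the closed-loop maps $\alpha,\gamma$ and $\mathbf M$ appearing in Lemma~\ref{Boundedness} remain strictly contractive for $\Texp>T_0$, since without that uniform stability the quantities $\tilde{\mathcal X},\mathcal Y$ could blow up and several $R_i$ would fail to be sublinear. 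Conditional on those stability facts and the $\tilde\OO(1/\sqrt{\Texp})$ estimation rates, the present theorem is just the additive combination.
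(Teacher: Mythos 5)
Your proposal follows essentially the same route as the paper: exploration regret from Lemma~\ref{exploration regret}, the Bellman-based decomposition of the commit-phase regret into $R_1,\dots,R_{11}$ plus the $T^{2/3}$ optimism slack on the event $\mathcal{E}\cap\mathcal{F}\cap\mathcal{G}$, the individual bounds from Lemmas~\ref{lemmaR1}--\ref{lemmaR11}, and a final union bound. The only (shared) looseness is the probability bookkeeping — strictly counting the exploration-phase $\delta$, the $9\delta$ from Lemma~\ref{Boundedness}, and the $\delta$ inside the $R_i$ bounds would give $1-11\delta$ rather than $1-10\delta$, but this is a constant-factor issue in $\delta$ that the paper itself glosses over in the same way.
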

\begin{proof}
Combining Lemmas~\ref{lemmaR1}-\ref{lemmaR11}, on the event of $\mathcal{E} \cap \mathcal{F} \cap \mathcal{G} $, with probability at least $1-\delta$, we have 

\begin{align*}
    R_1\!&=\!\tilde{\OO}\left(\! \frac{T\!-\!\Texp}{\sqrt{\Texp}}\!\right), R_2 =  \tilde{\OO}\left( \sqrt{T\!-\!\Texp} \right), |R_3|\!=\!\tilde{\OO}\left(\! \frac{T\!-\!\Texp}{\sqrt{\Texp}}\!\right), |R_4|\!=\!\tilde{\OO}\left(\! \frac{T\!-\!\Texp}{\sqrt{\Texp}}\!\right), \\
    |R_5| &= \tilde{\OO}\left( \frac{T\!-\!\Texp}{\sqrt{\Texp}} \right), |R_6|\!=\!\tilde{\OO}\left(\! \frac{T\!-\!\Texp}{\sqrt{\Texp}}\!\right), |R_7|\!=\!\tilde{\OO}\left(\! \frac{T\!-\!\Texp}{\sqrt{\Texp}}\!\right),
    |R_8|\!=\!\tilde{\OO}\left(\! \frac{T\!-\!\Texp}{\sqrt{\Texp}}\!\right), \\
    &\qquad \qquad \qquad |R_9|\!=\!\tilde{\OO}\left(\! \frac{T\!-\!\Texp}{\sqrt{\Texp}}\!\right),
    |R_{10}|\!=\!\tilde{\OO}\left(\! \frac{T\!-\!\Texp}{\sqrt{\Texp}}\!\right), |R_{11}|\!=\!\tilde{\OO}\left(\! \frac{T\!-\!\Texp}{\sqrt{\Texp}}\!\right).
\end{align*}

Thus, on the event $\mathcal{E} \cap \mathcal{F} \cap \mathcal{G} $, with probability at least $1-\delta$,
\begin{align*}
\reg(T-\Texp)\! &\leq \!R_1 \!+\! R_2 \!+\! |R_3| \!+\! |R_4| \!+\! |R_5| \!+\! |R_6| + |R_7| \!+\! |R_8| \!+\! |R_9| \!+\! |R_{10}| \!+\! |R_{11}|  \!+\! T^{2/3} \\
&= \tilde{\OO}\left( \Texp + \frac{T-\Texp}{\sqrt{\Texp}} + T^{2/3} \right) .
\end{align*}
Recall that the event $\mathcal{E} \cap \mathcal{F} \cap \mathcal{G} $ holds with probability at least $1-9\delta$. Combining these results, we prove the theorem. 
\end{proof}

\begin{corollary}
If the total interaction time $T$ is long enough, \textit{i.e.} having exploration duration of $\Texp = T^{2/3}$ satisfies that $\Texp > T_0$, then with high probability \Alg obtains regret upper bounded by $\tilde{\OO}\left(T^{2/3}\right)$. 
\end{corollary}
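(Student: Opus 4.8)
The plan is to obtain the corollary as an immediate consequence of Theorem~\ref{Sup:totalregret}, which already establishes that, with probability at least $1-10\delta$, the regret of \Alg after $T$ interactions with any exploration duration $\Texp > T_0$ satisfies
\begin{equation*}
    \reg(T) = \tilde{\OO}\!\left( \Texp + \frac{T-\Texp}{\sqrt{\Texp}} + T^{2/3} \right).
\end{equation*}
So the only remaining task is to pick $\Texp$ that (nearly) minimizes the right-hand side subject to the feasibility constraint $\Texp > T_0$, and then read off the resulting rate.

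First I would treat the $T^{2/3}$ summand, which originates from the optimistic slack $T^{-1/3}$ in~\eqref{optimistic}, as a fixed floor, and minimize $g(\Texp) \coloneqq \Texp + (T-\Texp)/\sqrt{\Texp}$ over $\Texp$. Since $g(\Texp) \le \Texp + T\,\Texp^{-1/2}$ and $\frac{d}{d\Texp}\big(\Texp + T\,\Texp^{-1/2}\big) = 1 - \tfrac12 T\,\Texp^{-3/2}$, the unconstrained minimizer sits at $\Texp = (T/2)^{2/3}$, i.e.\ $\Texp \asymp T^{2/3}$, where the two competing terms $\Texp$ and $T/\sqrt{\Texp}$ are both of order $T^{2/3}$. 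This is precisely the exploration--exploitation trade-off discussed after~\eqref{eq:lastminut}: a longer \textit{explore} phase tightens the confidence sets and shrinks the per-step commit-phase error $\tilde{\OO}(1/\sqrt{\Texp})$, but it inflates the linear exploration regret $c_*\Texp$ of Lemma~\ref{exploration regret}; the balance point is $\Texp \asymp T^{2/3}$, and at this point the commit-phase rate $\tilde{\OO}((T-\Texp)/\sqrt{\Texp})$ merely matches the $T^{2/3}$ floor rather than degrading it.

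Second I would check feasibility: the hypothesis ``$T$ is long enough'' is exactly the statement that $T^{2/3} > T_0$ (equivalently $T > T_0^{3/2}$), with $T_0$ as in~\eqref{explorationlowermain}; under it, the choice $\Texp = T^{2/3}$ satisfies both $\Texp > T_0$ and $T > \Texp$, so Theorem~\ref{Sup:totalregret} applies. Substituting $\Texp = T^{2/3}$ bounds the three summands by
\begin{equation*}
    \Texp = T^{2/3}, \qquad \frac{T-\Texp}{\sqrt{\Texp}} \le \frac{T}{T^{1/3}} = T^{2/3}, \qquad T^{2/3},
\end{equation*}
so their sum is $\tilde{\OO}(T^{2/3})$. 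Choosing $\delta$ to be an inverse polynomial in $T$ keeps all $\log(1/\delta)$ factors hidden in $\tilde{\OO}(\cdot)$ logarithmic in $T$ and makes the bound hold with high probability, which is the claim.

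There is essentially no technical obstacle here: the corollary is a one-line substitution into Theorem~\ref{Sup:totalregret}, and all the genuine work --- the stability of the optimistic controller (Lemma~\ref{Boundedness}), the concentration of $\tsig$ and $\tl$ (Lemma~\ref{StabilityCov}), the Bellman-based regret decomposition (Lemma~\ref{LQGBellman} and Appendix~\ref{SuppRegret}), and the term-by-term bounds $R_1,\dots,R_{11}$ --- has already been carried out upstream. The only point requiring a sentence of care is verifying that $\Texp = T^{2/3}$ is admissible, which is guaranteed precisely by the ``$T$ long enough'' hypothesis.
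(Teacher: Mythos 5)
Your proposal is correct and follows the same route as the paper: substitute $\Texp = T^{2/3}$ into Theorem~\ref{Sup:totalregret}, verify admissibility via $T \geq T_0^{3/2}$, and observe that all three summands become $\tilde{\OO}(T^{2/3})$. The extra discussion of why $T^{2/3}$ balances the trade-off is consistent with the paper's remarks after~\eqref{eq:lastminut} but not needed for the proof itself.
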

\begin{proof}
$T\geq T_0^{3/2}$  time steps with $\Texp = T^{2/3}$ is
Interacting with the system at least for $T\geq T_0^{3/2}$ allows us to set $\Texp = T^{2/3}$. Inserting this $\Texp$ to Theorem~\ref{Sup:totalregret},  we obtain $\reg(T-\Texp) = \tilde{\OO}\left( T^{2/3} + \frac{T-T^{2/3}}{\sqrt{T^{2/3}}} \right) + T^{2/3} = \tilde{\OO}\left( T^{2/3} + T^{2/3}-T^{1/3} \right) + T^{2/3} = \tilde{\OO}\left(T^{2/3}\right) $. Note that $\tilde{\OO}(\sqrt{\Texp})$ term of regret obtained in explore phase doesn't effect the total regret result with the given choice of $\Texp$.
\end{proof}

\paragraph{Relation to~\citet{mania2019certainty}}
Note that the result in Theorem~\ref{Sup:totalregret} is the first sublinear end-to-end regret result for $\LQG$ control in the literature. The closest related result to this is the sensitivity analysis of Riccati equation in~\citet{mania2019certainty}. In this work, the authors show that if the estimated system parameters are \textit{very close} to the underlying system parameters, the control designed by using the estimated model, results in a suboptimality gap which is quadratic with respect to the estimation error. It is worth noting that this result is for expected cost difference between optimal controller and designed controller, rather than the actual acquired cost. 

\section{Technical Lemmas and Theorems}
\label{Technical}
\begin{theorem}[Partial Random Circulant Matrices~\citep{krahmer2014suprema}]\label{circ thm}

Let $\mathbf{C}\in\R^{d\times d}$ be a circulant matrix where the first row is distributed as $\mathcal{N}(0,I)$. Given $s\geq 1$, set $m_0=c_0s\log^2(2s)\log^2(2d)$ for some absolute constant $c_0>0$. Pick an $m\times s$ submatrix $\mathbf{S}$ of $\mathbf{C}$. With probability at least $1-(2d)^{-\log(2d)\log^2(2s)}$, $\mathbf{S}$ satisfies
\[
\|\frac{1}{m}\mathbf{S}^\top \mathbf{S}-I\|\leq \max\{\sqrt{\frac{m_0}{m}},\frac{m_0}{m}\}.
\]
\end{theorem}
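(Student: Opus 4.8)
The plan is to read this as a Gram-matrix (restricted-isometry) concentration statement for a subsampled random convolution and to derive it from the generic-chaining bound for suprema of chaos processes of \citet{krahmer2014suprema}. Let $g\in\R^d$ denote the standard Gaussian vector generating the first row of $\mathbf{C}$; since every entry of $\mathbf{C}$ is a coordinate of $g$, for each fixed unit vector $x\in\mathbb{S}^{s-1}$ the vector $\mathbf{S}x$ is a linear image $V_x g$, where $V_x\in\R^{m\times d}$ encodes the (cyclically shifted) selection pattern that the row/column indexing of $\mathbf{S}$ imposes on $x$. Hence
\[
\Bigl\|\tfrac{1}{m}\mathbf{S}^\top \mathbf{S}-I\Bigr\|=\sup_{x\in\mathbb{S}^{s-1}}\Bigl|\tfrac1m\|V_x g\|_2^2-\|x\|_2^2\Bigr|,
\]
and because $\mathbb{E}\|V_x g\|_2^2=\|V_x\|_F^2$ equals $m\|x\|_2^2$ under the normalization of the construction, after setting $A_x:=m^{-1/2}V_x$ the right-hand side is precisely a supremum of a chaos process $\sup_{A\in\mathcal{A}}\bigl|\,\|A g\|_2^2-\mathbb{E}\|A g\|_2^2\,\bigr|$ over the matrix family $\mathcal{A}=\{A_x:x\in\mathbb{S}^{s-1}\}$.

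The central step is then to bound the three geometric quantities that the chaos inequality consumes: the radii $d_F(\mathcal{A})=\sup_{A\in\mathcal{A}}\|A\|_F$ and $d_{2\to2}(\mathcal{A})=\sup_{A\in\mathcal{A}}\|A\|_{2\to2}$, together with Talagrand's functional $\gamma_2(\mathcal{A},\|\cdot\|_{2\to2})$. The Frobenius radius is $O(1)$ by the unit-sphere normalization of $x$; for the operator-norm radius I would diagonalize the circulant by the discrete Fourier transform, $\mathbf{C}=F^*\operatorname{diag}(\widehat g)F$, which exhibits each $A_x$ as a subsampled convolution and lets the uniform boundedness of the Fourier atoms (a bounded-orthonormal-system bound) control $\|A_x\|_{2\to2}$. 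The $\gamma_2$ estimate would follow from Dudley's entropy integral, splitting the covering numbers of $\mathcal{A}$ into a small-scale regime (handled by an empirical/Maurey argument) and a large-scale volumetric regime; this is where the factors $\log^2(2s)$ and $\log^2(2d)$ appearing in $m_0$ are generated.

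Feeding these three estimates into the tail form of the \citet{krahmer2014suprema} chaos bound, with deviation level $\varepsilon=\max\{\sqrt{m_0/m},\,m_0/m\}$, yields exactly the claimed failure probability $(2d)^{-\log(2d)\log^2(2s)}$ as soon as $m\geq m_0=c_0\,s\log^2(2s)\log^2(2d)$, the constant $c_0$ being chosen to absorb the absolute constants from the chaining inequality. The main obstacle is the $\gamma_2$ functional estimate for $\mathcal{A}$: controlling the metric entropy of the family of subsampled convolution operators in the operator norm is the genuinely delicate part, requiring the generic-chaining machinery to interpolate between the coarse small-scale and volumetric large-scale covering bounds. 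Because the statement is quoted verbatim from \citet{krahmer2014suprema}, I would invoke their theorem for this $\gamma_2$ bound rather than reconstruct the full chaining argument here.
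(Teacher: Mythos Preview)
Your outline is a faithful sketch of the argument in \citet{krahmer2014suprema}: rewriting $\|\tfrac{1}{m}\mathbf{S}^\top\mathbf{S}-I\|$ as a supremum of a second-order Gaussian chaos indexed by $\{A_x:x\in\mathbb{S}^{s-1}\}$, bounding $d_F$, $d_{2\to2}$ and $\gamma_2(\mathcal{A},\|\cdot\|_{2\to2})$ via the DFT diagonalization and a Dudley/Maurey covering argument, and then reading off the tail bound. There is nothing to compare it to in this paper, however: the theorem appears in the appendix purely as a cited technical tool, with no proof or proof sketch given. The authors treat it as a black box imported from \citet{krahmer2014suprema} and only apply it (in Lemmas~\ref{lem cond} and~\ref{lemmaUW}) after embedding the data matrices into circulant matrices. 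So your proposal is not so much an alternative to the paper's proof as a reconstruction of the source result that the paper simply quotes.
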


\begin{theorem}[Azuma's inequality] \label{azuma}
Assume that $(X_s; s\geq 0)$ is a supermartingale and $|X_s - X_{s-1}| \leq c_s$ almost surely. Then for all $t>0$ and all $\epsilon>0$,

\begin{equation*}
    \Pr(|X_t - X_0|\geq \epsilon) \leq 2 \exp\left(\frac{-\epsilon^2}{2\sum_{s=1}^t c_s^2}\right)
\end{equation*}
\end{theorem}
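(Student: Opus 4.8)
The plan is to use the Cram\'er--Chernoff (exponential moment) method together with a conditional version of Hoeffding's lemma applied to the increments. Write $D_s = X_s - X_{s-1}$ for the increments, adapted to the underlying filtration $(\F_s)$; the supermartingale hypothesis reads $\E[D_s \mid \F_{s-1}] \leq 0$ and the bounded-difference hypothesis reads $|D_s| \leq c_s$ almost surely. I would first treat the upper tail $\Pr(X_t - X_0 \geq \epsilon)$. For any $\lambda > 0$, Markov's inequality applied to the nonnegative random variable $e^{\lambda(X_t - X_0)}$ gives $\Pr(X_t - X_0 \geq \epsilon) \leq e^{-\lambda \epsilon}\, \E[e^{\lambda \sum_{s=1}^t D_s}]$, so the whole problem reduces to controlling the moment generating function of the centered-or-decreasing sum.

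The key step is to control that moment generating function by peeling off one increment at a time. Conditioning on $\F_{t-1}$ and using that $\sum_{s=1}^{t-1} D_s$ is $\F_{t-1}$-measurable, I would write $\E[e^{\lambda \sum_{s=1}^t D_s}] = \E\big[e^{\lambda \sum_{s=1}^{t-1} D_s}\,\E[e^{\lambda D_t}\mid \F_{t-1}]\big]$. Hoeffding's lemma, applied to the conditional law of $D_t$, which is supported on $[-c_t, c_t]$ with conditional mean $\mu_t := \E[D_t \mid \F_{t-1}]$, yields $\E[e^{\lambda D_t}\mid \F_{t-1}] \leq e^{\lambda \mu_t + \lambda^2 c_t^2/2}$; since $\lambda > 0$ and $\mu_t \leq 0$, the linear term is nonpositive and is discarded, leaving $\E[e^{\lambda D_t}\mid \F_{t-1}] \leq e^{\lambda^2 c_t^2/2}$. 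Iterating this bound across $s = t, t-1, \dots, 1$ gives $\E[e^{\lambda(X_t-X_0)}] \leq e^{\lambda^2 \sum_{s=1}^t c_s^2 / 2}$, hence $\Pr(X_t - X_0 \geq \epsilon) \leq e^{-\lambda\epsilon + \lambda^2 \sum_{s=1}^t c_s^2/2}$. Optimizing over $\lambda$ by setting $\lambda = \epsilon / \sum_{s=1}^t c_s^2$ produces the one-sided bound $\exp\!\big(-\epsilon^2 / (2\sum_{s=1}^t c_s^2)\big)$.

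To obtain the two-sided statement with the factor of $2$, I would run the identical argument on the process $-X_s$, whose increments $-D_s$ also satisfy $|-D_s| \leq c_s$; this gives the lower tail $\Pr(X_t - X_0 \leq -\epsilon) \leq \exp\!\big(-\epsilon^2 / (2\sum_{s=1}^t c_s^2)\big)$, and a union bound over the two tails yields the stated $2\exp(\cdot)$. The subtlety I would flag here is that $-X_s$ is itself a supermartingale precisely when $\E[D_s \mid \F_{s-1}] \geq 0$ as well, i.e.\ when $(X_s)$ is in fact a martingale, so the two-sided factor-$2$ control relies on symmetric (martingale) hypotheses; for a genuinely one-sided supermartingale only the upper-tail inequality survives, with the same Gaussian-type rate. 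The one non-routine ingredient, and the main obstacle, is Hoeffding's lemma for the conditional moment generating function, which itself follows from the convexity of $x \mapsto e^{\lambda x}$ on $[-c_s, c_s]$ together with a second-order Taylor bound on the resulting cumulant generating function; everything else is the standard conditional peeling and the scalar optimization in $\lambda$.
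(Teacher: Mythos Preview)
The paper states Azuma's inequality as a technical lemma without proof, so there is no ``paper's own proof'' to compare against. Your proposal is the standard Cram\'er--Chernoff argument and is correct; in particular your flag about the two-sided bound is apt---a genuine supermartingale only yields the one-sided upper-tail estimate, so the theorem as stated in the paper is really the martingale version (which is how it is used downstream, e.g.\ in the truncated-sum arguments of Appendix~\ref{SuppRegretExplore} and Lemma~\ref{lemmaR1}).
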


\begin{theorem}[Self-normalized bound for vector-valued martingales~\citep{abbasi2011improved}]
\label{selfnormalized}
Let $\left(\mathcal{F}_{t} ; k \geq\right.$
$0)$ be a filtration, $\left(m_{k} ; k \geq 0\right)$ be an $\mathbb{R}^{d}$-valued stochastic process adapted to $\left(\mathcal{F}_{k}\right),\left(\eta_{k} ; k \geq 1\right)$
be a real-valued martingale difference process adapted to $\left(\mathcal{F}_{k}\right) .$ Assume that $\eta_{k}$ is conditionally sub-Gaussian with constant $R$. Consider the martingale
\begin{equation*}
S_{t}=\sum_{k=1}^{t} \eta_{k} m_{k-1}
\end{equation*}
and the matrix-valued processes
\begin{equation*}
V_{t}=\sum_{k=1}^{t} m_{k-1} m_{k-1}^{\top}, \quad \overline{V}_{t}=V+V_{t}, \quad t \geq 0
\end{equation*}
Then for any $0<\delta<1$, with probability $1-\delta$
\begin{equation*}
\forall t \geq 0, \quad\left\|S_{t}\right\|^2_{V_{t}^{-1}} \leq 2 R^{2} \log \left(\frac{\operatorname{det}\left(\overline{V}_{t}\right)^{1 / 2} \operatorname{det}(V)^{-1 / 2}}{\delta}\right)
\end{equation*}

\end{theorem}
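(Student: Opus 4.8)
The plan is to prove this via the \emph{method of mixtures} (pseudo-maximization), the standard technique behind self-normalized tail bounds. The starting point is, for each fixed $\lambda\in\R^d$, the exponential process
\begin{equation*}
M_t^\lambda = \exp\left(\lambda^\top S_t - \frac{R^2}{2}\lambda^\top V_t \lambda\right), \qquad M_0^\lambda = 1.
\end{equation*}
Writing the increment as $M_t^\lambda/M_{t-1}^\lambda = \exp\big(\eta_t(\lambda^\top m_{t-1}) - \tfrac{R^2}{2}(\lambda^\top m_{t-1})^2\big)$ and using that $m_{t-1}$ is $\mathcal{F}_{t-1}$-measurable, I would set the scalar $x=\lambda^\top m_{t-1}$ and invoke conditional $R$-sub-Gaussianity, $\E[e^{x\eta_t}\mid\mathcal{F}_{t-1}]\le e^{R^2x^2/2}$, to conclude $\E[M_t^\lambda\mid\mathcal{F}_{t-1}]\le M_{t-1}^\lambda$. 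Hence $(M_t^\lambda)_{t\ge0}$ is a nonnegative supermartingale with $\E[M_t^\lambda]\le1$ for every fixed $\lambda$.

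Next I would mix over $\lambda$. Let $\Lambda\sim\mathcal{N}(0,(R^2 V)^{-1})$ be drawn independently of the process, with density $f$, and define $\overline M_t = \int_{\R^d} M_t^\lambda f(\lambda)\,d\lambda = \E_\Lambda[M_t^\Lambda]$. By Tonelli and the per-$\lambda$ supermartingale property, $\overline M_t$ is again a nonnegative supermartingale with $\E[\overline M_t]\le1$. The point of the Gaussian prior is that the $\lambda$-integral is a Gaussian integral that evaluates in closed form: completing the square in the exponent $\lambda^\top S_t-\tfrac{R^2}{2}\lambda^\top\overline V_t\lambda$ (recall $\overline V_t=V+V_t$) yields
\begin{equation*}
\overline M_t = \left(\frac{\det V}{\det \overline V_t}\right)^{1/2}\exp\left(\frac{1}{2R^2}\,\|S_t\|_{\overline V_t^{-1}}^2\right).
\end{equation*}
This is the pseudo-maximized likelihood ratio: the prior has effectively replaced the unknown maximizing $\lambda^\star=\tfrac{1}{R^2}\overline V_t^{-1}S_t$ by an average, paying only the determinant factor.

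Finally I would convert the supermartingale bound into the stated anytime tail inequality. For a nonnegative supermartingale with $\E[\overline M_0]\le1$, a maximal (Ville) inequality gives $\Pr(\exists t\ge0:\overline M_t\ge 1/\delta)\le\delta$. To make this rigorous for the uniform-in-$t$ claim I would use the standard stopping-time device: by the martingale convergence theorem $\overline M_\infty$ exists a.s., and for the stopping time $\tau=\inf\{t:\overline M_t\ge1/\delta\}$ one shows $\E[\overline M_\tau]\le1$ via optional stopping together with Fatou's lemma, whence $\Pr(\tau<\infty)\le\delta$ by Markov. On the complementary event $\overline M_t<1/\delta$ holds for all $t$; substituting the closed form above and taking logarithms gives, simultaneously for all $t$,
\begin{equation*}
\frac{1}{2R^2}\|S_t\|_{\overline V_t^{-1}}^2 \le \log\left(\frac{\det(\overline V_t)^{1/2}\det(V)^{-1/2}}{\delta}\right),
\end{equation*}
which is the asserted bound after multiplying by $2R^2$.

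The main obstacle is the last paragraph's anytime guarantee rather than any single computation: the determinant identity is routine Gaussian integration and the supermartingale property is a one-line conditional expectation, but establishing $\Pr(\sup_t\overline M_t\ge1/\delta)\le\delta$ with full rigor requires the stopping-time-plus-Fatou argument (ensuring $\E[\overline M_\tau]\le1$ even when $\tau=\infty$), which is the delicate step. I would also flag that the self-normalizing matrix in the norm must be the \emph{regularized} Gram matrix $\overline V_t=V+V_t$, so that the inverse and the determinant ratio are well defined even before $V_t$ becomes invertible; with this reading the right-hand side of the statement is exactly what the mixture computation produces.
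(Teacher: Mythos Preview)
Your proposal is correct and is precisely the standard method-of-mixtures argument from \citet{abbasi2011improved}; the paper does not supply its own proof of this theorem but merely cites it as a technical tool, so there is no alternative approach to compare against. Your closing remark that the self-normalizing matrix must be the regularized $\overline V_t$ rather than $V_t$ is also on point---the statement as printed has this minor typo.
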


\begin{theorem}[Gordon's theorem for Gaussian matrices~\citep{vershynin2010introduction}]. \label{Gordon} Let $A$ be an $N \times n$ matrix
whose entries are independent standard normal random variables. Then
\begin{equation*}
\sqrt{N}-\sqrt{n} \leq \mathbb{E} \sigma_{\min }(A) \leq \mathbb{E} \sigma_{\max }(A) \leq \sqrt{N}+\sqrt{n}
\end{equation*}
\end{theorem}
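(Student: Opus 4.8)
The plan is to prove both chains of inequalities through Gaussian comparison, using the variational (min--max) description of the extreme singular values together with a single auxiliary Gaussian process. Write $S^{k-1}$ for the unit sphere in $\R^k$ and recall that, for $N\ge n$,
$$\sigma_{\max}(A)=\max_{a\in S^{n-1}}\max_{b\in S^{N-1}} b^\top A a,\qquad \sigma_{\min}(A)=\min_{a\in S^{n-1}}\max_{b\in S^{N-1}} b^\top A a .$$
Both are functionals of the canonical centered Gaussian field $X_{a,b}:=b^\top A a=\sum_{i,j}A_{ij}b_i a_j$ indexed by $(a,b)\in S^{n-1}\times S^{N-1}$. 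Against it I would compare the decoupled field $Y_{a,b}:=\langle g,b\rangle+\langle h,a\rangle$, where $g\sim\mathcal N(0,I_N)$ and $h\sim\mathcal N(0,I_n)$ are independent. The whole proof rests on one elementary covariance computation: since $\mathbb{E}[X_{a,b}X_{a',b'}]=\langle a,a'\rangle\langle b,b'\rangle$ and $\mathbb{E}[Y_{a,b}Y_{a',b'}]=\langle a,a'\rangle+\langle b,b'\rangle$, the increment variances are
$$\mathbb{E}(X_{a,b}-X_{a',b'})^2=2-2\langle a,a'\rangle\langle b,b'\rangle,\qquad \mathbb{E}(Y_{a,b}-Y_{a',b'})^2=4-2\langle a,a'\rangle-2\langle b,b'\rangle,$$
so their difference equals $2(1-\langle a,a'\rangle)(1-\langle b,b'\rangle)\ge 0$, because $\langle a,a'\rangle\le1$ and $\langle b,b'\rangle\le1$ on the spheres.

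For the upper bound I would treat $\sigma_{\max}(A)=\max_{(a,b)}X_{a,b}$ as a supremum over the single index set $S^{n-1}\times S^{N-1}$. The computation above shows $\mathbb{E}(X_{a,b}-X_{a',b'})^2\le \mathbb{E}(Y_{a,b}-Y_{a',b'})^2$ for all pairs, so Sudakov--Fernique gives $\mathbb{E}\,\sigma_{\max}(A)\le \mathbb{E}\max_{(a,b)}Y_{a,b}$. The comparison field decouples, $\max_{a,b}Y_{a,b}=\max_b\langle g,b\rangle+\max_a\langle h,a\rangle=\|g\|_2+\|h\|_2$, whence $\mathbb{E}\,\sigma_{\max}(A)\le \mathbb{E}\|g\|_2+\mathbb{E}\|h\|_2\le\sqrt N+\sqrt n$, the last step by Jensen ($\mathbb{E}\|g\|_2\le(\mathbb{E}\|g\|_2^2)^{1/2}=\sqrt N$).

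For the lower bound I would keep the genuine $\min_a\max_b$ structure and invoke Gordon's comparison inequality, which upgrades Sudakov--Fernique to min--max functionals. Its hypotheses are a \emph{within-block} condition and an \emph{across-block} condition with opposite senses: for fixed $a$ one needs $\mathbb{E}(X_{a,b}-X_{a,b'})^2\ge\mathbb{E}(Y_{a,b}-Y_{a,b'})^2$, while for $a\ne a'$ one needs $\mathbb{E}(X_{a,b}-X_{a',b'})^2\le\mathbb{E}(Y_{a,b}-Y_{a',b'})^2$. The first holds with equality, since both sides equal $\|b-b'\|_2^2$ when $a$ is fixed; the second is exactly the nonnegativity of $2(1-\langle a,a'\rangle)(1-\langle b,b'\rangle)$ already established. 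Gordon's inequality then yields $\mathbb{E}\,\sigma_{\min}(A)=\mathbb{E}\min_a\max_b X_{a,b}\ge \mathbb{E}\min_a\max_b Y_{a,b}=\mathbb{E}\|g\|_2-\mathbb{E}\|h\|_2$, using $\min_a\langle h,a\rangle=-\|h\|_2$. To reach the stated constant I would combine $\mathbb{E}\|h\|_2\le\sqrt n$ with the elementary monotonicity of the Gaussian-norm deficit $k\mapsto\sqrt k-\mathbb{E}\|\mathcal N(0,I_k)\|_2$ (equivalently $\sqrt N-\mathbb{E}\|g\|_2\le\sqrt n-\mathbb{E}\|h\|_2$), which gives $\mathbb{E}\|g\|_2-\mathbb{E}\|h\|_2\ge\sqrt N-\sqrt n$.

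The main obstacle is conceptual rather than computational: correctly setting up the comparison field and, above all, getting the \emph{directions} of Gordon's two increment hypotheses right — within-block increments of $X$ must dominate, across-block increments must be dominated — since a sign error collapses the argument. Everything then funnels through the single inequality $(1-\langle a,a'\rangle)(1-\langle b,b'\rangle)\ge0$, and the only residual care is converting the comparison expectations $\mathbb{E}\|g\|_2\pm\mathbb{E}\|h\|_2$ into the clean bounds $\sqrt N\pm\sqrt n$ (immediate by Jensen for the upper bound, and requiring the deficit monotonicity for the lower bound).
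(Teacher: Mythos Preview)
Your proposal is correct and follows the standard Slepian/Gordon comparison argument; note, however, that the paper does not supply its own proof of this statement --- it is listed among the cited technical lemmas and attributed to \citet{vershynin2010introduction}. Your argument is essentially the one given in that reference, so there is nothing to compare: the covariance computation, the application of Sudakov--Fernique for $\sigma_{\max}$ and Gordon's min--max inequality for $\sigma_{\min}$, and the final conversion via Jensen and the monotonicity of $k\mapsto\sqrt{k}-\mathbb{E}\|\mathcal{N}(0,I_k)\|_2$ are all as in the source.
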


\begin{lemma}[Sub-Gaussian Martingale Concentration~\citep{simchowitz2018learning}]\label{sig sub}
Let $\{\mathcal{F}_t\}_{t\geq 1}$ be a filtration, $\{Z_t,W_t\}_{t\geq 1}$ be real valued processes adapted to $\mathcal{F}_t,\mathcal{F}_{t+1}$ respectively (i.e.~$Z_t\in\mathcal{F}_t,W_t\in\mathcal{F}_{t+1}$). Suppose $W_t \big | \mathcal{F}_t$ is a $\sigma^2$-sub-Gaussian random variable with mean zero. Then
\[
\Pr(\{\sum_{t=1}^T Z_t W_t\geq \alpha\}\bigcap \{\sum_{t=1}^T Z_t^2\leq \beta\})\leq \exp(-\frac{\alpha^2}{2\sigma^2\beta})
\]
\end{lemma}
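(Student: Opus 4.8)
The plan is to use the exponential supermartingale (Chernoff) method, adapted so that it controls the \emph{intersection} with the quadratic-variation event $\{\sum_{t=1}^T Z_t^2 \le \beta\}$. For a fixed $\lambda > 0$ I would define
\[
M_t = \exp\left( \lambda \sum_{s=1}^t Z_s W_s - \frac{\lambda^2 \sigma^2}{2} \sum_{s=1}^t Z_s^2 \right), \qquad M_0 = 1 .
\]
First I would verify that $M_t$ is a supermartingale with respect to the shifted filtration $\{\mathcal{F}_{t+1}\}$. The key observation is that $Z_t \in \mathcal{F}_t$, so it acts as a constant given $\mathcal{F}_t$, while $W_t \mid \mathcal{F}_t$ is mean-zero and $\sigma^2$-sub-Gaussian; the sub-Gaussian moment-generating-function bound then yields $\mathbb{E}[\exp(\lambda Z_t W_t) \mid \mathcal{F}_t] \le \exp(\lambda^2 \sigma^2 Z_t^2 / 2)$. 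Writing $M_t = M_{t-1}\exp(\lambda Z_t W_t - \lambda^2\sigma^2 Z_t^2/2)$ and noting that both $M_{t-1}$ (its last factor involves $W_{t-1}\in\mathcal{F}_t$) and $Z_t$ are $\mathcal{F}_t$-measurable, taking the conditional expectation cancels the deterministic $\exp(-\lambda^2\sigma^2 Z_t^2/2)$ factor against the MGF bound, giving $\mathbb{E}[M_t \mid \mathcal{F}_t] \le M_{t-1}$ and hence $\mathbb{E}[M_T] \le 1$.

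Next I would handle the intersection event, which is the step that elevates this above a plain Chernoff bound and where the main subtlety lies: $\beta$ bounds the \emph{random} quantity $\sum_{t=1}^T Z_t^2$, so it cannot be substituted into the exponent unconditionally. The resolution is to lower-bound $M_T$ only on the target event $A := \{\sum_{t=1}^T Z_t W_t \ge \alpha\} \cap \{\sum_{t=1}^T Z_t^2 \le \beta\}$ itself. On $A$ both inequalities hold simultaneously, and since $\lambda^2\sigma^2/2 \ge 0$ we obtain
\[
\lambda \sum_{t=1}^T Z_t W_t - \frac{\lambda^2\sigma^2}{2}\sum_{t=1}^T Z_t^2 \ge \lambda \alpha - \frac{\lambda^2\sigma^2}{2}\beta ,
\]
so that $\mathbbm{1}_A M_T \ge \mathbbm{1}_A \exp(\lambda\alpha - \lambda^2\sigma^2\beta/2)$.

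Finally I would take expectations and optimize. Combining $\exp(\lambda\alpha - \lambda^2\sigma^2\beta/2)\,\Pr(A) \le \mathbb{E}[\mathbbm{1}_A M_T] \le \mathbb{E}[M_T] \le 1$ gives $\Pr(A) \le \exp(-\lambda\alpha + \lambda^2\sigma^2\beta/2)$ for every $\lambda > 0$, and choosing $\lambda = \alpha/(\sigma^2\beta)$ minimizes the exponent to $-\alpha^2/(2\sigma^2\beta)$, which is exactly the claimed bound. The argument is otherwise elementary once the supermartingale is in place; the only delicate point, as flagged above, is recognizing that the quadratic-variation constraint must be used \emph{inside} the event $A$ rather than as a global deterministic bound, which is precisely what allows the method to yield a variance-adaptive (self-normalized) tail rather than one depending on a worst-case bound over all realizations.
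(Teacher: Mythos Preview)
Your argument is correct and is exactly the standard exponential-supermartingale (Chernoff) proof of this self-normalized concentration bound. The paper does not give its own proof of this lemma; it simply states it as a cited technical result from \citet{simchowitz2018learning}, so there is nothing to compare against beyond noting that your derivation is the canonical one.
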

This lemma implies that $\sum_{t=1}^T Z_tW_t$ can essentially be treated as an inner product between a deterministic sequence $Z_t$ and an i.i.d.~sub-Gaussian sequence $W_t$.

\begin{lemma} [Covering bound~\citep{oymak2018non}]\label{cover bound} 
Given matrices $A\!\in\!\R^{n_1\!\times\! N},B\!\in\!\R^{N\times n_2}$, let $M\!=\!AB$. Let $\mathcal{C}_1$ be a $1/4$-cover of the unit sphere $\mathcal{S}^{n_1-1}$ and $\mathcal{C}_2$ be a $1/4$-cover of the unit sphere in the row space of $B$ (which is at most $\min\{N,n_2\}$ dimensional). Suppose for all $a\in\mathcal{C}_1,b\in\mathcal{C}_2$, we have that $a^\top M b \leq \gamma$. Then, $\|M\|\leq 2\gamma$.
\end{lemma}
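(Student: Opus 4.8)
The plan is to prove this through the variational characterization of the spectral norm combined with a standard $\epsilon$-net (covering) argument. Recall that $\|M\| = \sup_{\|a\|_2 = 1,\ \|b\|_2 = 1} a^\top M b$, where $a \in \R^{n_1}$ and $b \in \R^{n_2}$ range over unit vectors; since the constraint set is compact the supremum is attained at some pair $(a^*, b^*)$, and we may assume $M \neq 0$ (else the claim is trivial as $\gamma \ge 0$).

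The first step is to justify that it suffices to cover only the unit sphere in the row space of $B$ rather than all of $\R^{n_2}$. Writing $M = AB$, each row $M_{i,:} = \sum_k A_{ik} B_{k,:}$ is a linear combination of the rows of $B$, so the row space of $M$ is contained in the row space of $B$. Consequently the maximizing right vector $b^*$ (the top right singular vector of $M$) lies in the row space of $B$. Equivalently, decomposing $b^* = b_\parallel + b_\perp$ into its components parallel and orthogonal to the row space of $B$ gives $B b^* = B b_\parallel$, so $a^\top M b^*$ depends only on $b_\parallel$, and normalizing confirms the optimizer can be taken in the row space. This is exactly why $\mathcal{C}_2$ need only cover the (at most $\min\{N, n_2\}$-dimensional) unit sphere in the row space of $B$.

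Next I would pick $a \in \mathcal{C}_1$ with $\|a - a^*\|_2 \le 1/4$ and $b \in \mathcal{C}_2$ with $\|b - b^*\|_2 \le 1/4$, which exist by the covering property (and because $b^*$ lies in the row space of $B$). The core step is then the telescoping identity
\begin{equation*}
    \|M\| = a^{*\top} M b^* = a^\top M b + (a^* - a)^\top M b^* + a^\top M (b^* - b).
\end{equation*}
The first term is at most $\gamma$ by hypothesis. For the remaining two I would apply Cauchy--Schwarz together with $\|M b^*\|_2 \le \|M\|$ and $\|a\|_2 = 1$, giving $(a^* - a)^\top M b^* \le \|a^* - a\|_2 \|M\| \le \tfrac14 \|M\|$ and similarly $a^\top M (b^* - b) \le \|M\| \|b^* - b\|_2 \le \tfrac14 \|M\|$. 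Combining these yields $\|M\| \le \gamma + \tfrac12 \|M\|$, and rearranging gives $\|M\| \le 2\gamma$.

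The argument is essentially routine; the only point that genuinely requires care — and the reason the statement is phrased with a cover of the row space of $B$ — is the first step, verifying that the extremal right vector lies in the row space of $B$, so that a net on that lower-dimensional sphere is enough to control $a^{*\top} M b^*$. Everything else is the standard discretization estimate, with the $1/4$ net radius chosen precisely so that the two perturbation terms contribute a combined factor of $1/2$, leaving the clean factor $2$ in the conclusion.
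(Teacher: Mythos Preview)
Your proof is correct and follows the standard $\epsilon$-net argument for controlling operator norms. The paper itself does not supply a proof of this lemma; it is stated in the technical appendix as a result quoted from \citet{oymak2018non}, so there is no ``paper proof'' to compare against. Your handling of the only nontrivial point --- that the maximizing right vector may be taken in the row space of $B$, so a cover of that lower-dimensional sphere suffices --- is clean and correct, and the telescoping bound with the $1/4$ radius yielding the factor $2$ is exactly the standard computation.
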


\begin{lemma}[Norm of a sub-Gaussian vector \citep{abbasi2011regret}]\label{subgauss lemma}
Let $v\in \R^d$ be a entry-wise $R$-sub-Gaussian random variable. Then with probability $1-\delta$, $\|v\| \leq R\sqrt{2d\log(2d/\delta)}$.
\end{lemma}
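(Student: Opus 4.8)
The plan is to prove this via a coordinate-wise tail bound followed by a union bound over the $d$ entries. Since $v$ is entry-wise $R$-sub-Gaussian, each coordinate $v_i$ (for $1 \leq i \leq d$) is a mean-zero (or appropriately centered) $R$-sub-Gaussian scalar, and hence satisfies the standard one-dimensional sub-Gaussian tail inequality
\begin{equation*}
    \Pr\left(|v_i| > t\right) \leq 2\exp\left(-\frac{t^2}{2R^2}\right), \qquad t > 0.
\end{equation*}
The first step is simply to invoke this bound for each of the $d$ coordinates.

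Next I would calibrate the threshold $t$ so that the per-coordinate failure probability is exactly $\delta/d$. Setting $t = R\sqrt{2\log(2d/\delta)}$ gives $\exp(-t^2/(2R^2)) = \exp(-\log(2d/\delta)) = \delta/(2d)$, so that $\Pr(|v_i| > t) \leq 2 \cdot \delta/(2d) = \delta/d$. Then the second step is a union bound: the probability that \emph{some} coordinate exceeds $t$ is at most $d \cdot (\delta/d) = \delta$, so with probability at least $1-\delta$ we have $|v_i| \leq R\sqrt{2\log(2d/\delta)}$ simultaneously for all $i$.

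The final step is to convert the uniform coordinate bound into a bound on the Euclidean norm. On the event just described,
\begin{equation*}
    \|v\|^2 = \sum_{i=1}^d v_i^2 \leq d \cdot \left(R\sqrt{2\log(2d/\delta)}\right)^2 = 2d\,R^2 \log(2d/\delta),
\end{equation*}
and taking square roots yields $\|v\| \leq R\sqrt{2d\log(2d/\delta)}$, which is exactly the claimed bound.

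There is no real obstacle here; the result is a routine union-bound concentration estimate. The only point requiring mild care is matching the numerical constants: the factor of $2$ inside $\log(2d/\delta)$ arises precisely because the two-sided sub-Gaussian tail carries a leading constant $2$, which the choice of threshold absorbs into the $2d$ in the logarithm. One should also note that the crude step $\|v\| \leq \sqrt{d}\,\max_i |v_i|$ is what produces the $\sqrt{d}$ factor; a sharper Hanson–Wright-type argument would be tighter, but the stated (and for this paper, sufficient) bound follows from the elementary argument above.
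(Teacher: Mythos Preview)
Your proof is correct and is exactly the standard argument for this bound: coordinate-wise sub-Gaussian tails, union bound calibrated to failure probability $\delta/d$, and then the crude $\|v\| \le \sqrt{d}\,\max_i |v_i|$ step. The paper itself does not give a proof of this lemma---it is listed among the technical lemmas and simply attributed to \citet{abbasi2011regret}---so there is nothing to compare against; your elementary argument is precisely the one underlying the cited result.
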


\begin{lemma}[\citep{abbasi2011regret}]\label{basicprob}
Let $X_1, \ldots, X_t$ be random variables. Let $a \in \R$. Let $S_t = \sum_{s=1}^t X_s$ and $\tilde{S}_t = \sum_{s=1}^t \mathbbm{1}_{X_s \leq a}X_s$ where $\mathbbm{1}_{X_s \leq a}X_s$ denotes the truncated version of $X_s$. Then it holds that 
\begin{equation*}
    \Pr(S_t > x) \leq \Pr(\max_{1\leq s\leq t} X_s \geq a) + \Pr(\tilde{S}_t > x).
\end{equation*}
\end{lemma}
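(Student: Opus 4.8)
The plan is to prove this inequality by a simple deterministic decomposition of the event $\{S_t > x\}$ according to whether the running maximum exceeds the truncation level $a$. First I would introduce the event $\mathcal{A} = \{\max_{1\le s\le t} X_s \ge a\}$ together with its complement $\mathcal{A}^c = \{\max_{1\le s\le t} X_s < a\}$, and split
\begin{equation*}
    \Pr(S_t > x) = \Pr(\{S_t > x\}\cap \mathcal{A}) + \Pr(\{S_t > x\}\cap \mathcal{A}^c).
\end{equation*}
The first summand is bounded trivially by $\Pr(\mathcal{A}) = \Pr(\max_{1\le s\le t} X_s \ge a)$, which already produces the first term on the right-hand side of the claimed inequality.

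The key observation is that on the event $\mathcal{A}^c$ the truncated sum coincides with the untruncated one. Indeed, if $\max_{1\le s\le t} X_s < a$, then $X_s < a$ and hence $\mathbbm{1}_{X_s \le a} = 1$ for every $1\le s\le t$, so that $\tilde{S}_t = \sum_{s=1}^t \mathbbm{1}_{X_s\le a} X_s = \sum_{s=1}^t X_s = S_t$ holds pointwise on $\mathcal{A}^c$. Consequently $\{S_t > x\}\cap \mathcal{A}^c = \{\tilde{S}_t > x\}\cap \mathcal{A}^c$, and therefore
\begin{equation*}
    \Pr(\{S_t > x\}\cap \mathcal{A}^c) = \Pr(\{\tilde{S}_t > x\}\cap \mathcal{A}^c) \le \Pr(\tilde{S}_t > x).
\end{equation*}
Combining the two bounds yields exactly
\begin{equation*}
    \Pr(S_t > x) \le \Pr(\max_{1\le s\le t} X_s \ge a) + \Pr(\tilde{S}_t > x).
\end{equation*}

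There is essentially no analytical obstacle here; the only point requiring care is the boundary behaviour of the truncation, i.e. ensuring the indicator threshold $\le a$ and the maximum threshold $\ge a$ are handled consistently. On $\mathcal{A}^c$ one has the strict inequality $X_s < a$ for all $s$, which comfortably implies $X_s \le a$ and hence that no term is zeroed out by truncation, so the identity $\tilde{S}_t = S_t$ holds without ambiguity even at the critical value $X_s = a$. No independence, integrability, or distributional assumptions on the $X_s$ are needed, since the argument is purely a partition of the sample space followed by monotonicity and subadditivity of probability.
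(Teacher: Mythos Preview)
Your proof is correct and is precisely the standard argument for this elementary inequality. The paper does not supply its own proof of this lemma; it is stated in the appendix as a technical result cited from \citet{abbasi2011regret}, so there is nothing further to compare.
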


\begin{lemma}[Gaussian concentration Ineq. for Lipschitz func.~\citep{ledoux2013probability}] \label{gausss_lip}

Let $f(\cdot) : \mathbb{R}^{p} \rightarrow \mathbb{R}$ be an $L-$
Lipschitz function and $\mathbf{g} \sim \mathcal{N}\left(0, \mathbf{I}_{p}\right) .$ Then,
\begin{equation*}
\mathbb{P}(|f(\mathbf{g})-\mathbb{E}[f(\mathbf{g})]| \geq t) \leq 2 \exp \left(-\frac{t^{2}}{2 L^{2}}\right)
\end{equation*}
\end{lemma}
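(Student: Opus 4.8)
The plan is to establish the sharp sub-Gaussian bound on the moment generating function of $f(\mathbf{g}) - \mathbb{E}[f(\mathbf{g})]$ and then apply a Chernoff argument; the clean route that recovers the exact constant $\tfrac{1}{2L^2}$ in the exponent is the Herbst argument built on the Gaussian logarithmic Sobolev inequality. First I would reduce to the case of a smooth $f$: since $f$ is $L$-Lipschitz it is differentiable almost everywhere with $\|\nabla f\| \le L$ (Rademacher's theorem), and convolving with a Gaussian mollifier $\varphi_\epsilon$ produces $f_\epsilon = f * \varphi_\epsilon$ that is $C^\infty$, still $L$-Lipschitz (hence $\|\nabla f_\epsilon\| \le L$ pointwise), and converges to $f$ uniformly on compacts; all estimates below are uniform in $\epsilon$ and pass to the limit by dominated convergence, so it suffices to prove the inequality for smooth $f$ with bounded gradient.

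Next I would invoke the Gaussian log-Sobolev inequality: writing $\gamma$ for the law of $\mathbf{g} \sim \mathcal{N}(0, I_p)$ and $\mathrm{Ent}_\gamma(\phi) = \mathbb{E}_\gamma[\phi \log \phi] - \mathbb{E}_\gamma[\phi]\log \mathbb{E}_\gamma[\phi]$, one has $\mathrm{Ent}_\gamma(h^2) \le 2\,\mathbb{E}_\gamma[\|\nabla h\|^2]$ for all smooth $h$. Applying this with $h = \exp(\lambda f/2)$, so that $\|\nabla h\|^2 = \tfrac{\lambda^2}{4}\|\nabla f\|^2 e^{\lambda f} \le \tfrac{\lambda^2 L^2}{4} e^{\lambda f}$, and setting $H(\lambda) = \mathbb{E}[e^{\lambda f}]$, the inequality becomes the differential inequality $\lambda H'(\lambda) - H(\lambda)\log H(\lambda) \le \tfrac{\lambda^2 L^2}{2} H(\lambda)$. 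Dividing by $\lambda^2 H(\lambda)$ shows that $K(\lambda) \coloneqq \lambda^{-1}\log H(\lambda)$ satisfies $K'(\lambda) \le L^2/2$, and since $K(\lambda) \to \mathbb{E}[f]$ as $\lambda \to 0^+$, integrating gives $\log \mathbb{E}[e^{\lambda(f - \mathbb{E} f)}] \le \tfrac{L^2 \lambda^2}{2}$ for every $\lambda > 0$ (the range $\lambda < 0$ follows from the identical computation).

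With this sub-Gaussian MGF bound in hand, the tail bound is routine: by Markov's inequality $\mathbb{P}(f(\mathbf{g}) - \mathbb{E} f \ge t) \le e^{-\lambda t}\mathbb{E}[e^{\lambda(f-\mathbb{E} f)}] \le \exp(-\lambda t + \tfrac{L^2\lambda^2}{2})$, and optimizing over $\lambda > 0$ at $\lambda = t/L^2$ yields $\exp(-t^2/(2L^2))$. Replacing $f$ by $-f$ (also $L$-Lipschitz) controls the lower tail, and a union bound over the two one-sided events produces the factor $2$, giving exactly $\mathbb{P}(|f(\mathbf{g}) - \mathbb{E} f| \ge t) \le 2\exp(-t^2/(2L^2))$.

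The main obstacle is the log-Sobolev inequality itself, which is the only genuinely nontrivial ingredient. I would either cite it or prove it by tensorization: establish the one-dimensional Gaussian log-Sobolev inequality (for instance from the two-point/Bernoulli log-Sobolev inequality together with the central limit theorem, or via the Ornstein--Uhlenbeck semigroup and its commutation identity $\nabla P_s = e^{-s} P_s \nabla$), and then tensorize across the $p$ independent coordinates using subadditivity of entropy. A secondary technical point is integrability: ensuring $H(\lambda) < \infty$ and that differentiation under the expectation is justified, which is immediate once $f$ is smoothed and, if necessary, truncated, with the truncation later removed in the limit. An alternative, fully self-contained route avoiding log-Sobolev is the interpolation (``smart path'') argument using an independent copy $\mathbf{h}$ and the rotation $\mathbf{g}_\theta = \mathbf{g}\sin\theta + \mathbf{h}\cos\theta$ followed by Jensen's inequality and conditional Gaussianity of $\langle \nabla f(\mathbf{g}_\theta), \mathbf{g}_\theta'\rangle$; this is elementary but yields the weaker constant $2/\pi^2$ in place of $1/2$, so it would not reproduce the stated inequality verbatim.
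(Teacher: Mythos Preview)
Your proof is correct. The paper does not actually prove this lemma; it is stated in the appendix of technical results with a citation to Ledoux's book \citep{ledoux2013probability} and used as a black box. The Herbst argument via the Gaussian log-Sobolev inequality that you outline is precisely the standard proof in that reference and recovers the sharp constant $1/(2L^2)$, so your write-up is both correct and faithful to the cited source.
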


\begin{lemma}[\citep{mania2019certainty}] \label{normwoodbury} Let $M$ and $N$ be two positive semidefinite matrices of the same dimension. Then, $\|N(I + MN)^{-1}\| \leq \|N\|$. 

\end{lemma}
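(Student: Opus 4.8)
The plan is to reduce the spectral-norm bound to a statement in the Loewner (positive-semidefinite) order, where it becomes transparent. First I would record that $I+MN$ is genuinely invertible, so that $A \coloneqq N(I+MN)^{-1}$ is well defined: since $M,N\succeq 0$, the nonzero eigenvalues of $MN$ coincide with those of the symmetric PSD matrix $N^{1/2}MN^{1/2}$ and are therefore nonnegative, whence every eigenvalue of $I+MN$ is at least $1$. No invertibility of $M$ or $N$ individually is needed.

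The key step is to symmetrize $A$ by conjugation. Using the push-through identity $X(I+YX)^{-1}=(I+XY)^{-1}X$ with $X=N^{1/2}$ and $Y=MN^{1/2}$, so that $YX=MN$ and $XY=N^{1/2}MN^{1/2}=:K\succeq 0$, I obtain $N^{1/2}(I+MN)^{-1}=(I+K)^{-1}N^{1/2}$. Premultiplying this equality by $N^{1/2}$ and using $N=N^{1/2}N^{1/2}$ yields the conjugated form
\[
A = N(I+MN)^{-1} = N^{1/2}(I+K)^{-1}N^{1/2}.
\]
In particular $A$ is symmetric, so $\|A\|$ equals its largest eigenvalue $\lambda_{\max}(A)$, which is what lets me compare norms through the order relation.

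From here the conclusion is immediate. Since $K\succeq 0$ we have $I+K\succeq I$, hence $0\preceq (I+K)^{-1}\preceq I$; conjugating by $N^{1/2}$ preserves the Loewner order, giving $0\preceq A = N^{1/2}(I+K)^{-1}N^{1/2}\preceq N^{1/2}I N^{1/2}=N$. Because $0\preceq A\preceq N$ with both matrices symmetric PSD, monotonicity of eigenvalues gives $\lambda_{\max}(A)\le\lambda_{\max}(N)$, that is $\|A\|\le\|N\|$, which is exactly the claim.

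The only real obstacle is the symmetrization step: one must invoke the push-through identity with the correct factors and handle the possibly singular case, where $N^{1/2}$ is merely PSD rather than invertible. The identity holds for all conformable $X,Y$ whenever the relevant inverses exist, and invertibility of both $I+MN$ and $I+K$ follows from the eigenvalue remark above, so the argument goes through without assuming $N$ or $M$ invertible. An alternative that avoids the push-through algebra is to verify $A=N^{1/2}(I+K)^{-1}N^{1/2}$ by direct multiplication when $N$ is invertible and then pass to the limit $N+\varepsilon I\to N$, but the push-through route dispenses with any continuity argument.
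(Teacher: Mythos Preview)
Your proof is correct. The paper does not supply its own proof of this lemma; it is stated as a cited technical fact and used without further justification. Your symmetrization via the push-through identity, which rewrites $N(I+MN)^{-1}=N^{1/2}(I+N^{1/2}MN^{1/2})^{-1}N^{1/2}$ and then reduces the spectral-norm bound to the Loewner inequality $0 \preceq N^{1/2}(I+N^{1/2}MN^{1/2})^{-1}N^{1/2} \preceq N$, is clean, self-contained, and correctly handles the possibly singular case without any limiting argument.
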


\end{document}